\newtheorem{theorem}{Theorem}
\crefname{theorem}{theorem}{Theorems}
\Crefname{Theorem}{Theorem}{Theorems}
\newaliascnt{llemma}{theorem}
\newtheorem{llemma}[theorem]{Lemma}
\crefname{llemma}{lemma}{lemmas}
\Crefname{LLemma}{Lemma}{Lemmas}
\newaliascnt{pproposition}{theorem}
\newtheorem{pproposition}[theorem]{Proposition}
\crefname{pproposition}{proposition}{propositions}
\Crefname{Pproposition}{Proposition}{Propositions}
\newtheorem{example}[theorem]{Example}
\crefname{example}{example}{examples}
\Crefname{Example}{Example}{Examples}
\Crefname{assumption}{\textbf{H}\hspace{-3pt}}{\textbf{H}\hspace{-3pt}}
\crefname{assumption}{\textbf{H}}{\textbf{H}}
\newtheorem{assumptionPCA}{\textbf{HP}\hspace{-3pt}}
\Crefname{assumptionPCA}{\textbf{HP}\hspace{-3pt}}{\textbf{HP}\hspace{-3pt}}
\crefname{assumptionPCA}{\textbf{HP}}{\textbf{HP}}
\newtheorem{assumptionR}{\textbf{R}\hspace{-3pt}}
\Crefname{assumptionR}{\textbf{R}\hspace{-3pt}}{\textbf{R}\hspace{-3pt}}
\crefname{assumptionR}{\textbf{R}}{\textbf{R}}
\newtheorem{assumptionA}{\textbf{A}\hspace{-3pt}}
\Crefname{assumptionA}{\textbf{A}\hspace{-3pt}}{\textbf{A}\hspace{-3pt}}
\crefname{assumptionA}{\textbf{A}}{\textbf{A}}
\newtheorem{assumptionMD}{\textbf{MD}\hspace{-3pt}}
\Crefname{assumptionMD}{\textbf{MD}\hspace{-3pt}}{\textbf{MD}\hspace{-3pt}}
\crefname{assumptionMD}{\textbf{MD}}{\textbf{MD}}
\newtheorem{assumptionMA}{\textbf{MA}\hspace{-3pt}}
\Crefname{assumptionMA}{\textbf{MA}\hspace{-3pt}}{\textbf{MA}\hspace{-3pt}}
\crefname{assumptionMA}{\textbf{MA}}{\textbf{MA}}
\renewcommand{\bar}[1]{\overline{#1}}
\DeclareFontFamily{U}{matha}{\hyphenchar\font45}
\DeclareFontShape{U}{matha}{m}{n}{
      <5> <6> <7> <8> <9> <10> gen * matha
      <10.95> matha10 <12> <14.4> <17.28> <20.74> <24.88> matha12
      }{}
\DeclareSymbolFont{matha}{U}{matha}{m}{n}
\DeclareFontFamily{U}{mathx}{\hyphenchar\font45}
\DeclareFontShape{U}{mathx}{m}{n}{
      <5> <6> <7> <8> <9> <10>
      <10.95> <12> <14.4> <17.28> <20.74> <24.88>
      mathx10
      }{}
\DeclareSymbolFont{mathx}{U}{mathx}{m}{n}
\DeclareMathDelimiter{\vvvert}{0}{matha}{"7E}{mathx}{"17}
\newcommand{\calBLmu}{{\calBLmu}}
\newcommand{\calBLnu}{{\calBLnu}}
\newcommand{\calBLkappa}{{\calBLkappa}}
\newcommand{\ie}{\textit{i.e.}}
\newcommand{\calA}{\mathcal{A}}
\newcommand{\calF}{\mathcal{F}}
\newcommand{\bbE}{\mathbb{E}}
\newcommand{\rmD}{\mathrm{D}}
\newcommand{\rmO}{\mathrm{O}}
\newcommand{\rmT}{\mathrm{T}}
\newcommand{\rmI}{\mathrm{I}}
\newcommand{\spanD}{\mathrm{Span}}
\newcommand{\bfA}{\mathbf{A}}
\newcommand{\Qbf}{\mathbf{Q}}
\newcommand{\Rbf}{\mathbf{R}}
\newcommand{\bfU}{\mathbf{U}}
\newcommand{\bfV}{\mathbf{V}}
\newcommand{\bfSigma}{\boldsymbol{\Sigma}}
\newcommand{\Sigmabf}{\boldsymbol{\Sigma}}
\def\msc{\mathsf{C}}
\def\msg{\mathsf{G}}
\def\msx{\mathsf{X}}
\def\mcbb{\mathcal{B}}  
\def\mcx{\mathcal{X}}
\def\mcf{\mathcal{F}}
\def\rset{\mathbb{R}}
\def\nset{\mathbb{N}}
\def\nsets{\mathbb{N}^*}
\def\rmd{\mathrm{d}}
\newcommand{\abs}[1]{\left\vert #1 \right\vert}
\newcommand{\absLigne}[1]{\vert #1 \vert}
\newcommand{\tvnorm}[1]{\| #1 \|_{\mathrm{TV}}}
\newcommand{\Vnorm}[2][1=V]{\| #2 \|_{#1}}
\newcommandx{\psr}[3][3=]{\left\langle#1,#2 \right\rangle_{#3}}
\newcommandx{\normr}[2][2=]{ \left\Vert#1 \right\Vert_{#2}}
\newcommandx{\psrLigne}[3][3=]{\langle#1,#2 \rangle_{#3}}
\newcommandx{\normrLigne}[2][2=]{ \Vert#1 \Vert_{#2}}
\newcommandx{\norm}[2][1=]{\ifthenelse{\equal{#1}{}}{\left\Vert #2 \right\Vert}{\left\Vert #2 \right\Vert^{#1}}}
\newcommand{\normLigne}[2][1=]{\ifthenelse{\equal{#1}{}}{\Vert #2 \Vert}{\Vert #2\Vert^{#1}}}
\newcommand{\parenthese}[1]{\left(#1 \right)}
\newcommand{\parentheseLigne}[1]{(#1 )}
\newcommand{\parentheseDeux}[1]{\left[ #1 \right]}
\newcommand{\parentheseDeuxLigne}[1]{[ #1 ]}
\newcommand{\defEns}[1]{\left\lbrace #1 \right\rbrace }
\newcommand{\defEnsLigne}[1]{\lbrace #1 \rbrace }
\newcommand{\proba}[1]{\mathbb{P}\left( #1 \right)}
\newcommand{\PP}{\mathbb{P}}
\newcommand\probaMarkovTilde[2][2=]
\newcommand{\PE}{\bbE} 
\newcommand{\expe}[1]{\PE \left[ #1 \right]}
\newcommand{\expeLigne}[1]{\PE [ #1 ]}
\newcommand{\plusinfty}{+\infty}
\def\ie{\textit{i.e.}}
\def\eqsp{\;}
\renewcommand{\iint}[2]{\{ #1,\ldots,#2\}}
\newcommand{\coint}[1]{\left[#1\right)}
\newcommand{\ooint}[1]{\left(#1\right)}
\newcommand{\ccint}[1]{\left[#1\right]}
\newcommand{\cointLigne}[1]{[#1)}
\newcommand\sequence[3][2=,3=]
\newcommand\sequenceD[3][2=,3=]
\newcommand\sequenceDouble[4][3=,4=]
\newcommand{\wrt}{with respect to}
\def\iid{i.i.d.}
\def\eg{e.g.}
\def\Id{\operatorname{Id}}
\def\Idd{\operatorname{I}_d}
\def\rmD{\mathrm{D}}
\def\bigO{\mathcal{O}}
\def\trace{\operatorname{Tr}}
\def\sphere{\mathbb{S}}
\newcommand{\1}{\mathbbm{1}}
\def\restriction#1#2{\mathchoice
              {\setbox1\hbox{${\displaystyle #1}_{\scriptstyle #2}$}
              \restrictionaux{#1}{#2}}
              {\setbox1\hbox{${\textstyle #1}_{\scriptstyle #2}$}
              \restrictionaux{#1}{#2}}
              {\setbox1\hbox{${\scriptstyle #1}_{\scriptscriptstyle #2}$}
              \restrictionaux{#1}{#2}}
              {\setbox1\hbox{${\scriptscriptstyle #1}_{\scriptscriptstyle #2}$}
              \restrictionaux{#1}{#2}}}
\def\restrictionaux#1#2{{#1\,\smash{\vrule height .8\ht1 depth .85\dp1}}_{\,#2}}
\def\Ret{\mathrm{Ret}}
\def\parallelTransport{\mathrm{T}}
\def\distT{\rho_{\Theta}}
\def\metricM{\mathrm{g}}
\def\grad{\mathrm{grad}\,}
\def\Hess{\mathrm{Hess}\,}
\def\rmD{\mathrm{D}}
\def\noise{e}
\def\hnoise{\hat{e}}
\def\Exp{\mathrm{Exp}}
\def\planT{\rmT}
\def\Cut{\mathrm{Cut}}
\def\grassmann{\mathrm{Gr}}
\def\stiefel{\mathrm{St}}
\def\transpose{\top}
\def\scrl{\mathscr{L}}
\def\scrlinf{\scrl}
\def\scrlU{\scrl^{(1)}}
\def\scrlD{\scrl^{(2)}}
\def\trho{\tilde{\rho}}
\def\metric{\metricM}
\def\cu{\overline{c}}
\def\cl{\underline{c}}
\def\sigmaZ{\sigma_0^2}
\def\sigmaU{\sigma_1^2}
\def\bfb{\mathbf{b}}
\def\ttheta{\tilde{\theta}}
\def\Gammabf{\boldsymbol{\Gamma}}
\def\bupeta{\bar{\upeta}}
\newcommand{\beq}{\begin{equation}}
\newcommand{\eeq}{\end{equation}}
\def\varespilon{\varepsilon}
\def\bw{\mathrm{b}_w}
\newcommand{\txts}{\textstyle}
\def\moment{\mathrm{m}}
\def\hinfty{h_{\infty}}
\def\binfty{b_{\infty}}
\def\pca{\mathrm{pca}}
\def\rhoH{\rho_{\mathrm{H}}}
\def\Mbf{\mathbf{M}}
\title{Convergence Analysis of Riemannian Stochastic Approximation Schemes}
\author{Alain Durmus \\
    Centre Borelli, UMR 9010\\
	\'Ecole Normale Supérieure Paris-Saclay \\
    \texttt{alain.durmus@ens-paris-saclay.fr}\\
	\And 
    \textbf{Pablo Jim\'enez} \\
	CMAP, UMR 7641\\
	\'Ecole Polytechnique \\
	\texttt{pablo.jimenez-moreno@polytechnique.edu} \\
	\And\\
    \textbf{\'Eric Moulines}\\
	CMAP, UMR 7641\\
	\'Ecole Polytechnique\\
	\texttt{eric.moulines@polytechnique.edu}\\
	\And \\
    \textbf{Salem Said}\\
	Laboratoire IMS, UMR 5218\\
	CNRS, Universit\'e de Bordeaux\\
	\texttt{salem.said@u-bordeaux.fr}\\
	\And \\
    \textbf{Hoi-To Wai}\\
	Department of SEEM,\\
	The Chinese University of Hong Kong\\
	\texttt{htwai@se.cuhk.edu.hk}
}
\begin{document}
\maketitle

\begin{abstract}%
  This paper analyzes the convergence for a class of Riemannian stochastic approximation (SA) schemes, designed to tackle optimization problems on Riemannian manifolds. We consider SA schemes which use either the exponential map of the considered manifold or retraction maps  used as a proxy for the exponential map. The relaxation to retraction maps is of considerable interest since in many problems  the computation of the exponential map  is either costly or even untractable.
Our results are derived under mild assumptions. First, our results  are {global} in the sense that we do not assume iterates to be a-priori bounded.  Second, we allow the presence of a bias in   SA schemes. To be more specific, we consider the case where the mean-field function can only be estimated up to a bias, and/or the case in which the samples are drawn from a controlled Markov chain. Third, the assumptions on  retractions required to ensure convergence of the related SA schemes are weak and are satisfied in many settings. We illustrate our findings on applications to principal component analysis and Riemannian barycenter problems. 
\end{abstract}
%

%

\section{Introduction}
This paper is concerned with the root finding problem on a smooth Riemannian manifold $\Theta$:\vspace{-.1cm}
\begin{equation} \label{eq:opt}
\text{find~~~$\theta \in \Theta$~~~satisfying~~~$h(\theta) = 0_{\theta}$, \quad where~~~$h(\theta) = \int_{\msx} H_{\theta}(x)\rmd \pi_{\theta}(x)$} \eqsp,\vspace{-.1cm}
\end{equation}
such that $h : \Theta \to \planT \Theta$ is a smooth vector field,
called the mean vector field, for any $\theta\in \Theta$,
$\pi_{\theta}$ is a distribution over $(\msx,\mcx)$ and
$H : \Theta \times \msx \to \planT \Theta$ is a bimeasurable function
satisfying for any $\theta \in \Theta$,
$H_{\theta} : \msx \to \planT_{\theta} \Theta$.
This framework includes stochastic optimization problems related to
a smooth (but possibly non-convex even in the geodesic sense) objective function $f: \Theta \rightarrow \rset^*$, with $h = \grad f$,
where $\grad$ is the Riemannian gradient operator. Problem \eqref{eq:opt} arises in many applications such as principal component analysis (PCA) and the computation of geometric barycenter \cite{oja:1992, arnaudon:2013}.

Our objective is to study Riemannian stochastic approximation (SA) schemes to compute an approximate solution of \eqref{eq:opt}.
In the case where the geodesic curves on $\Theta$ can be explicitly computed,
the Riemannian SA scheme  is defined through the recursion: for $n \in \nset$,\vspace{-.1cm}
\begin{equation} \label{eq:sa_exp}
  \theta_{n+1} = \Exp_{\theta_{n}}\defEns{\,\upeta_{n+1}H_{\theta_{n}}(X_{n+1}) } \eqsp,\vspace{-.1cm}
\end{equation}
where $\Exp_{\theta_n}$ is the exponential map at $\theta_n$, $(X_n)_{n \in\nsets}$ is a sequence of \emph{independent} random
variables in $(\msx,\mcx)$ such that $X_n$ has distribution
$\pi_{\theta_n}$ for any $n\in\nsets$, and
$(\upeta_{n})_{n \in \nsets}$ is a sequence of positive stepsizes.
The recursion \eqref{eq:sa_exp} is a natural extension
of the Robbins-Monro algorithm originally applied on the Euclidean
setting \cite{robbins:monro:1951}, where $\Exp_\theta(y) = \theta+y$ for any $\theta, y \in \rset^d$.

A number of results have been reported for Riemannian SA \eqref{eq:sa_exp} under the
stochastic gradient (SG) setting
where $h = \grad f$ \cite{hosseini2020recent}.
The pioneering work of \cite{bonnabel2013stochastic} established the asymptotic
convergence of \eqref{eq:sa_exp} to a critical point using martingale
techniques adapted from \cite{benveniste:metivier:priouret:1990,kushner:yin:2003,borkar2009stochastic}. Later, \cite{sra:2016,zhang2016riemannian,zhang2018r}
provided non-asymptotic analysis under
the assumption that the objective function $f$ is geodesically convex
($\metricM$-convex), and that all the
iterates remains in a compact subset of $\Theta$. 
Another related work is \cite{flammarion:2018} which studied
 a \emph{retraction} based
averaging scheme and
 showed a central limit theorem for \eqref{eq:sa_exp}.
Analysis of Riemannian SA for non-convex objective has been addressed in
\cite{hosseini2019alternative}. However, they require a
strong assumption on the retraction map, the objective function and
the manifold.  We also mention that a few other papers deal with deterministic optimization on Riemannian manifold and the
convergence to local minimum in non-convex settings \cite{boumal2018global,sun2019escaping,criscitiello2019escapingsaddles}, or applying Nesterov's acceleration \cite{ahn2020nesterov} for $\metricM$-convex functions.
To deal with relatively general conditions on
the mean-field $h$, \cite{shah:2019} analyzed stochastic recursion
schemes adapting the well-known ODE method in
\cite{benveniste:metivier:priouret:1990,kushner:yin:2003} to the
Riemannian setting to show asymptotic convergence where $\Theta$ is compact.

The present paper considers several
relaxations of the scheme defined by \eqref{eq:sa_exp}. First, we assume that $(X_n)_{n \in\nsets}$ is a Markov chain whose transition is controlled
by the current value of the parameter.
Second, we deal with the use of retraction operators in place of the
exponential map in the recursion \eqref{eq:sa_exp}.
At last, we allow for the non-SG setting when $h \neq \grad f$ as well as asymptotic errors in the numerical evaluations of $H$, which results in a potentially
biased vector field in \eqref{eq:sa_exp}.
Note that the non-SG setting is common for SA: examples include, among many others, online Expectation Maximization algorithms
\cite{cappe:moulines:2009} and policy gradient \cite{baxter:bartlett:2001}.
In this sense, our work is in line with recent studies of SA in the
Euclidean setting for minimizing non-convex objective function in the
controlled Markov chain setting under  relaxed conditions on the
mean-field $h$, see
\eg~\cite{ghadimi:lan:2013,bottou2018optimization,karimi:2019} and the
references therein.
Our contributions are:
 \begin{itemize}[noitemsep,leftmargin=6mm]
 \item
We perform a {global} convergence analysis of a \emph{biased} geodesic Riemannian SA scheme without assuming a bounded domain for the iterates $(\theta_n)_{n \in \nset}$ nor $\metricM$-convexity. Under this relaxed setting, the iterates of the geodesic SA scheme satisfy $\expeLigne{\normrLigne{ h(\theta_{I_n})}[\theta_{I_n}]^2}  = {\cal O}(b_\infty + \log n / \sqrt{n})$, where $b_\infty$ is the asymptotic bias of the vector field $H$ and $I_n$ is a discrete random variable supported on $\{1,\ldots, n\}$. We cover settings when the noise is a sequence of martingale increments or stems from a controlled Markovian dynamics, see \Cref{th:martingale} \& \ref{th:markov}. 
\item
We consider Riemannian SA schemes with first or second order retraction. Note that retraction operators have met great success in practice (see \eg~\cite{malick:2012}) since they are computationally cheaper to
evaluate. Similar to the geodesic scheme, our methods find an ${\cal O}(b_\infty + \log n / \sqrt{n})$-stationary point in $n$ iterations in expectation, see \Cref{prop:retmartingale} \& \ref{th:retmarkov}.
We illustrate on several examples that the required conditions on the retraction function hold.
\item
 We consider example applications on PCA and geometric barycenter, 
 for which
  we show that the required convergence conditions are satisfied.
\end{itemize}

\paragraph{Notations}
For any two sequences of real numbers $(u_n)_{n
\in \nset}$ and $(v_n)_{n \in \nset}$, we write $u_n=\bigO(v_n)$ when
there exist $n_0\in \nset$ and $M_0 \in \rset_+$ such that for any
$n\geq n_0$, $\absLigne{u_n} \leq M_0 v_n$. If $u_n=\bigO(v_n)$ and
$v_n=\bigO(u_n)$, we write $u_n=\Omega(v_n)$.
If $\mu = \mu^+ - \mu^-$ is a signed measure over a measurable space
$(E,\calA)$, where $\mu^+$ and $\mu^-$ are non-negative measures, called
the positive and negative part of $\mu$, define the total variation
$\tvnorm{\mu} = \mu^+(E) + \mu^-(E)$.
 For any $d \in \nsets$, $\Idd$ is the identity
matrix of size $d \times d$.
We denote the tangent
space of $\Theta$ at $\theta$ by $\planT_\theta \Theta$ and its tangent
bundle $\planT \Theta$. If $\Theta_0$ and $\Theta_1$ are two smooth
manifolds, for any smooth function $f:\Theta_0 \to \Theta_1$, we denote
its derivative by $\rmD f : \rmT \Theta_0 \to \rmT \Theta_1$. The
Riemannian metric on $\Theta$ is denoted by $\metricM$ but for ease of
notation and if there is no risk of confusion, for any $\theta\in
\Theta$, $u,v \in \planT_{\theta}\Theta$, we should denote
$\metric_{\theta}(u,v) = \psr{u}{v}[\theta]$ and $\metric_{\theta}(u,u)
= \normr{u}[\theta]^2 $. Let $\nabla$ be the Levi-Civita connection of
the metric $\metricM$ on $\Theta$ (see \Cref{app:levicivita}). 
consider: $\parallelTransport_{t_0 t_1}^{\upgamma}:
\planT_{\upgamma(t_0)} \Theta \to \planT_{\upgamma(t_1)} \Theta $
stands for the parallel transport associated to the Levi-Civita
connection along a curve $\upgamma : \rmI \to \Theta$ from
$\upgamma(t_0)$ to $\upgamma(t_1)$.  In the interest of space, we leave
detailed definitions and generalities on Riemannian geometry to
\Cref{app:bg_rie}.



\section{Geodesic Stochastic Approximation Schemes} \label{sec:results}
Let $(\msx,\mcx,\PP,(\mcf_n)_{n \in \nset})$ be a filtered probability
space and consider $(X_n)_{n \in \nsets}$, an
$(\mcf_n)_{n \in \nset}$-adapted stochastic process.
The present paper studies stochastic approximation (SA) sequences $(\theta_n)_{n \in \nset}$, used as approximate solutions of \eqref{eq:opt}, starting from $\theta_0 \in \Theta$ and defined by the recursion
\vspace{-0.2cm}
\begin{equation} \label{eq:sa_exp_b}
  \theta_{n+1} = \Exp_{\theta_{n}}\defEns{\,\upeta_{n+1}\parenthese{H_{\theta_{n}}(X_{n+1}) +b_{\theta_n}(X_{n+1})}} \eqsp ,
\end{equation}
where  $H$ is given in \eqref{eq:opt},
$b : \Theta \times \msx \to \rmT \Theta$ is a bi-measurable function
satisfying for any $\theta \in \Theta$,
$b_{\theta} : \msx \to \planT_{\theta} \Theta$, and
$(\upeta_{n})_{n \in \nsets}$ is a sequence of positive stepsizes.
Compared to the recursion given in the introduction \eqref{eq:sa_exp},
$b_{\theta_n}(X_{n+1})$ is an additional bias term used to explicitly model cases when $h$ cannot be evaluated no matter how we sample. 

We discuss some basic assumptions to be used throughout this paper.
First, we ensure that the exponential map $\Exp$ is well defined over $\planT \Theta$  with the following condition (for a definition of the Riemannian exponential mapping, see \Cref{app:cutlocus}).
\vspace{-0.1cm}
\begin{assumptionA}
  \label{ass:completeness}
$\Theta$ is a geodesically complete Riemannian manifold of dimension $d\in \nsets$.
\end{assumptionA}
\vspace{-0.1cm}

\noindent
In addition, we assume the existence of a Lyapunov function $V$ for the vector field $h$.
\begin{assumptionA}
  \label{ass:lyap_mean_field}
There exists a continuously differentiable function  $V : \Theta \to  \rset^*_+$ satisfying:
\begin{enumerate}[wide, labelwidth=!, labelindent=0pt,label=(\alph*),noitemsep,nolistsep]
    \item   \label{ass:lyap_mean_field_b} There exist constants $\cl_1,\cl_2,\cu > 0$ such that for any $\theta \in \Theta$,
  \begin{equation}
    \label{eq:9}
    \cl_1 \normr{h(\theta)}[\theta]^2  \leq  \cl_2  -\psr{\grad V(\theta)}{h(\theta)}[\theta] \eqsp, \qquad \normr{\grad V(\theta)}[\theta] \leq  \cu \normr{h(\theta)}[\theta] \eqsp.
  \end{equation}
  \item \label{ass:lyap_mean_field_a} The Riemannian gradient $\grad V$ (see \Cref{app:grad_hess}) is geodesically $L$-Lipschitz, \ie~there exists $L \geq 0$ such that for any $\theta_0,\theta_1 \in \Theta$, and geodesic curve $\upgamma:[0,1]\to \Theta$ between $\theta_0$ and $\theta_1$,
\begin{equation}\label{eq:llipschitz}
\normr{\grad V(\theta_1)- \parallelTransport_{01}^{\upgamma} \grad V(\theta_0)}[\theta_1] \leq L \ell(\upgamma) \eqsp ,
\end{equation}
where $\ell(\upgamma)=\normrLigne{\dot{\upgamma}(0)}[\theta_0]$ is the length of the geodesic.
  \end{enumerate}
\end{assumptionA}
The first inequality in \ref{ass:lyap_mean_field_b} is a Lyapunov
condition (see
\cite{lasalle1960some}), and the constant $\cl_2$ underlines a
mismatch between the orientations of the mean-field and the gradient of
the Lyapunov function. In addition, \Cref{ass:lyap_mean_field}-\ref{ass:lyap_mean_field_b} implies
$\normr{h(\theta)}[\theta] \leq C_1 \normr{\grad V(\theta)}[\theta] +
C_2$ for any $\theta \in \Theta$, for some constants $C_1,C_2 \geq
0$. Meanwhile, \ref{ass:lyap_mean_field_a} is satisfied if
$V$ has a continuous Riemannian Hessian $\Hess V(\theta): \rmT_\theta
\Theta \to \rmT_\theta
\Theta$ with a bounded operator norm for all $\theta \in
\Theta$, see \Cref{lem:bounded_hessian} in \Cref{app:assumptionlemma}.
Overall, conditions \ref{ass:lyap_mean_field_b} and
\ref{ass:lyap_mean_field_a} ensure the stability of the recursion
\eqref{eq:sa_exp_b} as they imply that
$h$ is sublinear, \ie, for any $\theta_0, \theta_1 \in
\Theta$, there exists $C \geq
0$ such that for any geodesic curve $\upgamma:[0,1]\to
\Theta$ between $\theta_0$,
$\theta_1$, it holds since
$\parallelTransport_{01}^{\upgamma}$ is an isometry by
\cite[Proposition 5.5-(f)]{lee:2019}, $\normr{h(\theta_1)}[\theta_1]
\leq C_1 \normr{\grad V(\theta_1)}[\theta_1] + C_2 \leq C_3(
\ell(\upgamma) + 1)$, for $C_3 \geq
0$.  Importantly, \Cref{ass:completeness}-\Cref{ass:lyap_mean_field}
allow us to generalize the  descent lemma to the Riemannian
setting, as follows (see \cite{sra:2016}).
\begin{llemma}
  \label{lem:taylor_grad_lip}
  Assume \Cref{ass:completeness}, \Cref{ass:lyap_mean_field}-\ref{ass:lyap_mean_field_a} hold.
  For any $\theta_0,\theta_1 \in \Theta$ and geodesic curve $\upgamma : \ccint{0,1} \to \Theta$
  between $\theta_0,\theta_1$,
  \begin{equation}
    \label{eq:taylor_lip}
    \abs{V(\theta_1) - V(\theta_0) - \psr{\grad V(\theta_0)}{\dot{\upgamma}(0)}[\theta_0]} \leq L \ell(\upgamma)^2/2 \eqsp.
  \end{equation}
\end{llemma}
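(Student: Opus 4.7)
The plan is to reduce the inequality to a one-dimensional computation along the geodesic and then apply the Lipschitz bound on $\grad V$. Set $\phi(t) = V(\upgamma(t))$ for $t \in \ccintLigne{0,1}$. Since $V$ is $\rmC^1$ and $\upgamma$ is smooth, $\phi$ is differentiable with $\phi'(t) = \psr{\grad V(\upgamma(t))}{\dot\upgamma(t)}[\upgamma(t)]$, and the fundamental theorem of calculus gives
\[
V(\theta_1) - V(\theta_0) - \psr{\grad V(\theta_0)}{\dot\upgamma(0)}[\theta_0] = \int_0^1 \bigl(\phi'(t) - \phi'(0)\bigr)\,\rmd t.
\]
Hence it suffices to establish the pointwise bound $\abs{\phi'(t) - \phi'(0)} \leq L t\, \ell(\upgamma)^2$ and integrate.

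To obtain this bound, the key step is rewriting $\phi'(0)$ as an inner product at $\upgamma(t)$. Since $\upgamma$ is a geodesic, $\nabla_{\dot\upgamma}\dot\upgamma = 0$ and therefore $\dot\upgamma(t) = \parallelTransport_{0t}^\upgamma \dot\upgamma(0)$. Using that parallel transport is a linear isometry between tangent spaces,
\[
\psr{\grad V(\theta_0)}{\dot\upgamma(0)}[\theta_0] = \psr{\parallelTransport_{0t}^\upgamma \grad V(\theta_0)}{\dot\upgamma(t)}[\upgamma(t)],
\]
so
\[
\phi'(t) - \phi'(0) = \psr{\grad V(\upgamma(t)) - \parallelTransport_{0t}^\upgamma \grad V(\theta_0)}{\dot\upgamma(t)}[\upgamma(t)].
\]
By Cauchy--Schwarz,
\[
\abs{\phi'(t) - \phi'(0)} \leq \normr{\grad V(\upgamma(t)) - \parallelTransport_{0t}^\upgamma \grad V(\theta_0)}[\upgamma(t)] \cdot \normr{\dot\upgamma(t)}[\upgamma(t)].
\]

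To bound the two factors, I first observe that the restriction $\upgamma|_{\ccintLigne{0,t}}$, reparametrized as $s \mapsto \upgamma(ts)$ on $\ccintLigne{0,1}$, is a geodesic from $\theta_0$ to $\upgamma(t)$ of length $t\,\ell(\upgamma)$, with the same parallel transport map as $\parallelTransport_{0t}^\upgamma$. Applying \Cref{ass:lyap_mean_field}-\ref{ass:lyap_mean_field_a} to this geodesic bounds the first factor by $L\, t\, \ell(\upgamma)$. For the second factor, $\nabla_{\dot\upgamma}\dot\upgamma = 0$ together with metric compatibility of the Levi-Civita connection implies that the speed $\normr{\dot\upgamma(s)}[\upgamma(s)]$ is constant in $s$, hence equal to $\ell(\upgamma)$. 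Combining yields $\abs{\phi'(t) - \phi'(0)} \leq L\, t\, \ell(\upgamma)^2$, and integration over $t \in \ccintLigne{0,1}$ produces the stated inequality. The only mildly subtle point is the reparametrization argument, which uses that affine reparametrizations preserve both the geodesic equation and the associated parallel transport; this is a standard fact about the Levi-Civita connection. \Cref{ass:completeness} ensures that geodesics between any two points are defined on all of $\ccintLigne{0,1}$, so the computation is globally valid.
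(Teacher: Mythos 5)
Your proof is correct and follows essentially the same route as the paper's: both express the increment as an integral along the geodesic, use that $\dot\upgamma(t)=\parallelTransport_{0t}^{\upgamma}\dot\upgamma(0)$ together with the isometry of parallel transport to reduce to $\normr{\grad V(\upgamma(t))-\parallelTransport_{0t}^{\upgamma}\grad V(\theta_0)}[\upgamma(t)]$, apply \Cref{ass:lyap_mean_field}-\ref{ass:lyap_mean_field_a} to the restricted geodesic of length $t\,\ell(\upgamma)$, and integrate $L t\,\ell(\upgamma)^2$ over $\ccint{0,1}$. Your explicit reparametrization justification for why the restriction is itself a geodesic satisfying the hypothesis of the Lipschitz assumption is a minor presentational difference from the paper's citation of \cite[Lemma 5.18]{lee:2019}, but the argument is the same.
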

\begin{proof}
For completeness, the proof is given in \Cref{sec:proof-crefl:taylor_grad_lip}.
\end{proof}
This derivation is crucial for our proofs, and exists in different
settings in the literature, \eg~\cite[Lemma 7.4.7]{absil:2008}, which
differs with this result because \Cref{lem:taylor_grad_lip} holds
for any geodesic curves, and is not limited to the length-minimizing
ones.
Finally, we assume that the bias term in \eqref{eq:sa_exp_b} is uniformly bounded.
\vspace{-0.1cm}
\begin{assumptionA}
  \label{ass:bounded_bias}
There exists a constant $b_{\infty}$ such that $\sup_{(\theta,x) \in  \Theta \times \msx} \normr{b_{\theta}(x)}[\theta] \leq b_{\infty}$.
\end{assumptionA}
\vspace{-0.1cm}

We provide non-asymptotic guarantees for the geodesic SA scheme \eqref{eq:sa_exp_b} to find an \emph{(approximate) stationary point}.
Roughly, these results ensure the ability of
the scheme \eqref{eq:sa_exp_b} to produce an estimator
$\tilde{\theta} \in\Theta$ based on $\{\theta_1,\ldots,\theta_n\}$,
satisfying
$\expeLigne{\normrLigne{h(\tilde{\theta})}[\tilde{\theta}]^2} \leq
\epsilon + \widetilde{c} b_\infty$, for a given target precision
$\epsilon>0$ and if $n$ is large enough. More precisely, the estimator
that we consider in this paper is the $I_n$-th iterate $\theta_{I_n}$, where $I_n$ is a
random variable independent of $\theta_0$ and $\mcf_n$,
with distribution, for any $\ell \in \{0,\ldots,n\}$,
\begin{equation} \label{eq:nn} \textstyle
  \proba{I_n = \ell} = \big(\sum^n_{i=0}\,\upeta_{i+1})^{-1} \upeta_{\ell+1} \eqsp.
\end{equation}
To simplify notation in our subsequent discussions, define for any $n \in \nset$ and $p \geq 1$,
\begin{equation}
  \label{eq:def_Gamma} \textstyle
  \Gamma_n^{(p)} = \sum_{k=1}^n \upeta_{\,k}^p \eqsp, \qquad \Gamma_n  = \Gamma_n^{(1)} \eqsp.
\end{equation}
To gain intuition on the analysis for geodesic SA scheme \eqref{eq:sa_exp},
let us present a simple proof in a simplified setting. 
\begin{pproposition}\label{prop:deter}
    Consider the sequence $(\theta_n)_{n\in \nset}$ defined by the
    deterministic version of \eqref{eq:sa_exp}, \ie~$H_{\theta}(x)=h(\theta)$ and $b_{\theta}(x) = 0$ for any $x \in \msx$
and $\theta \in \Theta$. Assume
\Cref{ass:completeness}-\Cref{ass:lyap_mean_field} and $\sup_{k
\in \nsets} \upeta_k \leq \cl_1 /L$. Then, for any $n \in \nset$ we
have,
\begin{equation} \label{eq:thdeter}
  \expeLigne{\normrLigne{ h(\theta_{I_n})}[\theta_{I_n}]^2} \, \leq 2 \cl_1^{-1}\{V(\theta_0) -V(\theta_{n+1})\}/\Gamma_{n+1} +2 \cl_1^{-1}\cl_2 \eqsp,
\end{equation}
where $I_n$ has distribution defined by \eqref{eq:nn}.
\end{pproposition}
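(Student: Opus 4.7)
The plan is to apply the Riemannian descent lemma (\Cref{lem:taylor_grad_lip}) along the specific geodesic traced by the update step, then feed in the Lyapunov inequality and telescope. Concretely, for fixed $n \in \nset$, I would introduce the curve $\upgamma_n : [0,1] \to \Theta$ defined by $\upgamma_n(t) = \Exp_{\theta_n}(t\, \upeta_{n+1} h(\theta_n))$, which is well-defined on all of $[0,1]$ thanks to the geodesic completeness in \Cref{ass:completeness}. It satisfies $\upgamma_n(0) = \theta_n$, $\upgamma_n(1) = \theta_{n+1}$, $\dot{\upgamma}_n(0) = \upeta_{n+1} h(\theta_n)$, and $\ell(\upgamma_n) = \upeta_{n+1} \normr{h(\theta_n)}[\theta_n]$. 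Applying \Cref{lem:taylor_grad_lip} to $\upgamma_n$ gives
\[
V(\theta_{n+1}) \leq V(\theta_n) + \upeta_{n+1} \psr{\grad V(\theta_n)}{h(\theta_n)}[\theta_n] + \frac{L \upeta_{n+1}^2}{2} \normr{h(\theta_n)}[\theta_n]^2.
\]

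Next, I would invoke \Cref{ass:lyap_mean_field}\ref{ass:lyap_mean_field_b} in the form $\psr{\grad V(\theta_n)}{h(\theta_n)}[\theta_n] \leq \cl_2 - \cl_1 \normr{h(\theta_n)}[\theta_n]^2$, plug it into the inequality above, and use the step size bound $\upeta_{n+1} \leq \cl_1/L$ to obtain $\cl_1 - L \upeta_{n+1}/2 \geq \cl_1/2$. This yields the one-step descent inequality
\[
\frac{\cl_1}{2}\, \upeta_{n+1} \normr{h(\theta_n)}[\theta_n]^2 \leq V(\theta_n) - V(\theta_{n+1}) + \cl_2 \, \upeta_{n+1}.
\]
Summing this for $k \in \{0,\dots,n\}$ telescopes the right-hand side, and dividing by $\Gamma_{n+1}$ yields
\[
\frac{1}{\Gamma_{n+1}} \sum_{k=0}^n \upeta_{k+1} \normr{h(\theta_k)}[\theta_k]^2 \leq \frac{2}{\cl_1} \frac{V(\theta_0) - V(\theta_{n+1})}{\Gamma_{n+1}} + \frac{2 \cl_2}{\cl_1}.
\]

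To conclude, I would observe that the left-hand side is exactly $\expeLigne{\normrLigne{h(\theta_{I_n})}[\theta_{I_n}]^2}$ by the distribution of $I_n$ in \eqref{eq:nn}, which delivers \eqref{eq:thdeter}. Since this is the noise-free deterministic warm-up, there is no genuine obstacle; the only step worth flagging is that \Cref{lem:taylor_grad_lip} must apply to the specific geodesic $\upgamma_n$ defined via $\Exp_{\theta_n}$, which is not assumed to be length-minimizing. This is precisely the generality built into \Cref{lem:taylor_grad_lip} (as opposed to classical statements restricted to minimizing geodesics), and is what makes the one-line application above legitimate for arbitrary step sizes bounded by $\cl_1/L$.
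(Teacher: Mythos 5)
Your proof is correct and follows essentially the same route as the paper: the geodesic segment $\upgamma_n(t) = \Exp_{\theta_n}(t\,\upeta_{n+1} h(\theta_n))$, the descent estimate from \Cref{lem:taylor_grad_lip}, the Lyapunov inequality from \Cref{ass:lyap_mean_field}-\ref{ass:lyap_mean_field_b}, the step-size bound giving $\cl_1 - L\upeta_{n+1}/2 \geq \cl_1/2$, telescoping, and integrating against the distribution of $I_n$. Your closing remark about \Cref{lem:taylor_grad_lip} applying to non-minimizing geodesics is exactly the point the paper itself emphasizes.
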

\begin{proof}
The analysis here is a consequence of \Cref{lem:taylor_grad_lip}.
For any $k \geq 0$ and $t \in \ccint{0,1}$, we observe that the geodesic segment $\upgamma^{(k)}(t) =
  \Exp_{\theta_k}(t \upeta_{k+1}h(\theta_k))$ satisfies $\dot{\upgamma}^{(k)}(0) = \upeta_{k+1}
  h(\theta_k)$ and  $\ell(\upgamma^{(k)}) = \upeta_{k+1}
  \normr{h(\theta_k)}[\theta_k]$; see \cite[Corollary 5.6-(b)]{lee:2019}. Applying \Cref{lem:taylor_grad_lip} yields,
  \begin{align}
    \label{eq:4_main}
    \abs{V(\theta_{k+1}) - V(\theta_k) - \upeta_{k+1}\psr{\grad
    V(\theta_k)}{h(\theta_k)}[\theta_k]} 
    &\leq (L/2)
    \ell(\upgamma^{(k)})^2    \\
    &= (L \upeta_{k+1}^2 /2)
    \normr{h(\theta_k)}[\theta_k]^2 \eqsp.
  \end{align}
  Using \Cref{ass:lyap_mean_field}-\ref{ass:lyap_mean_field_b}, we obtain for any $k \in \nsets$
  \begin{equation} \label{eq:5_main}
  \textstyle \sum_{k=0}^n \upeta_{k+1}(\cl_1-(L/2)\upeta_{k+1})\normr{h(\theta_k)}[\theta_k]^2 \leq V(\theta_0) - V(\theta_{n+1}) + \cl_2 \Gamma_{n+1} \eqsp.
  \end{equation}
Setting the step size as $\sup_{k \in \nsets}\upeta_{k} \leq \cl_1/L$
and using \eqref{eq:nn}, we obtain by integrating \wrt\ $I_n$ the result \eqref{eq:thdeter}.
\end{proof}
In the case were $\cl_2 = 0$ in
\Cref{ass:lyap_mean_field}-\ref{ass:lyap_mean_field_b} and taking a
constant step size $\upeta_k = \cl_1 / L$, \Cref{prop:deter}
implies that $\expeLigne{\normrLigne{ h(\theta_{I_n})}[\theta_{I_n}]^2} =
{\cal O}(1/n)$.

We now consider the the actual geodesic SA scheme \eqref{eq:sa_exp_b}. Define
\begin{equation} \label{eq:markovh}
\noise_{\theta}(x) = H_{\theta}(x) - h(\theta) \eqsp.
\end{equation}
Note that for any $x\in \msx$, $\theta \mapsto \noise_\theta (x)$ is a vector field on $\Theta$ 
corresponding to the noise in the estimation of the mean field $h$. By \Cref{lem:taylor_grad_lip} and arguments paralleling those used to derive \eqref{eq:5_main}, we show 
\begin{llemma}
  \label{lem:first_inequality_lemma_main}
  Assume \Cref{ass:completeness}, \Cref{ass:lyap_mean_field}. Setting $\Delta M_k=
  \psrLigne{\grad V(\theta_k)}{\noise_{\theta_k}(X_{k+1})}[\theta_k]$ for any $k\in \nset$, we have for any $n \in \nsets$ and $\varepsilon >0$,
  \begin{equation}
    \label{eq:first_inequality}
  \begin{aligned}
&\textstyle{\sum_{k=0}^{n} \upeta_{k+1} (\cl_1-(3L/2) \upeta_{k+1}-\cu^2\varepsilon)
 \normr{h(\theta_k)}[\theta_k]^2} \\ 
      & \qquad \qquad\textstyle{ \leq \sum_{k=0}^n \upeta_{k+1}\Delta M_k + (3L/2) \sum_{k=0}^n \upeta_{k+1}^2 \normr{\noise_{\theta_k}(X_{k+1})}[\theta_k]^2}\\
    & \qquad \qquad \quad \textstyle{ +V(\theta_0) 
 +\cl_2 \Gamma_{n+1} 
  + \sum_{k=0}^n \upeta_{k+1}\defEns{\parentheseLigne{4\varepsilon}^{-1} + (3L/2)\upeta_{k+1}}\normr{b_{\theta_k}(X_{k+1})}[\theta_k]^2 } \eqsp.
  \end{aligned}
  \end{equation}
\end{llemma}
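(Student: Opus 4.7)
The plan is to mimic the proof of \Cref{prop:deter}, but applied to the noisy, biased geodesic $\upgamma^{(k)}(t) = \Exp_{\theta_k}\bigl(t\,\upeta_{k+1}(H_{\theta_k}(X_{k+1}) + b_{\theta_k}(X_{k+1}))\bigr)$ so that $\upgamma^{(k)}(0)=\theta_k$, $\upgamma^{(k)}(1)=\theta_{k+1}$, $\dot\upgamma^{(k)}(0)=\upeta_{k+1}(H_{\theta_k}(X_{k+1})+b_{\theta_k}(X_{k+1}))$, and $\ell(\upgamma^{(k)}) = \upeta_{k+1}\normr{H_{\theta_k}(X_{k+1})+b_{\theta_k}(X_{k+1})}[\theta_k]$. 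Applying \Cref{lem:taylor_grad_lip} to this geodesic yields the one-step descent inequality
\begin{equation*}
V(\theta_{k+1}) - V(\theta_k) \leq \upeta_{k+1}\psr{\grad V(\theta_k)}{H_{\theta_k}(X_{k+1})+b_{\theta_k}(X_{k+1})}[\theta_k] + \frac{L\upeta_{k+1}^2}{2}\normr{H_{\theta_k}(X_{k+1})+b_{\theta_k}(X_{k+1})}[\theta_k]^2.
\end{equation*}

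Next, I would split the inner-product term using the decomposition $H_{\theta_k}(X_{k+1}) = h(\theta_k) + \noise_{\theta_k}(X_{k+1})$ to introduce the three pieces $\psrLigne{\grad V(\theta_k)}{h(\theta_k)}$, $\Delta M_k$, and $\psrLigne{\grad V(\theta_k)}{b_{\theta_k}(X_{k+1})}$. The first piece is bounded from above using the Lyapunov condition in \Cref{ass:lyap_mean_field}\ref{ass:lyap_mean_field_b}, yielding $\psrLigne{\grad V(\theta_k)}{h(\theta_k)}[\theta_k] \leq \cl_2 - \cl_1 \normrLigne{h(\theta_k)}[\theta_k]^2$. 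The bias inner-product is controlled by Young's inequality with parameter $\varepsilon$ and the bound $\normrLigne{\grad V(\theta_k)}[\theta_k] \leq \cu \normrLigne{h(\theta_k)}[\theta_k]$ (also from \Cref{ass:lyap_mean_field}\ref{ass:lyap_mean_field_b}):
\begin{equation*}
\psr{\grad V(\theta_k)}{b_{\theta_k}(X_{k+1})}[\theta_k] \leq \varepsilon \normr{\grad V(\theta_k)}[\theta_k]^2 + \frac{1}{4\varepsilon}\normr{b_{\theta_k}(X_{k+1})}[\theta_k]^2 \leq \cu^2 \varepsilon \normr{h(\theta_k)}[\theta_k]^2 + \frac{1}{4\varepsilon}\normr{b_{\theta_k}(X_{k+1})}[\theta_k]^2.
\end{equation*}
For the quadratic step-size term, I would use $\normrLigne{a+b+c}[\theta_k]^2 \leq 3(\normrLigne{a}[\theta_k]^2+\normrLigne{b}[\theta_k]^2+\normrLigne{c}[\theta_k]^2)$ with $a=h(\theta_k)$, $b=\noise_{\theta_k}(X_{k+1})$, $c=b_{\theta_k}(X_{k+1})$; the $\normr{h(\theta_k)}[\theta_k]^2$ piece will be absorbed into the left-hand side, while the other two stay on the right.

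Combining, I would rearrange to obtain, for each $k \in \{0,\dots,n\}$,
\begin{equation*}
\upeta_{k+1}\bigl(\cl_1 - (3L/2)\upeta_{k+1} - \cu^2\varepsilon\bigr)\normr{h(\theta_k)}[\theta_k]^2 \leq V(\theta_k) - V(\theta_{k+1}) + \upeta_{k+1}\cl_2 + \upeta_{k+1}\Delta M_k + (3L/2)\upeta_{k+1}^2\normr{\noise_{\theta_k}(X_{k+1})}[\theta_k]^2 + \upeta_{k+1}\bigl\{(4\varepsilon)^{-1} + (3L/2)\upeta_{k+1}\bigr\}\normr{b_{\theta_k}(X_{k+1})}[\theta_k]^2.
\end{equation*}
Telescoping over $k = 0,\ldots,n$ and using $V \geq 0$ to drop $-V(\theta_{n+1})$ yields the claimed inequality.

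There is no serious obstacle: the proof is essentially an exact replay of the Euclidean descent-lemma argument, with \Cref{lem:taylor_grad_lip} playing the role of the usual quadratic upper bound. The only mild care needed is in the bookkeeping of the three-term splitting $H+b = h+\noise+b$ and the choice of the Young's-inequality weight $\varepsilon$ so that the bias contribution $\cu^2\varepsilon\normr{h}[\theta_k]^2$ lands on the left-hand side with the correct sign, exactly matching the coefficient $\cl_1 - (3L/2)\upeta_{k+1} - \cu^2\varepsilon$ appearing in the statement.
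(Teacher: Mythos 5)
Your proposal is correct and follows essentially the same route as the paper's proof: the same geodesic $\upgamma^{(k)}$, the same application of \Cref{lem:taylor_grad_lip} with the three-term splitting $H+b = h+\noise+b$ and the bound $\normrLigne{a+b+c}[\theta]^2 \leq 3(\normrLigne{a}[\theta]^2+\normrLigne{b}[\theta]^2+\normrLigne{c}[\theta]^2)$, the same Young/Cauchy--Schwarz treatment of the bias term via $\normrLigne{\grad V(\theta_k)}[\theta_k]\leq \cu\normrLigne{h(\theta_k)}[\theta_k]$, and the same telescoping with $V\geq 0$. No gaps.
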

\begin{proof}
The proof is postponed to \Cref{sec:tech_lemma}.
\end{proof}
The terms in the right-hand side of the inequality given by
\Cref{lem:first_inequality_lemma_main} correspond to different
sources of error that we can identify.
First, the two terms $\sum_{k=0}^n \upeta_{k+1}\Delta M_k$ and $\sum_{k=0}^n
\upeta_{k+1}^2 \normr{\noise_{\theta_k}(X_{k+1})}[\theta_k]^2$ come
from the noise vector field, in the first and second-order
approximation given by \Cref{lem:taylor_grad_lip} respectively.
The other term gathers the different sources of bias. More specifically,
$V(\theta_0)$ corresponds to the initial conditions, $\cl_2
\Gamma_{n+1}$ is introduced by the constant $\cl_2$ underlied in
\Cref{ass:lyap_mean_field}-\ref{ass:lyap_mean_field_b} and
the last term results from the bias on the measures of the mean-field
function.

With a sufficiently small step size, the left-hand side in the inequality above can be lower bounded by $\sum_{k=0}^n \upeta_{k+1} \normr{h(\theta_k)}[\theta_k]^2$. Meanwhile, the right-hand side can be controlled if $\sup_{n \in \nset} \sum_{k=0}^n \upeta_{k+1}\PE[\Delta M_k] < \plusinfty$. To control this term, we consider two different settings.

\begin{assumptionMD}[\emph{Martingale Setting}]
  \label{ass:0mean_noise}
The sequence $(\noise_{\theta_n}(X_{n+1}))_{n \in \nset}$ is a
martingale difference sequence with respect to the filtration
$(\mcf_{n})_{n \in \nset}$, \ie~
$\expe{\noise_{\theta_n}(X_{n+1})\middle| \mcf_n} = 0$ for any $n \in
\nsets$. Also, there exist $\sigmaZ,\sigmaU < \plusinfty$ such that for
any $n \in \nsets$,
$
\expeLigne{\normrLigne{\noise_{\theta_n}\parentheseLigne{X_{n+1}}}[\theta_n]^2 | \mcf_n} \leq \sigmaZ +
\sigmaU\, \normrLigne{h(\theta_n)}[\theta_n]^2$.
\end{assumptionMD}
Under \Cref{ass:0mean_noise}, we note that $\PE[ \sum_{k=0}^n \upeta_{k+1}\Delta M_k ] = 0 $ in \Cref{lem:first_inequality_lemma_main} and thus the following result can be easily obtained:
%
\begin{theorem} \label{th:martingale}
 Assume \Cref{ass:completeness}-\Cref{ass:lyap_mean_field}-\Cref{ass:bounded_bias}-\Cref{ass:0mean_noise}.
 Consider $(\theta_k)_{k \in \nset}$ defined by \eqref{eq:sa_exp_b}.
 If $\sup_{k\in \nsets}\upeta_{k} \leq \cl_1/(6L(1+\sigma^2_{1}))$, then for any $n \in \nsets$,
\begin{align}
 \label{eq:thmartingale}
 \expe{\normr{ h(\theta_{I_n})}[\theta_{I_n}]^2} &\leq \parenthese{\cl_1 \Gamma_{n+1}}^{-1}
  \defEns{2 \expe{V(\theta_{0})}+ 3L (\sigma_0^2 + b_{\infty}^2) \Gamma_{n+1}^{(2)} }\\
  &\quad + 2( b_{\infty} \cu / \cl_1)^2 + 2\cl_2/\cl_1 \eqsp,
\end{align}
where $I_n$ has distribution  defined by \eqref{eq:nn} and is independent of $(X_k)_{k \in\nsets}$.
\end{theorem}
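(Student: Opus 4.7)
The plan is to apply \Cref{lem:first_inequality_lemma_main} with a carefully chosen $\varepsilon$, take expectations under the martingale assumption \Cref{ass:0mean_noise}, and rearrange. The key is that under \Cref{ass:0mean_noise}, the martingale term on the right-hand side vanishes in expectation, and the noise second moment can be partially absorbed into the left-hand side.

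First I would set $\varepsilon = \cl_1/(4 \cu^2)$ in \Cref{lem:first_inequality_lemma_main}. Taking expectations, the term $\Esp[\sum_{k=0}^n \upeta_{k+1} \Delta M_k]$ vanishes: indeed, $\grad V(\theta_k)$ is $\mcf_k$-measurable since $(\theta_k)$ is adapted, and $\Esp[\noise_{\theta_k}(X_{k+1}) \mid \mcf_k] = 0$ by \Cref{ass:0mean_noise}. Next, \Cref{ass:0mean_noise} yields $\Esp[\normr{\noise_{\theta_k}(X_{k+1})}[\theta_k]^2] \leq \sigmaZ + \sigmaU \, \Esp[\normr{h(\theta_k)}[\theta_k]^2]$, and \Cref{ass:bounded_bias} gives $\normr{b_{\theta_k}(X_{k+1})}[\theta_k]^2 \leq b_\infty^2$ deterministically.

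I would then move the part of the noise bound that is proportional to $\Esp[\normr{h(\theta_k)}[\theta_k]^2]$ to the left-hand side. The effective coefficient in front of $\upeta_{k+1} \Esp[\normr{h(\theta_k)}[\theta_k]^2]$ becomes
\begin{equation*}
\cl_1 - (3L/2)\upeta_{k+1}(1+\sigmaU) - \cu^2 \varepsilon \eqsp.
\end{equation*}
With our choice $\cu^2 \varepsilon = \cl_1/4$ and the step size hypothesis $\sup_k \upeta_k \leq \cl_1 / (6L(1+\sigmaU))$, which forces $(3L/2)\upeta_{k+1}(1+\sigmaU) \leq \cl_1/4$, this coefficient is at least $\cl_1/2$.

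On the right-hand side, collecting the bias contributions with $1/(4\varepsilon) = \cu^2/\cl_1$ gives a term $(\cu^2/\cl_1) b_\infty^2 \Gamma_{n+1}$ plus $(3L/2)(\sigmaZ + b_\infty^2) \Gamma_{n+1}^{(2)}$ from the second-order terms, and the deterministic $\cl_2 \Gamma_{n+1} + \Esp[V(\theta_0)]$ contribution. Dividing the resulting inequality by $(\cl_1/2)\Gamma_{n+1}$ and using the definition \eqref{eq:nn} of $I_n$, so that $\Gamma_{n+1}^{-1} \sum_{k=0}^n \upeta_{k+1} \Esp[\normr{h(\theta_k)}[\theta_k]^2] = \Esp[\normr{h(\theta_{I_n})}[\theta_{I_n}]^2]$, yields exactly \eqref{eq:thmartingale}. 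No step is truly an obstacle here — the proof is essentially bookkeeping, with the only delicate point being the choice of $\varepsilon$ balancing the Cauchy–Schwarz bound on the bias against the Lyapunov coercivity, so that both the $2(b_\infty \cu/\cl_1)^2$ and $2\cl_2/\cl_1$ constants appear with the stated sharp prefactors.
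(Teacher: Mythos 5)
Your proposal is correct and follows essentially the same route as the paper's proof: apply \Cref{lem:first_inequality_lemma_main} with $\varepsilon = \cl_1/(4\cu^2)$, take expectations so that the martingale term vanishes under \Cref{ass:0mean_noise}, absorb the $\sigma_1^2$-part of the noise bound into the left-hand side (where the step-size condition guarantees the coefficient stays above $\cl_1/2$), and conclude via the definition \eqref{eq:nn} of $I_n$. All constants check out against \eqref{eq:thmartingale}.
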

\begin{proof} 
    The proof of \Cref{th:martingale} is postponed to \Cref{sec:proof-crefth:m}.
\end{proof}
We notice that the result is similar to \eqref{eq:thdeter} for a deterministic scheme, with
additional constants related to the variance $\sigma_0$ and bias $b_\infty$.

To control the term $\Gamma_{n+1}^{(2)}$, it is desirable to select a nonincreasing step size.
When the step size is $\upeta_k =
\square / (k + \triangle)^\alpha$,
for some $\square, \triangle >0$ and  $\alpha\in (0,1]$,
\Cref{th:martingale} shows that
$\expeLigne{\normrLigne{ h(\theta_{I_n})}[\theta_{I_n}]^2} \leq R(\square, \triangle,\alpha,n)  + 2( b_{\infty} \cu / \cl_1)^2 + 2\cl_2/\cl_1$
where $R(\square, \triangle,\alpha,n)= \bigO(1/n^{\alpha \wedge (1-\alpha)})$ for $\alpha \in (0,1/2) \cup (1/2,1)$, 
$R(\square, \triangle,1/2,n)= \bigO(\log(n)/\sqrt{n})$ and $R(\square, \triangle,1,n)= \bigO(\log(n)^{-1})$.


\vspace{.2cm}
\noindent
We now turn to the Markovian setting.

\begin{assumptionMA}[\emph{Markovian Setting}]
      \label{ass:markov}
      There exists a Markov kernel $P$ on $(\Theta \times \msx) \times
    \mcx$ such that for any $n \in \nset$ and bounded and measurable
    function $\varphi : \msx \to \rset_+$, $\expeLigne{\varphi(X_{n+1})
    |\mcf_n} = \int_{\msx}\varphi(y)P_{\theta_n}(X_n, \rmd y)$.  In
    addition, for any $\theta \in
            \Theta$, $P_\theta$ admits a unique invariant distribution
            $\pi_\theta$ satisfying $h(\theta) = \int_{\msx}
            H_\theta(y) \rmd \pi_\theta(y)$.
\end{assumptionMA}
Consider a measurable function $w : \msx \to \cointLigne{1,+\infty}$ and the following additional condition.
\begin{assumptionMA}[$w$]
    \label{ass:w_markov}
The Markov kernel $P$ and the measurable function $w$ satisfy the following conditions.
    \begin{enumerate}[wide, labelwidth=!, labelindent=0pt,label=(\alph*),noitemsep,nolistsep]
                    \item \label{ass:item:w_markov:e_bound} There exists $e_{\infty} >0$
            such that for any  $x \in \msx$,
            $\sup_{\theta\in\Theta} \normrLigne{\noise_{\theta}(x)}[\theta] \leq e_{\infty} w^{1/2}
            (x)$.
        \item \label{ass:item:w_markov:moment} There exists $C_w \geq 1$
            such that $                \sup_{k \in \nset} \expe{ w(X_{k+1}) } \leq C_w$.
        \item \label{ass:item:w_markov:poisson} For any $\theta \in
            \Theta$, there exists a measurable function $\hnoise :
            \Theta \times \msx \to \planT \Theta$ satisfying for any $x
            \in \msx$, $\theta \in \Theta$, $\hnoise_\theta(x) \in
            \planT_\theta \Theta$, and
            \begin{equation}
                \label{eq:markov:poisson}
                \hnoise_{\theta}(x) - \int_{\msx} P_{\theta}(x,\rmd y)
                \hnoise_{\theta}(y) = \noise_{\theta}(x) \eqsp.
            \end{equation}
            Moreover, there exists $\hnoise_{\infty} >0$ such that for any
            $\theta \in \Theta$, $x \in \msx$
            $\normrLigne{\hnoise_\theta(x)}[\theta] \leq \hnoise_{\infty}
            w^{1/2}(x)$.
        \item   \label{ass:item:w_markov:poisson_regularity} There
            exists $L_{\hnoise}\geq 0$ such that for any
            $\theta_0,\theta_1 \in \Theta$, $x \in \msx$ and geodesic
            curve $\upgamma : \ccint{0,1} \to \Theta$ between
            $\theta_0$ and $\theta_1$,
            \begin{equation}
               \normr{\int_{\msx}
                    P_{\theta_1}(x,\rmd y) \hnoise_{\theta_1}(y)
                    -\parallelTransport_{01}^{\upgamma}\parentheseDeux{\int_{\msx}
                P_{\theta_0}(x,\rmd y)\hnoise_{\theta_0}(y)}}[\theta_1]
                \leq L_{\hnoise} \ell(\upgamma) w^{1/2}(x) \eqsp.
            \end{equation}
    \end{enumerate}
\end{assumptionMA}

We shall discuss the setting which implies \Cref{ass:w_markov}.
Note that \ref{ass:item:w_markov:e_bound} is a mild assumption on a
$\Theta$-uniform control (uniformly on the manifold) of the noise;
\ref{ass:item:w_markov:moment} is automatically satisfied if $w$ is bounded;
\ref{ass:item:w_markov:poisson} assumes the existence of solutions to the Poisson equation \eqref{eq:markov:poisson},
 which can be established if the Markov kernel $P_{\theta}$ is $w$-geometrically
  for any $\theta \in \Theta$.
In addition, it is also  required that $\hnoise_{\theta}$ (for fixed $\theta$) to be uniformly bounded by $w^{1/2}$.
Finally, assumption  \ref{ass:item:w_markov:poisson_regularity}  is implied by smoothness conditions
on the Markov kernel and the noise with respect to the SA parameter $\theta$.
Here is a specific statement of the previous discussion.
\begin{pproposition}
  \label{propo:condition_P_implication_MA}
Assume \Cref{ass:completeness}, \Cref{ass:markov}.  The assumption \Cref{ass:w_markov}$(w)$ holds if one of the following is true:
  \begin{enumerate}[wide, labelwidth=!, labelindent=0pt,label=(\alph*),noitemsep,nolistsep]
  \item For any $\theta \in \Theta$, $P_{\theta}$ is uniformly
    ergodic with constant uniform in $\theta$, \ie~it has a unique stationary distribution $\pi_{\theta}$ and there exist
    $\varespilon_P \in \ooint{0,1}$ and $C_P \geq 0$ such that for any $\theta \in \Theta$, $x,x'\in\msx$,
    and $k \in \nset$, $\tvnorm{\updelta_{x} P^k_{\theta} - \updelta_{x'} P^k_{\theta}} \leq
   C_P (1-\varepsilon_P)^k$. In addition
    $\sup_{x \in \msx, \theta\in \Theta}
    \normr{\noise_{\theta}(x)}[\theta] < \plusinfty$ and there exists $C \geq 0$ such that for any $\theta_0,\theta_1 \in \Theta$, $x \in \msx$ and geodesic
            curve $\upgamma : \ccint{0,1} \to \Theta$ between
            $\theta_0$ and $\theta_1$,
          \end{enumerate}
          \begin{equation}
            \label{eq:prop_condition_P_implication_MA_a}
              \tvnorm{\updelta_x P_{\theta_1} - \updelta_x P_{\theta_0}} \leq C \ell(\upgamma) \eqsp, \qquad
               \normr{ \noise_{\theta_1}(x)
                    -\parallelTransport_{01}^{\upgamma}\noise_{\theta_0}(x)}[\theta_1]
                \leq C \ell(\upgamma)  \eqsp.
              \end{equation}
              \begin{enumerate}[resume, wide, labelwidth=!, labelindent=0pt,label=(\alph*),noitemsep,nolistsep]
              \item \label{item:prop_condition_P_implication_MA_b} For
                  any $\theta \in \Theta$, $P_{\theta}=P$, where $P$ is
                  a Markov kernel on $\msx \times \mcx$. Moreover,
                  there exists $w : \msx \to \coint{1,\plusinfty}$,
                  $\lambda \in \ooint{0,1}$, $\bw \geq 0$ such that
                  for any $x \in \msx$, $                P w(x) \leq
                  \lambda w(x) + \bw \1_{\msc}(x)$
              where $\msc \in \mcx$ is a small set for $P$. In
              addition, there exists a constant $C \geq 0$ such that
              for any $x \in \msx$,
              $\sup_{\theta\in \Theta}
              \normr{\noise_{\theta}(x)}[\theta] < Cw^{1/2}(x)$ and for
              any $\theta_0,\theta_1 \in \Theta$, $x \in \msx$ and
              geodesic curve $\upgamma : \ccint{0,1} \to \Theta$
              between $\theta_0$ and $\theta_1$,
     \end{enumerate}
              \begin{equation}
                            \label{eq:prop_condition_P_implication_MA_b}
               \normr{\noise_{\theta_1}(x)
                    -\parallelTransport_{01}^{\upgamma}\noise_{\theta_0}(x)}[\theta_1]
                \leq C \ell(\upgamma)w^{1/2}(x)  \eqsp.
              \end{equation}
\end{pproposition}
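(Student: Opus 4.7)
The plan is to verify sub-conditions (a)--(d) of \Cref{ass:w_markov} separately in each of the two regimes (a) and (b) of the proposition.

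In regime (a), I take $w \equiv 1$, so the first two sub-conditions are immediate from the uniform bound $\sup_{x,\theta}\normrLigne{\noise_\theta(x)}[\theta] < \plusinfty$. For sub-condition (c), I first observe that $\int_\msx \noise_\theta \,\rmd\pi_\theta = \int_\msx H_\theta \,\rmd\pi_\theta - h(\theta) = 0$ by the definition of $h$ in \eqref{eq:opt}, and construct the Poisson solution as the Neumann series $\hnoise_\theta(x) = \sum_{k=0}^\infty \int_\msx P_\theta^k(x, \rmd y)\noise_\theta(y)$, interpreted term-by-term in the fixed tangent space $\planT_\theta\Theta$. Uniform ergodicity combined with $\pi_\theta(\noise_\theta) = 0$ forces geometric decay at rate $(1 - \varepsilon_P)^k$ uniformly in $x$ and $\theta$, yielding a bound $\hnoise_\infty \leq C_P \varepsilon_P^{-1} \sup_{x,\theta}\normrLigne{\noise_\theta(x)}[\theta]$.

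For sub-condition (d) in regime (a), rather than differentiating the series term-by-term, I would introduce $\Delta(x) := \hnoise_{\theta_1}(x) - \parallelTransport_{01}^{\upgamma}\hnoise_{\theta_0}(x)$ and derive a perturbed Poisson equation
\begin{equation*}
\Delta(x) = [\noise_{\theta_1}(x) - \parallelTransport_{01}^{\upgamma}\noise_{\theta_0}(x)] + \int_\msx P_{\theta_1}(x,\rmd y)\Delta(y) + \int_\msx [P_{\theta_1}(x,\rmd y) - P_{\theta_0}(x,\rmd y)]\parallelTransport_{01}^{\upgamma}\hnoise_{\theta_0}(y),
\end{equation*}
obtained by subtracting the Poisson equations for $\hnoise_{\theta_0}$ and $\hnoise_{\theta_1}$ and using that $\parallelTransport_{01}^{\upgamma}$ is a linear map from $\planT_{\theta_0}\Theta$ to $\planT_{\theta_1}\Theta$ independent of the integration variable. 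Iterating the identity and invoking uniform ergodicity, I would bound $\sup_x \normrLigne{\Delta(x)}[\theta_1] = \bigO(\ell(\upgamma))$ via the two Lipschitz estimates in \eqref{eq:prop_condition_P_implication_MA_a} applied to the first and third right-hand terms. The bound demanded by (d) differs from $\sup_x\normrLigne{\Delta(x)}[\theta_1]$ only by the already-controlled noise-Lipschitz term, and the claim follows.

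In regime (b), sub-condition (a) is assumed; (b) follows by iterating the drift $Pw \leq \lambda w + \bw \1_\msc$ to obtain $\PE[w(X_{n+1})] \leq \lambda^n \PE[w(X_0)] + \bw/(1-\lambda)$ uniformly in $n$. The drift-plus-small-set hypothesis classically implies $w^{1/2}$-geometric ergodicity of $P$, giving sub-condition (c) with $\hnoise_\theta(x) = \sum_{k\geq 0} P^k[\noise_\theta - \pi(\noise_\theta)](x) \in \planT_\theta\Theta$ (well-defined since $P$ is $\theta$-independent) and $\normrLigne{\hnoise_\theta(x)}[\theta] \leq \hnoise_\infty w^{1/2}(x)$. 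Because $P_\theta = P$, the quantity in (d) reduces to $\int_\msx P(x,\rmd y)[\hnoise_{\theta_1}(y) - \parallelTransport_{01}^{\upgamma}\hnoise_{\theta_0}(y)]$, and the Lipschitz estimate \eqref{eq:prop_condition_P_implication_MA_b} on $\noise_\theta$ propagates through each term of the series—each $P^k$ being a $w^{1/2}$-weighted contraction—to the required uniform-in-$x$ bound with weight $w^{1/2}(x)$. The main obstacle is sub-condition (d) in regime (a): even with the perturbed Poisson equation, producing a $\theta$-Lipschitz bound uniformly in $x$ requires careful iteration while tracking vectors across the tangent spaces $\planT_{\theta_0}\Theta$ and $\planT_{\theta_1}\Theta$ via parallel transport, and verifying that the resulting series converges geometrically thanks to uniform ergodicity. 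Regime (b) is structurally simpler because $P$ is $\theta$-independent, but the quantitative upgrade from $w$ to $w^{1/2}$ in the Poisson bound is the delicate ingredient there.
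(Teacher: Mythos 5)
Your proposal is correct and, for most of the verification, coincides with the paper's own proof: same choice $w\equiv 1$ in regime (a), the same Neumann-series solution of the Poisson equation (your observation that $\pi_\theta(\noise_\theta)=0$ makes your uncentered series literally equal to the paper's centered one), the same constant $C_P e_\infty/\varepsilon_P$, and the same treatment of regime (b) via Jensen applied to the drift condition to get a drift for $w^{1/2}$, then $w^{1/2}$-geometric ergodicity and the fact that $P_\theta=P$ collapses condition \ref{ass:item:w_markov:poisson_regularity} to a single series in $\Delta\noise_\theta$. Where you genuinely diverge is the hard step, condition \ref{ass:item:w_markov:poisson_regularity} in regime (a): the paper splits the difference of the two Neumann series directly and controls the cross term $\sum_k \normrLigne{[P_{\theta_1}^k-P_{\theta_0}^k]\noise_{\theta_0}(x)-[\pi_{\theta_1}-\pi_{\theta_0}](\noise_{\theta_0})}[\theta_0]$ via the telescoping identity \eqref{eq:11}, whereas you derive a perturbed Poisson equation for $\Delta=\hnoise_{\theta_1}-\parallelTransport_{01}^{\upgamma}\hnoise_{\theta_0}$ and solve it by iteration. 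Your identity is correct (parallel transport is linear and independent of the integration variable, so it commutes with $P_{\theta_0}$), and the forcing term $G$ is $\bigO(\ell(\upgamma))$ as you say. The one ingredient you do not flag: iterating $\Delta = G + P_{\theta_1}\Delta$ leaves the residual $P_{\theta_1}^k\Delta(x)\to\pi_{\theta_1}(\Delta)$, and $\pi_{\theta_1}(\Delta) = -\parallelTransport_{01}^{\upgamma}[\pi_{\theta_1}-\pi_{\theta_0}](\hnoise_{\theta_0})$ (using $\pi_{\theta_1}(\hnoise_{\theta_1})=\pi_{\theta_0}(\hnoise_{\theta_0})=0$), so you additionally need the invariant-measure perturbation bound $\tvnorm{\pi_{\theta_1}-\pi_{\theta_0}}\leq C_P C\,\ell(\upgamma)/\varepsilon_P$. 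This is a standard consequence of uniform ergodicity plus the first inequality in \eqref{eq:prop_condition_P_implication_MA_a}, and the paper's telescoping route needs exactly the same estimate (hidden in the term $-\pi_{\theta_1}P_{\theta_0}^k\noise_{\theta_0}+\pi_{\theta_0}(\noise_{\theta_0})$ of \eqref{eq:11}), so this is a fillable omission rather than a defect of your approach; your fixed-point formulation arguably makes the structure of the argument cleaner, at the price of having to justify that the Poisson-type equation for $\Delta$ admits the convergent series representation, i.e.\ that $\pi_{\theta_1}(G)=0$.
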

\begin{proof} 
    The proof is postponed to \Cref{proof:propo:condition_P_implication_MA}.
\end{proof}
In the case $w$ is not bounded, we need to consider the following
condition which is automatically satisfied if $\Theta$ is compact,
which is the case for the Grassmann manifold for example.
\vspace{-0.2cm}
\begin{assumptionA}
  \label{ass:bounded_vh}
  There exist $h_{\infty} \geq 0$ such that   $\sup_{ \theta \in \Theta } \normr{ h(\theta) }[\theta] \leq h_{\infty}$. \vspace{-0.2cm}
\end{assumptionA}
We obtain the following result for geodesic SA scheme with Markovian noise:
\begin{theorem}
  \label{th:markov}
  Assume  \Cref{ass:completeness}-\Cref{ass:lyap_mean_field}-\Cref{ass:bounded_bias}-\Cref{ass:markov}-\Cref{ass:w_markov}$(w)$
  hold for some measurable function $w: \msx \to \coint{1,\plusinfty}$. Assume in addition either $w$ is bounded or  \Cref{ass:bounded_vh}.
  Let $(\upeta_k)_{k \in \nsets}$ be a sequence of stepsizes, 
  $a_1,a_2 \geq 0$ satisfying $\sup_{k \in \nsets} \upeta_k \leq \cl_1 / (4(3L/2+D_{\hnoise}))$ and
  \begin{equation}
    \label{eq:hyp_gamma_k}
\begin{aligned}
    &\sup_{k \in \nsets} \{\upeta_{k+1}/ \upeta_k\} \leq 1 \eqsp, \quad     \sup_{k \in \nsets} \{\upeta_{k}/ \upeta_{k+1} \} \leq a_1 \eqsp, \quad \sup_{k \in \nsets} \{\abs{\upeta_{k}-\upeta_{k+1}}/ \upeta_{k}^2 \} \leq a_2 \eqsp.
    \end{aligned}
  \end{equation}
Consider $(\theta_k)_{k \in \nset}$ defined by \eqref{eq:sa_exp_b}. Then for any $n \in \nsets$,
    \begin{equation}
        \label{eq:markov_theo}
        \expe{\normr{h(\theta_{I_n})}[\theta_{I_n}]^2} \leq \left. 2 \defEns{
            \expe{V(\theta_0)}+C(\upeta_1)+C_{\hnoise}\Gamma^{(2)}_{n+1}}\middle/ (\cl_1
        \Gamma_{n+1}) \right. + 2
        (b_{\infty} \cu / \cl_1)^{2} + 2 \cl_2/\cl_1 \eqsp.
    \end{equation}
  where $I_n \in \iint{0}{n}$ is  independent
  of $\calF_n$ and with distribution defined by \eqref{eq:nn} and the constants $C(\upeta_1)$, $D_{\hnoise}$ and $C_{\hnoise}$ are given by
\eqref{eq:C_upeta_markov_w}, \eqref{eq:def_const_theo2} and
\eqref{eq:def_const_theo2_v_bounded} in the appendix.
\end{theorem}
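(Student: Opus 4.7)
The plan is to build on the master inequality of \Cref{lem:first_inequality_lemma_main}, exactly as in the martingale case of \Cref{th:martingale}, choosing $\varepsilon=\cl_1/(4\cu^2)$ so that (together with the bound $\upeta_k\leq \cl_1/(4(3L/2+D_{\hnoise}))$) the coefficient of $\normr{h(\theta_k)}[\theta_k]^2$ on the left-hand side stays bounded below by, say, $\cl_1/4$. The only genuinely new difficulty is that the residual $S_n:=\sum_{k=0}^{n}\upeta_{k+1}\Delta M_k$, with $\Delta M_k=\psr{\grad V(\theta_k)}{\noise_{\theta_k}(X_{k+1})}[\theta_k]$, is no longer an $(\mcf_k)$-martingale; estimating $\expe{S_n}$ is therefore the heart of the proof. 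For this I would invoke the Poisson solution $\hnoise$ provided by \Cref{ass:w_markov}\ref{ass:item:w_markov:poisson} and decompose
\begin{equation*}
\noise_{\theta_k}(X_{k+1})=\underbrace{\hnoise_{\theta_k}(X_{k+1})-(P_{\theta_k}\hnoise_{\theta_k})(X_k)}_{=:\xi_{k+1}}+\underbrace{(P_{\theta_k}\hnoise_{\theta_k})(X_k)-(P_{\theta_k}\hnoise_{\theta_k})(X_{k+1})}_{=:\zeta_{k+1}}\eqsp.
\end{equation*}
By \Cref{ass:markov}, $\xi_{k+1}$ is an $(\mcf_{k+1})$-measurable, zero-conditional-mean vector field, so $\psr{\grad V(\theta_k)}{\xi_{k+1}}[\theta_k]$ is a genuine martingale increment whose sum vanishes in expectation. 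Its second-moment contribution to the $\upeta_{k+1}^2\normr{\noise_{\theta_k}(X_{k+1})}[\theta_k]^2$ term of \eqref{eq:first_inequality} is controlled via \ref{ass:item:w_markov:e_bound}, \ref{ass:item:w_markov:poisson}, and $\sup_k\expe{w(X_{k+1})}\leq C_w$, contributing an $O(\Gamma_{n+1}^{(2)})$ quantity.

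The second piece $\zeta_{k+1}$ is handled by an Abel summation-by-parts that creates a telescoping series in the shifted time index:
\begin{equation*}
\begin{aligned}
&\sum_{k=0}^{n}\upeta_{k+1}\psr{\grad V(\theta_k)}{\zeta_{k+1}}[\theta_k]\\
&\quad =T_{n}^{\mathrm{bd}}+\sum_{k=1}^{n}\Bigl\{\upeta_{k+1}\psr{\grad V(\theta_k)}{(P_{\theta_k}\hnoise_{\theta_k})(X_k)}[\theta_k]\\
&\qquad -\upeta_{k}\psr{\grad V(\theta_{k-1})}{(P_{\theta_{k-1}}\hnoise_{\theta_{k-1}})(X_k)}[\theta_{k-1}]\Bigr\}\eqsp,
\end{aligned}
\end{equation*}
where the boundary term $T_{n}^{\mathrm{bd}}$ collects evaluations at $k=0$ and $k=n$ and is bounded uniformly by $C(\upeta_1)$ using the uniform bound $\hnoise_\infty$. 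Each interior summand is estimated by parallel-transporting $\grad V(\theta_{k-1})$ and $(P_{\theta_{k-1}}\hnoise_{\theta_{k-1}})(X_k)$ along the geodesic $\upgamma_k:[0,1]\to\Theta$ from $\theta_{k-1}$ to $\theta_k$; the three resulting discrepancies are then controlled respectively by \Cref{ass:lyap_mean_field}\ref{ass:lyap_mean_field_a} for $\grad V$, by \Cref{ass:w_markov}\ref{ass:item:w_markov:poisson_regularity} for $P_\theta\hnoise_\theta$, and by $|\upeta_{k+1}-\upeta_k|\leq a_2\upeta_k^2$ from \eqref{eq:hyp_gamma_k}. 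The geodesic length $\ell(\upgamma_k)\leq\upeta_k(\hnoise_\infty w^{1/2}(X_k)+h_\infty+b_\infty)$ follows from \eqref{eq:sa_exp_b} together with either boundedness of $w$ or \Cref{ass:bounded_vh}; after Cauchy--Schwarz and the moment bound $\sup_k\expe{w(X_{k+1})}\leq C_w$, the interior terms sum to $C_{\hnoise}\Gamma_{n+1}^{(2)}$, while a residual cross-term of the form $\upeta_k^2\normr{h(\theta_{k-1})}[\theta_{k-1}]^2$ is precisely what forces the $D_{\hnoise}$ correction on the left-hand side of the master inequality.

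Inserting the resulting bound $\expe{S_n}\leq C(\upeta_1)+C_{\hnoise}\Gamma_{n+1}^{(2)}+D_{\hnoise}\sum_{k=0}^{n}\upeta_{k+1}\expe{\normr{h(\theta_k)}[\theta_k]^2}$ into the expected version of \eqref{eq:first_inequality}, absorbing the $D_{\hnoise}$-sum into the left-hand side via the step-size condition, and finally dividing by $(\cl_1/2)\Gamma_{n+1}$ and applying the definition \eqref{eq:nn} of $I_n$, yields \eqref{eq:markov_theo}. I expect the main obstacle to lie precisely in the Abel-summation step: on a curved manifold the two inner products in the telescoped difference live in different tangent spaces, so each discrepancy must be routed through parallel transport and bounded in a way that pairs a $w^{1/2}(X_k)$ factor from \ref{ass:item:w_markov:poisson_regularity} with another $\hnoise_\infty w^{1/2}(X_k)$ or a bounded gradient term, so that the resulting $\upeta_k^2 w(X_k)$-scale quantity is integrable in expectation under \ref{ass:item:w_markov:moment}. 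The bookkeeping of these three distinct sources of a single Lipschitz-type error, combined with the slow-variation conditions \eqref{eq:hyp_gamma_k}, is where care is needed.
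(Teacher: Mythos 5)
Your proposal is correct and follows essentially the same route as the paper: the Poisson-equation decomposition into a martingale increment $\xi_{k+1}=\hnoise_{\theta_k}(X_{k+1})-P_{\theta_k}\hnoise_{\theta_k}(X_k)$ plus an Abel-summed remainder is exactly the five-term decomposition $A_1,\ldots,A_5$ of \Cref{lem:poisson_w_markov}, with your three parallel-transport discrepancies corresponding to $A_2$ (regularity of $P_\theta\hnoise_\theta$), $A_3$ (Lipschitz gradient of $V$) and $A_4$ (step-size variation), the boundary term to $A_5$, and the residual $\upeta_k^2\normr{h(\theta_{k-1})}[\theta_{k-1}]^2$ cross-terms absorbed into the left-hand side via $D_{\hnoise}$ just as in the paper.
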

\begin{proof} 
    The proof of \Cref{th:markov} is postponed to \Cref{app:proofth2}.
\end{proof}
When the step size is $\upeta_k = \square / (k + \triangle)^\alpha$,
for some $\square, \triangle >0$ and  $\alpha\in (0,1]$, the same conclusions of \Cref{th:martingale} can be drawn again.



\section{General Retraction SA Schemes} \label{subsec:resretract}
The previous section focused on the geodesic schemes \eqref{eq:sa_exp_b} that require performing the Riemannian exponential map $\Exp$ at each iteration.
Evaluating the $\Exp$ map is often computationally challenging.
An alternative is to use a retraction map $\Ret:\rmT\Theta \to \Theta$. The main focus of this section is to analyze a retraction SA scheme:
\begin{equation} \label{eq:sa}
	\theta_{n+1} = \Ret_{\theta_{n}}\defEns{\,\upeta_{n+1}\parenthese{H_{\theta_{n}}(X_{n+1}) +b_{\theta_n}(X_{n+1})}},~n \in \nset \eqsp,
\end{equation}
given the initialization $\theta_0 \in \Theta$, and $\Ret_\theta$ stands for
the restriction of $\Ret$ to $\planT_{\theta} \Theta$, $\theta \in \Theta$.
We first discuss the properties of retraction operators with some examples below.

\subsection{Retraction and Quantitative Estimates of the Retraction Error}
\label{sec:retr-quant-estim}
For any $\theta \in \Theta$, we denote $\Cut(\theta) \subset \Theta$ as the cut locus of $\theta$ (see \Cref{app:cutlocus}). The following assumptions are made on the retraction maps of interest:
\begin{assumptionR}
	\label{ass:retraction}
	\begin{enumerate*}[label=(\roman*)]
		\item \label{ass:retraction_zero}
		For any $\theta \in \Theta$,  $\Ret_\theta(0_\theta) = \theta$, where $0_\theta$ is the zero element of $\planT_\theta \Theta$ and  $\rmD  \Ret_\theta(0_\theta) = \Id$.
              \end{enumerate*}
              \\
              \begin{enumerate*}[label=(\roman*),resume]
              \item
	\label{ass:retraction_first_a}
	For any $(\theta,u) \in \planT \Theta$, $\Ret_\theta(u) \notin \Cut(\theta)$.
      \end{enumerate*}\\
\begin{enumerate*}[resume,label=(\roman*)]
    \item
	\label{ass:retraction_invariance}
	$\Theta$ is a homogeneous Riemannian manifold (see \Cref{app:isometry}) with isometry group $\msg$
	and for any $g \in \msg$, $(\theta,u) \in \planT \Theta$, $       g\cdot \Ret_\theta(u) \, = \,
       \Ret_{g\cdot \theta}(g\cdot u)$,
       where $g\cdot \theta = g(\theta)$ and $g\cdot u = \rmD g_{\theta} (u)$ is a vector in $\planT_{g\cdot u} \Theta$.
       \end{enumerate*}
\end{assumptionR}
\Cref{ass:retraction}-\ref{ass:retraction_first_a} ensures that the inverse exponential map is defined on $\Ret_\theta (\planT_\theta \Theta)$.
Thus, the following function is well defined:
\begin{equation}
  \label{eq:def_phi_theta_retract}
  \Phi_\theta = \Exp^{-1}_\theta\circ \Ret_\theta : \planT_\theta\Theta \rightarrow \planT_\theta\Theta \eqsp,
\end{equation}
which defines a bundle map $\Phi : \planT \Theta \to \planT \Theta$.
This mapping allows us to precisely measure the quality of the approximation of the exponential map by the retraction
through quantitative bounds on the \textit{retraction error map} $\Phi_\theta - \Id_\theta$ on $\planT_\theta \Theta$.
Note that in the special case when $\Ret \equiv \Exp$, we have $\Phi_{\theta} \equiv \Id_\theta$, and the retraction error map
is the null function.

\Cref{ass:retraction}-\ref{ass:retraction_zero} roughly implies by a
Taylor expansion that
$\normrLigne{\Phi_\theta (u) - u}[\theta] \leq
\normrLigne{u}[\theta]$, for $(\theta,u)\in \planT \Theta$.  However as we shall see in the analysis,
to ensure convergence of SA using retraction maps, we need to
establish a tighter bound,
\ie~$\normrLigne{\Phi_\theta (u) - u}[\theta] \leq
\normrLigne{u}[\theta]^\upbeta$, for $(\theta,u)\in \planT \Theta$,
$\upbeta \geq 2$. It boils down to showing that the first non-zero
term in the Taylor expansion is of  order higher than one. To obtain
such a result, we consider the following assumptions, defining regular \emph{first-order} \Cref{ass:retraction_first_b} and \emph{second-order}  \Cref{ass:retraction_second} retraction; see  \cite{malick:2012}.
\begin{assumptionR}
  \label{ass:retraction_first_b}
For any $\theta \in \Theta$,  there exists $\scrlU(\theta) \geq 0$, such that  $ \normr{\rmD^2\Phi_\theta(tu)[u,u]}[\theta] \,\leq\,\scrlU(\theta)\,\normr{u}[\theta]^2$, for any $u \in \planT_{\theta}  \Theta$ and $t \in \ccint{0,1}$,  where
the function $\Phi_\theta : \planT_\theta\Theta \rightarrow \planT_\theta\Theta$ is defined by \eqref{eq:def_phi_theta_retract}.\vspace{-.2cm}
\end{assumptionR}
\begin{assumptionR}
For any $\theta \in \Theta$, the following hold.
\label{ass:retraction_second}
\begin{enumerate}[leftmargin=6mm, labelwidth=!, labelindent=0pt,label=(\roman*),noitemsep,nolistsep]
   \item \label{ass:retraction_second_acc}
     Setting $\upgamma(t) = \Ret_{\theta}(tu)$ for $t \in \rset$ and $u \in \planT_{\theta} \Theta$, the initial acceleration of the curve $\upgamma$ satisfies  $\rmD_t\dot{\upgamma}(0) = 0_{\theta}$, where $\rmD_t$ is the covariant derivative along $\upgamma$ (see \cite[Theorem 4.24]{lee:2019}).
      \item   \label{ass:retraction_second_derivative}
    There exists $\scrlD(\theta) \geq 0$, such that $ \normr{\rmD^3\Phi_\theta(tu)[u,u,u]}[\theta] \,\leq\,\scrlD(\theta)\,\normr{u}[\theta]^3$, for any $u \in \planT_{\theta}  \Theta$ and $t \in \ccint{0,1}$,  where
    the function $\Phi_\theta : \planT_\theta\Theta \rightarrow \planT_\theta\Theta$ is defined by \eqref{eq:def_phi_theta_retract}.
\end{enumerate}
\end{assumptionR}
\vspace{-0.2cm}
Note \Cref{ass:retraction_second} does not imply
\Cref{ass:retraction_first_b} nor vice versa.
The following is a consequence of \Cref{ass:retraction_first_b} or \Cref{ass:retraction_second}:
\begin{llemma} \label{lem:retractions}
Assume \Cref{ass:retraction} and let $\ttheta \in \Theta$, it holds
  \begin{enumerate}[wide, labelwidth=!, labelindent=0pt,label=(\alph*),noitemsep,nolistsep]
  \item \label{lem_retraction_a} Under \Cref{ass:retraction_first_b}, $ \normr{ \Phi_\theta(u) - u }[\theta] \leq \scrlinf^{(1)}(\ttheta) \Vert u \Vert^2_\theta$ for any $(\theta,u) \in \planT \Theta$.
    \item\label{lem_retraction_b} Under \Cref{ass:retraction_second},  $ \normr{ \Phi_\theta(u) - u }[\theta] \leq \scrlinf^{(2)}(\ttheta) \Vert u \Vert^3_\theta$ for any $(\theta,u) \in \planT \Theta$.
  \end{enumerate}
\end{llemma}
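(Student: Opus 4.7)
The plan is to apply Taylor's theorem with integral remainder to the map $\Phi_\theta:\planT_\theta\Theta\to\planT_\theta\Theta$ around $0_\theta$. First, by Assumption R1-\ref{ass:retraction_zero}, $\Ret_\theta(0_\theta)=\theta$ and $\rmD \Ret_\theta(0_\theta)=\Id$; since $\Exp_\theta^{-1}$ has the same property at $\theta$, this gives $\Phi_\theta(0_\theta)=0_\theta$ and $\rmD\Phi_\theta(0_\theta)=\Id$. Thus $\Phi_\theta(u)-u$ reduces to a higher-order remainder in $u$.

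For part \ref{lem_retraction_a}, I would write
\[
\Phi_\theta(u) - u = \int_0^1 (1-t)\, \rmD^2\Phi_\theta(tu)[u,u]\,\rmd t,
\]
and control the integrand directly by $\scrlU(\theta)\,\|u\|_\theta^2$ using Assumption R2. For part \ref{lem_retraction_b}, the new step is to show that the second-order Taylor coefficient also vanishes, namely $\rmD^2\Phi_\theta(0_\theta)[u,u]=0$ for every $u\in\planT_\theta\Theta$. This is where R3-\ref{ass:retraction_second_acc} is used: in normal coordinates centered at $\theta$, the exponential map is locally the identity and the Christoffel symbols vanish at $\theta$, so the curve $\upgamma(t)=\Ret_\theta(tu)=\Exp_\theta(\Phi_\theta(tu))$ reads $\sigma(t):=\Phi_\theta(tu)$ in the chart and its covariant acceleration at $t=0$ reduces to the Euclidean $\ddot\sigma(0)=\rmD^2\Phi_\theta(0_\theta)[u,u]$. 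Assumption R3-\ref{ass:retraction_second_acc} then forces this quantity to vanish. The Taylor expansion at order three gives
\[
\Phi_\theta(u)-u = \int_0^1 \tfrac{(1-t)^2}{2}\, \rmD^3\Phi_\theta(tu)[u,u,u]\,\rmd t,
\]
and R3-\ref{ass:retraction_second_derivative} yields the bound $\tfrac{1}{6}\scrlD(\theta)\,\|u\|_\theta^3$.

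To obtain the uniform form stated (with constants evaluated at a fixed $\ttheta$ instead of the varying $\theta$), I would invoke homogeneity, R1-\ref{ass:retraction_invariance}. For any $\theta\in\Theta$, pick $g\in\msg$ with $g\cdot\ttheta=\theta$ and set $v=g^{-1}\cdot u\in\planT_{\ttheta}\Theta$. The equivariance $\Ret_\theta(g\cdot v)=g\cdot\Ret_{\ttheta}(v)$ combined with the fact that $g$ is an isometry commuting with $\Exp$ gives $\Phi_\theta(g\cdot v)=g\cdot\Phi_{\ttheta}(v)$, so
\[
\|\Phi_\theta(u)-u\|_\theta = \|\Phi_{\ttheta}(v)-v\|_{\ttheta}, \qquad \|u\|_\theta = \|v\|_{\ttheta}.
\]
Consequently the pointwise inequalities just derived at $\theta$ with $\theta$-dependent constants are equivalent to the same inequalities at $\ttheta$, and we may set $\scrlU(\theta):=\scrlU(\ttheta)$ and $\scrlD(\theta):=\scrlD(\ttheta)$ uniformly.

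The main obstacle is the cancellation $\rmD^2\Phi_\theta(0_\theta)=0$ in part \ref{lem_retraction_b}: translating R3-\ref{ass:retraction_second_acc}, which is phrased intrinsically via the covariant derivative of $\dot\upgamma$, into a statement about the Euclidean second derivative of $\Phi_\theta$ at the origin requires a careful normal-coordinate computation, but once established the Taylor remainder estimates are routine.
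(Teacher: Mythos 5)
Your proposal is correct and follows essentially the same route as the paper: a Taylor expansion of $\Phi_\theta$ at $0_\theta$ (using $\Phi_\theta(0_\theta)=0_\theta$ and $\rmD\Phi_\theta(0_\theta)=\Id$), the normal-coordinate argument showing that \Cref{ass:retraction_second}-\ref{ass:retraction_second_acc} forces $\rmD^2\Phi_\theta(0_\theta)[u,u]=0$, and finally the homogeneity/equivariance argument to replace the $\theta$-dependent constants by their values at the fixed reference point $\ttheta$. The paper organizes this as a preliminary lemma (without invariance) plus a reduction step, but the mathematical content is identical to yours.
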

\begin{proof} 
    The proof is postponed to \Cref{app:prooflemretractions}.
\end{proof}
  We conclude this subsection by illustrating two examples of matrix manifolds with retraction operators satisfying \Cref{ass:retraction}-\Cref{ass:retraction_first_b}-\Cref{ass:retraction_second}.\vspace{-.1cm}
  \begin{example}($d$-dimensional sphere $\sphere^{d-1}$)
   \textnormal{
For $\Theta = \sphere^{d-1} = \{ x \in \rset^d : \norm{ x } = 1 \}$, we may take $(\theta,u) \in \planT \sphere^{d-1} \mapsto (\theta+u)/\norm{\theta+u}$ as a retraction map.  We show in \Cref{app:retractionsphere} that \Cref{ass:retraction}-\Cref{ass:retraction_first_b}-\Cref{ass:retraction_second} are satisfied for this example.}\vspace{-.1cm}
\end{example}

\begin{example}{(Grassmannian manifold $\grassmann_r(\rset^d)$)} \label{ex:grassmann}
\textnormal{  For $r \in \{1,\ldots,d-1\}$, let $\Theta = \grassmann_r(\rset^d)$ be the
set of $r$-dimensional subspaces over $\rset^d$. 
Note that $\grassmann_r(\rset^d)$ is a compact $r \times (d-r)$-dimensional manifold \cite[Example 21.21]{lee:2003}.
dimensional manifold, see
\cite[Example 21.21]{lee:2003}.  Following \cite[Section
2.5]{edelman:arias:smith:1998} or \cite[Problem 2.7]{lee:2019}, we
consider here  $\grassmann_r(\rset^d)$  as a quotient manifold of the
Stiefel manifold
$\stiefel_r(\rset^d) = \lbrace B \in \rset^{d\times r}\,
:\,B^{\transpose}B = {\rm I}_r\rbrace$. In words, the set $\stiefel_r(\rset^d)$ is the subset of $\rset^{d \times r}$ which are matrices for which columns form an orthonormal family of $\rset^d$ . In addition,  $\stiefel_r(\rset^d)$ is  a $r\times(2d -r-1)/2$-dimensional compact manifold  (see \cite[Problem 2.7]{lee:2019}). The manifold $\grassmann_r(\rset^d)$ can be seen as the quotient manifold of $\stiefel_r(\rset^d)$ by the right action of the
group of $r$-dimensional orthogonal matrices
$\rmO_r(\rset) = \{ O \in \rset^{r \times r} \,: \, O O^{\transpose} =
\rmI_r\}$, \ie~the map
$(B,O) \in \stiefel_r(\rset^d) \times \rmO_r(\rset) \mapsto BO \in
\stiefel_r(\rset^d)$; see \cite[Section 2.5]{edelman:arias:smith:1998}.
Indeed, it can be shown that
$[B] \in \stiefel_r(\rset^d)/\rmO_r(\rset) \mapsto \spanD(B) \in
\grassmann_r(\rset^d)$, where $B \in \stiefel_r(\rset^d)$ is any
representative of $[B]$ and $\spanD(B)$ is the linear space spanned by
$B$, is a Riemannian isometry see \Cref{app:isometry}. We use this
representation in the sequel and for any $B \in \stiefel_r(\rset^d)$,
we denote by $[B]$ the equivalence class of $B$ under the action of
$\rmO_r(\rset)$, \ie~$$[B] = \{ B O \, : \, O \in\rmO_r(\rset)\} \eqsp.$$ The
tangent space at $[B] \in \grassmann_r(\rset^d)$ is given by:
$\planT_{[B]} \grassmann_r(\rset^d)= \{ D \in \rset^{d \times r} \,:
\, B^{\transpose} D = 0_{r \times r} \}$, where $B$ is any representative of $[B]$. It is easy to show that
$$\planT_{[B]} \grassmann_r(\rset^d) = \{ B_{\perp} C \, :\, C \in
\rset^{d-r \times r} \}\eqsp,$$ where $B_{\perp}$ is any matrix satisfying
$B_{\perp} \in \stiefel_{d-r}(\rset^d)$, $B^{\transpose} B_{\perp} = 0_{r
\times d-r}$. In words, $B_{\perp} \in \rset^{d \times d-r}$ is a $d \times
(d-r)$-matrix for which the columns form an orthonormal family of $\rset^d$ and
are orthogonal to the span corresponding to $B$.    We consider then, the
canonical metric 
(see
\cite[Section 2.5]{edelman:arias:smith:1998}) defined for any $[B] \in  \grassmann_r(\rset^d)$, and $D_1,D_2 \in \planT_{[B]} \grassmann_r(\rset^d)$, $D_1 = B_{\perp} C_1, D_2 = B_{\perp} C_2$,
$\metric^{\grassmann}_{[B]}(D_1,D_2) = \trace(D_1^{\transpose} D_2) = \trace(C_1^{\transpose} C_2)$. 
The exponential map on
$\grassmann_r(\rset^d)$ corresponding to this metric is given for any $[B] \in \grassmann_r(\rset^d)$, $D = B_{\perp} C  \in \planT_{[B]} \grassmann_r(\rset^d)$ by
\begin{equation} {\footnotesize \label{eq:grassmannexp_main}
  \mathrm{Exp}_{[B]}(D) = \left[ (B\,,B_\perp) \,
\exp 
  \left( \begin{array}{cc}
  0 & -C^\top \\ C & 0 \end{array} \right) 
\left( \begin{array}{c} {\rm I}_r \\ 0_{d-r \times r} \end{array} \right) \right]
= \left[
  (B \mathbf{V} \, \mathbf{U})
    \left( \begin{array}{cc}
  \cos(\bfSigma ) \\  \sin(\bfSigma )  \end{array} \right) \bfV^{\transpose}
  \right]
} \eqsp,
\end{equation}
where $\exp(\cdot)$ is the matrix exponential and $D = \bfU\bfSigma \bfV^{\transpose}$ is the compact singular value decomposition.  A retraction map is then defined for any $[B] \in \grassmann_r(\rset^d)$, $D \in \planT_{[B]} \grassmann_r(\rset^d)$ by\vspace{-.1cm}
\begin{equation} \label{eq:grassmannretraction}
    \Ret_{[B]}(D) \,=\, \mathrm{Span}(B+D) \eqsp.\vspace{-.1cm}
\end{equation}
This retraction operator has been considered in  \cite{bonnabel2013stochastic,malick:2012} which discuss in details the computational benefits of this approximation. In practice, for any $[B] \in \grassmann_r(\rset^d)$, $D \in \planT_{[B]} \grassmann_r(\rset^d)$, a representative of $\mathrm{Span}(B+D)$ is set as $\Qbf$ using a QR decomposition $B+D = \Qbf\Rbf$. \vspace{-.1cm}
}
\begin{pproposition} \label{prop:retractiongrass}
  The projective retraction $\Ret$ defined by \eqref{eq:grassmannretraction}
satisfies
\Cref{ass:retraction}-\Cref{ass:retraction_first_b}-\Cref{ass:retraction_second}.\vspace{-.1cm}
\end{pproposition}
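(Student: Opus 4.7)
The plan is to verify each of \Cref{ass:retraction}, \Cref{ass:retraction_first_b}, and \Cref{ass:retraction_second} in turn, exploiting the isometric action of $\rmO_d(\rset)$ on $\grassmann_r(\rset^d)$ to reduce all computations to a canonical base point. Concretely, for any $[B] \in \grassmann_r(\rset^d)$ we pick a representative $B$ and complete it to $(B,B_\perp) \in \rmO_d(\rset)$; applying the inverse of this orthogonal matrix isometrically sends $[B]$ to the class $[B_0]$ of $B_0 = \begin{pmatrix}\rmI_r\\0\end{pmatrix}$, under which any $D \in \planT_{[B_0]}\grassmann_r(\rset^d)$ takes the form $D = \begin{pmatrix}0\\C\end{pmatrix}$ with $C \in \rset^{(d-r)\times r}$ arbitrary and $\normr{D}[{[B_0]}] = \|C\|_F$.

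For \Cref{ass:retraction}, part (i) follows from direct differentiation of $s \mapsto \mathrm{Span}(B_0 + sD)$ at $s=0$, and part (iii) holds since $g \cdot \mathrm{Span}(B+D) = \mathrm{Span}(gB + gD) = \Ret_{g\cdot[B]}(g\cdot D)$ for any $g \in \rmO_d(\rset)$. For part (ii), orthonormalizing $B_0 + D$ yields the representative $\begin{pmatrix}\rmI_r\\C\end{pmatrix}(\rmI_r + C^\transpose C)^{-1/2}$, and contracting with $B_0^\transpose$ gives $(\rmI_r + C^\transpose C)^{-1/2}$ whose singular values $1/\sqrt{1+\sigma_i^2}$ are strictly positive; hence all principal angles between $[B_0]$ and $\Ret_{[B_0]}(D)$ lie in $[0,\pi/2)$, so $\Ret_{[B_0]}(D) \notin \Cut([B_0])$.

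The core step for \Cref{ass:retraction_first_b} and \Cref{ass:retraction_second} is to derive an explicit formula for $\Phi_{[B_0]}$. Writing the singular value decomposition $C = U \Sigma V^\transpose$ and matching the orthonormalized representative above against \eqref{eq:grassmannexp_main} amounts to solving $\cos \Sigma' = (\rmI + \Sigma^2)^{-1/2}$, which gives $\Sigma' = \arctan \Sigma$; hence
\begin{equation*}
    \Phi_{[B_0]}\!\left(\begin{pmatrix}0\\C\end{pmatrix}\right) = \begin{pmatrix}0\\U \arctan(\Sigma) V^\transpose\end{pmatrix} \eqsp .
\end{equation*}
By spectral functional calculus, $\rmD^k \Phi_{[B_0]}(tD)[D,\ldots,D]$ reduces to a diagonal matrix with entries $\partial_s^k \arctan((t+s)\sigma_i)\rvert_{s=0}$, so all estimates follow from scalar calculations on the singular values $\sigma_i$.

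Explicitly, for \Cref{ass:retraction_first_b} we find $\|\rmD^2 \Phi_{[B_0]}(tD)[D,D]\|_F^2 = \sum_i 4 t^2 \sigma_i^6/(1+t^2\sigma_i^2)^4$, and factoring each summand as $[4t^2\sigma_i^2/(1+t^2\sigma_i^2)^2] \cdot [\sigma_i^4/(1+t^2\sigma_i^2)^2]$, the first bracket is at most $1$ by AM--GM and the second at most $\sigma_i^4$, giving the uniform bound $\|\rmD^2 \Phi_{[B_0]}(tD)[D,D]\|_F \leq \|D\|_F^2$. For \Cref{ass:retraction_second}(ii), the analogous computation yields $\|\rmD^3 \Phi_{[B_0]}(tD)[D,D,D]\|_F^2 = \sum_i 4\sigma_i^6 (1-3t^2\sigma_i^2)^2/(1+t^2\sigma_i^2)^6$, and the crude inequality $(1-3y)^2 \leq 9(1+y)^2$ for $y \geq 0$ gives $\|\rmD^3 \Phi_{[B_0]}(tD)[D,D,D]\|_F \leq 6 \|D\|_F^3$. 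For \Cref{ass:retraction_second}(i), the Taylor series $\arctan(s\sigma) = s\sigma - s^3\sigma^3/3 + O(s^5)$ shows that $\Phi_{[B_0]}(sD) = sD + O(s^3)$; since geodesics are straight lines in normal coordinates at $[B_0]$, the initial acceleration of $\upgamma(s) = \Exp_{[B_0]}(\Phi_{[B_0]}(sD))$ equals $\partial_s^2 \Phi_{[B_0]}(sD)\rvert_{s=0} = 0$. The main obstacle is the clean identification of $\Phi_{[B_0]}$ via spectral calculus; once this is in place, all three assumptions follow from elementary scalar estimates.
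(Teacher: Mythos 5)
Your proposal is correct and follows essentially the same route as the paper's proof: both hinge on identifying $\Phi_\theta(D)$ as the arctangent applied to the singular values of $C$ (matched against the exponential-map formula), reduce the derivative bounds to scalar derivatives of $\arctan$ along the ray $s\mapsto (t+s)D$, verify the cut-locus condition via $\Vert\arctan(a)\Vert_\infty<\pi/2$ (equivalently, principal angles strictly below $\pi/2$), and use $\rmO_d(\rset)$-equivariance for the invariance condition. The only differences are cosmetic — you normalize to a canonical base point and obtain the admissible constant $\scrl^{(2)}=6$ where the paper's sharper computation gives $2$.
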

\begin{proof} 
    {The proof is postponed to  \Cref{sec:proof-crefpr}.}
\end{proof}
\end{example}\vspace{-0.1cm}
\Cref{ex:grassmann} is important in our application of retraction SA to PCA problems to be discussed in \Cref{subsec:pca}. Indeed, the retraction SA scheme that we consider with the retraction map \eqref{eq:grassmannretraction}  corresponds  to the famous Oja's algorithm.




\subsection{Analysis of Retraction SA Schemes}
\label{sec:analys-gener-retr}
To analyze the retraction SA scheme in \eqref{eq:sa}, we observe that it can be re-written as:\vspace{-0.1cm}
\begin{align}
	\theta_{n+1}
		     &= \Exp_{\theta_n}\defEns{ \upeta_{n+1}\parenthese{ H_{\theta_{n}}(X_{n+1})
		     +b_{\theta_{n}}(X_{n+1}) + \Delta_{\theta_n,\upeta_{n+1}}(X_{n+1}) } } \eqsp,\label{eq:newschemexp}
\end{align}
where 
$\Delta_{\theta,\upeta}(x)$ is the `retraction bias' defined for any $(\theta,x) \in \Theta \times \msx$ and $\upeta >0$ by
\vspace{-0.1cm}
\begin{equation} \label{eq:Deltan}
\Delta_{\theta, \upeta}(x)  = \upeta^{-1}
\Phi_{\theta} (\upeta\{H_{\theta}(x)+b_{\theta}(x)\} )  -\{H_{\theta}(x)+b_{\theta}(x)\}\eqsp.\vspace{-0.1cm}
\end{equation}
Our strategy for analyzing \eqref{eq:sa} is
to incorporate the retraction bias in the analysis as it is done for the
geodesic SA scheme, using \Cref{lem:retractions} and noting that $\Delta_{\theta, \upeta}(x) = \upeta^{-1} \{\Phi_{\theta}(u) - u\}$ with $u = \upeta\{H_{\theta}(x)+b_{\theta}(x)\}$ for any $\theta \in \Theta$, $x \in \msx$ and $\upeta >0$.
Again, we consider separately the martingale and Markovian settings.

\paragraph{Martingale setting}  We shall strengthen this assumption first and consider for  some $a>0$:

\begin{assumptionMD}[$a$]
  \label{ass:retraction_martingale}
There exists $\moment_{(a)} \geq 0$ such that a.s.
$\mathbb{E}[\Vert \noise_{\theta_n}(X_{n+1})
\Vert^{a}_{\theta_n}|\mathcal{F}_n] \, \leq \moment_{(a)}$, for any $n \in \nset$.
\end{assumptionMD}

\vspace{-0.2cm}

\noindent
We acquire the following convergence result for $(\theta_n)_{n \in\nset}$.



\begin{theorem} \label{prop:retmartingale}
Assume
\Cref{ass:completeness}-\Cref{ass:lyap_mean_field}-\Cref{ass:bounded_bias}-\Cref{ass:bounded_vh},
\Cref{ass:retraction}, \Cref{ass:0mean_noise} hold. Suppose either
\Cref{ass:retraction_first_b},
      \Cref{ass:retraction_martingale}($4$) or
      \Cref{ass:retraction_second},
      \Cref{ass:retraction_martingale}($6$).
      Consider
$(\theta_k)_{k \in \nset}$ defined by \eqref{eq:sa}. Then, if $\sup_{k \in \nsets}
\upeta_k \leq \bar{\upeta}$, for $\bupeta$ defined in \eqref{eq:def_bupeta_marting_retract}, for any $n \in\nset$
    \begin{equation} \label{eq:retmartingale1}
 \expe{ \normr{ h( \theta_{I_n} ) }[\theta_{I_n}]^2 } \leq  2(\cl_1
\Gamma_{n+1})^{-1}  \defEnsLigne{ \expe{V(\theta_0)} + C_R^{(2)}\Gamma_{n+1}^{(2)} + A^{(R)}_{n+1}}  + 2(b_\infty \cu h_{\infty} + \cl_2)/\cl_1
\eqsp,
\end{equation}
where $C^{(2)}_R$ is given by \eqref{eq:def_C_R_2}, $A^{(R)}_{n+1} = \bigO(\Gamma_{n+1}^{(3)})$ is given by \eqref{eq:def_A_R} and $I_n \in \iint{0}{n}$ is a random variable independent of $\calF_n$ and
distributed according to \eqref{eq:nn}.
 \vspace{.1cm}
\end{theorem}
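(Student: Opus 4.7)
The plan is to reduce the retraction analysis to the geodesic one via the reformulation \eqref{eq:newschemexp}, treating the retraction bias $\Delta_{\theta,\upeta}$ as an additional, $\upeta$-dependent bias. Concretely, starting from
\[
\theta_{n+1} = \Exp_{\theta_n}\!\bigl\{ \upeta_{n+1}\bigl(H_{\theta_n}(X_{n+1}) + b_{\theta_n}(X_{n+1}) + \Delta_{\theta_n,\upeta_{n+1}}(X_{n+1})\bigr)\bigr\},
\]
I would run the proof of \Cref{lem:first_inequality_lemma_main} almost verbatim, the only difference being that the second argument of the exponential now contains the extra summand $\Delta_{\theta_k,\upeta_{k+1}}(X_{k+1})$. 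Applying \Cref{lem:taylor_grad_lip} to the geodesic segment $\upgamma^{(k)}(t)$ produces a first-order term $\upeta_{k+1}\langle \grad V(\theta_k), H_{\theta_k}(X_{k+1}) + b_{\theta_k}(X_{k+1}) + \Delta_{\theta_k,\upeta_{k+1}}(X_{k+1})\rangle_{\theta_k}$ and a quadratic length term. Splitting the inner product into mean-field, noise, standard bias, and retraction-bias contributions and using Young's inequality on the last two (with the same $\varepsilon$ trick as in \Cref{lem:first_inequality_lemma_main}), one obtains an analogue of \eqref{eq:first_inequality} with an extra term of the form $\sum_{k=0}^n \upeta_{k+1}\bigl\{(4\varepsilon)^{-1}+(3L/2)\upeta_{k+1}\bigr\}\normr{\Delta_{\theta_k,\upeta_{k+1}}(X_{k+1})}[\theta_k]^2$.

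Next I would take expectations. The martingale increments $\Delta M_k$ vanish by \Cref{ass:0mean_noise}, and the standard bias and noise terms yield exactly the bound from \Cref{th:martingale}; this accounts for $C_R^{(2)}\Gamma_{n+1}^{(2)}$ and the $b_\infty$ terms. The new work is to estimate $\PE[\normrLigne{\Delta_{\theta_k,\upeta_{k+1}}(X_{k+1})}[\theta_k]^2]$ using \Cref{lem:retractions}. In the first-order case \Cref{ass:retraction_first_b}, writing $u=\upeta_{k+1}(H_{\theta_k}(X_{k+1})+b_{\theta_k}(X_{k+1}))$, one gets $\normrLigne{\Delta_{\theta_k,\upeta_{k+1}}(X_{k+1})}[\theta_k] \leq \scrlinf^{(1)}(\theta_k)\upeta_{k+1}\normrLigne{H_{\theta_k}(X_{k+1})+b_{\theta_k}(X_{k+1})}[\theta_k]^2$; squaring and applying $\normrLigne{H}[\theta_k]^4 \lesssim \normrLigne{h(\theta_k)}[\theta_k]^4+\normrLigne{\noise_{\theta_k}(X_{k+1})}[\theta_k]^4 + b_\infty^4$ together with \Cref{ass:bounded_vh} and \Cref{ass:retraction_martingale}($4$) produces a bound of order $\upeta_{k+1}^2$. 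In the second-order case, the analogous cube/sixth-moment combination via \Cref{ass:retraction_martingale}($6$) gives a bound of order $\upeta_{k+1}^4$. Summing against the outer $\upeta_{k+1}$ and an extra factor $\upeta_{k+1}$ from the $(3L/2)\upeta_{k+1}$ weight, the retraction-bias contribution is absorbed into a term of order $\Gamma_{n+1}^{(3)}$, which is precisely what the statement calls $A_{n+1}^{(R)}$.

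To close the argument I would choose the stepsize threshold $\bar{\upeta}$ small enough so that the coefficient of $\normrLigne{h(\theta_k)}[\theta_k]^2$ on the left-hand side remains lower bounded by a positive multiple of $\cl_1\upeta_{k+1}$ after all the $O(\upeta_{k+1}^2)$ and $O(\upeta_{k+1}^3)$ perturbations introduced by the noise, bias, and retraction-bias terms are moved to the right-hand side. Dividing by $\cl_1\Gamma_{n+1}$ and integrating against the distribution of $I_n$ given by \eqref{eq:nn} then yields \eqref{eq:retmartingale1}; the term $(b_\infty\cu h_\infty + \cl_2)/\cl_1$ appears from combining the Cauchy–Schwarz bound on $\psrLigne{\grad V(\theta_k)}{b_{\theta_k}(X_{k+1})}[\theta_k]$ using $\normrLigne{\grad V}[\theta_k]\leq \cu\normrLigne{h}[\theta_k]\leq \cu h_\infty$ (here \Cref{ass:bounded_vh} is crucial) with the $\cl_2$ mismatch from the Lyapunov condition.

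The main obstacle I anticipate is the bookkeeping of the higher-order retraction-bias term: one must control $\PE[\normrLigne{H_{\theta_k}(X_{k+1})+b_{\theta_k}(X_{k+1})}[\theta_k]^{2p}]$ for $p\in\{2,3\}$ uniformly in $k$, which is the very reason \Cref{ass:retraction_martingale} is stated with exponents $4$ and $6$, and the reason \Cref{ass:bounded_vh} is needed (so that $\normrLigne{h}$ contributes only a constant to these higher moments rather than a growing one). Once these moments are in hand, the summations collapse nicely to $\Gamma_{n+1}^{(2)}$ and $\Gamma_{n+1}^{(3)}$, matching the asserted structure of $C_R^{(2)}$ and $A_{n+1}^{(R)}$.
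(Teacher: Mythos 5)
Your proposal is correct and follows essentially the same route as the paper: rewrite \eqref{eq:sa} as the geodesic scheme \eqref{eq:newschemexp} with retraction bias $\Delta_{\theta,\upeta}$, apply the descent lemma \Cref{lem:taylor_grad_lip}, bound $\normr{\Delta_{\theta_k,\upeta_{k+1}}(X_{k+1})}[\theta_k]$ via \Cref{lem:retractions} together with \Cref{ass:bounded_vh} and the moment conditions \Cref{ass:retraction_martingale}($4$)/($6$), absorb the resulting $\upeta_{k+1}^2\normr{h(\theta_k)}[\theta_k]^2$ terms into the left-hand side by shrinking $\bupeta$, and integrate against the law of $I_n$. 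The only cosmetic difference is that the paper controls the cross term $\upeta_{k+1}\psr{\grad V(\theta_k)}{\Delta_{\theta_k,\upeta_{k+1}}(X_{k+1})}[\theta_k]$ by Cauchy--Schwarz and the uniform bound $\normr{\grad V(\theta_k)}[\theta_k]\le \cu h_{\infty}$ (whence the $\Gamma^{(2)}_{n+1}$ and $\Gamma^{(4)}_{n+1}$ contributions in \eqref{eq:def_A_R}) rather than by your Young-inequality split, and both variants yield the asserted $\bigO(\Gamma^{(3)}_{n+1})$ remainder.
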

\begin{proof} 
    The proof is postponed to \Cref{app:proofretmartingale}.
\end{proof}
  \Cref{prop:retmartingale} shows that the bounds for the retraction scheme \eqref{eq:newschemexp} and the geodesic scheme  \eqref{eq:sa_exp_b} are (up to the precise definitions of constants) the same.


\noindent \paragraph{Markovian Setting}: We strengthen
\Cref{ass:w_markov}$(w)$-\ref{ass:item:w_markov:moment} as follows:
\vspace{-0.2cm}
\begin{assumptionMA}[$w$]
    \label{ass:w_moment}
    The Markov chain $(X_k)_{k\in \nsets}$ satisfies that
            $\sup_{k \in \nset} \expe{
            w^3(X_{k+1}) } \leq C^{(3)}_w$ for
            $C^{(3)}_w \geq 0$.
\end{assumptionMA}
\begin{theorem} \label{th:retmarkov}
Assume
\Cref{ass:completeness}-\Cref{ass:lyap_mean_field}-\Cref{ass:bounded_bias}-\Cref{ass:bounded_vh},
\Cref{ass:retraction},
\Cref{ass:markov}-\Cref{ass:w_markov}$(w)$-\Cref{ass:w_moment}$(w)$ hold for some measurable
function $w : \msx \to \cointLigne{1,\plusinfty}$. Suppose in addition that \Cref{ass:retraction_first_b} or \Cref{ass:retraction_second} holds. Assume that
$(\upeta_k)_{k \in \nsets}$ is a
sequence of stepsizes and $a_1,a_2 \geq 0$ satisfying \eqref{eq:hyp_gamma_k}.
  Consider $(\theta_k)_{k \in \nset}$ defined by \eqref{eq:sa}.  Then, if $\sup_{k \in \nsets}
  \upeta_k \leq \bar{\upeta}$, for $\bupeta$ defined in \eqref{eq:def_bupeta_marting_retract_markov}, for any $n \in\nset$
    \begin{equation} \label{eq:retmartingale1_markov}
        \begin{aligned}
 \expeLigne{ \normr{ h( \theta_{I_n} ) }[\theta_{I_n}]^2 } \leq  2(\cl_1
\Gamma_{n+1})^{-1}  \defEnsLigne{ \expe{V(\theta_0)} +C(\upeta_1)+
D_R^{(2)}\Gamma_{n+1}^{(2)} + B^{(R)}_{n+1}}& \\  
+ 2(b_\infty \cu h_{\infty} + \cl_2)/\cl_1 & \eqsp,
\end{aligned}
\end{equation}
$C(\upeta_1),D^{(2)}_R$ are given by \eqref{eq:C_upeta_markov_w}-\eqref{eq:def_D_R_2}, $B^{(R)}_{n+1} = \bigO(\Gamma_{n+1}^{(3)})$ by \eqref{eq:def_B_R}  and $I_n \in \iint{0}{n}$ is a random variable independent of $\calF_n$ and
distributed according to \eqref{eq:nn}.
\end{theorem}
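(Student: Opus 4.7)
The plan is to analyze the retraction SA scheme \eqref{eq:sa} by exploiting its reformulation \eqref{eq:newschemexp}--\eqref{eq:Deltan} as a geodesic SA scheme applied to the perturbed mean-field $H_{\theta_n}(X_{n+1}) + b_{\theta_n}(X_{n+1}) + \Delta_{\theta_n,\upeta_{n+1}}(X_{n+1})$, and then transposing the Poisson-equation machinery that was used in the proof of \Cref{th:markov}. Thus everything reduces to tracking how the additional retraction bias $\Delta$ enters the descent inequality, and controlling it via \Cref{lem:retractions} together with the strengthened moment assumption \Cref{ass:w_moment}$(w)$.

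First, I would derive the analogue of \Cref{lem:first_inequality_lemma_main} for the perturbed scheme. Applying \Cref{lem:taylor_grad_lip} to the geodesic segment with initial velocity $\upeta_{k+1}(H_{\theta_k}(X_{k+1})+b_{\theta_k}(X_{k+1})+\Delta_{\theta_k,\upeta_{k+1}}(X_{k+1}))$ and expanding $\psr{\grad V(\theta_k)}{\cdot}[\theta_k]$ as in \Cref{ass:lyap_mean_field} produces \eqref{eq:first_inequality} augmented by two extra terms: a cross term $\upeta_{k+1}\psr{\grad V(\theta_k)}{\Delta_{\theta_k,\upeta_{k+1}}(X_{k+1})}[\theta_k]$ and a quadratic term $\upeta_{k+1}^2\normrLigne{\Delta_{\theta_k,\upeta_{k+1}}(X_{k+1})}[\theta_k]^2$. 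Young's inequality splits the cross term, and \Cref{lem:retractions} bounds $\normrLigne{\Delta_{\theta,\upeta}(x)}[\theta]$ by $\upeta\,\scrlinf^{(1)}(\theta)(\normrLigne{H_\theta(x)}[\theta]+\normrLigne{b_\theta(x)}[\theta])^2$ under \Cref{ass:retraction_first_b} (or a sharper cubic bound with one more power of $\upeta$ under \Cref{ass:retraction_second}). Using \Cref{ass:w_markov}\ref{ass:item:w_markov:e_bound} and \Cref{ass:bounded_vh} gives $\normrLigne{H_\theta(x)}[\theta]\le h_\infty+e_\infty w^{1/2}(x)$, so the third/sixth powers of $\normrLigne{H_\theta(X_{k+1})+b_\theta(X_{k+1})}[\theta]$ that arise are integrable thanks to the $w^3$-moment bound in \Cref{ass:w_moment}$(w)$; they accumulate into the remainder $B_{n+1}^{(R)}=\bigO(\Gamma_{n+1}^{(3)})$.

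Next, I would handle the Markovian noise term $\sum_{k=0}^n \upeta_{k+1}\psr{\grad V(\theta_k)}{e_{\theta_k}(X_{k+1})}[\theta_k]$ exactly as in \Cref{th:markov}: substitute $e_\theta=\hnoise_\theta-P_\theta \hnoise_\theta$ from \Cref{ass:w_markov}\ref{ass:item:w_markov:poisson}, then apply an Abel/telescoping decomposition to produce a true martingale part (whose expectation vanishes) together with boundary terms and drift corrections. The boundary terms yield the constant $C(\upeta_1)$ of \eqref{eq:C_upeta_markov_w}; the drift corrections combine \Cref{ass:w_markov}\ref{ass:item:w_markov:poisson_regularity} with the step-size regularity \eqref{eq:hyp_gamma_k} and the moment bound $\sup_k\expe{w(X_{k+1})}\le C_w$, producing contributions of order $\upeta_{k+1}^2$ which are absorbed by $D_R^{(2)}\Gamma_{n+1}^{(2)}$.

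The main obstacle is that the update $\theta_k\to\theta_{k+1}$ now follows a retraction curve, whereas \Cref{ass:w_markov}\ref{ass:item:w_markov:poisson_regularity} controls the regularity of $\int P_\theta \hnoise_\theta$ along geodesics. The fix is to compare the two: the geodesic from $\theta_k$ to $\theta_{k+1}=\Ret_{\theta_k}(\upeta_{k+1}(H+b))$ has length at most $\upeta_{k+1}\normrLigne{H_{\theta_k}(X_{k+1})+b_{\theta_k}(X_{k+1})}[\theta_k]+\upeta_{k+1}\normrLigne{\Delta_{\theta_k,\upeta_{k+1}}(X_{k+1})}[\theta_k]$, and by \Cref{lem:retractions} the second summand is of higher order in $\upeta_{k+1}$, so it again contributes only to $B_{n+1}^{(R)}$. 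Choosing $\bar\upeta$ small enough that the left-hand side of the descent inequality absorbs all $\bigO(\upeta_{k+1})$ perturbations and collapses to $(\cl_1/2)\upeta_{k+1}\normrLigne{h(\theta_k)}[\theta_k]^2$, taking expectations, dividing by $\cl_1\Gamma_{n+1}$, and integrating against the distribution \eqref{eq:nn} of $I_n$, delivers \eqref{eq:retmartingale1_markov}.
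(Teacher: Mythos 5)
Your proposal follows essentially the same route as the paper's proof: rewriting \eqref{eq:sa} as the geodesic scheme \eqref{eq:newschemexp} with the retraction bias $\Delta$, establishing the descent inequality with the extra cross and quadratic $\Delta$-terms (the paper's \Cref{lem:ret_inequality_lemma}), running the Poisson-equation/Abel decomposition on the Markovian noise with the geodesic joining $\theta_k$ to $\theta_{k+1}$ whose length now includes the higher-order $\Delta$ contribution (the paper's \Cref{lem:poisson_ret_makov}, terms $A_1$--$A_5$), and absorbing everything via \Cref{lem:retractions}, the $w^3$-moment bound and the step-size restriction. The only cosmetic difference is that you split the cross term $\psr{\grad V(\theta_k)}{\Delta_{\theta_k,\upeta_{k+1}}(X_{k+1})}[\theta_k]$ by Young's inequality, whereas the paper keeps it as a product and bounds $\normr{\grad V(\theta_k)}[\theta_k]\leq \cu h_\infty$ using \Cref{ass:bounded_vh}; both yield the stated rates.
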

\begin{proof} 
    The proof is postponed to \Cref{app:proofretmarkov}.
\end{proof}
Again, the bounds are matched with those obtained in the martingale setting.


\section{Applications}\label{sec:applications}
In this section, we illustrate our convergence analysis results on
two examples: subspace tracking method 
and robust barycenter problem.

\subsection{Principal Component Analysis}\label{subsec:pca}
We consider online principal component analysis (PCA) problem in which we
estimate the $r$ principal eigenvectors of a $d \times d$ covariance
matrix $\bfA$; see \cite{cardot2018online} and the references therein. We assume that we have access to noisy data
$(X_n)_{n \in \nsets}$ in $\rset^{d}$ which have covariance matrices close in some sense to $\bfA$.
We consider two cases: 
\vspace{-0.2cm}
\begin{assumptionPCA}
  \label{ass:pca:weak_statio}
  $(X_n)_{n \in \nsets}$ is \iid, $ \expeLigne{X_{1}X_{1}^{\transpose}} = \bfA$, and there exists $\moment_{\pca}^{(12)} < \infty $ such that  $ \expeLigne{\normrLigne{ X_{1} }^{12}} \leq \moment^{(12)}_{\pca}$.
\end{assumptionPCA}

\begin{assumptionPCA}
  \label{ass:pca:markov}
$(X_n)_{n \in \nsets}$ is a Markov chain on $\rset^d \times \mcbb(\rset^d)$ with Markov kernel $P$ and stationary distribution $\pi$ such that $\int_{\rset^d} x \rmd \pi(x) = 0$ and  $\bfA = \int_{\rset^d} x x^{\transpose} \rmd \pi(x)$. In addition, there exists $w : \msx \to \coint{1,\plusinfty}$, $\lambda \in \ooint{0,1}$, $\bw \geq 0$ such that    for any $x \in \msx$, $                P w(x) \leq \lambda w(x) + \bw \1_{\msc}(x)$
              where $\msc \in \mcbb(\rset^d)$ is a small set for $P$. Finally, $w$  satisfies  $\sup_{x \in \rset^d} \{\norm[12]{x}/w(x)\}< \plusinfty$.
 \end{assumptionPCA}
Online PCA in the \iid~case has been tackled by, among others,  \eg~\cite{oja:1992,bonnabel2013stochastic,sra:2016},  \cite[Section 5]{cardot2018online}. We consider two algorithms introduced in \cite[Section~4.1]{bonnabel2013stochastic} to estimate $\bfA$. In both cases, PCA is considered as a stochastic minimization problem on the Grassmann manifold (see \Cref{sec:retr-quant-estim}) of
$r$-dimensional linear subspaces of $\rset^d$, $\grassmann_r(\rset^d)$, $r \in\{1,\ldots,d-1\}$ with objective function \vspace{-.1cm}
\begin{equation}
\txts
f([B] ) = -\trace(B^{\transpose} \bfA B) / 2 \eqsp. \vspace{-.1cm}
\end{equation}
We use the notation of \Cref{ex:grassmann}.
\begin{theorem}
  \label{theo:pca}
Based on the exponential map \eqref{eq:grassmannexp_main} or the
retraction \eqref{eq:grassmannretraction}, we consider the two
recursions
\begin{align}
  \label{eq:def:pca_sa_exp}
  &  [B^{(E)}_{n+1}] = [\Exp_{[B^{(E)}_n]}(\upeta_{n+1} \{\rmI_d - B^{(E)}_{n}(B^{(E)}_{n})^{\transpose}\}\{X_{n+1}X_{n+1}^{\transpose}\} B^{(E)}_{n}\}) ] \\
    \label{eq:def:pca_sa_ret}
 &   [B^{(Re)}_{n+1}] = \spanD(B^{(Re)}_n+ \upeta_{n+1} \{\rmI_d - B^{(Re)}_{n}(B^{(Re)}_{n})^{\transpose}\}\{X_{n+1}X_{n+1}^{\transpose}\} B^{(Re)}_{n}\}) \eqsp.
\end{align}
The schemes defined by \eqref{eq:def:pca_sa_exp} and
\eqref{eq:def:pca_sa_ret} satisfy the conditions of
\Cref{th:martingale,prop:retmartingale} respectively if
\Cref{ass:pca:weak_statio} holds and the conditions  of
\Cref{th:markov,th:retmarkov} respectively if \Cref{ass:pca:markov}
holds with $\cu_2 = \binfty = 0$.
\end{theorem}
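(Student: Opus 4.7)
The plan is to verify, one by one, the hypotheses of the four previous convergence theorems for the concrete PCA choices, exploiting that everything takes place on the compact manifold $\grassmann_r(\rset^d)$ from \Cref{ex:grassmann}.

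First I would set up the common ingredients shared by both the iid and Markov cases. Since $\Theta=\grassmann_r(\rset^d)$ is compact, it is geodesically complete, so \Cref{ass:completeness} holds; compactness together with smoothness of $[B]\mapsto -\trace(B^\transpose\bfA B)/2$ also forces $h$ to be bounded on $\Theta$, which gives \Cref{ass:bounded_vh}. I would define the Lyapunov function $V([B])=r\|\bfA\|_{\rm op}/2-\trace(B^\transpose\bfA B)/2\geq 0$ and compute its Riemannian gradient in the canonical metric of \Cref{ex:grassmann}, obtaining $\grad V([B])=-(\rmI_d-BB^\transpose)\bfA B=-h([B])$. Hence $-\psr{\grad V(\theta)}{h(\theta)}[\theta]=\normr{h(\theta)}[\theta]^2$ and $\normr{\grad V(\theta)}[\theta]=\normr{h(\theta)}[\theta]$, which yields \Cref{ass:lyap_mean_field}-\ref{ass:lyap_mean_field_b} with $\cl_1=\cu=1$, $\cl_2=0$; the Lipschitzness \ref{ass:lyap_mean_field_a} follows from \Cref{lem:bounded_hessian} since $V$ is smooth on a compact manifold, so $\Hess V$ is uniformly bounded in operator norm. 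Finally, the bias term in \eqref{eq:def:pca_sa_exp}--\eqref{eq:def:pca_sa_ret} is zero, so \Cref{ass:bounded_bias} holds with $b_\infty=0$.

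For the martingale case under \Cref{ass:pca:weak_statio}, observe that $\noise_{[B]}(X)=(\rmI_d-BB^\transpose)(XX^\transpose-\bfA)B$ has zero conditional mean since $(X_n)$ is iid with $\bbE[X_1 X_1^\transpose]=\bfA$, giving the martingale part of \Cref{ass:0mean_noise}. The quadratic in $X$ and the bound $\|B\|_{\rm op}\leq 1$ give $\normr{\noise_{[B]}(X)}[B]^2\leq C(\|X\|^4+\|\bfA\|^2)$, so Jensen's inequality and $\moment^{(12)}_{\pca}<\infty$ yield a uniform bound $\sigmaZ<\infty$ and $\sigmaU=0$, finishing \Cref{ass:0mean_noise} for scheme \eqref{eq:def:pca_sa_exp}. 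For the retraction scheme \eqref{eq:def:pca_sa_ret} I would additionally invoke \Cref{prop:retractiongrass} for \Cref{ass:retraction,ass:retraction_first_b,ass:retraction_second}, and verify \Cref{ass:retraction_martingale}($a$) for $a\in\{4,6\}$ via $\normr{\noise_{[B]}(X)}[B]^a\leq C(1+\|X\|^{2a})$ together with $\bbE[\|X\|^{12}]\leq \moment^{(12)}_{\pca}$, which covers both cases by Jensen.

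For the Markov case under \Cref{ass:pca:markov}, since $P_\theta=P$ does not depend on $\theta$, I would apply \Cref{propo:condition_P_implication_MA}\ref{item:prop_condition_P_implication_MA_b} with the auxiliary weight $\tilde w=w^{1/3}$. The hypothesis $\sup_x\|x\|^{12}/w(x)<\infty$ gives $\|x\|^4\leq C\,\tilde w(x)$, hence $\normr{\noise_{[B]}(x)}[B]^2\leq C'\tilde w(x)$, verifying the noise bound required by the proposition. The drift condition on $w$ implies the drift for $\tilde w$ via Jensen and yields $\sup_k\bbE[\tilde w(X_{k+1})]<\infty$ (and $\sup_k\bbE[\tilde w^3(X_{k+1})]=\sup_k\bbE[w(X_{k+1})]<\infty$ for \Cref{ass:w_moment}). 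The Lipschitz condition \eqref{eq:prop_condition_P_implication_MA_b} on $\noise_\theta$ along geodesics follows because $[B]\mapsto\noise_{[B]}(x)$ is smooth, $\grassmann_r(\rset^d)$ is compact, and $\noise$ depends polynomially of degree two on $x$ — so the $\theta$-derivative is again bounded by a quadratic in $\|x\|\leq C\tilde w^{1/2}(x)$. This delivers \Cref{ass:w_markov}($\tilde w$) for \eqref{eq:def:pca_sa_exp}, and adding \Cref{prop:retractiongrass} together with \Cref{ass:w_moment}($\tilde w$) yields the hypotheses of \Cref{th:retmarkov} for \eqref{eq:def:pca_sa_ret}.

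The only non-routine step is the verification of the Poisson-regularity conditions \Cref{ass:w_markov}\ref{ass:item:w_markov:poisson}--\ref{ass:item:w_markov:poisson_regularity}; rather than attacking them directly I would rely on \Cref{propo:condition_P_implication_MA}\ref{item:prop_condition_P_implication_MA_b}, which is designed precisely to reduce them to the geometric drift condition on $P$ (furnished by \Cref{ass:pca:markov}) plus a Lipschitz condition on $\noise_\theta$ in $\theta$ (which, as just noted, is automatic by compactness and smoothness). Once these building blocks are assembled, the conclusions follow directly from \Cref{th:martingale,prop:retmartingale,th:markov,th:retmarkov} with the noted choice of constants.
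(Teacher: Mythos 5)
Your proof is correct and follows essentially the same route as the paper: verify the standing assumptions via compactness and smoothness of $\grassmann_r(\rset^d)$, invoke \Cref{prop:retractiongrass} for the retraction hypotheses, and reduce the Markovian conditions to \Cref{propo:condition_P_implication_MA}-\ref{item:prop_condition_P_implication_MA_b}. Two of your refinements are worth recording because they tighten points the paper treats loosely. First, the paper sets $V \leftarrow -f$, which is nonnegative but gives $\grad V = h$, so the first inequality of \Cref{ass:lyap_mean_field}-\ref{ass:lyap_mean_field_b} comes out with the wrong sign as written; your choice $V = f + r\norm{\bfA}_{\mathrm{op}}/2$ is the correct nonnegative Lyapunov function, with $\grad V = -h$ and $\cl_1=\cu=1$, $\cl_2=0$. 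Second, the paper applies \Cref{propo:condition_P_implication_MA} with the weight $w$ of \Cref{ass:pca:markov} itself and then simply asserts \Cref{ass:w_moment}, but the drift on $w$ only controls $\sup_k \expeLigne{w(X_{k+1})}$, not $\sup_k \expeLigne{w^3(X_{k+1})}$. Your substitution $\tilde w = w^{1/3}$ — drift preserved by Jensen, $\tilde w^3 = w$, and $\normrLigne{\noise_{[B]}(x)}[{[B]}] \lesssim \norm{x}^2 \lesssim w^{1/6}(x) = \tilde w^{1/2}(x)$ precisely because \Cref{ass:pca:markov} controls the twelfth power of $\norm{x}$ — is the clean way to make \Cref{ass:w_moment} hold and explains the exponent $12$ appearing in \Cref{ass:pca:weak_statio} and \Cref{ass:pca:markov}. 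The remaining verifications (martingale property and moment bounds of the noise under \Cref{ass:pca:weak_statio}, and the geodesic Lipschitz bound on $[B]\mapsto \noise_{[B]}(x)$ with constant of order $1+\norm{x}^2$ via bounded Hessians) coincide with what the paper does in \Cref{lem:grass_poisson_lip}.
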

\begin{proof}
Note that \Cref{ass:completeness} is satisfied following for example \cite[Section 2.5]{edelman:arias:smith:1998}.
Note that $f$ defines a map on $\grassmann_r(\rset^d)$ since $f( B ) = f( B' )$ for any $B' \in [B]$.
Moreover, by \cite[Section 2.5.3]{edelman:arias:smith:1998}, the gradient of $f$ is given by $\grad f([B]) = -\{\rmI_d - BB^{\transpose}\}\bfA B$, for any $[B] \in \grassmann_r(\rset^d)$. Then, setting $h \leftarrow -\grad f$ and $V \leftarrow -f$, since $f$ is smooth and $\grassmann_r(\rset^d)$ is compact, it is easy to verify that \Cref{ass:lyap_mean_field} is satisfied with $\cu_2 =0$.

In light of the above recursion, the noise vector field can be written as $e_{[B]}(x) = \{\rmI_d -  B B^{\transpose}\}x x^{\transpose}  B - \{\rmI_d - BB^{\transpose}\}\bfA B$ and the bias function is the null function $b_{[B]}(x) = 0$, for any $[B] \in \grassmann_r(\rset^d)$, $x \in \rset^d$.
We recall from \Cref{ex:grassmann} that the retraction operator in \eqref{eq:def:pca_sa_ret} satisfies
\Cref{ass:retraction}-\Cref{ass:retraction_first_b}-\Cref{ass:retraction_second}. We now verify the other assumptions.

$\bullet$ Under \Cref{ass:pca:weak_statio},   \Cref{ass:0mean_noise},\Cref{ass:retraction_martingale}$(6)$ hold.

    $\bullet$ Under  \Cref{ass:pca:markov},  \Cref{ass:markov} is
    automatically satisfied. We show in \Cref{lem:grass_poisson_lip}
    that  for any $x \in \rset^d$, $[B] \mapsto e_{[B]}(x)$ satisfies
    \eqref{eq:prop_condition_P_implication_MA_b}. Finally,
    \Cref{ass:pca:markov} implies that for any $[B] \in
    \grassmann_r(\rset^d)$ and $ x \in \rset^d$,
    $\normrLigne{e_{[B]}(x)}[{[B]}] \leq C
    w^{1/2}(x)$\footnote{$\normrLigne{e_{[B]}(x)}[{[B]}]$ is the norm
    the tangent vector $e_{[B]}(x)$ associated with  the Riemannian
metric of $\grassmann_r(\rset^d)$ given in \Cref{sec:retr-quant-estim}
and $\norm{x}$ is the Euclidean norm of $x$.}, for some constant $C
\geq 0$. Therefore, \Cref{ass:pca:markov} and an application of
\Cref{propo:condition_P_implication_MA}-\ref{item:prop_condition_P_implication_MA_b}
shows that  \Cref{ass:w_markov}$(w)$ is satisfied. Also, the explicit
expression for the noise $e$ and \Cref{ass:pca:markov} implies that
\Cref{ass:w_moment} holds.
\end{proof}
\Cref{theo:pca} shows that the schemes \eqref{eq:def:pca_sa_exp} and \eqref{eq:def:pca_sa_ret} converges to a zero of $\grad f$ with convergence bounds provided by our main results since $\cu_2 = \binfty = 0$.

\subsection{Robust Barycenter in a Hadamard Manifold} \label{subsec:apphuber}
Let $\Theta$ be a Hadamard manifold -- a simply-connected, complete Riemannian manifold of non-positive sectional curvature~\cite{lee:2019}. We assume that the sectional curvature of $\Theta$ is bounded below, $-\kappa^2 \leq \mathrm{sec}\,\Theta \leq 0$. A common example of this situation is $\Theta \,=\,\mathrm{S}_d^{++}(\rset)$,  the space of real $d\times d$ symmetric positive-definite matrices, equipped with its   affine-invariant metric \cite{pennec:2006}.
Consider data points $(X_n)_{n \in \nsets}$ lying on the Riemannian manifold $\Theta$, i.e., $X_n \in \Theta$, that are drawn from a distribution $\pi$. A fundamental machine learning problem is to compute some kind of central value of $\pi$, defined as an optimal solution to    
\vspace{-0.1cm}
\begin{equation}  \label{eq:huberV} \textstyle
\min_{\Theta} f \eqsp, \quad \text{ where } f(\theta) \,=\, \int_\Theta\,\trho(\theta,x)\,\pi(\rmd x) \text{ for $\theta \in\Theta$} \eqsp,\vspace{-0.1cm}
\end{equation}
where $\trho(\theta,x)$ is some Riemannian dissimilarity measure. For instance, the Riemannian barycenter, also called the Fr\'echet mean, is obtained by taking $\trho(\theta,x) = \rho^2_\Theta(\theta,x)$ where $\rho_\Theta : \Theta \times \Theta \rightarrow \mathbb{R}_+$ is the Riemannian distance of $\Theta$ \cite{goh2008clustering, mathieu2019continuous}.

The Riemannian barycenter is known to be sensitive to outliers, which motivated the idea of considering the Riemannian median, obtained by taking $\trho(\theta,x) = \rho_\Theta(\theta,x)$~\cite{arnaudon:2013}.
We consider a \emph{robust barycenter} by using a Huber-like dissimilarity measure, $\rhoH:\Theta \times \Theta \rightarrow \mathbb{R}_+$, defined for any $\theta_0,\theta_1\in\Theta$ by $  \rhoH(\theta_0,\theta_1) \,=\, \delta^2\,[1+\lbrace \rho_\Theta(\theta_0,\theta_1)/\delta\rbrace^{2\;}]^{\scriptscriptstyle 1/2}\,-\,\delta^2$,
where $\delta > 0$ is a cut-off constant. Observe that $\rhoH(\theta_0,\theta_1)$ behaves like $(1/2)\,\rho^2_\Theta(\theta_0,\theta_1)$ when $\rho_\Theta(\theta_0,\theta_1)$ is small compared to $\delta$, and like $\delta\,\rho_\Theta(\theta_0,\theta_1)$ when $\rho_\Theta(\theta_0,\theta_1)$ is large compared to $\delta$. Let $\pi$ be a probability distribution on $\Theta$. Using
$\rhoH$ in the optimization problem \eqref{eq:huberV} yields a \emph{robust barycenter} problem, and the robust barycenter is a global minimizer of \eqref{eq:huberV}.


In the simplest setting where $(X_n)_{n\in\nsets}$ are i.i.d.~from $\pi$, tackling the problem \eqref{eq:huberV} can be done by considering the following geodesic SA scheme:
\begin{equation} \label{eq:huberscheme} \textstyle
\textstyle  \theta_{n+1}\,=\, \Exp_{\theta_n} ( \upeta_{n+1}\,\Exp^{-1}_{\theta_n}(X_{n+1})/[1+\lbrace \rho_\Theta(\theta_n\,,X_{n+1})/\delta\rbrace^{2\;}]^{\scriptscriptstyle  1/2} ) \eqsp.
\end{equation}
The above is the ``recursive barycenter'' scheme proposed by \cite{sturm:2003,arnaudon:2012}, except that a move in the direction of a new observation $X_{n+1}$ is attenuated when this new observation lies too far from the current estimate $\theta_n\,$. This means that less weight is assigned to extreme observations. Note that this strategy and the scheme \eqref{eq:huberscheme} have been also considered in \cite{chakraborty:2020} to solve PCA in a robust manner. We show in \Cref{app:proofshuber} that results from \Cref{sec:results} can be applied and furthermore \eqref{eq:huberscheme} finds a unique and global minimizer to \eqref{eq:huberV}.


%
\subsection{Numerical Examples on the PCA Problem}
\label{subsec:numerical}
We illustrate our results on a PCA problem on
$\grassmann_r(\rset^{d})$ with $r=8$ and $d=50$ using the two settings  \Cref{ass:pca:weak_statio} and
\Cref{ass:pca:markov}. We use both the retraction map
\eqref{eq:grassmannretraction} and the recursion  \eqref{eq:def:pca_sa_ret}.

We first consider the case where $(X_{n})_{n \in \nset^*}$ is 
\iid~with zero-mean Gaussian distribution and covariance matrix
$\bfA$.  For our experiments,
$\bfA$ is a randomly sampled symmetric matrix with eigenvalues in $\ccint{10^{-6},10}$. Note that
\Cref{ass:pca:weak_statio} holds.  In \Cref{fig:ret_iid} we illustrate the
convergence bounds provided by \Cref{prop:retmartingale}. This figure shows a Monte Carlo estimation of
$\{\PE[\normrLigne{\grad f(B_{I_n}^{(Re)})}[{[B_n]}]] \, :\, n \in
\{1,\ldots, 10^5\}\}$ for the step-size sequences
$\upeta_n =\square / (n + \triangle)^\alpha$, $\alpha \in \{0.3,0.5,0.8\}$. We
can observe that the convergence bounds provided by  \Cref{prop:retmartingale} which are of order
$\bigO(1/n^{\alpha \wedge (1-\alpha)})$ for
$\alpha \in (0,1/2) \cup (1/2,1)$,
and $\bigO(\log(n)/\sqrt{n})$ are met.
\begin{figure}
  \centering
  \includegraphics[width=\textwidth]{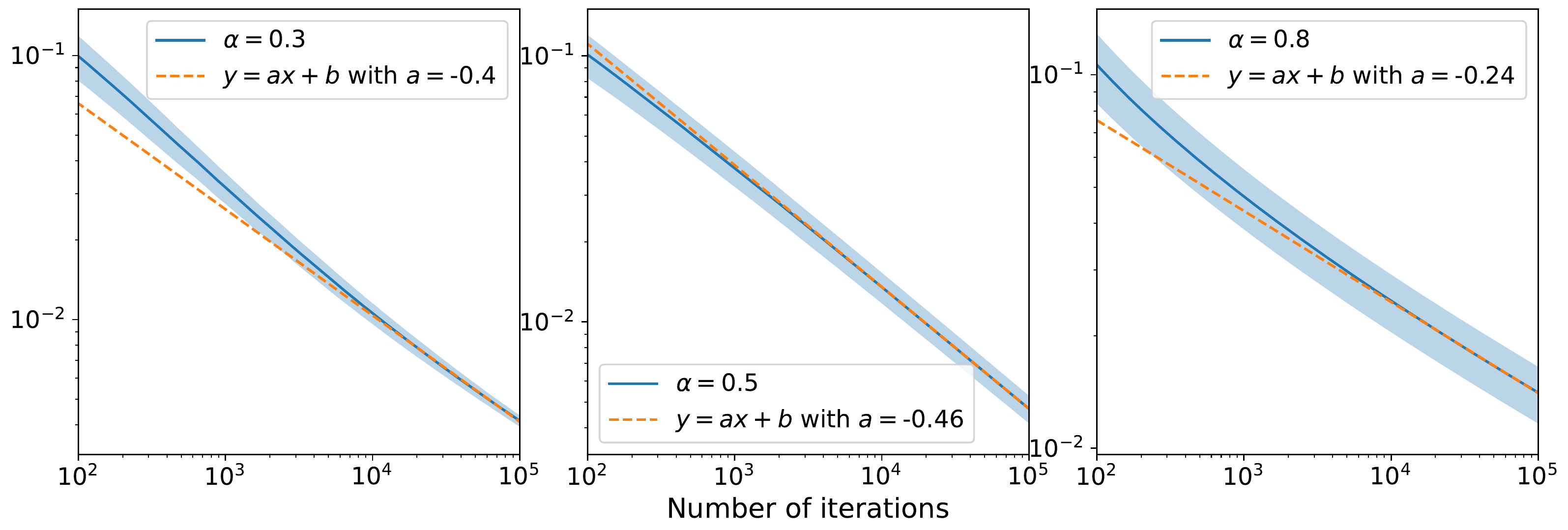}
  \caption{\label{fig:ret_iid} $\{\PE[\normrLigne{\grad f(B_{I_n}^{(Re)})}[{[B_n]}]] \}_{n}$ for 
$\upeta_n =\square / (n + \triangle)^\alpha$  in the \iid~setting}
\end{figure}

In a second experiment, we assume that $(X_{n})_{n \in \nsets}$ is a Markov chain,
with state space $\rset^d$, satisfying the recursion for any $k \in \nset$,
$X_{k+1} = \Mbf X_k + \varepsilon_{k+1}$, where 
$\Mbf \in \rset^{d \times d}$ and $(\varepsilon_k)_{k \in \nsets}$ is a
sequence of \iid~random variables with zero-mean Gaussian distribution
and covariance matrix $\bfA$. The matrix $\Mbf$ is chosen so that its
spectral radius is strictly less than 1, which implies that
\Cref{ass:pca:markov} is satisfied; see \Cref{sec:app_numerics}.  The
stationary distribution is zero-mean Gaussian with covariance matrix
satisfying the discrete Riccati equation:
$\Sigmabf_{\pi} = \Mbf \Sigmabf_{\pi} \Mbf^{\transpose} + \bfA$.
Similarly to the \iid~setting and with the same parameter choices, the convergence bounds provided by \Cref{th:retmarkov} are illustrated in \Cref{fig:ret_markov}.

\begin{figure}
  \centering
  \includegraphics[width=\textwidth]{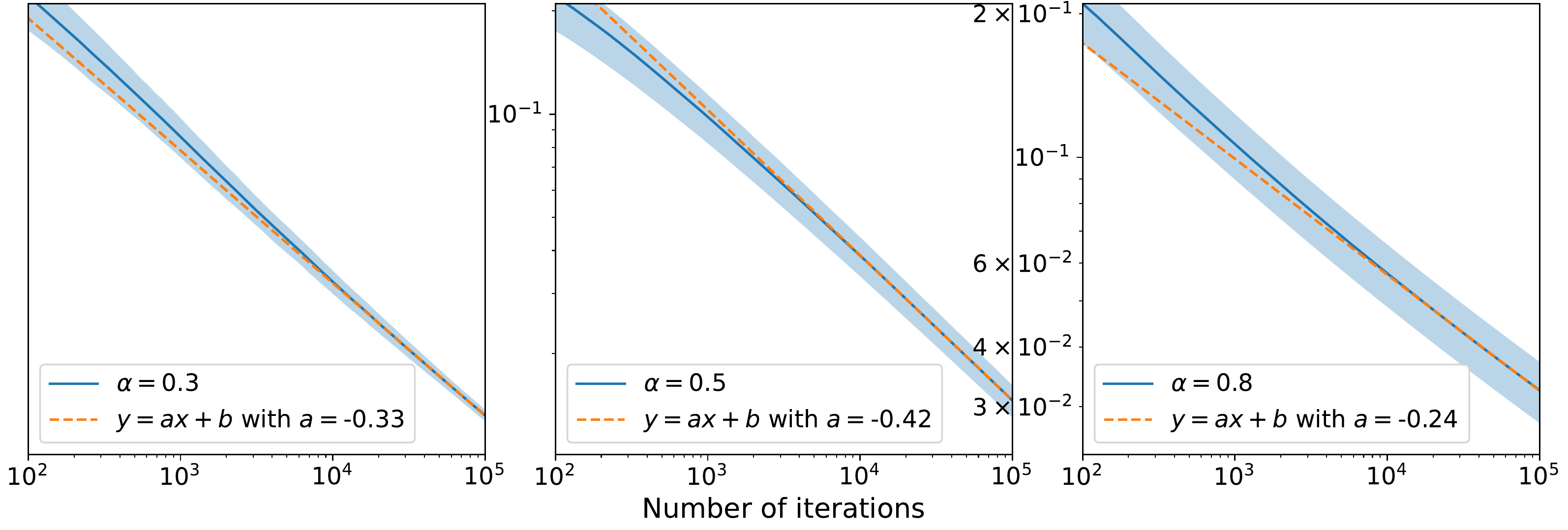}
  \caption{\label{fig:ret_markov} $\{\PE[\normrLigne{\grad f(B_{I_n}^{(Re)})}[{[B_n]}]] \}_{n}$ for 
$\upeta_n =\square / (n + \triangle)^\alpha$  in the Markovian setting}
\end{figure}


%

\bibliographystyle{abbrv}
\bibliography{bibliography}

\vfill
\pagebreak

\appendix 

\section{Proofs of \Cref{sec:results}}
\label{sec:proofs-theor-refth:m}

\subsection{Proof of \Cref{lem:taylor_grad_lip}}
\label{sec:proof-crefl:taylor_grad_lip}
  The proof uses results on parallel transport and geodesics on a Riemannian manifold. For reader convenience, these concepts are recalled in \Cref{app:metricdistance} and \ref{app:parallel}.

  Using $\upgamma(0) = \theta_0$, $\upgamma(1) = \theta_1$, by a Taylor expansion of $V\circ \upgamma$ and the definition of the Riemaniann gradient (see \Cref{app:grad_hess}), we have
 \begin{equation}
   V(\theta_1)-V(\theta_0) = \int_0^1 \psr{\grad V(\upgamma(t))}{\dot{\upgamma}(t)}[\upgamma(t)] \rmd t = \int_0^1 \psr{\grad V(\upgamma(t))}{\parallelTransport_{0t}^\upgamma \dot{\upgamma}(0)}[\upgamma(t)] \rmd t  \eqsp,
 \end{equation}
 where we have used for the last equality the uniqueness of the parallel transport \cite[Theorem 4.32]{lee:2019} and that $\upgamma$ is a geodesic. Therefore, we obtain, using that the parallel transport is a linear isometry  \cite[Proposition 5.5]{lee:2019}, that
 \begin{align}
   &\abs{V(\theta_1) - V(\theta_0) - \psr{\grad V(\theta_0)}{\dot{\upgamma}(0)}[\upgamma(0)]}   \\
   & \qquad \leq \int_{0}^1\abs{ \psr{\grad V(\upgamma(t))}{\parallelTransport_{0t}^{\upgamma} \dot{\upgamma}(0)}[\upgamma(t)] - \psr{\grad V(\upgamma(0))}{\dot{\upgamma}(0)}[\upgamma(0)]} \rmd t \eqsp ,\\
                                                                         &\qquad =\int_{0}^1\abs{ \psr{\grad V(\upgamma(t)) - \parallelTransport_{0t}^{\upgamma} \grad V(\upgamma(0))}{\parallelTransport_{0t}^{\upgamma} \dot{\upgamma}(0)}[\upgamma(t)] } \rmd t \eqsp ,\\
                                                                         & \qquad \leq  L \int_{0}^1 \ell\parenthese{\restriction{\upgamma}{\ccint{0,t}}} \normr{\parallelTransport_{0t}^{\upgamma} \dot{\upgamma}(0)}[\upgamma(t)]  \rmd t \leq L \ell(\upgamma)  \int_{0}^1 \ell(\upgamma) t \,  \rmd t \eqsp,
 \end{align}
 where we have used that since $\upgamma$ is a geodesic $\parallelTransport_{0t}^{\upgamma} \dot{\upgamma}(0) = \dot{\upgamma}(t)$ and by \cite[Corollary 5.6]{lee:2019}, $\normr{\dot{\upgamma}(t)}[\upgamma(t)] = \ell(\upgamma)$, and $\ell(\restriction{\upgamma}{\ccint{0,t}}) = t \ell(\upgamma)$ by  \cite[Lemma 5.18]{lee:2019}.

\subsection{A technical lemma} \label{sec:tech_lemma}
We begin this section with a first estimate which will be used in the proofs of our main results \Cref{th:martingale} and \Cref{th:markov}.
Recall the statement of \Cref{lem:first_inequality_lemma_main}.
\begin{llemma}
  \label{lem:first_inequality_lemma}
  Assume \Cref{ass:completeness}, \Cref{ass:lyap_mean_field} and let $\Delta M_k=
  \psrLigne{\grad V(\theta_k) }{\noise_{\theta_k}(X_{k+1})}[\theta_k]$ for any
  $k\in \nset$. We have for any $n \in \nsets$ and $\varepsilon >0$,
  \begin{equation}
  \begin{aligned}
&\textstyle{\sum_{k=0}^{n} \upeta_{k+1} (\cl_1-(3L/2) \upeta_{k+1}-\cu^2\varepsilon)
 \normr{h(\theta_k)}[\theta_k]^2} \\ 
      & \qquad \qquad\textstyle{ \leq \sum_{k=0}^n \upeta_{k+1}\Delta M_k + (3L/2) \sum_{k=0}^n \upeta_{k+1}^2 \normr{\noise_{\theta_k}(X_{k+1})}[\theta_k]^2}\\
    & \qquad \qquad \quad \textstyle{ +V(\theta_0) 
 +\cl_2 \Gamma_{n+1} 
  + \sum_{k=0}^n \upeta_{k+1}\defEns{\parentheseLigne{4\varepsilon}^{-1} + (3L/2)\upeta_{k+1}}\normr{b_{\theta_k}(X_{k+1})}[\theta_k]^2 } \eqsp.
  \end{aligned}
  \end{equation}
  where $e$ is defined by \eqref{eq:markovh}.
\end{llemma}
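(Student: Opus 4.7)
My plan is to apply \Cref{lem:taylor_grad_lip} to the natural geodesic segment associated with one step of the recursion \eqref{eq:sa_exp_b}, then decompose the resulting first-order term using the noise/bias splitting and the Lyapunov condition, and finally telescope in $k$.

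More concretely, fix $k \in \nset$ and consider the geodesic $\upgamma^{(k)}(t) = \Exp_{\theta_k}(t \upeta_{k+1}\{H_{\theta_k}(X_{k+1}) + b_{\theta_k}(X_{k+1})\})$ for $t \in [0,1]$, so that $\upgamma^{(k)}(0) = \theta_k$, $\upgamma^{(k)}(1) = \theta_{k+1}$, $\dot\upgamma^{(k)}(0) = \upeta_{k+1}\{H_{\theta_k}(X_{k+1}) + b_{\theta_k}(X_{k+1})\}$ and $\ell(\upgamma^{(k)}) = \upeta_{k+1}\normr{H_{\theta_k}(X_{k+1}) + b_{\theta_k}(X_{k+1})}[\theta_k]$. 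Applying \Cref{lem:taylor_grad_lip} and writing $H_{\theta_k}(X_{k+1}) = h(\theta_k) + \noise_{\theta_k}(X_{k+1})$ yields
\begin{equation*}
V(\theta_{k+1}) - V(\theta_k) \leq \upeta_{k+1}\psr{\grad V(\theta_k)}{h(\theta_k)}[\theta_k] + \upeta_{k+1} \Delta M_k + \upeta_{k+1}\psr{\grad V(\theta_k)}{b_{\theta_k}(X_{k+1})}[\theta_k] + (L/2)\ell(\upgamma^{(k)})^2.
\end{equation*}

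The second step is to bound the remaining cross and quadratic terms. For the first inner product I use the Lyapunov inequality \eqref{eq:9}, which gives $\psr{\grad V(\theta_k)}{h(\theta_k)}[\theta_k] \leq \cl_2 - \cl_1 \normr{h(\theta_k)}[\theta_k]^2$. For the bias cross term I apply Young's inequality with parameter $\varepsilon$ and the second half of \eqref{eq:9} to get
\begin{equation*}
\psr{\grad V(\theta_k)}{b_{\theta_k}(X_{k+1})}[\theta_k] \leq \varepsilon \normr{\grad V(\theta_k)}[\theta_k]^2 + (4\varepsilon)^{-1}\normr{b_{\theta_k}(X_{k+1})}[\theta_k]^2 \leq \cu^2 \varepsilon \normr{h(\theta_k)}[\theta_k]^2 + (4\varepsilon)^{-1}\normr{b_{\theta_k}(X_{k+1})}[\theta_k]^2.
\end{equation*}
For the quadratic length term, I use $\|a+b+c\|^2 \leq 3(\|a\|^2 + \|b\|^2 + \|c\|^2)$ to bound $\ell(\upgamma^{(k)})^2 \leq 3\upeta_{k+1}^2(\normr{h(\theta_k)}[\theta_k]^2 + \normr{\noise_{\theta_k}(X_{k+1})}[\theta_k]^2 + \normr{b_{\theta_k}(X_{k+1})}[\theta_k]^2)$. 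The factor $3L/2$ that appears in the statement comes precisely from these three pieces.

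The last step is to sum the resulting per-step inequality from $k=0$ to $n$, telescope the left-hand side as $V(\theta_{n+1}) - V(\theta_0)$, use $V \geq 0$ (since $V: \Theta \to \rset_+^*$) to drop $V(\theta_{n+1})$, and rearrange to isolate $\sum_{k=0}^n \upeta_{k+1}(\cl_1 - (3L/2)\upeta_{k+1} - \cu^2\varepsilon) \normr{h(\theta_k)}[\theta_k]^2$ on the left. All the other terms then land on the right in exactly the form stated. No step is genuinely difficult: the only thing to be careful with is bookkeeping the factor $3L/2$ (which is $L/2 \cdot 3$) and tracking that the $\cu^2\varepsilon$ term is produced solely by Young's inequality on the bias cross term, so that the martingale increment $\Delta M_k$ appears cleanly without an extra factor.
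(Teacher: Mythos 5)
Your proof is correct and follows essentially the same route as the paper's: the same geodesic segment $\upgamma^{(k)}$, the descent inequality from \Cref{lem:taylor_grad_lip}, the Lyapunov inequality \eqref{eq:9} for the drift term, Young's inequality with parameter $\varepsilon$ combined with $\normr{\grad V}[\theta] \leq \cu \normr{h}[\theta]$ for the bias cross term, the three-term bound $\normr{a+b+c}[\theta]^2 \leq 3(\normr{a}[\theta]^2+\normr{b}[\theta]^2+\normr{c}[\theta]^2)$ producing the $3L/2$ factor, and finally telescoping with $V \geq 0$. No discrepancies.
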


\begin{proof}
	For any $k \geq 0$ and $t \in \ccint{0,1}$, consider $\upgamma^{(k)}(t) = \Exp_{\theta_k}\{t \upeta_{k+1}(H_{\theta_k}(X_{k+1})+b_{\theta_k}(X_{k+1}))\}$. 
	Note that $\dot{\upgamma}^{(k)}(0) = \upeta_{k+1} \{H_{\theta_k}(X_{k+1})+b_{\theta_k}(X_{k+1})\}$ and $\ell(\upgamma^{(k)}) = \upeta_{k+1}\normr{H_{\theta_k}(X_{k+1})+b_{\theta_k}(X_{k+1})}[\theta_k]$. Then, by  \Cref{lem:taylor_grad_lip}, \eqref{eq:markovh} and using that for any $\theta \in \Theta$, $a,b,c \in \rmT_{\theta} \Theta$, $\normr{a+b+c}[\theta]^2 \leq 3(\normr{a}[\theta]^2+ \normr{b}[\theta]^2 +\normr{c}[\theta]^2 )$ , we get that for any $k \geq 0$,
  \begin{align}
    \label{eq:4}
&    \abs{V(\theta_{k+1}) - V(\theta_k) - \upeta_{k+1}\psr{\grad V(\theta_k)}{H_{\theta_k}(X_{k+1})+b_{\theta_k}(X_{k+1})}[\theta_k]} \leq (L/2) \ell(\upgamma^{(k)})^2 \eqsp ,\\
& \phantom{aaaaaaaaaaaaaaaaa}= (L \upeta_{k+1}^2 /2) \normr{H_{\theta_k}(X_{k+1})+b_{\theta_k}(X_{k+1})}[\theta_k]^2 \eqsp , \\
& \phantom{aaaaaaaaaaaaaaaaa}\leq (3L/2) \upeta_{k+1}^2 \defEns{\normr{h(\theta_k)}[\theta_k]^2 +\normr{b_{\theta_k}(X_{k+1})}[\theta_k]^2 + \normr{\noise_{\theta_k}(X_{k+1})}[\theta_k]^2 } \eqsp.
  \end{align}
Therefore, we get that for any $k \in \nset$,
\begin{equation}
\label{eq:proof_lemme_descente}
\begin{aligned}
    &-\upeta_{k+1}\psr{\grad V(\theta_k)}{h(\theta_k)}[\theta_k] \\
    &\qquad \qquad \leq V(\theta_k) - V(\theta_{k+1}) + \upeta_{k+1}\psr{\grad
    V(\theta_k)}{\noise_{\theta_k}(X_{k+1})+b_{\theta_k}(X_{k+1})}[\theta_k] \\
    &\qquad \qquad \quad +  (3L/2) \upeta_{k+1}^2
    \defEns{\normr{h(\theta_k)}[\theta_k]^2  +
        \normr{b_{\theta_k}(X_{k+1})}[\theta_k]^2
        +\normr{\noise_{\theta_k}(X_{k+1})}[\theta_k]^2
    } \eqsp. 
  \end{aligned}
  \end{equation}
By \Cref{ass:lyap_mean_field}-\ref{ass:lyap_mean_field_b} and the Cauchy-Schwarz inequality, we obtain for any $k \in \nsets$ and $\varepsilon>0$,
\begin{equation}
\psrLigne{\grad V(\theta_k)}{b_{\theta_k}(X_{k+1})}[\theta_k] 
\leq \cu^2 \varepsilon \normrLigne{h(\theta_k)}[\theta_k]^2 + (1/4\varepsilon) \normrLigne{b_{\theta_k}(X_{k+1})}[\theta_k]^2 \eqsp .
\end{equation}
Thus, plugging this in \eqref{eq:proof_lemme_descente} and using the
first inequality in \Cref{ass:lyap_mean_field}-\ref{ass:lyap_mean_field_b} gives,
\begin{equation}
    \begin{aligned}
        &\upeta_{k+1}(\cl_1-(3L/2)\upeta_{k+1} - \cu^2\varepsilon)\normr{h(\theta_k)}[\theta_k]^2 \\
        &\qquad \leq V(\theta_k) - V(\theta_{k+1}) 
        + \upeta_{k+1} \cl_2
        + \upeta_{k+1}\psr{\grad V(\theta_k)}{\noise_{\theta_k}(X_{k+1})}[\theta_k] \\
        &\qquad \quad+ [\upeta_{k+1}/4\varepsilon  + (3L/2) \upeta_{k+1}^2]\normr{b_{\theta_k}(X_{k+1})}[\theta_k]^2 
        + (3L/2) \upeta_{k+1}^2 \normr{\noise_{\theta_k}(X_{k+1})}[\theta_k]^2   \eqsp. 
    \end{aligned}
\end{equation}
  Adding these inequalities from $0$ to $n$ and rearranging terms concludes the proof.
\end{proof}

\subsection{Proof of \Cref{th:martingale}} \label{app:proofth1}
\label{sec:proof-crefth:m}

  Let $n \in \nsets$. First note that for any $k \in \nset$, 
  $\expeLigne{\Delta M_k}= 0$, 
  using that $\theta_k$ is $\mcf_k$-measurable and \Cref{ass:0mean_noise}. 
  Therefore, taking the expectation in the inequality given by \Cref{lem:first_inequality_lemma} 
  and $\normr{b_{\theta_k}(X_{k+1})}[\theta_k] \leq b_{\infty}$ using \Cref{ass:bounded_bias}, we obtain
 \begin{multline}
    \sum_{k=0}^{n} \upeta_{k+1} (\cl_1-(3L/2) \upeta_{k+1}-\cu^2\varepsilon) \expe{\normr{h(\theta_k)}[\theta_k]^2}  
    \leq \expe{V(\theta_0) } + \cl_2 \Gamma_{n+1} \\
    + (3L/2) \sum_{k=0}^n \upeta_{k+1}^2 \expe{\normr{\noise_{\theta_k}(X_{k+1})}[\theta_k]^2}  
    + b_{\infty}^2 \sum_{k=0}^n \upeta_{k+1} \defEns{(4\varepsilon)^{-1} + (3L/2)\upeta_{k+1}} \eqsp .
  \end{multline}
Since for any $k \in \nset$, $\expeLigne{\normr{\noise_{\theta_k}(X_{k+1})}[\theta_k]^2} \leq \sigma_0^2 + \sigma_1^2 \normr{h(\theta_k)}[\theta_k]^2$ using that $\theta_k$ is $\mcf_k$-measurable and \Cref{ass:0mean_noise}, we get
 \begin{align}
    &\sum_{k=0}^{n} \upeta_{k+1} (\cl_1-\cu^2\varepsilon-(3L/2)\{1+\sigma_1^2 \} \upeta_{k+1})\expe{ \normr{h(\theta_k)}[\theta_k]^2} \\
    & \qquad \leq \expe{V(\theta_0)} +\cl_2 \Gamma_{n+1} + 3L \sigma_0^2 \Gamma_{n+1}^{(2)}/2 + b_{\infty}^2 \defEnsLigne{ \Gamma_{n+1}/(4\varepsilon) + 3L \Gamma_{n+1}^{(2)}/2} \eqsp.
  \end{align}
  Taking $\varepsilon = \cl_1/(4\cu^2)$ and dividing by $\Gamma_{n+1}$, we get
  \begin{align}
  &\Gamma_{n+1}^{-1}\sum_{k=0}^{n} \upeta_{k+1} (3\cl_1/4-(3L/2)\{1+\sigma_1^2 \} \upeta_{k+1})\expe{ \normr{h(\theta_k)}[\theta_k]^2} \\
    & \qquad \leq \expe{V(\theta_0) }/\Gamma_{n+1} + 3L  \Gamma_{n+1}^{(2)} \defEnsLigne{\sigma_0^2 
    + b_{\infty}^2}/(2\Gamma_{n+1}) + \cl_2 + b_{\infty}^2\cu^2/\cl_1 \eqsp.
  \end{align}
  The proof is then completed using that for any $k \in \nsets$, $(\cl_1/4-(3L/2)(1+\sigma_1^2) \upeta_{k+1}) \geq 0$ and \eqref{eq:nn}.

\subsection{Proof of \Cref{propo:condition_P_implication_MA}}
\label{proof:propo:condition_P_implication_MA}

  \begin{enumerate*}[label=(\alph*)]
      \item Setting $w \equiv 1$, we prove that \Cref{ass:w_markov} holds. Since
    $e_{\infty} = \sup_{x\in\msx,\theta\in
    \Theta}\normrLigne{\noise_{\theta}(x)}[\theta]$ is finite, then
    \Cref{ass:w_markov}-\ref{ass:item:w_markov:e_bound}-\ref{ass:item:w_markov:moment}
    hold.
  \end{enumerate*}\\
Note that the condition  for any $\theta \in \Theta$, $x,x'\in\msx$,
    $\tvnorm{\updelta_{x} P^k_{\theta} - \updelta_{x'} P^k_{\theta}} \leq C_P
    (1-\varepsilon_P)^k$ implies, since $\pi_{\theta}$ is the unique stationary distribution for $P_{\theta}$, using Jensen inequality and the Markov property,
    \begin{equation}
      \label{eq:unif_convergence_proof_prop_condition_P_implication_MA}
      \tvnorm{\updelta_{x} P^k_{\theta} - \pi_{\theta}} \leq
C_P(1-\varepsilon_P)^k \eqsp.
    \end{equation}
    Therefore, the function $\hnoise$ given for any $\theta \in \Theta$ and $x \in \msx$,
  \begin{equation}
    \label{eq:hnoise_sufficient_condition}
    \hnoise_{\theta}(x) = \sum_{k=0}^{\plusinfty}\{ P^k_{\theta}\noise_{\theta}(x) - \pi_{\theta}(\noise_{\theta}) \} \eqsp,
  \end{equation}
 is well defined using the Minkowski's integral inequality for any $x \in \msx$, $\theta \in \Theta$, 
\begin{equation}
    \label{eq:hnoise_sufficient_condition_bounded_w}
\normr{      \hnoise_{\theta} (x)}[\theta] \leq \sum_{k=0}^{\plusinfty}\normrLigne{ P^k_{\theta}\noise_{\theta}(x) - \pi_{\theta}(\noise_{\theta}) }[\theta]  = e_{\infty} \sum_{k=0}^{\plusinfty} \tvnorm{\updelta_x P^k_{\theta} - \pi_{\theta}} \leq C_P e_{\infty}/\varepsilon_P \eqsp. 
\end{equation}
Then, \Cref{ass:w_markov}-\ref{ass:item:w_markov:poisson} holds.

To prove \Cref{ass:w_markov}-\ref{ass:item:w_markov:poisson_regularity}, since $\hnoise$ satisfies \eqref{eq:markov:poisson}, it is sufficient to show that there exists $D_1\geq 0$ such that
\begin{equation}
    \normr{\hnoise_{\theta_1}(x) - \parallelTransport_{01}^{\upgamma}\hnoise_{\theta_0}(x)}[\theta_1] \leq D_1 \ell(\upgamma) \eqsp, 
  \end{equation}
  for any $x \in \msx$, $\theta_0,\theta_1 \in \Theta$ and $\upgamma$ a geodesic between $\theta_0$ and $\theta_1$. 
Consider $\theta_0,\theta_1 \in \Theta$ and $\upgamma$ a geodesic between
$\theta_0$ and $\theta_1$. Note that by
\eqref{eq:hnoise_sufficient_condition}-\eqref{eq:prop_condition_P_implication_MA_a}
and using  that the parallel transport associated with the Levi-Civita
connection is a linear isometry \cite[Proposition 5.5]{lee:2019} and  the same
argument as proving  \eqref{eq:hnoise_sufficient_condition_bounded_w} for any
$x \in \msx$, setting $\Delta \noise_{\theta}(x) = \noise_{\theta_1}(x) -
\parallelTransport_{01}^{\upgamma}\noise_{\theta_0}(x)$ and using
$\normrLigne{\Delta \noise_\theta(x)}[\theta] \leq C \ell(\upgamma)$,
\begin{align}
  \label{eq:hnoise_sufficient_condition_bounded_w_2}
 & \normr{\hnoise_{\theta_1}(x) -
 \parallelTransport_{01}^{\upgamma}\hnoise_{\theta_0}(x)}[\theta_1] \\
 &\quad \leq \sum_{k=0}^{\plusinfty} \normr{   P_{\theta_1}^k \noise_{\theta_1}(x) -
 \parallelTransport_{01}^{\upgamma} P_{\theta_0}^k\noise_{\theta_0}(x) -
 (\pi_{\theta_1}(\noise_{\theta_1}) - \parallelTransport_{01}^{\upgamma}
 \pi_{\theta_0}( \noise_{\theta_0}))}[\theta_1] \\
 & \quad  \leq \sum_{k=0}^{\plusinfty} \normr{   P_{\theta_1}^k
     \Delta\noise_{\theta}(x)  - \pi_{\theta_1}(\Delta
     \noise_\theta)}[\theta_1] + \sum_{k=0}^{\plusinfty} \normr{
     [P_{\theta_1}^k - P_{\theta_0}^k](  \noise_{\theta_0})(x) -
 [\pi_{\theta_1}-\pi_{\theta_0}](\noise_{\theta_0})}[\theta_0] \\ 
 & \quad \leq C C_P \ell(\upgamma)/ \varepsilon_P + \sum_{k=0}^{\plusinfty} \normr{
 [P_{\theta_1}^k - P_{\theta_0}^k](  \noise_{\theta_0})(x) -
 [\pi_{\theta_1}-\pi_{\theta_0}](\noise_{\theta_0})}[\theta_0]  \eqsp. 
\end{align}
It remains to show that there exists $D_2 \geq 0$ such that 
\begin{equation}
  \label{eq:10}
  \sum_{k=0}^{\plusinfty} \normr{   [P_{\theta_1}^k - P_{\theta_0}^k](
  \noise_{\theta_0})(x) -
  [\pi_{\theta_1}-\pi_{\theta_0}](\noise_{\theta_0})}[\theta_0] \leq
  D_2 \ell(\upgamma) \eqsp,
\end{equation}
which follows from 
the following decomposition and \eqref{eq:prop_condition_P_implication_MA_a} and
\eqref{eq:unif_convergence_proof_prop_condition_P_implication_MA}.
%
%
\begin{multline}
  \label{eq:11}
  [P_{\theta_1}^k - P_{\theta_0}^k](  \noise_{\theta_0})(x) -
  [\pi_{\theta_1}-\pi_{\theta_0}](\noise_{\theta_0}) = -\pi_{\theta_1}
  P_{\theta_0}^k \noise_{\theta_0} + \pi_{\theta_0}(\noise_{\theta_0})
  \\
  + \sum_{i=0}^{k-1} \{\updelta_x P_{\theta_1}^{k-i-1} - \pi_{\theta_1}\}[P_{\theta_1}-P_{\theta_0}]\{P_{\theta_0}^i - \pi_{\theta_0}\}  \noise_{\theta_0} \eqsp. 
\end{multline}

    \begin{enumerate*}[label=(\alph*),resume]
    \item By assumption, \Cref{ass:w_markov}-\ref{ass:item:w_markov:e_bound} holds.
Besides, by a straightforward induction for any $x \in \msx$, 
      $k \in \nsets$,
      $P^kw(x) \leq \lambda^k w(x) + \bw \sum_{i=0}^{k-1} \lambda^i \leq
      \lambda^k w(x) + \bw/(1-\lambda)$ and therefore
      \Cref{ass:w_markov}-\ref{ass:item:w_markov:moment} holds. Since
      for any $\theta \in \Theta$, $P_{\theta} = P$, $\pi_{\theta} =
      \pi$ for some probability distribution $\pi$ on $(\msx,\mcx)$. 
      In addition, using Jensen inequality, for any $x \in \msx$, $Pw^{1/2}(x) \leq \lambda^{1/2} w^{1/2}(x) + \bw^{1/2}\1_{\msc}$. By \cite[Theorem 16.0.1]{meyn:tweedie:2009}, there exist $D_2 \geq 0$ and $\rho \in \coint{0,1}$ such that for any $x \in \msx$,
          \end{enumerate*}
      \begin{equation}
        \label{eq:proposi_implq_MA_geo_ergo}
        \Vnorm[w^{1/2}]{\updelta_x P^k - \pi} \leq C_p'\rho_P^k w^{1/2}(x) \eqsp. 
      \end{equation}
      Therefore, $\hnoise$ given by \eqref{eq:hnoise_sufficient_condition} is well defined since for any $\theta \in \Theta$ and $x \in \msx$, using the Minkowski's inequality and \eqref{eq:prop_condition_P_implication_MA_b}, 
      \begin{multline}
    \label{eq:hnoise_sufficient_condition_bounded_w_b}
\normr{      \hnoise_{\theta} (x)}[\theta] \leq
\sum_{k=0}^{\plusinfty}\normrLigne{ P^k\noise_{\theta}(x) -
\pi_{\theta}(\noise_{\theta}) }[\theta]  \\
= e_{\infty} \sum_{k=0}^{\plusinfty}        \Vnorm[w^{1/2}]{\updelta_x
P^k - \pi} \leq C_P' e_{\infty} w^{1/2}(x)/(1-\rho_P)  \eqsp. 
\end{multline}
Then, \Cref{ass:w_markov}-\ref{ass:item:w_markov:poisson} holds.

To prove
\Cref{ass:w_markov}-\ref{ass:item:w_markov:poisson_regularity}, since
$\hnoise$ satisfies \eqref{eq:markov:poisson}, it is sufficient to show
that there exists $D_3\geq 0$ such that $ \normr{\hnoise_{\theta_1}(x)
    -
\parallelTransport_{01}^{\upgamma}\hnoise_{\theta_0}(x)}[\theta_1] \leq
D_3 w^{1/2}(x) \ell(\upgamma)$. Note that by
\eqref{eq:hnoise_sufficient_condition}-\eqref{eq:prop_condition_P_implication_MA_b}-\eqref{eq:proposi_implq_MA_geo_ergo}
and using  that the parallel transport associated with the Levi-Civita
connection is a linear isometry \cite[Proposition 5.5]{lee:2019} and
the same argument as proving
\eqref{eq:hnoise_sufficient_condition_bounded_w_b} for any $x \in
\msx$, setting $\Delta \noise_{\theta}(x) = \noise_{\theta_1}(x) -
\parallelTransport_{01}^{\upgamma}\noise_{\theta_0}(x)$,
\begin{align}
  \label{eq:hnoise_sufficient_condition_bounded_w_2}
 \normr{\hnoise_{\theta_1}(x) - \parallelTransport_{01}^{\upgamma}\hnoise_{\theta_0}(x)}[\theta_1]
  & \leq \sum_{k=0}^{\plusinfty} \normr{   P^k \noise_{\theta_1}(x) - \parallelTransport_{01}^{\upgamma} P^k\noise_{\theta_0}(x) - (\pi(\noise_{\theta_1}) - \parallelTransport_{01}^{\upgamma} \pi( \noise_{\theta_0}))}[\theta_1]
  \\ & = \sum_{k=0}^{\plusinfty} \normr{   P^k \Delta\noise_{\theta}  - \pi(\Delta \noise_\theta)}[\theta_1] \leq C'_P C \ell(\upgamma) w^{1/2}(x)\sum_{k=0}^{\plusinfty} \rho_P^k \eqsp,
\end{align}
which completes the proof. 

\subsection{Proof of \Cref{th:markov}} \label{app:proofth2}
Under \Cref{ass:completeness}-\Cref{ass:lyap_mean_field}-\Cref{ass:bounded_bias}-\Cref{ass:w_markov}$(w)$ 
and for a sequence of step sizes $(\upeta_k)_{k \in \nsets}$, consider the constants $D_{\hnoise}$, $C(\upeta_1)$ and $C_{\hnoise}^0$ defined as follows:
\begin{equation}
  \label{eq:C_upeta_markov_w}
  C(\upeta_1) = \cu \hnoise_{\infty} ( \upeta_1 +2)\eqsp,
\end{equation}
if $w$ is bounded by $w_{\infty} = \sup_{\msx} w$,  
\begin{equation}
  \label{eq:def_const_theo2}
\begin{aligned}
  &  D_{\hnoise} = \hnoise_{\infty} C_w L + 2\cu a_2 +\cu \hnoise_{\infty}C_w 
  + L_{\hnoise} \cu w_{\infty}[1+a_1(\noise_{\infty} +b_{\infty})] \eqsp,  \\
  & C_{\hnoise} = C_{\hnoise}^0 + (3L/2)(b_{\infty}^2 + \noise_{\infty}^2 C_w)\eqsp,\\
  & C_{\hnoise}^0 =  L \hnoise_{\infty}C_w  (e_{\infty}+b_{\infty}) + L_{\hnoise}\cu C_w(\noise_{\infty}+ b_{\infty})  \eqsp;
\end{aligned}
\end{equation}
if \Cref{ass:bounded_vh} holds,
\begin{equation}
  \label{eq:def_const_theo2_v_bounded}
\begin{aligned}
  &  D_{\hnoise} = \hnoise_{\infty} C_w  L + 2\cu a_2 +\cu \hnoise_{\infty}C_w  \eqsp, \\
    & C_{\hnoise} = C_{\hnoise}^0 + (3L/2)(b_{\infty}^2 + \noise_{\infty}^2 C_w)\eqsp,\\
  & C_{\hnoise}^0 =L \hnoise_{\infty}C_w (e_{\infty}+b_{\infty})+L_{\hnoise}\cu C_w[(\noise_{\infty}+ b_{\infty})\{a_1h_{\infty}^2+1\} + h_{\infty}^2] \eqsp.
\end{aligned}
\end{equation}

\begin{llemma}
    \label{lem:poisson_w_markov}
    Assume \Cref{ass:completeness}-\Cref{ass:lyap_mean_field}-\Cref{ass:bounded_bias}-\Cref{ass:markov}-\Cref{ass:w_markov}$(w)$
    hold for some measurable function $w:\cointLigne{1,+\infty}$.  Assume in addition either $\sup_{x \in \msx}
            w(x) = w_\infty < \plusinfty$ or  \Cref{ass:bounded_vh}. 
    Let $(\upeta_k)_{k \in \nsets}$ be a sequence satisfying
    \eqref{eq:hyp_gamma_k} and set for any $k\in \nset$, $\Delta
    M_k=\psrLigne{\grad V(\theta_k)}{\noise_{\theta_k}
    (X_{k+1})}[\theta_k]$. Consider $(\theta_k)_{k \in \nset}$ defined by \eqref{eq:sa_exp_b}.
            Then, 
            \begin{equation}
                \label{eq:2}
\txts               \abs{ \expeLigne{\sum_{k=0}^{n} \upeta_{k+1}\Delta M_k }}\leq
                D_{\hnoise} \sum_{k=0}^{n} \upeta_{k+1}^2
                \expe{\normr{h(\theta_k)}[\theta_k]^2} +
                C_{\hnoise}^0 \Gamma_{n+1}^{(2)} + C(\upeta_1) 
                \eqsp.
            \end{equation}
\end{llemma}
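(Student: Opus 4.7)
The proof will follow the classical Poisson-equation strategy of M\'etivier--Priouret adapted to the Riemannian setting. Recall $\Delta M_k = \psrLigne{\grad V(\theta_k)}{\noise_{\theta_k}(X_{k+1})}[\theta_k]$. First I use \Cref{ass:w_markov}$(w)$-\ref{ass:item:w_markov:poisson} to rewrite $\noise_{\theta_k}(X_{k+1}) = \hnoise_{\theta_k}(X_{k+1}) - \hat{P}\hnoise_{\theta_k}(X_{k+1})$, where $\hat{P}\hnoise_\theta(x) := \int_\msx P_\theta(x,\rmd y)\hnoise_\theta(y) \in \planT_\theta \Theta$. Adding and subtracting $\hat{P}\hnoise_{\theta_k}(X_k)$ produces the decomposition $\Delta M_k = T_1(k) + T_2(k)$ with
\begin{align*}
T_1(k) &= \psrLigne{\grad V(\theta_k)}{\hnoise_{\theta_k}(X_{k+1}) - \hat{P}\hnoise_{\theta_k}(X_k)}[\theta_k], \\
T_2(k) &= \psrLigne{\grad V(\theta_k)}{\hat{P}\hnoise_{\theta_k}(X_k) - \hat{P}\hnoise_{\theta_k}(X_{k+1})}[\theta_k].
\end{align*}
Since $\theta_k,X_k$ are $\mcf_k$-measurable, \Cref{ass:markov} yields $\expeLigne{\hnoise_{\theta_k}(X_{k+1})\mid \mcf_k} = \hat{P}\hnoise_{\theta_k}(X_k)$, hence $\expeLigne{T_1(k)\mid \mcf_k} = 0$ and $\expeLigne{\sum_k \upeta_{k+1}T_1(k)} = 0$. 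Only the $T_2$-part needs to be controlled in expectation.

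The next step is an Abel-type summation on $T_2$ with Riemannian corrections. Set $A_k := \psrLigne{\grad V(\theta_k)}{\hat{P}\hnoise_{\theta_k}(X_k)}[\theta_k]$ and let $\upgamma^{(k)}:[0,1] \to \Theta$ be the SA-step geodesic $\upgamma^{(k)}(t) = \Exp_{\theta_k}(t\upeta_{k+1}\{H_{\theta_k}(X_{k+1})+b_{\theta_k}(X_{k+1})\})$, with length $\ell(\upgamma^{(k)}) = \upeta_{k+1}\normrLigne{H_{\theta_k}(X_{k+1})+b_{\theta_k}(X_{k+1})}[\theta_k]$; denote by $\parallelTransport^{(k)} := \parallelTransport^{\upgamma^{(k)}}_{01}$ the associated parallel transport. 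Writing $T_2(k) = A_k - \psrLigne{\grad V(\theta_k)}{\hat{P}\hnoise_{\theta_k}(X_{k+1})}[\theta_k]$, I recast the second term in the tangent space at $\theta_{k+1}$ via the linear isometry $\parallelTransport^{(k)}$ and split its difference with $A_{k+1}$ into two Riemannian pieces: one controlled by $\normrLigne{\parallelTransport^{(k)}\grad V(\theta_k) - \grad V(\theta_{k+1})}[\theta_{k+1}] \leq L\,\ell(\upgamma^{(k)})$ through \Cref{ass:lyap_mean_field}-\ref{ass:lyap_mean_field_a}, and one controlled by $\normrLigne{\parallelTransport^{(k)}\hat{P}\hnoise_{\theta_k}(X_{k+1}) - \hat{P}\hnoise_{\theta_{k+1}}(X_{k+1})}[\theta_{k+1}] \leq L_{\hnoise}\,\ell(\upgamma^{(k)})\,w^{1/2}(X_{k+1})$ through \Cref{ass:w_markov}$(w)$-\ref{ass:item:w_markov:poisson_regularity}. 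This yields $T_2(k) = (A_k - A_{k+1}) - R_k$ with $|R_k|$ of order $\ell(\upgamma^{(k)})$ multiplied by a $\hnoise$- and $w^{1/2}$-bounded factor.

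Summation by parts then gives
\[
\textstyle\sum_{k=0}^n \upeta_{k+1}(A_k - A_{k+1}) = \upeta_1 A_0 - \upeta_{n+1}A_{n+1} + \sum_{k=1}^n (\upeta_{k+1}-\upeta_k) A_k.
\]
The boundary term $\upeta_1|A_0| \leq \upeta_1\cu\hnoise_\infty$ feeds into $C(\upeta_1)$. The telescoping increments are handled via $|\upeta_{k+1}-\upeta_k| \leq a_2 \upeta_k^2$ from \eqref{eq:hyp_gamma_k}, so each becomes $\bigO(\upeta_k^2|A_k|)$; a Young inequality on $|A_k|\leq \cu \normrLigne{h(\theta_k)}[\theta_k]\hnoise_\infty w^{1/2}(X_k)$ redistributes this as $\upeta_k^2 \normrLigne{h(\theta_k)}[\theta_k]^2$-terms (contributing to $D_{\hnoise}$) plus $\upeta_k^2$-terms (contributing to $C_{\hnoise}^0$). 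The remainder $\sum_k \upeta_{k+1}R_k$ is analyzed identically: the factor $\ell(\upgamma^{(k)}) \leq \upeta_{k+1}(\normrLigne{h(\theta_k)}[\theta_k] + \noise_\infty w^{1/2}(X_{k+1}) + b_\infty)$ pairs with $\upeta_{k+1}$ to give $\upeta_{k+1}^2$ scaling, and Young's inequality combined with the moment bound $\expeLigne{w(X_{k+1})}\leq C_w$ from \Cref{ass:w_markov}$(w)$-\ref{ass:item:w_markov:moment} collapses all $w^{1/2}$ factors. The terminal term $\upeta_{n+1}|A_{n+1}|$ is where the dichotomy between \eqref{eq:def_const_theo2} and \eqref{eq:def_const_theo2_v_bounded} appears: when $w$ is uniformly bounded by $w_\infty$, $|A_{n+1}|$ is absorbed using $\hnoise_\infty w_\infty^{1/2}$ into the $(3L/2)\noise_\infty^2 C_w$-type contributions after one more Young step; otherwise \Cref{ass:bounded_vh} yields $\normrLigne{h(\theta_n)}[\theta_n] \leq h_\infty$ directly.

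The main obstacle is the Riemannian bookkeeping: the inner products $A_k$ and $A_{k+1}$ live in different tangent spaces, so each telescoping step must be mediated by the parallel transport $\parallelTransport^{(k)}$ along the SA-step geodesic, and the Lipschitz constant $L$ of $\grad V$ together with the regularity constant $L_{\hnoise}$ of the Poisson solution must be combined with $\ell(\upgamma^{(k)})$ to produce precisely the $D_{\hnoise}\sum_k \upeta_{k+1}^2 \expeLigne{\normrLigne{h(\theta_k)}[\theta_k]^2}$ piece of the announced bound, together with the correct identification of the constants $C(\upeta_1)$, $D_{\hnoise}$ and $C_{\hnoise}^0$ in \eqref{eq:C_upeta_markov_w}--\eqref{eq:def_const_theo2_v_bounded}.
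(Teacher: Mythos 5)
Your strategy is the paper's own: the Poisson-equation rewriting of $\noise_{\theta_k}(X_{k+1})$, the martingale part with zero conditional mean, and the Abel/telescoping treatment of the remaining sum with parallel-transport corrections controlled by $L$ (for $\grad V$) and $L_{\hnoise}$ (for $\theta\mapsto P_\theta\hnoise_\theta$) reproduce exactly the paper's five-term decomposition $A_1,\dots,A_5$ in \eqref{eq:w_fiveterms_decompose}: your $T_1$-sum is $A_1$, your Riemannian remainder $R_k$ splits into $A_2$ and $A_3$, your step-size increments are $A_4$, and your boundary terms are $A_5$.

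There is, however, one concrete gap in your remainder analysis. You assert that Young's inequality together with the first-moment bound $\expe{w(X_{k+1})}\le C_w$ ``collapses all $w^{1/2}$ factors'' in $\sum_k\upeta_{k+1}R_k$. This fails for the $L_{\hnoise}$-piece (the paper's $A_2$): after bounding $\normr{\grad V(\theta_{k})}[\theta_{k}]\le\cu\normr{h(\theta_{k})}[\theta_{k}]$ and $\ell(\upgamma^{(k)})\le\upeta_{k}\bigl(\normr{h(\theta_{k-1})}[\theta_{k-1}]+\noise_\infty w^{1/2}(X_{k})+b_\infty\bigr)$ and applying Young's inequality, you are left with terms of the form $\upeta_{k}^2\,\normr{h(\theta_{k})}[\theta_{k}]^2\,w(X_k)$, i.e.\ a \emph{quadratic} power of $h$ multiplied by a power of $w$. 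Since $\theta_k$ and $X_k$ are dependent, Cauchy--Schwarz together with a first moment bound on $w$ cannot decouple this product; this is precisely where the lemma's extra hypothesis enters, with the two cases treated separately in \eqref{eq:wbound_a_2:w} (use $w\le w_\infty$) and \eqref{eq:wbound_a_2:h} (use $\normr{h(\theta_k)}[\theta_k]\le h_\infty$ from \Cref{ass:bounded_vh}). You instead locate the dichotomy at the terminal boundary term $\upeta_{n+1}|A_{n+1}|$, but that term involves only a \emph{linear} power of each factor and is handled under \Cref{ass:w_markov}$(w)$ alone via Cauchy--Schwarz and $a\le a^2+1$, as in \eqref{eq:w_bound_a_5_geod}. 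So the skeleton of your proof is correct, but the estimate of the remainder as written has a hole exactly at the point the bounded-$w$/bounded-$h$ alternative is designed to fill.
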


\begin{proof}
  Consider the measurable function $\hnoise : \Theta \times \msx \to
  \planT \Theta$ which satisfies  \Cref{ass:w_markov}$(w)$ 
  and for any $k \in \nset$, consider $\upgamma^{(k+1)} : \ccint{0,1}
  \to \Theta$ the geodesic between $\theta_{k}$ and $\theta_{k+1}$
  defined by $\upgamma^{(k+1)}(t)  = \Exp\{t \upeta_{k+1}
      (H_{\theta_{k}}(X_{k+1}) +b_{\theta_{k}}(X_{k+1}))\}$ for any $t
      \in \ccint{0,1}$. Note that for any $k \in\nset$,
  \begin{equation}
    \label{eq:w_length_upgamma_k}
    \ell(\upgamma^{(k+1)}) = \upeta_{k+1}\normr{H_{\theta_{k}}(X_{k+1})+b_{\theta_k}(X_{k+1})}[\theta_{k}] \eqsp.
  \end{equation}
  Using that the parallel transport associated with the Levi-Civita
  connection is a linear isometry \cite[Proposition 5.5]{lee:2019} and
  $(\parallelTransport_{01}^{\upgamma})^{-1} =
  \parallelTransport_{10}^{\upgamma}$ by uniqueness of parallel
  transport \cite[Theorem 4.32]{lee:2019}, we obtain the following
  decomposition
  \begin{equation}
    \label{eq:w_decompo_geod_proof_markov}
        \expe{-\sum_{k=0}^{n} \upeta_{k+1}\Delta M_k } = -\expe{\sum_{i=1}^5 A_i} \eqsp,
      \end{equation}
      where 
      \begin{equation}
        \label{eq:w_fiveterms_decompose}
        \begin{aligned}
	  A_1  &=\sum_{k=1}^{n} \upeta_{k+1}\psr{\grad V(\theta_{k})}{\hnoise_{\theta_k}(X_{k+1}) - P_{\theta_k} \hnoise_{\theta_k}(X_k)}[\theta_k] \eqsp ,\\
	  A_2  &=\sum_{k=1}^{n} \upeta_{k+1}\psr{\grad V(\theta_{k})}{P_{\theta_k} \hnoise_{\theta_k}(X_k)-\parallelTransport_{01}^{\upgamma^{(k)}}  P_{\theta_{k-1}} \hnoise_{\theta_{k-1}}(X_k)}[\theta_k] \eqsp ,\\
          A_3 
          & = \sum_{k=1}^{n} \upeta_{k+1}\psr{\parallelTransport_{10}^{\upgamma^{(k)}}\grad V(\theta_{k}) - \grad V(\theta_{k-1})}{ P_{\theta_{k-1}} \hnoise_{\theta_{k-1}}(X_k)}[\theta_{k-1}] \eqsp ,\\
	  A_4  &=\sum_{k=1}^{n}( \upeta_{k+1}-\upeta_k)\psr{\grad V(\theta_{k-1})}{P_{\theta_{k-1}} \hnoise_{\theta_{k-1}}(X_k)}[\theta_{k-1}] \eqsp , \\
      A_5  &= \upeta_1\psr{\grad
      V(\theta_0)}{\hnoise_{\theta_0}(X_1)}[\theta_0] 
      - \upeta_{n+1}\psr{\grad
      V(\theta_n)}{P_{\theta_{n}}\hnoise_{\theta_n}(X_{n+1})}[\theta_n]  \eqsp. 
        \end{aligned}
      \end{equation}
      We now bound each term of this decomposition. Note that $A_2$ is
      the only term which will be bounded differently depending on
      either the assumption $\sup_{x \in \msx} w(x) = w_\infty <
      \plusinfty$ or \Cref{ass:bounded_vh}. That is why we deal first
      with $A_i$, $i \in \{1,\ldots,5\} \setminus \{2\}$.
      Note that using~\Cref{ass:markov} and
      \Cref{ass:w_markov}$(w)$-\ref{ass:item:w_markov:poisson},
      we obtain 
      $\expeLigne{\psrLigne{\grad
      V(\theta_{k})}{\hnoise_{\theta_k}(X_{k+1}) - P_{\theta_k}
      \hnoise_{\theta_k}(X_k)}[\theta_k]|\mcf_{k}} 
      = 0$ for any $k \in \iint{1}{n}$, and therefore
      \begin{equation}
                \label{eq:w_bound_a_1_geod}
        \expe{A_1} = 0 \eqsp. 
      \end{equation}
      Using the Cauchy-Schwarz inequality,
      \Cref{ass:lyap_mean_field}-\ref{ass:lyap_mean_field_a}, the
      definition of $\noise$ \eqref{eq:markovh},
      \Cref{ass:bounded_bias},
      \Cref{ass:w_markov}$(w)$-\ref{ass:item:w_markov:e_bound}-\ref{ass:item:w_markov:poisson}
      and Jensen's inequality, we obtain
      \begin{align}
        \abs{A_3}&\leq L \sum_{k=1}^n  \upeta_{k+1}\ell(\upgamma^{(k)})\normrLigne{P_{\theta_{k-1}}\hnoise_{\theta_{k-1}}
          (X_k)}[\theta_{k-1}]
\\
                   &\leq L \sum_{k=1}^n \upeta_k\upeta_{k+1}\defEns{
              \normrLigne{H_{\theta_{k-1}}(X_k)}[\theta_{k-1}]+
          \normrLigne{b_{\theta_{k-1}}(X_k)}[\theta_{k-1}]}
          \normrLigne{P_{\theta_{k-1}}\hnoise_{\theta_{k-1}}
          (X_k)}[\theta_{k-1}] \eqsp , \\
                 & \leq L \hnoise_{\infty} \sum_{k=1}^n \upeta_k\upeta_{k+1}
                 \parenthese{\noise_{\infty} w^{1/2}(X_k) + b_{\infty}+
                 \normr{h(\theta_{k-1})}[\theta_{k-1}]}
                 P_{\theta_{k-1}} w^{1/2}(X_k)\eqsp .
      \end{align}
      Taking the expectation, using 
      \Cref{ass:w_markov}$(w)$-\ref{ass:item:w_markov:moment},
      that $(\upeta_k)_{k \in \nsets}$ satisfies
      \eqref{eq:hyp_gamma_k} and the Cauchy-Schwarz inequality brings,
      \begin{equation}
                 \label{eq:w_bound_a_3_geod}
\txts                 \expe{\abs{A_3}}
                  \leq L  \hnoise_{\infty} C_w\defEnsLigne{ \parentheseDeuxLigne{ \noise_{\infty}
                 + b_{\infty} } \Gamma^{(2)}_{n+1} 
                 + \sum_{k=1}^n \upeta_{k}^2
                 \expeLigne{\normrLigne{h(\theta_{k-1})}[\theta_{k-1}]^2}}
                  \eqsp. 
      \end{equation}
      Using the Cauchy-Schwarz and Jensen inequality,
      \Cref{ass:lyap_mean_field}-\ref{ass:lyap_mean_field_b},
      \Cref{ass:w_markov}$(w)$-\ref{ass:item:w_markov:poisson}, for any
      $a,b \in \rset$, $\abs{ab} \leq (a^2+b^2)/2$ and that $(\upeta_k)_{k \in
      \nsets}$ satisfies \eqref{eq:hyp_gamma_k}, we have
      \begin{align}
        \abs{A_4} & \leq \cu\sum_{k=1}^n \abs{\upeta_{k+1}- \upeta_{k}}
        \normrLigne{h(\theta_{k-1})}[\theta_{k-1}]
        \normrLigne{P_{\theta_{k-1}}
        \hnoise_{\theta_{k-1}}(X_k)}[\theta_{k-1}] \eqsp, \\
                  & \leq 2  \cu \hnoise_{\infty}^2 \sum_{k=1}^n \abs{\upeta_{k+1}- \upeta_{k}}\{P_{\theta_{k-1}}w^{1/2}(X_k)\}^2 +  2 \cu  a_2 \sum_{k=1}^n \upeta_k^2
                  \normrLigne{h(\theta_{k-1})}[\theta_{k-1}]^2 
                   \eqsp.
        \end{align}
        Taking the expectation, using 
        \Cref{ass:w_markov}$(w)$-\ref{ass:item:w_markov:moment} and $(\upeta_k)_{k \in\nsets}$ is non-increasing bring,
        \begin{equation}
                  \label{eq:w_bound_a_4_geod} 
    \txts    \expe{|A_4|}
 \leq 2  \cu \hnoise_{\infty}^2 C_w\upeta_1+ 2 \cu a_2 \sum_{k=1}^n
    \upeta_{k}^2 \expeLigne{\normrLigne{h(\theta_{k-1})}[\theta_{k-1}]^2}    \eqsp.
    \end{equation}
      Finally, using
      \Cref{ass:lyap_mean_field}-\ref{ass:lyap_mean_field_b},
      \Cref{ass:w_markov}$(w)$-\ref{ass:item:w_markov:poisson} and
      Jensen's inequality, we obtain,
      \begin{align}
        \abs{A_5} & \leq \upeta_1 \cu \normr{h(\theta_0)}[\theta_0]
        \normr{\hnoise_{\theta_0}(X_1)}[\theta_0] + \upeta_{n+1} \cu
        \normr{h(\theta_n)}[\theta_n]
        P_{\theta_n}\normr{\hnoise_{\theta_n}}[\theta_n](X_{n+1}) \eqsp, \\
        &   \leq \cu \hnoise_{\infty} \defEns{ \upeta_1
        w^{1/2}(X_1)\normr{h(\theta_0)}[\theta_0] + \upeta_{n+1}
    P_{\theta_n}w^{1/2}(X_{n+1})\normr{h(\theta_n)}[\theta_n]} \eqsp. 
      \end{align}
      Taking the expectation, using the Cauchy-Schwarz inequality,
      \Cref{ass:w_markov}$(w)$-\ref{ass:item:w_markov:moment} and 
      that for any $a \in \rset$, $a \leq a^2 + 1$ brings,
      \begin{align} \expeLigne{|A_5|} & \leq \cu \hnoise_{\infty} \parentheseLigne{ \upeta_1
          C_w^{1/2} \expeLigne{\normr{h(\theta_0)}[\theta_0]^2}^{1/2} +
          \upeta_{n+1} C_w^{1/2} \expeLigne{\normr{
          h(\theta_n)}[\theta_n]^2}^{1/2} } \eqsp, \\ 
          & \leq \cu \hnoise_{\infty} \parentheseLigne{ 2 + \upeta^2_1 C_w \expeLigne{ 
                  \normr{h(\theta_0)}[\theta_0]^2} +
          \upeta^2_{n+1} C_w \expeLigne{\normr{
            h(\theta_n)}[\theta_n]^2}} \eqsp, \\
        \label{eq:w_bound_a_5_geod}
      & \txts \leq \cu \hnoise_{\infty} \parentheseLigne{ 2 + C_w \sum_{k=0}^n \upeta^2_{k+1}
      \expeLigne{\normr{h(\theta_k)}[\theta_k]^2}} \eqsp.
      \end{align}
      It remains to treat $A_2$ depending on the additional two
      conditions we consider. We start by proving a general bound which
      hold.
Using the Cauchy-Schwarz inequality,  \eqref{eq:w_length_upgamma_k},
\Cref{ass:w_markov}$(w)$-\ref{ass:item:w_markov:e_bound}-\ref{ass:item:w_markov:poisson}-\ref{ass:item:w_markov:poisson_regularity}, 
      \eqref{eq:w_length_upgamma_k} and \Cref{ass:lyap_mean_field}-\ref{ass:lyap_mean_field_b}, we get
      \begin{align}
        &\abs{A_2} \\
                  &\quad \leq L_{\hnoise}\sum_{k=1}^{n} \upeta_{k+1}
                  \upeta_{k}w^{1/2}(X_k) \normr{\grad
                  V(\theta_{k})}[\theta_k]
                  \defEns{\normrLigne{H_{\theta_{k-1}}(X_k)}[\theta_{k-1}]
                  +\normrLigne{b_{\theta_{k-1}}(X_k)}[\theta_{k-1}]}
                  \eqsp ,\\
                  &\quad\leq L_{\hnoise}\cu\sum_{k=1}^{n} \upeta_{k+1}
                  \upeta_{k} w^{1/2}(X_k)
                  \normr{h(\theta_k)}[\theta_k]\{ \noise_{\infty} w^{1/2}(X_k)\\
                  & \qquad \qquad\qquad \qquad\qquad\qquad\qquad\qquad\qquad\qquad+ \normr{h(\theta_{k-1})}[\theta_{k-1}]
          +\normrLigne{b_{\theta_{k-1}}(X_k)}[\theta_{k-1}]\}
              \eqsp.
      \end{align}
Using that $(\upeta_k)_{k\in
      \nsets}$ satisfies \eqref{eq:hyp_gamma_k},
      \Cref{ass:bounded_bias} and for any $a,b \in
      \rset, \absLigne{ab} \leq (a^2 + b^2) /2$, we obtain
      \begin{align}
          \abs{A_2} &\leq L_{\hnoise} \cu \sum_{k=1}^n \upeta_k^2
           \normr{h(\theta_k)}[\theta_k] \parenthese{\noise_{\infty} w(X_k) +
           b_\infty w^{1/2}(X_k)} \\
                    &\quad + (L_{\hnoise} \cu/2) \sum_{k=1}^n
                    w^{1/2}(X_k) \parenthese{
                    \normr{h(\theta_k)}[\theta_k]^2 \upeta^2_{k+1} +
                \normr{h(\theta_{k-1})}[\theta_{k-1}]^2 \upeta^2_k}
                \eqsp.
      \end{align}
      Changing the index in the second sum, using that for any $x \in
      \msx$, $w^{1/2}(x) \leq w(x)$ since $w(x) \geq 1$, 
      for any $a \in \rset$, $a \leq a^2+1$ on
      $\normrLigne{h(\theta_k)}[\theta_k]$ and the second 
      inequality in \eqref{eq:hyp_gamma_k} we get
      \begin{equation}
                \label{eq:w_bound_a_2_geod}
                \begin{aligned}
                    \abs{A_2}  &\leq L_{\hnoise}\cu \sum_{k=1}^{n}
              \upeta_{k}^2 w(X_k) (\noise_{\infty} + b_{\infty}) \\ 
                               & \quad +L_{\hnoise}\cu\sum_{k=0}^n\upeta_{k+1}^2 
              \normr{h(\theta_k)}[\theta_k]^2 \parenthese{[w(X_k)+w(X_{k+1})]/2+ a_1 w(X_k)[ \noise_{\infty} +
      b_{\infty} ]}
          \eqsp.
                \end{aligned}
      \end{equation}
      Now, taking the full expectation and using
      \Cref{ass:w_markov}$(w)$-\ref{ass:item:w_markov:moment} 
      brings
      \begin{equation}
          \label{eq:wbound_a_2}
          \begin{aligned}
              \expe{\abs{A_2}} 
              &\leq L_{\hnoise} \cu (\noise_{\infty} + b_{\infty})
              C_w \Gamma^{(2)}_{n+1} \\
              &\quad + L_{\hnoise} \cu \sum_{k=1}^n \upeta_{k+1}^2
                  \expe{\normr{h(\theta_k)}[\theta_k]^2
              \parenthese{ \{1 /2 + a_1 (\noise_{\infty} + b_{\infty} )\} w(X_k) +
          w(X_{k+1})/2} } \eqsp.
          \end{aligned}
      \end{equation}
We now distinguish the two cases. If  $\sup_{x \in \msx}
            w(x) = w_\infty < \plusinfty$, 
      we obtain 
      \begin{equation}
          \label{eq:wbound_a_2:w}
              \expe{\abs{A_2}} \leq L_{\hnoise} \cu (\noise_{\infty} + b_{\infty})
              C_w \Gamma^{(2)}_{n+1} + L_{\hnoise} \cu w_\infty \{1 +
              a_1(\noise_{\infty} + b_{\infty}) \} \sum_{k=1}^n \upeta_{k+1}^2
              \expeLigne{\normrLigne{h(\theta_k)}[\theta_k]^2} \eqsp.
      \end{equation}
If \Cref{ass:bounded_vh} holds,  we get 
      \begin{equation}
          \label{eq:wbound_a_2:h}
              \expe{\abs{A_2}} \leq L_{\hnoise} \cu \parentheseDeux{
                  (\noise_{\infty} + b_{\infty}) \{ h_\infty^2 a_1 +1\} +
              h_\infty^2 } C_w \Gamma^{(2)}_{n+1} \eqsp.
      \end{equation}
      Combining      \eqref{eq:w_bound_a_1_geod}-\eqref{eq:w_bound_a_3_geod}-\eqref{eq:w_bound_a_4_geod}-\eqref{eq:w_bound_a_5_geod}-\eqref{eq:wbound_a_2:w}-\eqref{eq:wbound_a_2:h},
      completes the proof.
\end{proof}

\begin{proof}[Proof of \Cref{th:markov}]
  The proof only consists in applying \Cref{lem:first_inequality_lemma}
  taking $\varepsilon = \cl_1/(4\cu^2)$, \Cref{lem:poisson_w_markov}
  using $\sup_{k \in \nsets} \upeta_k \leq \cl_1/(4(3L/2+D_{\hnoise}))$ and \Cref{ass:bounded_bias}-\Cref{ass:w_markov}$(w)$-\ref{ass:item:w_markov:e_bound}-\ref{ass:item:w_markov:moment}.
\end{proof}
  


\section{ Proofs for \Cref{sec:retr-quant-estim}} 

\subsection{Proof of  \Cref{lem:retractions}} \label{app:prooflemretractions}
We preface the proof of the Lemma by a preliminary result which does
not assume \Cref{ass:retraction}-\ref{ass:retraction_invariance}.
\begin{llemma} \label{lem:retractions_gene}
Assume \Cref{ass:retraction}-\ref{ass:retraction_zero}-\ref{ass:retraction_first_a} hold.
  \begin{enumerate}[wide, labelwidth=!, labelindent=0pt,label=(\alph*)]
  \item \label{lem_retraction_gene_a} Under \Cref{ass:retraction_first_b}, $ \normr{ \Phi_\theta(u) - u }[\theta] \leq \scrl^{(1)}(\theta)\Vert u \Vert^2_\theta/2$ for any $(\theta,u) \in \planT \Theta$.
    \item\label{lem_retraction_gene_b} Under \Cref{ass:retraction_second}, $ \normr{ \Phi_\theta(u) - u }[\theta] \leq \scrl^{(2)}(\theta) \Vert u \Vert^3_\theta/6$ for any $(\theta,u) \in \planT \Theta$.
  \end{enumerate}
\end{llemma}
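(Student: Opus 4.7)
The plan is to apply Taylor's theorem with integral remainder to the smooth map $\Phi_\theta : \planT_\theta \Theta \to \planT_\theta \Theta$ viewed, for each fixed $\theta \in \Theta$, as a smooth map between finite-dimensional vector spaces. The essential inputs are the values of the low-order derivatives of $\Phi_\theta$ at the origin. From \Cref{ass:retraction}-\ref{ass:retraction_zero}, together with the basic identities $\Exp_\theta(0_\theta) = \theta$ and $\rmD \Exp_\theta(0_\theta) = \Id$, the chain rule applied to $\Phi_\theta = \Exp_\theta^{-1} \circ \Ret_\theta$ gives
\begin{equation}
\Phi_\theta(0_\theta) = 0_\theta, \qquad \rmD \Phi_\theta(0_\theta) = \Id \eqsp.
\end{equation}

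For part \ref{lem_retraction_gene_a}, Taylor's formula of order two with integral remainder then yields
\begin{equation}
\Phi_\theta(u) - u = \Phi_\theta(u) - \Phi_\theta(0_\theta) - \rmD\Phi_\theta(0_\theta)[u] = \int_0^1 (1-t)\,\rmD^2\Phi_\theta(tu)[u,u]\,\rmd t,
\end{equation}
and bounding the integrand by \Cref{ass:retraction_first_b} gives the claim $\normr{\Phi_\theta(u)-u}[\theta] \le \scrl^{(1)}(\theta)\,\normr{u}[\theta]^2/2$.

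For part \ref{lem_retraction_gene_b} the crucial additional ingredient is to establish that $\rmD^2\Phi_\theta(0_\theta) = 0$. I would work in Riemannian normal coordinates centered at $\theta$: on a neighborhood of $0_\theta$ the exponential map $\Exp_\theta$ identifies with the identity, so $\Phi_\theta$ literally is the coordinate representation of $\Ret_\theta$. Moreover, the Christoffel symbols of the Levi-Civita connection vanish at the origin of normal coordinates, so the covariant acceleration $\rmD_t\dot\upgamma(0)$ of the curve $\upgamma(t) = \Ret_\theta(tu)$ reduces in these coordinates to the ordinary second derivative of its coordinate expression, namely $\rmD^2\Phi_\theta(0_\theta)[u,u]$. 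The hypothesis \Cref{ass:retraction_second}-\ref{ass:retraction_second_acc} then gives $\rmD^2\Phi_\theta(0_\theta)[u,u] = 0_\theta$ for every $u \in \planT_\theta\Theta$, and by polarization of the symmetric bilinear form, $\rmD^2\Phi_\theta(0_\theta) \equiv 0$. A Taylor expansion to third order now produces
\begin{equation}
\Phi_\theta(u) - u = \tfrac{1}{2}\int_0^1 (1-t)^2\,\rmD^3\Phi_\theta(tu)[u,u,u]\,\rmd t,
\end{equation}
and estimating via \Cref{ass:retraction_second}-\ref{ass:retraction_second_derivative} yields $\normr{\Phi_\theta(u)-u}[\theta] \le \scrl^{(2)}(\theta)\,\normr{u}[\theta]^3/6$.

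The main obstacle is part \ref{lem_retraction_gene_b}: translating the covariant condition $\rmD_t\dot\upgamma(0) = 0_\theta$ into the purely Euclidean statement $\rmD^2\Phi_\theta(0_\theta) = 0$. The trick is that the choice of normal coordinates at $\theta$ simultaneously trivializes $\Exp_\theta$ and annihilates the Christoffel symbols at the origin, so the connection-based and coordinate-based second derivatives agree there. Everything else reduces to standard multivariate Taylor estimates applied to $\Phi_\theta$.
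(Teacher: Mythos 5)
Your proposal is correct and follows essentially the same route as the paper: the same first- and second-order Taylor expansions of $\Phi_\theta$ with integral remainder, the same computation $\Phi_\theta(0_\theta)=0_\theta$, $\rmD\Phi_\theta(0_\theta)=\Id$ via the chain rule, and for part (b) the same use of normal coordinates centered at $\theta$ (where the Christoffel symbols vanish at the origin) to identify $\rmD_t\dot{\upgamma}(0)$ with $\rmD^2\Phi_\theta(0_\theta)[u,u]$ and conclude it vanishes. The polarization remark is a harmless extra; the paper only needs the quadratic form $\rmD^2\Phi_\theta(0_\theta)[u,u]$ to vanish, which is exactly what the acceleration hypothesis gives.
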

\begin{proof}
\ref{lem_retraction_gene_a}
Let $\theta \in \Theta$ and for any $u \in \planT_{\theta} \Theta$, consider the first-order Taylor expansion of $\Phi_\theta:\planT_{\theta} \Theta \rightarrow \planT_{\theta} \Theta$, taken at $0_\theta$,
\begin{equation} \label{eq:taylor_first_1}
\Phi_\theta(u) = \Phi_\theta(0_\theta) + \rmD\Phi_\theta(0_\theta)[u] + \int^1_0\,(1-t)\,\rmD^2\Phi_\theta(tu)[u,u]\,\rmd t \eqsp,
\end{equation}
where $\rmD\Phi_\theta$ and $\rmD^2\Phi_\theta$ denote the first and second derivative of $\Phi_\theta$. For the first term, 
\begin{equation} \label{eq:taylor_first_1a}
\Phi_\theta(0_\theta) \,=\, \Exp^{-1}_\theta \circ \Ret_\theta(0_\theta) \,=\, \Exp^{-1}_\theta(\theta) \,=\, 0_\theta \eqsp,
\end{equation}
where the second equality follows because $\Ret_\theta(0_\theta) =
\theta$, by \Cref{ass:retraction}-\ref{ass:retraction_zero} and by definition that
$\Exp_{\theta}(0_{\theta}) = \theta$. For the second term, using that
$\Exp^{-1}_{\theta}$ and $\Ret_{\theta}$ are continuously
differentiable as function between smooth manifolds, $\rmD \Exp_{\theta}(0_{\theta}) = \Id$ by definition and 
\Cref{ass:retraction}-\ref{ass:retraction_zero}, we obtain 
  that 
\begin{equation} \label{eq:taylor_first_1b}
 \rmD\Phi_\theta(0_\theta)  = \rmD \Exp^{-1}_{\theta}(\Ret_{\theta}(0_{\theta})) \rmD \Ret_{\theta}(0_{\theta}) = \rmD \Exp^{-1}_{\theta}(\theta) \rmD \Ret_{\theta}(0_{\theta}) = \Id \eqsp. 
\end{equation}
The proof is then completed using \eqref{eq:taylor_first_1a}, \eqref{eq:taylor_first_1b} and \Cref{ass:retraction_first_b} in \eqref{eq:taylor_first_1}.

\ref{lem_retraction_gene_b}
Let $\theta \in \Theta$
 and consider the second-order Taylor expansion of $\Phi_\theta:\rmT_\theta\Theta \to \rmT_\theta\Theta$, taken at $0_\theta$:
\begin{equation} \label{eq:taylor_first_2_taylor}
\Phi_\theta(u) = u + \rmD^2\Phi_\theta(0_\theta)[u,u] /2 + 2^{-1}\int^1_0(1-t)^2\rmD^3\Phi_\theta(tu)[u,u,u]\, \rmd t \eqsp. 
\end{equation}
where $\rmD^3\Phi_\theta$ is the third derivative of $\Phi_\theta$.
 The proof relies on the use of normal coordinates with origin at $\theta$~\cite[Chapter 5]{lee:2019}. These coordinates are smooth and simply defined identifying $\planT_{\theta} \Theta$ with $\rset^d$ through $\Exp^{-1}_{\theta}$. More precisely, setting an orthonormal basis $\{\bfb_i \, : \, i \in\{1,\ldots,d\}\}$ of $\planT_{\theta} \Theta$, define for any $\ttheta \not \in \Cut(\theta)$,
\begin{equation}
  \label{eq:5}
  \varphi^i(\ttheta) = \psr{\Exp^{-1}_{\theta}(\ttheta)}{\bfb_i}[\theta]\eqsp. 
\end{equation}
Then, $\varphi = \{\varphi^i\,: \, i \in \{1,\ldots,d\}\}$ are smooth coordinates around $\theta$. Therefore, by definition $\Phi_{\theta}$ is simply $\Ret_{\theta}$ read in these coordinates. Then by \cite[Equation (4.15)]{lee:2019}, setting for any $t \in \rset_+$, $\upgamma(t) = \Ret_{\theta}(tu)$ for $u \in \planT_{\theta}\Theta$, we get that, in these coordinates,
\begin{equation} \label{eq:secondderivativetoacceleration0}
  \rmD_t \dot{\upgamma}(t) = \rmD^2 \Phi_{\theta}(tu)[u,u] + \sum^d_{k=1}\,\sum^d_{i,j=1} \rmD \Phi_{\theta}^i \dot{\upgamma}^{j}(t)\Gammabf^{k}_{i,j}(\upgamma(t)) \partial_k \eqsp, 
\end{equation}
where $\rmD_t$ is the covariant derivative along $\upgamma$, $\{\Gammabf_{i,j}^k \, : \, i,j,k \in \{1,\ldots,d\}\}$ are the Christoffel symbols and $\{\partial_k \, : \, k \in\{1,\ldots,d\}\}$ are the coordinate vector fields on $\planT \Theta$ corresponding to $\varphi$. But using \cite[Proposition 5.24]{lee:2019}, we get that $\Gammabf^{k}_{i,j}(\upgamma(0)) = \Gammabf^{k}_{i,j}(\theta)=0$ for any $i,j,k \in \{1,\ldots,d\}$. Therefore,
\begin{equation} \label{eq:secondderivativetoacceleration}
 \rmD_t \dot{\upgamma}(0) = \rmD^2 \Phi_{\theta}(0)[u,u] 
\end{equation}
and by \Cref{ass:retraction_second}, $  \rmD_t \dot{\upgamma}(0) =0$, which implies that $\rmD^2 \Phi_{\theta}(0)[u,u] = 0_{\theta}$. Plugging this result into \eqref{eq:taylor_first_2_taylor} and using the bound on the third derivative of $\Phi_{\theta}$ given by \Cref{ass:retraction_second} completes the proof.
\end{proof}

\begin{proof}[Proof of \Cref{lem:retractions}]

Recall that for any $(\theta,u) \in \planT \Theta$ and $g \in \msg$, we denote by $g \cdot u = \rmD g_{\theta}(u)$  and $g$ is an isometry,  \ie~$\normr{g \cdot u}[g \cdot \theta] = \normr{u}[\theta]$, see \Cref{app:isometry}.   The proof consists in showing that for any $(\theta,u) \in \planT \Theta$, and $g \in \msg$, 
\begin{equation}
  \label{eq:proof:lem:retractions}
\normr{\Phi_\theta(u) - u}[\theta]  = \normr{\Phi_{g \cdot \theta}(g\cdot u) - g\cdot u}[g \cdot \theta]  \eqsp. 
\end{equation}
Indeed, suppose that this result holds and  consider a fixed $\theta_0 \in\Theta$. Since $\Theta$ is assumed to be homogeneous \Cref{ass:retraction}-\ref{ass:retraction_invariance}, for any $(\theta,u)\in\planT \Theta$, there exists $g\in \msg$ such that $\theta_0 = g \cdot \theta$, which implies by \eqref{eq:proof:lem:retractions},
\begin{equation}
  \label{eq:proof:lem:retractions_D}
\normr{\Phi_\theta(u) - u}[\theta]  = \normr{\Phi_{\theta_0}(g\cdot u) - g\cdot u}[\theta_0]  \eqsp. 
\end{equation}
Using that $g \in \msg$ is an isometry and \Cref{lem:retractions_gene} completes the proof.

We now show \eqref{eq:proof:lem:retractions}. Let $g \in \msg$ and
$(\theta,u)\planT \Theta$. Since $g^{-1}$ is an isometry, we have
$\normr{\Phi_\theta(u) - u}[\theta] = \normr{g^{-1}\cdot g \cdot
  \Phi_\theta(u) - g^{-1}\cdot g \cdot u}[\theta] = \normr{g
  \Phi_{\theta}(u) - g\cdot u}[g \cdot \theta]$, so we only need to
show that $g \Phi_{\theta}(u) = \Phi_{g\cdot \theta}(g \cdot u)$. Using
that $g$ maps geodesics to geodesic (see \eqref{eq:isomexp} in \Cref{app:isometry}) and \Cref{ass:retraction}-\ref{ass:retraction_invariance} successively, we get that $g \Phi_{\theta}(u)  = \Exp^{-1}_{g \cdot \theta}(g \Ret_{\theta}(u))  = \Phi_{g \cdot \theta}( g\cdot u)$ and the proof follows.

\end{proof}

\subsection{Projective Retraction on the Sphere}
\label{app:retractionsphere}
Consider the Euclidean unit sphere manifold $\Theta = \sphere^{d} = \{ x \in \rset^{d+1} : \| x \| = 1 \}$, where $\norm{\cdot}$ stands for the standard Euclidean norm on $\rset^{d+1}$. By \cite[Example 3.5.1]{absil:2008}
for any $\theta \in\sphere^{d}$, $\planT_{\theta} \sphere^{d} = \{u \in \rset^{d+1} \, : \, u^{\transpose}\theta =0\}$. The Riemannian metric $\metricM$ is the canonical metric on the sphere, defined as the restriction of Euclidean scalar product from $\rset^{d+1}$ to the tangent space $\planT_\theta \sphere^d$. The corresponding Riemannian exponential is given by:
\begin{equation} \label{eq:expsphere_main} 
\Exp_\theta(u) = \cos(\| u \|)\,\theta + \sin(\| u \|) (u/\norm{u}) \eqsp.
\end{equation} 
The following result holds.
\begin{pproposition} \label{prop:retractionsphere}
The projective retraction $\Ret_\theta$ defined for any $(\theta,u) \in \planT \sphere^d$ by
\begin{equation} \label{eq:retractionsphere} 
  \Ret_\theta(u) = (\theta+u)/\norm{ \theta+u} 
\end{equation}
satisfies \Cref{ass:retraction}-\Cref{ass:retraction_first_b}-\Cref{ass:retraction_second}. 
\end{pproposition}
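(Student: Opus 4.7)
The plan is to verify the three assumptions in turn, exploiting the embedding $\sphere^d \subset \rset^{d+1}$ and deriving a closed-form expression for $\Phi_\theta = \Exp_\theta^{-1}\circ \Ret_\theta$.

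I will first verify \Cref{ass:retraction}. For \ref{ass:retraction_zero}, $\Ret_\theta(0_\theta) = \theta$ by direct substitution; computing $\rmD\Ret_\theta(0_\theta)[u]$ via $\|\theta+tu\|^2 = 1 + t^2\|u\|^2$ (which uses $u^\transpose \theta = 0$) will yield the identity map. For \ref{ass:retraction_first_a}, I will use that $\Cut(\theta) = \{-\theta\}$ on $\sphere^d$, together with $\langle \Ret_\theta(u), \theta\rangle = 1/\sqrt{1+\|u\|^2} > 0$, to conclude $\Ret_\theta(u) \neq -\theta$. For \ref{ass:retraction_invariance}, the isometry group $\msg = \rmO(d+1)$ acts transitively on $\sphere^d$, and since $\Ret_\theta$ only involves Euclidean sums and normalization, the equivariance $g\cdot\Ret_\theta(u) = \Ret_{g\cdot\theta}(g\cdot u)$ is immediate from $\|gx\| = \|x\|$.

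Next, I will derive a closed form for $\Phi_\theta$. The inverse exponential is $\Exp_\theta^{-1}(p) = \arccos(\langle p,\theta\rangle)(p - \langle p,\theta\rangle \theta)/\|p - \langle p,\theta\rangle \theta\|$ on $\sphere^d\setminus\{-\theta\}$. Substituting $p = \Ret_\theta(u)$ and using $\arccos(1/\sqrt{1+r^2}) = \arctan(r)$, I obtain
\begin{equation}
\Phi_\theta(u) = \frac{\arctan(\|u\|)}{\|u\|}\,u \eqsp,
\end{equation}
extended by continuity with $\arctan(r)/r = 1$ at $r=0$. Since $\arctan(r)/r = 1 - r^2/3 + r^4/5 - \cdots$ is a power series in $r^2$, I can write $\Phi_\theta(u) = g(\|u\|^2)\,u$ for some $g$ smooth on $\rset_+$, so $\Phi_\theta$ is smooth on all of $\planT_\theta\sphere^d$ and the leading term of $\Phi_\theta(u) - u$ is $O(\|u\|^3)$, consistent with a second-order retraction.

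Finally, I will verify \Cref{ass:retraction_first_b} and \Cref{ass:retraction_second}. Because $g$ is smooth and $\arctan$ saturates at $\pi/2$, the derivatives $\rmD^2\Phi_\theta$ and $\rmD^3\Phi_\theta$ will be globally bounded on $\planT_\theta\sphere^d$: bounded on compacts by smoothness, and controlled at infinity since $\Phi_\theta(u)\sim(\pi/2)u/\|u\|$ has unit-order norm there. This yields the required finite constants $\scrlU(\theta)$ and $\scrlD(\theta)$. For \Cref{ass:retraction_second}\ref{ass:retraction_second_acc}, I will compute the ambient second derivative of $\upgamma(t) = (\theta+tu)/\sqrt{1+t^2\|u\|^2}$ at $t=0$, obtaining $\ddot\upgamma(0) = -\|u\|^2\,\theta$; since the covariant derivative on a submanifold of $\rset^{d+1}$ is the tangential projection of the ambient derivative and $\theta$ is normal to $\planT_\theta\sphere^d$, we get $\rmD_t\dot\upgamma(0) = 0_\theta$. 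The only real hurdle is bounding $\rmD^2\Phi_\theta$ and $\rmD^3\Phi_\theta$ uniformly; the explicit formula reduces this to standard calculus estimates.
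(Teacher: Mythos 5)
Your proposal is correct and follows essentially the same route as the paper: both reduce everything to the closed form $\Phi_\theta(u) = \arctan(\|u\|)\,u/\|u\|$ (you obtain it by inverting the exponential, the paper by checking $\Ret_\theta(u)=\Exp_\theta(\arctan(\|u\|)u/\|u\|)$ via trigonometric identities) and then bound derivatives of $\arctan$. Two remarks. For the zero initial acceleration you argue extrinsically, computing the ambient $\ddot\upgamma(0) = -\|u\|^2\theta$ and projecting onto $\planT_\theta\sphere^d$; the paper instead uses the normal-coordinate identity $\rmD_t\dot\upgamma(0)=\rmD^2\Phi_\theta(0)[u,u]$ from its Lemma on general retractions. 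Both are fine. The one place your justification is loose is the bound on $\rmD^2\Phi_\theta$ and $\rmD^3\Phi_\theta$: ``controlled at infinity since $\Phi_\theta(u)\sim(\pi/2)u/\|u\|$ has unit-order norm there'' is not a valid inference as stated, since boundedness of a map does not control its derivatives. The conclusion is true, but the cleanest fix is the paper's observation that \Cref{ass:retraction_first_b} and \Cref{ass:retraction_second} only ask for the radial derivatives: $\rmD^2\Phi_\theta(tu)[u,u] = \tfrac{\rmd^2}{\rmd t^2}\Phi_\theta(tu) = \|u\|^2 f_2(t\|u\|)\,u/\|u\|$ with $f_2$ the second derivative of $\arctan$, so $|f_2|\le 1$ gives the required bound with $\scrlU(\theta)=1$ at once (and likewise for the third derivative). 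Alternatively, your route can be repaired by differentiating $\Phi_\theta(u)=g(\|u\|^2)u$ explicitly and checking that $g'$, $g''$, $g'''$ decay fast enough that the full operator norms stay bounded; this works but requires the computation you have deferred.
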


The retraction \eqref{eq:retractionsphere} is \emph{both} a first-order and second-order retraction, and $\Phi_\theta$ also has a bounded first-order derivative. Consequently,  \Cref{prop:retmartingale} and  \Cref{th:retmarkov}  can be applied according to conditions on the noise properties.
We remark that by comparing \eqref{eq:expsphere_main} with \eqref{eq:retractionsphere}, the retraction map $\Ret_\theta$ has a better numerical stability as it does not involve evaluating the trigonometric functions.\vspace{.1cm}

\begin{proof}[Proof of \Cref{prop:retractionsphere}]
First, for any $\theta \in \sphere^d$, the Riemannian exponential map is given for any $ u \in \planT_{\theta} \sphere^d$ by (see~\cite[Proposition 5.27 and its proof]{lee:2019})
\begin{equation} \label{eq:expsphere}
  \Exp_\theta(u) = \cos(\| u \|)\,\theta + \sin(\| u \|) (u/\norm{u}) \eqsp.
\end{equation}

In addition, the retraction $\Ret$  given by  \eqref{eq:retractionsphere} can be written as 
\begin{equation} \label{eq:phisphere1}
 \Ret_\theta(u) \,=\, \Exp_\theta\left(\arctan(\|u\|)\frac{u}{\|u\|}\right)
\end{equation}
This can be proven by replacing the identities,
$$
\cos(\arctan(\|u\|)) = \frac{1}{\sqrt{1 + \|u\|^2}} \hspace{0.5cm} \sin(\arctan(\|u\|)) = \frac{\|u\|}{\sqrt{1 + \|u\|^2}}
$$
into (\ref{eq:expsphere}). Indeed, this yields,
$$
\Exp_\theta\left(\arctan(\|u\|)\frac{u}{\|u\|}\right) \,=\, \frac{1}{\sqrt{1 + \|u\|^2}}\,\theta +  \frac{1}{\sqrt{1 + \|u\|^2}}\,u
$$
To see that this is equal to $\Ret_\theta(u)$, note that $1 + \|u\|^2 = \|\theta + u \|^2$, because $\|\theta\| = 1$ and $u$ is orthogonal to $\theta$ (since $u \in \planT_{\theta}\Theta$). Then, (\ref{eq:phisphere1}) follows from (\ref{eq:retractionsphere}). \\[0.1cm]
The following are now proven. \\[0.1cm]
$\bullet$ Condition \Cref{ass:retraction}-\ref{ass:retraction_zero} is satisfied\,: this condition is just the definition of a retraction, as given in~\cite{absil:2008}. \\[0.1cm]
$\bullet$ Condition \Cref{ass:retraction}-\ref{ass:retraction_first_a} is satisfied\,: the cut locus of a point $\theta$ on the sphere $S^d$ is $\Cut(\theta) = \lbrace-\theta\rbrace$ \cite{lee:2019} (Page 308). The Riemannian (that is, spherical) distance between $\theta$ and $-\theta$ is $\distT(\theta,-\theta) = \pi$. On the other hand, from(\ref{eq:phisphere1}), $\distT(\theta,\Ret_\theta(u)) < \frac{\pi}{2}$ because $\arctan(\|u\|) < \pi/2$ for all $u \in \planT_{\theta}\Theta$. It is then clear that $\Ret_\theta(u) \neq \lbrace-\theta\rbrace$ for any $u \in \planT_{\theta}\Theta$. \\[0.1cm]
$\bullet$ Condition \Cref{ass:retraction}-\ref{ass:retraction_invariance} is satisfied\,: the isometry group of  $\Theta = S^d$ is $G = O(d)$, the group of $d \times d$ orthogonal matrices. The action of $\msg$ on $\Theta$ is given by matrix-vector multiplication, $g\cdot \theta = g\theta$ and $g\cdot u = gu$. From (\ref{eq:retractionsphere}),
\begin{equation} \label{eq:h2sphere1}
g\cdot \Ret_\theta(u) = \frac{g\cdot (\theta + u)}{\|\theta + u\|}
\end{equation}
However, since $g$ is an orthogonal matrix, $g$ preserves Euclidean norms, so $\|\theta + u\| = \|g\cdot (\theta + u)\|$. Replacing into (\ref{eq:h2sphere1}),
\begin{equation} \label{eq:h2sphere2}
g\cdot \Ret_\theta(u) = \frac{g\cdot (\theta + u)}{\|g\cdot (\theta + u)\|} \,=\, 
\frac{g\cdot \theta + g\cdot u}{\|g\cdot \theta + g\cdot u\|}
\end{equation}
where the second equality follows since the action of $g$ is linear. Finally, the right-hand side of  (\ref{eq:h2sphere2}) is $\Ret_{g\cdot \theta}(g\cdot u)$. \\[0.1cm]
 $\bullet$ Condition \Cref{ass:retraction_first_b} is satisfied\,: from (\ref{eq:phisphere1}) and \Cref{ass:retraction}-\ref{ass:retraction_first_a}, 
\begin{equation} \label{eq:phisphere2}
  \Phi_\theta(u) \,=\, \arctan(\|u\|)\frac{u}{\|u\|}
\end{equation}
The required second derivative can now be computed, thanks to the identity,
\begin{equation} \label{eq:secondderivativetime}
  \rmD^2\Phi_\theta(tu)[u,u] \,=\, \frac{\rmd^2}{\rmd t^2}\Phi_\theta(tu)
\end{equation}
Indeed, using (\ref{eq:phisphere2}) and (\ref{eq:secondderivativetime}),
$$
\rmD^2\Phi_\theta(tu)[u,u] \,=\, \frac{\rmd^2}{\rmd t^2}\,\arctan(t\|u\|)\frac{u}{\|u\|} \,=\, \|u\|^2\left(f_2(t\|u\|)\frac{u}{\|u\|}\right) 
$$
where $f_2$ is the second derivative of the $\arctan$ function, so $|f_2(x)| \leq 1$ for real $x$. Now, since $\Theta = S^d$, here $\normr{u}[\theta] = \|u\|$. Thus,  Condition \Cref{ass:retraction_first_b} is satisfied with $\scrl^{(1)}(\theta) = 1$. \\[0.1cm]
$\bullet$ Condition \Cref{ass:retraction_second}-\ref{ass:retraction_second_acc} is satisfied\,: recall (\ref{eq:secondderivativetoacceleration}) from the proof of Lemma \ref{lem:retractions_gene}. This states,
$$
 \rmD_t \dot{\upgamma}(0) = \rmD^2 \Phi_{\theta}(0)[u,u] 
$$
From (\ref{eq:secondderivativetime}), it then follows,
\begin{equation} \label{eq:accproof}
 \rmD_t \dot{\upgamma}(0)\,=\, \left.\frac{\rmd^2}{\rmd t^2}\right|_{t=0}\Phi_\theta(tu)
\end{equation}
Since $\Phi_\theta$ is given by (\ref{eq:phisphere2}), an elementary calculation shows the right-hand side is here equal to zero.\\[0.1cm]
$\bullet$ Condition \Cref{ass:retraction_second}-\ref{ass:retraction_second_derivative} is satisfied\,: the proof is similar to the above one for \Cref{ass:retraction_first_b}. Here, instead of  (\ref{eq:secondderivativetime}), it is enough to use
\begin{equation} \label{eq:thirdderivativetime}
  \rmD^3\Phi_\theta(tu)[u,u,u] \,=\, \frac{\rmd^3}{\rmd t^3}\Phi_\theta(tu)
\end{equation} 
Using (\ref{eq:phisphere2}), this shows that \Cref{ass:retraction_second}-\ref{ass:retraction_second_acc} is satisfied with $\scrl^{(2)}(\theta) = 2$. \\[0.1cm]
$\bullet$ $\Phi_\theta$ has bounded first derivative\,: from (\ref{eq:phisphere2}), by differentiating,
$$
\rmD\Phi_\theta(u)[v] \,=\, \frac{1}{1+\| u \|^2}\,\frac{\langle u,v\rangle}{\| u \|}\frac{u}{\| u \|}\,+\, \frac{\arctan(\|u\|)}{\|u\|}\left(v - \frac{\langle u,v\rangle}{\| u \|}\frac{u}{\| u \|}\right)
$$
for any $u$ and $v$ in $\planT_{\theta}\Theta$. Then, by an elementary calculation, and recalling that, since $\Theta = S^d$, Riemannian scalar products and norms are equal to Euclidean ones, $\normr{\rmD\Phi_\theta(u)[v]}[\theta] \,\leq 2\normr{v}[\theta]$\,. Thus, the operator norm of $\rmD\Phi_\Theta(u)$ is bounded by $\overline{\rm D} = 2$.
\end{proof}
\subsection{Proof of \Cref{prop:retractiongrass}}
\label{sec:proof-crefpr}

First, by \cite[Equation 2.32]{edelman:arias:smith:1998}, the Riemannian exponential map at $\theta$ is given for $D \in \planT_\theta\grassmann_r(\rset^d)$, with $D = B_\perp C$, $C \in \rset^{(d-r)\times r}$:
\begin{equation} \label{eq:grassmannexp}
  \mathrm{Exp}_\theta(D) = \left[ (B\,,B_\perp) \,
\exp 
  \left( \begin{array}{cc}
  0 & -C^\top \\ C & 0 \end{array} \right) 
  \left( \begin{array}{c} {\rm I}_r \\ 0_{d-r \times r} \end{array} \right) \right]
\end{equation}
where $\exp$ is the matrix exponential.
In addition, we show below that the retraction $\Ret$ defined by  \eqref{eq:grassmannretraction} can be written on the form
\begin{equation} \label{eq:phigrass1}
  \Ret_\theta(D) \,=\, \Exp_\theta(\Phi_\theta(D)) \hspace{0.5cm}  \Phi_\theta(D) \,=\, B_\perp\,V\arctan(a)\,U^{\transpose}
\end{equation}
for $D \in \planT_{\theta}\Theta$ with $D = B_\perp C$, where $C$ has singular value decomposition $C = V\,a\,U^{\transpose}$. Here, $V$ is $(d-r) \times (d-r)$ orthogonal and $U$ is $r \times r$ orthogonal. Moreover, $\arctan(a)$ is obtained by taking the arctangent of each element of the matrix $a$. Accepting (\ref{eq:phigrass1}), it is possible to show that. \\[0.1cm]
$\bullet$ Condition and \Cref{ass:retraction}-\ref{ass:retraction_zero} is satisfied\,: this condition is just the definition of a retraction, as given in~\cite{absil:2008}. \\[0.1cm]
$\bullet$ Condition \Cref{ass:retraction}-\ref{ass:retraction_first_a} is satisfied\,: when $\Theta = \grassmann_r(\rset^d)$, the cut locus of each $\theta \in \Theta$ is given by~\cite{sakai:1976},
\begin{equation} \label{eq:cutsun}
  \Cut(\theta) \,=\, \left\lbrace \Exp_\theta(B_\perp C)\middle| C = V\,a\,U^{\transpose}\,;\,\Vert a \Vert_{\infty} \,=\, \frac{\pi}{2} \right\rbrace \eqsp,
\end{equation}
where $\Vert a \Vert_{\infty} = \max_{ij} |a_{ij}|$. From (\ref{eq:phigrass1}), for any $D \in \planT_{\theta}\Theta$, one has
$$
\Ret_\theta(D) \,=\, \Exp_\theta(B_\perp C(D)) \text{ where } C(D) = V\arctan(a)\,U^{\transpose} \eqsp.
$$
Since $\Vert \arctan(a) \Vert_\infty < \pi/2$, it follows that $\Ret_\theta(D) \notin \Cut(\theta)$. \\[0.1cm]
$\bullet$ Condition \Cref{ass:retraction}-\ref{ass:retraction_invariance} is satisfied\,: the isometry group of  $\Theta = \grassmann_r(\rset^d)$ is $G = O(d)$, the group of $d \times d$ orthogonal matrices. The action of $\msg$ on $\Theta$ is given by $g\cdot \theta = g(\theta)$ (the image of the subspace $\theta$ of $\rset^d$ by the orthogonal tranformation $g$). If $D \in \planT_{\theta} \Theta$, then $g\cdot D = gD$ is a matrix product. 

Note that, if $\theta = [B]$ for some $B \in \stiefel_r(\rset^d)$, then $g\cdot \theta = [gB]$. Applying this property in (\ref{eq:grassmannretraction}),
\begin{equation} \label{eq:grassinvariance}
g\cdot \Ret_\theta(D) \,=\, g\left(\left[ B + D\right]\right) \,=\, \left[ gB + g D\right]
\end{equation}
But, since $g\cdot \theta = [gB]$ and $gD = g\cdot D$, (\ref{eq:grassinvariance}) implies
$$
g\cdot \Ret_\theta(D) \,=\, \Ret_{g\cdot \theta}(g\cdot D)
$$

 $\bullet$ Condition \Cref{ass:retraction_first_b} is satisfied\,: the required second derivative is computed using the identity (this is a repetition of (\ref{eq:secondderivativetime})),
\begin{equation} \label{eq:secondderivativetimebis}
  \rmD^2\Phi_\theta(tD)[D,D] \,=\, \frac{\rmd^2}{\rmd t^2}\Phi_\theta(tD)
\end{equation}
Using (\ref{eq:phigrass1}) and (\ref{eq:secondderivativetimebis}), 
\begin{equation} \label{eq:pr_grass_fb1}
\rmD^2\Phi_\theta(tD)[D,D] \,=\, B_\perp\,V\left( a \odot a \odot f_2(ta)\right)U^{\transpose} 
\end{equation}
where $\odot$ denotes the Kronecker product, and $f_2$ is the second derivative of the $\arctan$ function (again, this is applied to each element of the matrix $(ta)$). From~\cite{edelman:arias:smith:1998} (Page 314)
\begin{equation} 
\normr{\rmD^2\Phi_\theta(tD)[D,D]}[\theta]^2 \,=\, \mathrm{tr}\left( ( a \odot a \odot f_2(ta))( a \odot a \odot f_2(ta))^{\transpose}\right)
\end{equation}
where $\mathrm{tr}$ denotes the trace. Then, using the fact that $|f_2(x)| \leq 1$ for real $x$, the right-hand side is less than $\mathrm{tr}(aa^{\transpose})$, which is equal to $\normr{D}[\theta]^2\,$. Thus,  Condition \Cref{ass:retraction_first_b} is satisfied with $\scrl^{(1)}(\theta) = 1$. \\[0.1cm]
$\bullet$ Condition \Cref{ass:retraction_second}-\ref{ass:retraction_second_acc} is satisfied\,: recall (\ref{eq:secondderivativetoacceleration}) from the proof of Lemma \ref{lem:retractions_gene}. This states,
$$
 \rmD_t \dot{\upgamma}(0) = \rmD^2 \Phi_{\theta}(0)[D,D] 
$$
Setting $t = 0$ in (\ref{eq:pr_grass_fb1}), it then follows
$$
 \rmD_t \dot{\upgamma}(0)  \,=\, B_\perp\,V\left( a \odot a \odot f_2(0)\right)U^{\transpose}
$$
which is equal to zero since $f_2(0) = 0$.\\[0.1cm]
$\bullet$ Condition \Cref{ass:retraction_second}-\ref{ass:retraction_second_derivative} is satisfied\,: the proof is similar to the above one for \Cref{ass:retraction_first_b}. Here, instead of  (\ref{eq:secondderivativetimebis}), it is enough to use
\begin{equation} \label{eq:thirdderivativetimebis}
  \rmD^3\Phi_\theta(tD)[D,D,D] \,=\, \frac{\rmd^3}{\rmd t^3}\Phi_\theta(tD)
\end{equation} 
by computing the derivative, as in (\ref{eq:pr_grass_fb1}), it can be shown that \Cref{ass:retraction_second} is satisfied with $\scrl^{(2)}(\theta) = 2$. \\[0.1cm]
$\bullet$ $\Phi_\theta$ does not have bounded first derivative\,: assume $r > 1$. Recall that $\Phi_\theta(D)$ is given by (\ref{eq:phigrass1}), which can be written 
\begin{equation} \label{eq:unboundedD1}
  \Phi_\theta(D) = B_\perp \varphi(\psi_\theta(D))
\end{equation}
where, $\psi_\theta : \rset^{d\times r} \rightarrow \rset^{(d-r)\times r}$ and $\varphi:\rset^{(d-r)\times r}\rightarrow \rset^{(d-r)\times r}$ are given by
\begin{equation} \label{eq:unboundedD2}
 \psi_\theta(D) = B^\transpose_\perp D\hspace{0.5cm}   \varphi(C) = V\arctan(a)\, U^\transpose
\end{equation}
whenever $C$ has singular value decomposition $C = Va\,U^{\transpose}$. Indeed, if $D = B_\perp C$, then $\psi_\theta(D) = C$, so (\ref{eq:unboundedD1}) is equivalent to (\ref{eq:phigrass1}). From (\ref{eq:unboundedD1}) and (\ref{eq:unboundedD2}), by an application of the chain rule
$$
\rmD \Phi_\theta(D)[\bar{D}] = B_\perp \rmD \varphi(C)[B^\transpose_\perp \bar{D}]
$$
for $\bar{D} \in \planT_{\theta}\Theta$, where $C = \psi_\theta(D)$. Now, to show that $\rmD \Phi_\theta(D)$ is not bounded, it is enough to show that $\rmD \varphi(C)$ is not bounded. However,
$$
\rmD\varphi(C)[w] = \rmD V[w]\left(\arctan(a)\right)U^\transpose +V\rmD\left(\arctan(a)\right)\![w]\,U^\transpose + V\left(\arctan(a)\right)\rmD U[w] 
$$
for $w \in \rset^{(d-r)\times r}$, where $\rmD V$, $\rmD\left(\arctan(a)\right)$ and $\rmD U$ denote the derivatives of $V$, $\arctan(a)$ and $U$, as functions of $C$, by an abuse of notation. To simplify the proof, assume, without loss of generality, that $C$ is a square matrix (for example, if $d-r \geq r$, it is enough to add zero columns to $C$). With this assumption, the following formulae hold~\cite{townsend},
\begin{equation} \label{eq:DV}
\rmD V[w] = V \left[ F\odot \left( V^\transpose w\,U\,a + a\,U^\transpose w^\transpose\,V\right) \right]
\end{equation}
\begin{equation} \label{eq:DatanA}
\rmD \left(\arctan(a)\right)\![w] = \rmI_{(d-r)} \left[  V^\transpose w\,U \right]
\end{equation}
\begin{equation} \label{eq:DU}
\rmD U[w] = U \left[ F\odot \left( a\,V^\transpose w\,U+ U^\transpose w^\transpose\,V\,a\right) \right]
\end{equation}
where $F$ is the matrix with entries $F_{ij} = (a^2_j - a^2_i)^{-1}$ for $i \neq j$ and $F_{ii} = 0$. However, taking $w = V\omega a U^\transpose$ where $\omega$ is a $(d-r) \times (d-r)$ antisymmetric matrix,  yields $\rmD \left(\arctan(a)\right)[w] = 0$ and $\rmD U[w] = 0$, while
$$
\rmD V[w] = V \left[ G\odot \omega\right]
$$
where $G$ has matrix elements $G_{ij} = a^2_ia^2_j/(a^2_j - a^2_i)$ for $i \neq j$ and $G_{ii} = 0$. Clearly, these do not remain bounded as $a_i - a_j \rightarrow 0$.
\paragraph{Proof of (\ref{eq:phigrass1})} here, let $s = d-r$ and assume, without any loss of generality, that $s \geq r$. 

Recall that, in (\ref{eq:phigrass1}), $D = B_\perp C$ where $C$ has singular value decomposition $C = Va\,U^{\transpose}$. Here, $V$ and $U$ are orthogonal, and $a = (\alpha\,,0_{r\times s-r})^{\transpose}$, with $r \times r$ diagonal matrix  $\alpha$. Write $\Phi_\theta(D)$ under the form
\begin{equation} \label{eq:cu}
\Phi_\theta(D) = B_\perp C(D) \text{ where } C(D) = V\arctan(a)\,U^{\transpose}
\end{equation}
Using (\ref{eq:grassmannexp}), it follows
\begin{equation} \label{eq:grassmannexp1}
 \Exp_\theta(\Phi_\theta(D)) \,=\, 
\left[ Q \,
 \exp 
  \left( \begin{array}{cc}
  0 & -C(D)^\top \\ C(D) & 0 \end{array} \right) 
  \left( \begin{array}{c} {\rm I}_r \\ 0_{s \times r} \end{array} \right) \right]
\end{equation}
where $Q = (B\,,B_\perp)$. The aim is to show this is equal to $\Ret_\theta(D)$, given by (\ref{eq:grassmannretraction}). Using the expression of $C(D)$ in (\ref{eq:cu}), and performing the matrix multiplication, it is possible to check that
\begin{equation} \label{eq:svdtok}
  \left( \begin{array}{cc}
  0 & -C(D)^\top \\ C(D) & 0 \end{array} \right) \,=\, 
\left(\begin{array}{cc} U & \\  & V\end{array}\right)
  \left( \begin{array}{cc}
  0 & -\arctan(a)^\top \\ \arctan(a) & 0 \end{array} \right) 
\left(\begin{array}{cc} U^{\transpose} & \\  & V^{\transpose}\end{array}\right)
\end{equation}
Recall $\exp(AXA^{-1}) = A\exp(X)A^{-1}$ for any square matrices $A$ and $X$, where $A$ is invertible. It follows from (\ref{eq:svdtok}),
$$
  \exp\left( \begin{array}{cc}
  0 & -C(D)^\top \\ C(D) & 0 \end{array} \right) \,=\, 
\left(\begin{array}{cc} U & \\  & V\end{array}\right)
  \exp\left( \begin{array}{cc}
  0 & -\arctan(a)^\top \\ \arctan(a) & 0 \end{array} \right) 
\left(\begin{array}{cc} U^{\transpose} & \\  & V^{\transpose}\end{array}\right)
$$
By plugging this into (\ref{eq:grassmannexp1}), and noticing that
$$
\left[  \left(\begin{array}{cc} U^{\transpose} & \\  & V^{\transpose}\end{array}\right)
  \left( \begin{array}{c} {\rm I}_r \\ 0_{s \times r} \end{array} \right) \right]
=
\left[ 
  \left( \begin{array}{c} {\rm I}_r \\ 0_{s \times r} \end{array} \right) \right]
$$
it follows
\begin{equation} \label{eq:grassmannexp2}
 \Exp_\theta(\Phi_\theta(D)) \,=\, 
\left[ Q \,
\left(\begin{array}{cc} U & \\  & V\end{array}\right)
  \exp\left( \begin{array}{cc}
  0 & -\arctan(a)^\top \\ \arctan(a) & 0 \end{array} \right)  \left( \begin{array}{c} {\rm I}_r \\ 0_{s \times r} \end{array} \right) \right]
\end{equation} 
If $f = (\phi\,,0_{r\times s-r})^{\transpose}$ where $\phi$ is $r\times r$ diagonal, then, under the assumption that $s \geq r$, (this is proven in detail, at the end of the present proof),
\begin{equation} \label{eq:expidentity}
\exp\left(\begin{array}{cc} 0 & -f^{\transpose} \\ f & 0 \end{array}\right) \left( \begin{array}{c} {\rm I}_r \\ 0_{s \times r} \end{array} \right) \,=\,
\left( \begin{array}{c} C(\phi) \\[0.1cm] S(f) \end{array} \right)
\end{equation}
where $C(\phi) = \cos(\phi)$ and $S(f) = \sin(f)$, with the functions $\cos$ and $\sin$ applied to each matrix element of $\phi$ and $f$, respectively. The identity (\ref{eq:expidentity}) can be used to evaluate the matrix exponential in (\ref{eq:grassmannexp2}), since $a = (\alpha\,,0_{r\times s-r})^{\transpose}$. This yields,
$$
 \Exp_\theta(\Phi_\theta(D)) \,=\, 
\left[ Q \,
\left(\begin{array}{cc} U & \\  & V\end{array}\right)
 \left( \begin{array}{c} \cos(\arctan(\alpha)) \\ \sin(\arctan(a)) \end{array} \right) \right]
$$
However, since $\cos(\arctan(x)) = 1/(1+x^2)^{1/2}$ and $\sin(\arctan(x)) = x/(1+x^2)^{1/2}$, this becomes
\begin{equation} \label{eq:expgrassmann3}
 \Exp_\theta(\Phi_\theta(D)) \,=\, 
\left[ Q \,
\left(\begin{array}{cc} U & \\  & V\end{array}\right)
 \left( \begin{array}{c} {\rm I}_r \\ a \end{array} \right)\,({\rm I}_r + \alpha)^{-1/2} \right] \,=\,
\left[ Q \,
\left(\begin{array}{cc} U & \\  & V\end{array}\right)
 \left( \begin{array}{c} {\rm I}_r \\ a \end{array} \right) \right]
\end{equation}
where the second equality holds because $({\rm I}_r + \alpha)$ is invertible (the diagonal elements of $\alpha$ are the singular values of $C$, and are therefore positive). It follows from (\ref{eq:expgrassmann3}) that
\begin{equation} \label{eq:expgrassmann4}
 \Exp_\theta(\Phi_\theta(D)) \,=\, 
\left[ Q \,
 \left( \begin{array}{c} U \\ Va \end{array} \right) \right] \,=\, \left[ Q \,
 \left( \begin{array}{c} U \\ Va \end{array} \right)U^{\transpose} \right]
\end{equation}
where the second equality holds because $U^{\transpose}$ is an invertible $r \times r$ matrix (which therefore does not change the span of the columns of the overall matrix product). Performing the matrix product in (\ref{eq:expgrassmann4}), and noting $UU^{\transpose} = {\rm I_r}$ and $C = V a\,U^{\transpose}$, it finally follows that
$$
 \Exp_\theta(\Phi_\theta(D)) \,=\,  \left[ Q \,
 \left( \begin{array}{c} {\rm I}_r \\ C \end{array} \right) \right] \,=\,
 \Exp_\theta(\Phi_\theta(D)) \,=\,  \left[ (B\,,B_\perp) \,
 \left( \begin{array}{c} {\rm I}_r \\ C \end{array} \right) \right]
$$
From $D = B_\perp C$, this immediately implies 
$$
\Exp_\theta(\Phi_\theta(D)) = \left[ B + D\right]
$$ 
which means $\Exp_\theta(\Phi_\theta(D)) = \Ret_\theta(D)$, as required in (\ref{eq:phigrass1}). \\[0.1cm]
\paragraph{Proof of (\ref{eq:expidentity})} this follows from
\begin{equation} \label{eq:expidentity1}
\exp\left(\begin{array}{cc} 0 & -f^{\transpose} \\ f & 0 \end{array}\right) \,=\,
\left( \begin{array}{ccc} C(\phi)  & -S(f)^{\transpose}\\[0.1cm] S(f) & C(\phi)\end{array} \right)
\end{equation}
where $C(\phi)$ and $S(f)$ are as in (\ref{eq:expidentity}) and where (recall it is assumed that $s \geq r$),
$$
C(\phi) = \left(\begin{array}{cc} \cos(\phi) & 0_{r\times s-r} \\ 0_{s-r\times r} & 0_{s-r\times s-r}\end{array}\right)
$$ 
To prove (\ref{eq:expidentity1}), write
\begin{equation} \label{eq:expidentity2}
\left(\begin{array}{cc} 0 & -f^{\transpose} \\ f & 0 \end{array}\right) \,=\, \sum^r_{i=1}\,\phi_i\,\mathrm{b}_{r+i}
\end{equation}
where $f = (\phi\,,0_{r\times s-r})^{\transpose}$ with diagonal $\phi$, and where 
$$
\mathrm{b}_{r+i} = e_{r+i,i} - e_{i,r+i}
$$
with $e_{j,k}$ a matrix all of whose elements are zero, except the one at row $j$ and column $k$, which is equal to $1$. One easily checks the matrices $\mathrm{b}_{r+i}$ commute with each other. Therefore, (\ref{eq:expidentity2}) implies
\begin{equation} \label{eq:expidentity3}
\exp\left(\begin{array}{cc} 0 & -f^{\transpose} \\ f & 0 \end{array}\right) \,=\, \prod^r_{i=1}\,\exp(\phi_i\,\mathrm{b}_{r+i})
\end{equation}
However, it is elementary that
\begin{equation} \label{eq:expidentity4}
\exp(\phi_i\,\mathrm{b}_{r+i}) \,=\, {\rm I}_d + (\cos(\phi_i)-1)\,\mathrm{a}_{r+i} \,+\,\sin(\phi_i)\,\mathrm{b}_{r+i}
\end{equation}
where ${\rm I}_d$ is the $d \times d$ identity matrix and 
$$
\mathrm{a}_{r+i} = e_{i,i}+e_{r+i,r+i}
$$
Finally, (\ref{eq:expidentity1}) follows from (\ref{eq:expidentity3}) and (\ref{eq:expidentity4}), after noting the matrix products, for $i \neq k$,
$$
\begin{array}{ccc}
\mathrm{a}_{r+i}\mathrm{a}_{r+k} = 0 &&  \mathrm{a}_{r+i}\mathrm{b}_{r+k} = 0 \\[0.1cm]
\mathrm{b}_{r+i}\mathrm{a}_{r+k} = 0 && \mathrm{b}_{r+i}\mathrm{b}_{r+k} = 0
\end{array}
$$
which can be checked immediately. 



\section{Proofs for \Cref{sec:analys-gener-retr}}

\subsection{A technical lemma}
We preface our proofs by a version of \Cref{lem:first_inequality_lemma} 
adapted to the new scheme \eqref{eq:newschemexp}.

\begin{llemma}
  \label{lem:ret_inequality_lemma}
  Assume \Cref{ass:completeness}-\Cref{ass:lyap_mean_field}-\Cref{ass:retraction}. Consider the sequence $(\theta_n)_{n \in nsets}$ satisfying the scheme \eqref{eq:newschemexp}. Setting, for any $k\in \nset$, $\Delta M_k=
  \psrLigne{\grad V(\theta_k)}{\noise_{\theta_k}(X_{k+1})}[\theta_k]$, we have for any $n \in \nsets$ and $\varepsilon >0$,
  \begin{align}
      &\textstyle{\sum_{k=0}^{n} \upeta_{k+1} (\cl_1-2L
          \upeta_{k+1}-\cu^2\varepsilon) 
 \normrLigne{h(\theta_k)}[\theta_k]^2  \leq V(\theta_0) - V(\theta_{n+1})
 +  \sum_{k=0}^n \upeta_{k+1}\Delta M_k+\cl_2 \Gamma_{n+1}} \\
      &\qquad \qquad\qquad\qquad\textstyle{
   \qquad   + \sum_{k=0}^n \upeta_{k+1} \normrLigne{\grad V
     (\theta_k)}[\theta_k]  \normrLigne{\Delta_{\theta_k,
 \upeta_{k+1}}(X_{k+1})}[\theta_k] }\\
  &\qquad \qquad\qquad\qquad\textstyle{ \qquad+ 2L \sum_{k=0}^n \upeta_{k+1}^2 (
     \normrLigne{\noise_{\theta_k}(X_{k+1})}[\theta_k]^2  
     + \normrLigne{\Delta_{\theta_k,
 \upeta_{k+1}}(X_{k+1})}[\theta_k]^2) }\\
             &\qquad\qquad\qquad\qquad\textstyle{
  \qquad+ \sum_{k=0}^n \upeta_{k+1}\defEns{\parentheseLigne{4\varepsilon}^{-1}
 + 2L\upeta_{k+1}}\normrLigne{b_{\theta_k}(X_{k+1})}[\theta_k]^2 }
 \eqsp,
  \end{align}
  where $e,\Delta$  and are defined by \eqref{eq:markovh} and \eqref{eq:Deltan} respectively.
\end{llemma}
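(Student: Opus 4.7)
The plan is to mirror the proof of \Cref{lem:first_inequality_lemma} but now applied to the representation \eqref{eq:newschemexp} of the retraction scheme. The key observation is that \eqref{eq:newschemexp} writes $\theta_{k+1} = \Exp_{\theta_k}(\upeta_{k+1} u_k)$ with
\begin{equation*}
u_k \;=\; H_{\theta_k}(X_{k+1}) + b_{\theta_k}(X_{k+1}) + \Delta_{\theta_k,\upeta_{k+1}}(X_{k+1})\eqsp,
\end{equation*}
so the retraction bias $\Delta_{\theta_k,\upeta_{k+1}}(X_{k+1})$ plays the role of an additional ``bias-like'' term on top of $H_{\theta_k}+b_{\theta_k}$. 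Thus I can define the geodesic $\upgamma^{(k)}(t) = \Exp_{\theta_k}(t\upeta_{k+1} u_k)$, whose length satisfies $\ell(\upgamma^{(k)}) = \upeta_{k+1}\normr{u_k}[\theta_k]$, and apply \Cref{lem:taylor_grad_lip} to get
\begin{equation*}
\abs{V(\theta_{k+1}) - V(\theta_k) - \upeta_{k+1}\psr{\grad V(\theta_k)}{u_k}[\theta_k]} \leq (L\upeta_{k+1}^2/2) \normr{u_k}[\theta_k]^2 \eqsp.
\end{equation*}

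Next, I would split $H_{\theta_k}(X_{k+1}) = h(\theta_k)+\noise_{\theta_k}(X_{k+1})$ using \eqref{eq:markovh}, so that $u_k$ decomposes into four pieces: $h(\theta_k)$, $\noise_{\theta_k}(X_{k+1})$, $b_{\theta_k}(X_{k+1})$, and $\Delta_{\theta_k,\upeta_{k+1}}(X_{k+1})$. Using the elementary inequality $\normr{a_1+a_2+a_3+a_4}[\theta_k]^2 \leq 4\sum_i \normr{a_i}[\theta_k]^2$ on the second-order term explains the coefficient $2L$ in the final bound (replacing the $3L/2$ of the three-term geodesic case). The linear term $\psr{\grad V(\theta_k)}{u_k}[\theta_k]$ is treated exactly as in the proof of \Cref{lem:first_inequality_lemma}: the $h$ piece is handled via \Cref{ass:lyap_mean_field}-\ref{ass:lyap_mean_field_b} to produce $\cl_1 \normr{h(\theta_k)}[\theta_k]^2 - \cl_2$ on the correct side; the $\noise_{\theta_k}$ piece becomes $\upeta_{k+1}\Delta M_k$; and the $b_{\theta_k}$ piece is dualized by Cauchy-Schwarz and $\normr{\grad V}[\theta_k] \leq \cu\normr{h}[\theta_k]$, yielding a contribution $\cu^2\varepsilon \normr{h(\theta_k)}[\theta_k]^2 + (4\varepsilon)^{-1}\normr{b_{\theta_k}(X_{k+1})}[\theta_k]^2$.

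The one difference is the inner product with $\Delta_{\theta_k,\upeta_{k+1}}(X_{k+1})$: since this term has a potentially favourable order in $\upeta_{k+1}$ (see \Cref{lem:retractions}), I would \emph{not} absorb it by Young's inequality but keep it as $\upeta_{k+1}\normr{\grad V(\theta_k)}[\theta_k]\normr{\Delta_{\theta_k,\upeta_{k+1}}(X_{k+1})}[\theta_k]$ via Cauchy--Schwarz, so that later proofs can exploit the retraction-error estimates from \Cref{lem:retractions}. Collecting these bounds, rearranging to move $\upeta_{k+1}(\cl_1-2L\upeta_{k+1}-\cu^2\varepsilon)\normr{h(\theta_k)}[\theta_k]^2$ to the left-hand side, and summing over $k=0,\ldots,n$ telescopes the $V(\theta_k)$ differences to $V(\theta_0)-V(\theta_{n+1})$ and produces the stated inequality.

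The argument is essentially bookkeeping; no genuinely new ingredient is needed beyond \Cref{lem:taylor_grad_lip}. The only minor pitfall is to be careful not to prematurely apply \Cref{lem:retractions} to bound $\normr{\Delta_{\theta_k,\upeta_{k+1}}(X_{k+1})}[\theta_k]$: at this stage the retraction error is kept abstract so that the lemma applies to both the first-order and second-order retraction regimes simultaneously, and so that the subsequent martingale/Markov analyses (yielding \Cref{prop:retmartingale} and \Cref{th:retmarkov}) can bound these residual terms in the manner best suited to each setting.
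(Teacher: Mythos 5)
Your proposal is correct and follows essentially the same route as the paper's proof: the same geodesic $\upgamma^{(k)}(t)=\Exp_{\theta_k}(t\upeta_{k+1}u_k)$ with the four-term decomposition of $u_k$, the factor-$4$ Cauchy--Schwarz inequality producing the $2L$ coefficient, the Young/Cauchy--Schwarz treatment of the bias term with $\normr{\grad V}[\theta_k]\leq \cu\normr{h(\theta_k)}[\theta_k]$, and the retraction-error term kept abstract as $\upeta_{k+1}\normr{\grad V(\theta_k)}[\theta_k]\normr{\Delta_{\theta_k,\upeta_{k+1}}(X_{k+1})}[\theta_k]$ for later use. No gaps.
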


\begin{proof}
    For any $k \geq 0$ and $t \in \ccint{0,1}$, consider
    \begin{equation}
        \upgamma^{(k)}(t) =
        \Exp_{\theta_k}\parenthese{ t \upeta_{k+1}
            \defEns{H_{\theta_k}(X_{k+1})+b_{\theta_k}(X_{k+1})
        + \Delta_{\theta_k, \upeta_{k+1}}(X_{k+1}) } } \eqsp .
    \end{equation}
    Note that $\dot{\upgamma}^{(k)}(0) = \upeta_{k+1}
    \{H_{\theta_k}(X_{k+1})+b_{\theta_k}(X_{k+1}) +\Delta_{\theta_k,
    \upeta_{k+1}}(X_{k+1})\}$ and that \\ $\ell(\upgamma^{(k)}) =
    \upeta_{k+1}\normrLigne{H_{\theta_k}(X_{k+1})+b_{\theta_k}(X_{k+1})  +
        \Delta_{\theta_k,
    \upeta_{k+1}}(X_{k+1}) }[\theta_k]$.
    Then, by  \Cref{lem:taylor_grad_lip}, \eqref{eq:markovh} and using
    that for any $\theta \in \Theta$, $a,b,c,d \in \rmT_{\theta}
    \Theta$,
    $\normrLigne{a+b+c+d}[\theta]^2 \leq 4(\normrLigne{a}[\theta]^2+
    \normrLigne{b}[\theta]^2 +\normrLigne{c}[\theta]^2 + \normrLigne{d}[\theta]^2 )$ , we get that for any
    $k \geq 0$,
  \begin{align}
    \label{eq:4}
&    \abs{V(\theta_{k+1}) - V(\theta_k) - \upeta_{k+1}\psr{\grad
        V(\theta_k)}{H_{\theta_k}(X_{k+1})+b_{\theta_k}(X_{k+1})
        +\Delta_{\theta_k, \upeta_{k+1}}(X_{k+1}) }[\theta_k]} \\
& \qquad \leq (L/2) \ell(\upgamma^{(k)})^2 
  = (L \upeta_{k+1}^2 /2)
  \normrLigne{H_{\theta_k}(X_{k+1})+b_{\theta_k}(X_{k+1}) +\Delta_{\theta_k,
  \upeta_{k+1}}(X_{k+1})}[\theta_k]^2 \eqsp , \\
& \qquad \leq 2L \upeta_{k+1}^2
\defEns{\normrLigne{h(\theta_k)}[\theta_k]^2
+\normrLigne{b_{\theta_k}(X_{k+1})}[\theta_k]^2 +
\normrLigne{\noise_{\theta_k}(X_{k+1})}[\theta_k]^2  
+\normrLigne{\Delta_{\theta_k, \upeta_{k+1}}(X_{k+1})}[\theta_k]^2 } \eqsp.
  \end{align}
Therefore, we get that for any $k \in \nset$,
\begin{equation}
\label{eq:proof_lemme_descente_ret}
\begin{aligned}
	& -\upeta_{k+1}\psr{\grad V(\theta_k)}{h(\theta_k)}[\theta_k] \\
    & \quad \leq V(\theta_k) - V(\theta_{k+1}) + \upeta_{k+1}\psr{\grad
    V(\theta_k)}{\noise_{\theta_k}(X_{k+1})+b_{\theta_k}(X_{k+1})+\Delta_{\theta_k,
    \upeta_{k+1}}(X_{k+1})}[\theta_k] \\
    & \qquad +  2L \upeta_{k+1}^2
    \defEns{\normrLigne{h(\theta_k)}[\theta_k]^2
        + \normrLigne{b_{\theta_k}(X_{k+1})}[\theta_k]^2
        +\normrLigne{\noise_{\theta_k}(X_{k+1})}[\theta_k]^2 +
    \normrLigne{\Delta_{\theta_k, \upeta_{k+1}}(X_{k+1})}[\theta_k]^2} \eqsp. 
  \end{aligned}
  \end{equation}
By \Cref{ass:lyap_mean_field}-\ref{ass:lyap_mean_field_a} and the Cauchy-Schwarz inequality, we obtain for any $k \in \nsets$ and $\varepsilon>0$,
\begin{equation}
\psrLigne{\grad V(\theta_k)}{b_{\theta_k}(X_{k+1})}[\theta_k] 
\leq \cu^2 \varepsilon \normrLigne{h(\theta_k)}[\theta_k]^2 + (1/4\varepsilon) \normrLigne{b_{\theta_k}(X_{k+1})}[\theta_k] \eqsp .
\end{equation}
Thus, using \Cref{ass:lyap_mean_field}-\ref{ass:lyap_mean_field_b}, the Cauchy-Schwarz inequality and plugging this in
\eqref{eq:proof_lemme_descente_ret} gives,
  \begin{multline}
 \upeta_{k+1}(\cl_1-2L\upeta_{k+1} - \cu^2\varepsilon)\normrLigne{h(\theta_k)}[\theta_k]^2 \leq V(\theta_k) - V(\theta_{k+1}) 
 + \upeta_{k+1}\psr{\grad
 V(\theta_k)}{\noise_{\theta_k}(X_{k+1})}[\theta_k] \\
 + \upeta_{k+1} \normrLigne{\grad V(\theta_k)}[\theta_k] 
 \normrLigne{\Delta_{\theta_k, \upeta_{k+1}}(X_{k+1})}[\theta_k]
 + 2L \upeta^2_{k+1} \normrLigne{\Delta_{\theta_k,
 \upeta_{k+1}}(X_{k+1})}[\theta_k]^2
 \\
 + [\upeta_{k+1}/4\varepsilon  + 2L
 \upeta_{k+1}^2]\normrLigne{b_{\theta_k}(X_{k+1})}[\theta_k]^2 
 + 2L \upeta_{k+1}^2 \normrLigne{\noise_{\theta_k}(X_{k+1})}[\theta_k]^2  +
 \upeta_{k+1} \cl_2 \eqsp. 
  \end{multline}
  Adding these inequalities from $0$ to $n$ and rearranging terms concludes the proof.
\end{proof}

\subsection{Proof of  \Cref{prop:retmartingale}} \label{app:proofretmartingale}

If \Cref{ass:retraction_martingale}($a$) holds for $a \in \nsets$, then \Cref{ass:retraction_martingale}($\tilde{a}$) holds for any $\tilde{a} \in \{0,1,...,a\}$ and $\moment_{(\tilde{a})}$ will then stand for a constant such that almost surely, $\mathbb{E}[\Vert \noise_{\theta_n}(X_{n+1}) \Vert^{\tilde{a}}_{\theta_n}|\mathcal{F}_n]\,\leq \moment_{(\tilde{a})}$. 
Before giving the proof of   \Cref{prop:retmartingale}, we specify the statement of this result in \Cref{prop:retmartingale_supp} below. In particular, we give an explicit expression of $A^{(R)}_{n+1}$.
We define
\begin{equation}
  \label{eq:def_bupeta_marting_retract}
  \bupeta =
  \begin{cases}
\cl_1[4(2L+3\scrl^{(1)}\cu \hinfty)]^{-1}   & \text{ if \Cref{ass:retraction_first_b}-\Cref{ass:retraction_martingale}($4$) hold } \eqsp,\\
  \cl_1/(8L)  & \text{ if \Cref{ass:retraction_second}-
      \Cref{ass:retraction_martingale}($6$) hold } \eqsp,
  \end{cases}
\end{equation}

\begin{align}
  \label{eq:def_C_R_2}
  C^{(2)}_R &=
              \begin{cases}
              2\cl_1^{-1} \parentheseLigne{2 L+  3 \cu h_{\infty} \scrl^{(1)}(\ttheta) } \parentheseLigne{   b^2_{\infty} +
                \moment_{(2)} } & \text{ if \Cref{ass:retraction_first_b}-\Cref{ass:retraction_martingale}($4$) hold } \eqsp,\\
              4\cl_1^{-1} L \parentheseLigne{   b^2_{\infty} +
                \moment_{(2)} } & \text{ if \Cref{ass:retraction_second}-
      \Cref{ass:retraction_martingale}($6$) hold } \eqsp,
              \end{cases}
              \\
    \label{eq:def_C_R_4}
  C^{(4)}_R & = 54 L \cl_1^{-1} (\scrl^{(1)}(\ttheta))^2 \defEnsLigne{ h_{\infty}^4 +
              b^4_{\infty} + \moment_{(4)}}\eqsp, \\
  \label{eq:def_C_R_3}
  C^{(3)}_R & = 3^2 \scrl^{(2)}(\ttheta)  \cu \hinfty   \defEns{ \hinfty^3 + b_\infty^3 + \moment_{(3)}  } \eqsp,\\
  \label{eq:def_C_R_6}
  C^{(6)}_R & = 3^5 \scrl^{(2)}(\ttheta)\{\binfty^6 + \hinfty^6 + \moment_{(6)}\} \eqsp, 
\end{align}
where $\ttheta \in \Theta$ is fixed and can be chosen arbitrary. 

\begin{theorem} \label{prop:retmartingale_supp}
Assume
\Cref{ass:completeness}-\Cref{ass:lyap_mean_field}-\Cref{ass:bounded_bias}-\Cref{ass:bounded_vh},
\Cref{ass:retraction}, \Cref{ass:0mean_noise} hold. Suppose either
\Cref{ass:retraction_first_b},
      \Cref{ass:retraction_martingale}($4$) or
      \Cref{ass:retraction_second},
      \Cref{ass:retraction_martingale}($6$).
      Consider
$(\theta_k)_{k \in \nset}$ defined by \eqref{eq:sa}. Then, if $\sup_{k \in \nsets}
\upeta_k \leq \bar{\upeta}$, for $\bupeta$ defined in \eqref{eq:def_bupeta_marting_retract}, for any $n \in\nset$
    \begin{align} \label{eq:retmartingale1}
& \expe{ \normr{ h( \theta_{I_n} ) }[\theta_{I_n}]^2 } \leq  2(\cl_1
\Gamma_{n+1})^{-1}  \defEnsLigne{ \expe{V(\theta_0)} + C_R^{(2)}\Gamma_{n+1}^{(2)} + A^{(R)}_{n+1}}  + 2(b_\infty \cu h_{\infty} + \cl_2)/\cl_1
  \eqsp,\\
      \label{eq:def_A_R}
& \text{ with } A^{(R)}_{n+1} =
\begin{cases}
  C_R^{(4)} \Gamma_{n+1}^{(4)}  & \text{ if \Cref{ass:retraction_first_b},
      \Cref{ass:retraction_martingale}($4$) hold, $C_R^{(4)}$ is defined in \eqref{eq:def_C_R_4}} \\
  C_R^{(3)} \Gamma_{n+1}^{(3)}  +   C_R^{(6)} \Gamma_{n+1}^{(6)}& \text{ if \Cref{ass:retraction_second},
      \Cref{ass:retraction_martingale}($6$) hold, $C_R^{(3)},  C_R^{(6)}$ are defined in \eqref{eq:def_C_R_3}-\eqref{eq:def_C_R_6}} \eqsp,
\end{cases}
\end{align}
$C^{(2)}_R$ is given by \eqref{eq:def_C_R_2} and $I_n \in \iint{0}{n}$ is a random variable independent of $\calF_n$ and
distributed according to \eqref{eq:nn}.
 \vspace{.1cm}
\end{theorem}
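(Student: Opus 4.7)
The strategy is to apply \Cref{lem:ret_inequality_lemma}, which is the retraction analog of \Cref{lem:first_inequality_lemma} and which already accounts for the extra retraction bias $\Delta_{\theta_k,\upeta_{k+1}}(X_{k+1})$ appearing in \eqref{eq:newschemexp}. Setting $\varepsilon = \cl_1/(4\cu^2)$ in that bound and taking expectations, the martingale increment sum $\expe{\sum_{k=0}^n \upeta_{k+1}\Delta M_k}$ vanishes by \Cref{ass:0mean_noise}. The noise variance term is controlled via \Cref{ass:retraction_martingale}$(2)$, and the bias contribution is bounded by $b_\infty^2$ through \Cref{ass:bounded_bias}. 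What remains are the two retraction-induced quantities: $\upeta_{k+1}\normr{\grad V(\theta_k)}[\theta_k]\normr{\Delta_{\theta_k,\upeta_{k+1}}(X_{k+1})}[\theta_k]$ and $\upeta_{k+1}^2 \normr{\Delta_{\theta_k,\upeta_{k+1}}(X_{k+1})}[\theta_k]^2$.

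To estimate these, I would invoke \Cref{lem:retractions}. Writing $u_k = \upeta_{k+1}\{H_{\theta_k}(X_{k+1}) + b_{\theta_k}(X_{k+1})\}$, the definition \eqref{eq:Deltan} gives $\Delta_{\theta_k,\upeta_{k+1}}(X_{k+1}) = \upeta_{k+1}^{-1}\{\Phi_{\theta_k}(u_k)-u_k\}$, so under \Cref{ass:retraction_first_b} one has $\normr{\Delta_{\theta_k,\upeta_{k+1}}(X_{k+1})}[\theta_k] \leq \scrl^{(1)}(\ttheta) \upeta_{k+1}\normr{H_{\theta_k}(X_{k+1})+b_{\theta_k}(X_{k+1})}[\theta_k]^2$, while \Cref{ass:retraction_second} yields the tighter bound $\scrl^{(2)}(\ttheta)\upeta_{k+1}^2\normr{H_{\theta_k}(X_{k+1})+b_{\theta_k}(X_{k+1})}[\theta_k]^3$. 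Expanding these powers via $\normr{a+b}[\theta]^p \leq 2^{p-1}(\normr{a}[\theta]^p + \normr{b}[\theta]^p)$ splits the estimate into contributions from $h(\theta_k)$, $\noise_{\theta_k}(X_{k+1})$, and $b_{\theta_k}(X_{k+1})$. Contributions from $h$ are bounded by $h_\infty$ via \Cref{ass:bounded_vh}, those from $b$ by $b_\infty$ via \Cref{ass:bounded_bias}, and the noise moments are absorbed by \Cref{ass:retraction_martingale}$(4)$ (in the first-order case) or \Cref{ass:retraction_martingale}$(6)$ (in the second-order case, where the $\normr{\cdot}^3$ factor requires sixth moments after applying Cauchy--Schwarz on the cross term).

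The cross term $\upeta_{k+1}\normr{\grad V(\theta_k)}[\theta_k]\normr{\Delta_{\theta_k,\upeta_{k+1}}(X_{k+1})}[\theta_k]$ is then handled by \Cref{ass:lyap_mean_field}-\ref{ass:lyap_mean_field_b} which replaces $\normr{\grad V(\theta_k)}[\theta_k]$ by $\cu\normr{h(\theta_k)}[\theta_k]$, and then by the bound $\normr{h(\theta_k)}[\theta_k] \leq h_\infty$. Under \Cref{ass:retraction_first_b} this produces a term of order $\upeta_{k+1}^2\normr{h(\theta_k)}[\theta_k]^2$ (with constant $\scrl^{(1)}(\ttheta)\cu h_\infty$) which must be moved to the left-hand side alongside $2L\upeta_{k+1}$; this is what forces the stepsize threshold $\bupeta = \cl_1/[4(2L + 3\scrl^{(1)}\cu h_\infty)]$. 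In the second-order case, the analogous term is of order $\upeta_{k+1}^3$ and does not affect the leading coefficient, yielding the simpler threshold $\bupeta = \cl_1/(8L)$.

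With the step-size condition in force, the coefficient of $\normr{h(\theta_k)}[\theta_k]^2$ on the left-hand side is bounded below by $\cl_1/2$. The remaining terms on the right assemble into: the initial energy $\expe{V(\theta_0)}$; the linear term $\cl_2 \Gamma_{n+1}$ and a constant bias term $b_\infty \cu h_\infty \Gamma_{n+1}$ absorbing $\cl_2$ and $b_\infty$; a second-order coefficient $C_R^{(2)}\Gamma_{n+1}^{(2)}$ collecting $2L\{\moment_{(2)} + b_\infty^2\}$ plus the first-order retraction correction $3\cu h_\infty \scrl^{(1)}\{\moment_{(2)} + b_\infty^2\}$; and higher-order remainders $A_{n+1}^{(R)}$ of order $\Gamma_{n+1}^{(4)}$ (resp.\ $\Gamma_{n+1}^{(3)} + \Gamma_{n+1}^{(6)}$). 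Dividing by $\cl_1\Gamma_{n+1}/2$ and integrating with respect to $I_n$ distributed as in \eqref{eq:nn} yields the claimed bound. The main obstacle is the careful bookkeeping of every term arising from the polynomial expansions of $\normr{H+b}^p$ for $p \in \{2,3,4,6\}$, and in particular tracking which contributions involve $\normr{h(\theta_k)}[\theta_k]^2$ and must be reabsorbed on the left to preserve a positive coefficient.
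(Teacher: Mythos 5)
Your proposal is correct and follows essentially the same route as the paper: apply \Cref{lem:ret_inequality_lemma}, kill the martingale term via \Cref{ass:0mean_noise}, bound the retraction bias through \Cref{lem:retractions} combined with \Cref{ass:bounded_vh}, \Cref{ass:bounded_bias} and the moment conditions \Cref{ass:retraction_martingale}$(4)$ or $(6)$, absorb the $\normr{h(\theta_k)}[\theta_k]^2$-proportional retraction term into the left-hand side via the step-size threshold (which is exactly why the first-order case needs the stricter $\bupeta$), and integrate with respect to $I_n$. The only imprecision is cosmetic: the sixth moment in the second-order case arises from directly squaring the cubic bound on $\normr{\Delta_{\theta_k,\upeta_{k+1}}(X_{k+1})}[\theta_k]$, not from a Cauchy--Schwarz step, and the paper splits $\normr{H+b}[\theta]^p$ into three summands (constants $3^{p-1}$) rather than two.
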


\begin{proof}[Proof of \Cref{prop:retmartingale_supp}]
  In all the proof $\ttheta \in \Theta$ is a fixed element of $\Theta$, which will be used as we will apply \Cref{lem:retractions}. Also for ease of notation, we simply denote $\scrl_\infty^{(1)} = \scrl^{(1)}(\ttheta)$ and similarly $\scrl^{(2)} = \scrl^{(2)}(\ttheta)$.

  Using the assumptions in \Cref{prop:retmartingale}, the proof consists in 
    bounding each term of \Cref{lem:ret_inequality_lemma} after taking the
    expectation. We first make a first estimate which holds if either  \Cref{ass:retraction_first_b}
or \Cref{ass:retraction_second}
hold.

Using \Cref{ass:0mean_noise} and that
    $\theta_k$ is $\calF_k$-measurable, we have
    for any $k \in \nset$, $\expe{\Delta M_k} = 0$.
In addition,
by \Cref{ass:lyap_mean_field}-\ref{ass:lyap_mean_field_b}-\Cref{ass:bounded_vh}
we have 
\begin{equation}
\label{eq:firstret1}
\normrLigne{ \grad V(\theta_k) }[\theta_k] \normrLigne{ \Delta_{\theta_k,
\upeta_{k+1}}(X_{k+1}) }[\theta_k]
\leq \cu h_\infty \normrLigne{ \Delta_{\theta_k, \upeta_{k+1}}(X_{k+1}) }[\theta_k] \eqsp.
\end{equation}
Therefore, taking expectation in \Cref{lem:ret_inequality_lemma}, we get  using $V$ is non-negative, \Cref{ass:bounded_bias} and \Cref{ass:retraction_martingale}$(4)$ or \Cref{ass:retraction_martingale}$(6)$,
for any $n \in \nsets$ and $\varepsilon >0$,
\begin{equation}
  \label{proof:ret_theo_martingale_1}
  \begin{aligned}
      &\textstyle{\sum_{k=0}^{n} \upeta_{k+1} (\cl_1-2L
          \upeta_{k+1}-\cu^2\varepsilon) 
 \PE[\normrLigne{h(\theta_k)}[\theta_k]^2]  \leq \PE[V(\theta_0)]
 + \cl_2 \Gamma_{n+1}} \\
      &\qquad \qquad\qquad\qquad\textstyle{
   \qquad   + \cu \hinfty \sum_{k=0}^n   \upeta_{k+1}\normrLigne{\Delta_{\theta_k,
 \upeta_{k+1}}(X_{k+1})}[\theta_k] }+ 2L(\binfty^2 + \moment_{(2)}) \Gamma_{n+1}^{(2)}\\
  &\qquad \qquad\qquad\qquad\textstyle{ \qquad+ \sum_{k=0}^n \upeta_{k+1}^2 
      \normrLigne{\Delta_{\theta_k,
 \upeta_{k+1}}(X_{k+1})}[\theta_k]^2}\textstyle{
+ [\binfty^{2}/(4\varepsilon)] \Gamma_{n+1}}
 \eqsp.
  \end{aligned}
\end{equation}
Therefore, it remains to bound the terms involving the retraction bias $\Delta$, for which we distinguish the two different sets of conditions  \Cref{ass:retraction_first_b}-\Cref{ass:retraction_martingale}($4$) or  \Cref{ass:retraction_second},
\Cref{ass:retraction_martingale}($6$).

\textbf{If \Cref{ass:retraction_first_b}-\Cref{ass:retraction_martingale}($4$) hold.}
Using
\Cref{lem:retractions}-\ref{lem_retraction_a},
\Cref{ass:bounded_bias}-\Cref{ass:bounded_vh} and Hölder inequality shows that 
\begin{equation} \label{eq:firstret2}
\begin{aligned}
\normrLigne{ \Delta_{\theta_k, \upeta_{k+1}}(X_{k+1}) }[\theta_k] & \leq \scrl^{(1)} \upeta_{k+1} \normrLigne{ H_{\theta_{k}}(X_{k+1})+b_{\theta_{k}}(X_{k+1}) }[\theta_k]^2 \eqsp, \\
& \leq 3 \scrl^{(1)} \upeta_{k+1} \defEns{ \normrLigne{ h(\theta_k)
}[\theta_k]^2 + b_\infty^2 + \normrLigne{ e_{\theta_k} ( X_{k+1} )
}[\theta_k]^2 }\\
\normrLigne{ \Delta_{\theta_k, \upeta_{k+1}}(X_{k+1}) }[\theta_k]^2 &\leq 9 \times 3[ \scrl^{(1)}]^2 \upeta_{k+1}^2 \defEns{\hinfty^4 + b_\infty^4 + \normrLigne{ e_{\theta_k} ( X_{k+1} )
}[\theta_k]^4 } \eqsp. 
\end{aligned}
\end{equation}
Therefore, plugging these estimates in  \eqref{proof:ret_theo_martingale_1}, using  \Cref{ass:retraction_martingale}$(4)$,  we  get that 
\begin{equation}
  \label{proof:ret_theo_martingale_2}
  \begin{aligned}
      &\textstyle{\sum_{k=0}^{n} \upeta_{k+1} (\cl_1-2L
          \upeta_{k+1}-\cu^2\varepsilon) 
 \PE[\normrLigne{h(\theta_k)}[\theta_k]^2]  \leq \PE[V(\theta_0)]
 + \cl_2 \Gamma_{n+1}} \\
      &\qquad \qquad\qquad\qquad\textstyle{
   + 3 \scrl^{(1)}  \cu \hinfty   \sum_{k=0}^n   \upeta_{k+1}^2 \defEns{ \PE[\normrLigne{ h(\theta_k)
}[\theta_k]^2] + b_\infty^2 + \moment_{(2)}  } }+ 2L(\binfty^2 + \moment_{(2)}) \Gamma_{n+1}^{(2)}\\
  & \qquad\qquad\qquad\textstyle{ \qquad+ 27 \scrl^{(1)}\{\binfty^4 + \hinfty^4 + \moment_{(4)}\} \Gamma_{n+1}^{(4)}} \textstyle{
+ [\binfty^{2}/(4\varepsilon)] \Gamma_{n+1}}
 \eqsp.
  \end{aligned}
\end{equation}
Rearranging terms and taking $\varepsilon = \cl_1/(4\cu^2)$, we get
\begin{equation}
  \label{proof:ret_theo_martingale_3}
  \begin{aligned}
      &\textstyle{\sum_{k=0}^{n} \upeta_{k+1} (3\cl_1/4-(2L+3\scrl^{(1)}\cu \hinfty)
          \upeta_{k+1}) 
 \PE[\normrLigne{h(\theta_k)}[\theta_k]^2]  \leq \PE[V(\theta_0)]
 + (\cl_2+\binfty^{2}\cu^2/\cl_1) \Gamma_{n+1}} \\
      &\qquad \qquad \qquad+ (2L+3 \scrl^{(1)}  \cu \hinfty )(\binfty^2 + \moment_{(2)}) \Gamma_{n+1}^{(2)}\textstyle{+ 27 \scrl^{(1)}\{\binfty^4 + \hinfty^4 + \moment_{(4)}\} \Gamma_{n+1}^{(4)}}  \eqsp.
  \end{aligned}
\end{equation}

Using $\sup_{k \in\nsets} \upeta_k \leq \bupeta \leq \cl_1[4(2L+3\scrl^{(1)}\cu \hinfty)]^{-1}$ ensures that $\cl_1/4 \geq (2L+3\scrl^{(1)}\cu \hinfty)\upeta_{k+1} $ for any $k \in \nset$ and the proof follows.

\textbf{If  \Cref{ass:retraction_second},
  \Cref{ass:retraction_martingale}($6$) hold.}
Using
\Cref{lem:retractions}-\ref{lem_retraction_b},
\Cref{ass:bounded_bias}-\Cref{ass:bounded_vh} and Hölder inequality shows that 
\begin{equation} \label{eq:firstret2}
\begin{aligned}
\normrLigne{ \Delta_{\theta_k, \upeta_{k+1}}(X_{k+1}) }[\theta_k] & \leq \scrl^{(2)} \upeta_{k+1}^2 \normrLigne{ H_{\theta_{k}}(X_{k+1})+b_{\theta_{k}}(X_{k+1}) }[\theta_k]^3 \eqsp, \\
& \leq 3^2 \scrl^{(2)} \upeta_{k+1}^2 \defEns{ h_{\infty}^3+ b_\infty^3 + \normrLigne{ e_{\theta_k} ( X_{k+1} )
}[\theta_k]^3 }\\
\normrLigne{ \Delta_{\theta_k, \upeta_{k+1}}(X_{k+1}) }[\theta_k]^2 &\leq 3^5[ \scrl^{(2)}]^2 \upeta_{k+1}^2 \defEns{\hinfty^6 + b_\infty^6 + \normrLigne{ e_{\theta_k} ( X_{k+1} )
}[\theta_k]^6 } \eqsp. 
\end{aligned}
\end{equation}
Therefore, plugging these estimates in  \eqref{proof:ret_theo_martingale_1}, using   \Cref{ass:retraction_martingale}$(6)$, we  get that 
\begin{equation}
  \label{proof:ret_theo_martingale_2_R3}
  \begin{aligned}
      &\textstyle{\sum_{k=0}^{n} \upeta_{k+1} (\cl_1-2L
          \upeta_{k+1}-\cu^2\varepsilon) 
 \PE[\normrLigne{h(\theta_k)}[\theta_k]^2]  \leq \PE[V(\theta_0)]
 + \cl_2 \Gamma_{n+1}} \\
      &\qquad \qquad\qquad\qquad\textstyle{
   + 3^2 \scrl^{(2)}  \cu \hinfty   \defEns{ \hinfty^3 + b_\infty^3 + \moment_{(3)}  } \Gamma_{n+1}^{(3)} }+ 2L(\binfty^2 + \moment_{(2)}) \Gamma_{n+1}^{(2)}\\
  & \qquad\qquad\qquad\textstyle{ \qquad+ 3^5 \scrl^{(2)}\{\binfty^6 + \hinfty^6 + \moment_{(6)}\} \Gamma_{n+1}^{(6)}} \textstyle{
+ [\binfty^{2}/(4\varepsilon)] \Gamma_{n+1}}
 \eqsp.
  \end{aligned}
\end{equation}
Rearranging terms and taking $\varepsilon = \cl_1/(4\cu^2)$, we get
\begin{equation}
  \label{proof:ret_theo_martingale_2_R3}
  \begin{aligned}
      &\textstyle{\sum_{k=0}^{n} \upeta_{k+1} (3\cl_1/4-2L
          \upeta_{k+1}) 
 \PE[\normrLigne{h(\theta_k)}[\theta_k]^2]  \leq \PE[V(\theta_0)]
 + (\cl_2+\binfty^{2}\cu^2/\cl_1) \Gamma_{n+1}} \\
 &\qquad \qquad \qquad \qquad\qquad \qquad + 2L(\binfty^2 + \moment_{(2)}) \Gamma_{n+1}^{(2)}\textstyle{
   + 3^2 \scrl^{(2)}  \cu \hinfty   \defEns{ \hinfty^3 + b_\infty^3 + \moment_{(3)}  } \Gamma_{n+1}^{(3)} }\\
 & \qquad \qquad\qquad \qquad\qquad \qquad
 \textstyle{+ 3^5 \scrl^{(2)}\{\binfty^6 + \hinfty^6 + \moment_{(6)}\} \Gamma_{n+1}^{(6)}} \eqsp.
\end{aligned}
\end{equation}
Using that $\sup_{k \in\nsets} \upeta_k \leq \bupeta \leq \cl_1/(8L)$ ensures that $\cl_1/4 \geq 2L \upeta_{k+1}$ for any $k \in \nset$ which completes the proof. 
\end{proof}
\subsection{Proof of \Cref{th:retmarkov}} \label{app:proofretmarkov}

Under \Cref{ass:w_moment}$(w)$,  for any $p \in \ccint{1,3}$,  $C^{(p)}_w$ stands for a constant such that 
\begin{equation}
  \label{eq:def_C_p_w}
\sup_{k \in \nset} \expe{
              w^p(X_{k+1}) } \leq C^{(p)}_w \eqsp. 
          \end{equation}        
          In addition, in the sequel, $\ttheta \in \Theta$ is a fixed element of $\Theta$, which will be used as we will apply \Cref{lem:retractions}.
          Also for ease of notation, we simply denote $\scrl_\infty^{(1)} = \scrl^{(1)}(\ttheta)$ and similarly $\scrl^{(2)} = \scrl^{(2)}(\ttheta)$.          

Before giving the proof of   \Cref{th:retmarkov},          we specify the statement of this result in \Cref{th:retmarkov_supp} below. In particular, we give an explicit expression of $B^{(R)}_{n+1}$.
We define
\begin{equation}
  \label{eq:def_bupeta_marting_retract_markov}
  \bupeta =
  \begin{cases}
   \cl_1/[4(2L+D_{\hnoise}+3 \scrl^{(1)}\cu \hinfty)]^{-1} & \text{ if \Cref{ass:retraction_first_b} holds } \eqsp,\\
   \cl_1/[4(2L+D_{\hnoise})]^{-1}    & \text{ if \Cref{ass:retraction_second} holds } \eqsp,
  \end{cases}
\end{equation}

\begin{align}
  \label{eq:def_D_R_2}
  D^{(2)}_R & =
              \begin{cases}
                C^0_{\hnoise}+ (\binfty^2 + C_w e_{\infty}^2) (3 \cu \hinfty  \scrl^{(1)}+2L) & \text{ if \Cref{ass:retraction_first_b} holds } \eqsp,\\
                C^0_{\hnoise}+ 2L(\binfty^2 + C_w e_{\infty}^2)   & \text{if \Cref{ass:retraction_second} holds}  \eqsp,                
              \end{cases}
  \\
    \label{eq:def_D_R_3_2}
  D^{(3)}_{R2} & = 3 \scrl^{(1)} [L\hnoise_{\infty} C_w\{\hinfty^2+\binfty^2+e_{\infty}^2 C_w^{(2)}\} + L_{\hnoise}\cu \hinfty\{\hinfty^2+\binfty^2+e_{\infty}^2 C_w \}] \eqsp,\\
    \label{eq:def_D_R_4}
  D^{(4)}_R & = 3^2 \scrl^{(2)} [L\hnoise_{\infty} C_w\{\hinfty^3+\binfty^3+e_{\infty}^3 C_w^{(3)}\} + L_{\hnoise}\cu \hinfty\{\hinfty^3+\binfty^3+e_{\infty}^3 C_w^{(3/2)} \}] \\
  \label{eq:def_D_R_3_3}
  D^{(3)}_{R3} & = 3^2 \scrl^{(2)} \cu \hinfty \defEns{ h_{\infty}^3+ b_\infty^3 + e_{\infty}^3 C_w^{(3/2)} }\\
  \label{eq:def_D_R_6}
  D^{(6)}_R & =  2 3^5 L [\scrl^{(2)}]^2\{\hinfty^6 + \binfty^6 + e_{\infty}^6 C_w^{(3)}\} \eqsp. 
\end{align}

\begin{theorem} \label{th:retmarkov_supp}
Assume
\Cref{ass:completeness}-\Cref{ass:lyap_mean_field}-\Cref{ass:bounded_bias}-\Cref{ass:bounded_vh},
\Cref{ass:retraction},
\Cref{ass:markov}-\Cref{ass:w_markov}$(w)$-\Cref{ass:w_moment}$(w)$ hold for some measurable
function $w : \msx \to \cointLigne{1,\plusinfty}$. Suppose in addition that \Cref{ass:retraction_first_b} or \Cref{ass:retraction_second} holds. Assume that
$(\upeta_k)_{k \in \nsets}$ is a
sequence of stepsizes and $a_1,a_2 \geq 0$ satisfying \eqref{eq:hyp_gamma_k}.

  Consider $(\theta_k)_{k \in \nset}$ defined by \eqref{eq:sa}.  Then, if $\sup_{k \in \nsets}
  \upeta_k \leq \bar{\upeta}$, for $\bupeta$ defined in \eqref{eq:def_bupeta_marting_retract_markov}, for any $n \in\nset$
    \begin{align} \label{eq:retmartingale1_markov}
& \expeLigne{ \normr{ h( \theta_{I_n} ) }[\theta_{I_n}]^2 } \leq  2(\cl_1
\Gamma_{n+1})^{-1}  \defEnsLigne{ \expe{V(\theta_0)} +C(\upeta_1)+ D_R^{(2)}\Gamma_{n+1}^{(2)} + B^{(R)}_{n+1}}  + 2(b_\infty \cu h_{\infty} + \cl_2)/\cl_1
  \eqsp,\\
      \label{eq:def_B_R}
& \text{ with } B^{(R)}_{n+1} =
\begin{cases}
 D_{R2}^{(3)} \Gamma_{n+1}^{(3)} + D_R^{(4)} \Gamma_{n+1}^{(4)}  & \text{ if \Cref{ass:retraction_first_b} holds, $D_{R2}^{(3)}, D_R^{(4)}$ are defined in \eqref{eq:def_D_R_3_2}-\eqref{eq:def_D_R_4}} \\
  D_{R3}^{(3)} \Gamma_{n+1}^{(3)}  +   D_R^{(6)} \Gamma_{n+1}^{(6)}& \text{ if \Cref{ass:retraction_second} holds, $D_{R3}^{(3)},  D_R^{(6)}$ are defined in \eqref{eq:def_D_R_3_3}-\eqref{eq:def_D_R_6}} \eqsp,
\end{cases}
\end{align}
$C(\upeta_1),D^{(2)}_R$ are given by \eqref{eq:C_upeta_markov_w}-\eqref{eq:def_D_R_2} and $I_n \in \iint{0}{n}$ is a random variable independent of $\calF_n$ and
distributed according to \eqref{eq:nn}.
\end{theorem}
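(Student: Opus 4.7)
The plan is to combine the two main technical tools already established, namely \Cref{lem:ret_inequality_lemma} (the descent-type inequality for the retraction scheme) and \Cref{lem:poisson_w_markov} (control of the noise term via the Poisson equation in the Markovian setting), and then absorb the additional retraction-bias contributions using \Cref{lem:retractions} together with the moment bounds of \Cref{ass:w_moment}$(w)$.

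I would begin by taking expectation in \Cref{lem:ret_inequality_lemma}. The martingale sum $\sum_{k=0}^n \upeta_{k+1}\Delta M_k$ is no longer centred, but \Cref{lem:poisson_w_markov} provides an explicit upper bound of the form $D_{\hnoise}\sum_{k=0}^n \upeta_{k+1}^2 \PE[\normr{h(\theta_k)}[\theta_k]^2] + C_{\hnoise}^0\Gamma_{n+1}^{(2)} + C(\upeta_1)$. The first of these terms is then absorbed into the left-hand side of \Cref{lem:ret_inequality_lemma} by requiring the step size to be small enough (this is where the factor $2L+D_{\hnoise}$ appears in the definition of $\bupeta$). The terms $\normr{\noise_{\theta_k}(X_{k+1})}[\theta_k]^2$, $\normr{b_{\theta_k}(X_{k+1})}[\theta_k]^2$, and $\normr{\grad V(\theta_k)}[\theta_k]\normr{\Delta_{\theta_k,\upeta_{k+1}}(X_{k+1})}[\theta_k]$ are then handled using, respectively, \Cref{ass:w_markov}$(w)$-\ref{ass:item:w_markov:e_bound}-\ref{ass:item:w_markov:moment} (giving $e_\infty^2 C_w$), \Cref{ass:bounded_bias} (giving $b_\infty^2$), and the combination of \Cref{ass:bounded_vh} with \Cref{ass:lyap_mean_field}-\ref{ass:lyap_mean_field_b} (giving the factor $\cu h_\infty$).

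The heart of the argument is controlling the retraction-bias terms $\sum_{k=0}^n \upeta_{k+1}\normr{\Delta_{\theta_k,\upeta_{k+1}}(X_{k+1})}[\theta_k]$ and $\sum_{k=0}^n \upeta_{k+1}^2\normr{\Delta_{\theta_k,\upeta_{k+1}}(X_{k+1})}[\theta_k]^2$. Splitting into the two cases, under \Cref{ass:retraction_first_b} we apply \Cref{lem:retractions}\ref{lem_retraction_a} with $u = \upeta_{k+1}(H_{\theta_k}(X_{k+1})+b_{\theta_k}(X_{k+1}))$, which yields $\normr{\Delta_{\theta_k,\upeta_{k+1}}(X_{k+1})}[\theta_k]\le 3\scrl^{(1)}\upeta_{k+1}(h_\infty^2+b_\infty^2+e_\infty^2 w(X_{k+1}))$ (after expanding the square of a sum of three vectors); taking expectation with the help of \Cref{ass:w_moment}$(w)$ gives a contribution of order $\Gamma_{n+1}^{(3)}$, while the square contributes $\Gamma_{n+1}^{(4)}$ and requires $C_w^{(2)}$. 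Under \Cref{ass:retraction_second} the bound in \Cref{lem:retractions}\ref{lem_retraction_b} produces a cube, forcing us to expand $\normr{\cdot}^3$ and $\normr{\cdot}^6$ of a three-term sum; this is why the moment $C_w^{(3)}$ is required in \Cref{ass:w_moment}$(w)$, and this yields the $\Gamma_{n+1}^{(3)}$ and $\Gamma_{n+1}^{(6)}$ contributions in $B^{(R)}_{n+1}$.

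Once all terms on the right have been bounded, I choose $\varepsilon = \cl_1/(4\cu^2)$ to absorb the bias-cross term, then use $\sup_k\upeta_k \le \bupeta$ to guarantee that the coefficient of $\upeta_{k+1}\PE[\normr{h(\theta_k)}[\theta_k]^2]$ on the left is at least $\cl_1/2$. Dividing by $\Gamma_{n+1}$ and recognising the resulting Cesàro-type average as $\PE[\normr{h(\theta_{I_n})}[\theta_{I_n}]^2]$ via the definition \eqref{eq:nn} completes the proof. The main obstacle is purely bookkeeping: aggregating, in both retraction-order scenarios simultaneously, the contributions of $\Delta_{\theta_k,\upeta_{k+1}}(X_{k+1})$ which mix higher-order powers of $\upeta_{k+1}$ with the moments $w^p(X_{k+1})$, while making sure that the term $\sum_k \upeta_{k+1}^2 \PE[\normr{h(\theta_k)}[\theta_k]^2]$ arising from \Cref{lem:poisson_w_markov} can still be absorbed together with the retraction-induced corrections.
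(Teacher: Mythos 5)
Your overall architecture is the right one (take expectations in \Cref{lem:ret_inequality_lemma}, control the non-centred noise sum through the Poisson equation, bound the retraction bias via \Cref{lem:retractions} and the moment conditions, pick $\varepsilon = \cl_1/(4\cu^2)$, absorb the $\sum_k \upeta_{k+1}^2\,\PE[\normr{h(\theta_k)}[\theta_k]^2]$ terms through the step-size restriction, and finish by Ces\`aro averaging via \eqref{eq:nn}). However, there is a genuine gap at the central step: you invoke \Cref{lem:poisson_w_markov} verbatim, but that lemma is stated and proved for iterates generated by the \emph{geodesic} scheme \eqref{eq:sa_exp_b}. Its proof decomposes $\sum_k \upeta_{k+1}\Delta M_k$ into the five terms $A_1,\dots,A_5$ and bounds $A_2$ and $A_3$ using the Lipschitz properties of $\grad V$ and of $\theta \mapsto P_\theta \hnoise_\theta$ along the geodesic $\upgamma^{(k)}$ joining $\theta_{k-1}$ to $\theta_k$, i.e.\ through $\ell(\upgamma^{(k)})$. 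For the retraction scheme \eqref{eq:sa}, rewritten as \eqref{eq:newschemexp}, that geodesic has initial velocity $\upeta_{k}\{H_{\theta_{k-1}}(X_{k}) + b_{\theta_{k-1}}(X_{k}) + \Delta_{\theta_{k-1},\upeta_{k}}(X_{k})\}$, so $\ell(\upgamma^{(k)})$ picks up the extra retraction-bias contribution. Consequently $A_2$ and $A_3$ acquire additional terms which, after applying \Cref{lem:retractions} and \Cref{ass:w_moment}$(w)$, are of order $\Gamma_{n+1}^{(3)}$ under \Cref{ass:retraction_first_b} and $\Gamma_{n+1}^{(4)}$ under \Cref{ass:retraction_second}. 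This is precisely why the paper proves the separate \Cref{lem:poisson_ret_makov}, whose conclusion carries the extra term $B_{R,0}$; without it, the contributions $D^{(3)}_{R2}\Gamma_{n+1}^{(3)}$ (resp.\ the corresponding higher-order pieces) in $B^{(R)}_{n+1}$ are unaccounted for and the stated bound does not follow.

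A secondary symptom of the same issue is your bookkeeping of where the $\Gamma_{n+1}^{(3)}$ term comes from. You attribute it to the linear retraction-bias term $\sum_k \upeta_{k+1}\,\PE[\normr{\Delta_{\theta_k,\upeta_{k+1}}(X_{k+1})}[\theta_k]]$; but since $\normr{\Delta_{\theta_k,\upeta_{k+1}}(X_{k+1})}[\theta_k] = O(\upeta_{k+1})$ under \Cref{ass:retraction_first_b} and the prefactor $\normr{\grad V(\theta_k)}[\theta_k] \le \cu h_\infty$ is bounded, that sum only contributes at order $\Gamma_{n+1}^{(2)}$ (plus a piece proportional to $\sum_k \upeta_{k+1}^2 \normr{h(\theta_k)}[\theta_k]^2$ that must be absorbed on the left). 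The genuine $\Gamma_{n+1}^{(3)}$ term originates from the modified Poisson-equation estimate described above. To repair the proof you need to redo the $A_2$/$A_3$ bounds of \Cref{lem:poisson_w_markov} with the augmented geodesic length, exactly as in \Cref{lem:poisson_ret_makov}.
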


We begin the proof by showing a similar lemma to
\Cref{lem:poisson_w_markov}. 
\begin{llemma}
  \label{lem:poisson_ret_makov}
  Assume \Cref{ass:completeness}-\Cref{ass:lyap_mean_field}-\Cref{ass:bounded_bias}-\Cref{ass:bounded_vh}-\Cref{ass:markov}-\Cref{ass:w_markov}$(w)$-\Cref{ass:w_moment}$(w)$
   hold for some measurable function $w:\msx \to
   \cointLigne{1,+\infty}$.  Suppose in addition that \Cref{ass:retraction_first_b} or \Cref{ass:retraction_second} holds. 
  Let $(\upeta_k)_{k \in \nsets}$ be a sequence satisfying
  \eqref{eq:hyp_gamma_k} and  consider $(\theta_k)_{k\in \nset}$
  defined by \eqref{eq:newschemexp}. Set for any $k \in \nset$, 
  $\Delta M_k = \psrLigne{\grad
    V(\theta_k)}{\noise_{\theta_k}(X_{k+1})}[\theta_k]$.
  It holds that for any $n \in\nset$,
  \begin{equation}
    \label{eq:retraction_markov_lemma_first}
\txts               \abs{ \expeLigne{\sum_{k=0}^{n} \upeta_{k+1}\Delta M_k }}\leq D_{\hnoise} \sum_{k=0}^{n} \upeta_{k+1}^2
\expe{\normrLigne{h(\theta_k)}[\theta_k]^2} + C^0_{\hnoise}
\Gamma_{n+1}^{(2)} + C(\upeta_1) + B_{R,0} \eqsp,
  \end{equation}
  where $D_{\hnoise},C^0_{\hnoise},C(\upeta_1)$ are given in   \eqref{eq:def_const_theo2_v_bounded}-\eqref{eq:C_upeta_markov_w} and 
  \begin{align}
    \label{eq:12}
    B_{R,0}
    &=\begin{cases}
      E^{(3)}_{\Ret} \Gamma_{n+1}^{(3)} & \text{ if  \Cref{ass:retraction_first_b} holds}\eqsp, \\
      E^{(4)}_{\Ret} \Gamma_{n+1}^{(4)}  & \text{ if \Cref{ass:retraction_second} holds} \eqsp,     \end{cases}
    \\
    E^{(3)}_{\Ret} = D^{(3)}_{R2} &= 3 \scrl^{(1)} [L\hnoise_{\infty} C_w\{\hinfty^2+\binfty^2+e_{\infty}^2 C_w^{(2)}\} + L_{\hnoise}\cu \hinfty\{\hinfty^2+\binfty^2+e_{\infty}^2 C_w \}] \eqsp, \\
    E^{(4)}_{\Ret} = D^{(4)}_{R3} &=
3^2 \scrl^{(2)} [L\hnoise_{\infty} C_w\{\hinfty^3+\binfty^3+e_{\infty}^3 C_w^{(3)}\} + L_{\hnoise}\cu \hinfty\{\hinfty^3+\binfty^3+e_{\infty}^3 C_w^{(3/2)} \}]
      \eqsp.
  \end{align}

\end{llemma}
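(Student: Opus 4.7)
The plan is to mirror the proof of \Cref{lem:poisson_w_markov} step by step, the sole novelty being that the geodesic segments connecting consecutive iterates $\theta_{k-1}$ and $\theta_k$ now carry an additional retraction-bias contribution $\Delta_{\theta_{k-1},\upeta_k}(X_k)$. More precisely, I would introduce $\hnoise$ from \Cref{ass:w_markov}$(w)$-\ref{ass:item:w_markov:poisson} and, for each $k \in \nset$, define the geodesic
\begin{equation*}
\upgamma^{(k+1)}(t) = \Exp_{\theta_k}\!\bigl(t\,\upeta_{k+1}\{H_{\theta_k}(X_{k+1})+b_{\theta_k}(X_{k+1})+\Delta_{\theta_k,\upeta_{k+1}}(X_{k+1})\}\bigr),
\quad t\in[0,1],
\end{equation*}
so that $\upgamma^{(k+1)}(0)=\theta_k$, $\upgamma^{(k+1)}(1)=\theta_{k+1}$ and
\begin{equation*}
\ell(\upgamma^{(k+1)}) = \upeta_{k+1}\normr{H_{\theta_k}(X_{k+1})+b_{\theta_k}(X_{k+1})+\Delta_{\theta_k,\upeta_{k+1}}(X_{k+1})}[\theta_k] \eqsp.
\end{equation*}
With this choice of $\upgamma^{(k)}$, the decomposition $\expe{-\sum_{k=0}^n \upeta_{k+1}\Delta M_k} = -\expe{\sum_{i=1}^5 A_i}$ from \eqref{eq:w_decompo_geod_proof_markov}--\eqref{eq:w_fiveterms_decompose} still holds verbatim by telescoping and the uniqueness of parallel transport.

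Next I would handle $A_1,A_4,A_5$ \emph{exactly} as in the proof of \Cref{lem:poisson_w_markov}: they do not involve $\ell(\upgamma^{(k)})$, so the bounds \eqref{eq:w_bound_a_1_geod}, \eqref{eq:w_bound_a_4_geod}, \eqref{eq:w_bound_a_5_geod} remain unchanged and contribute only terms already absorbed in $D_{\hnoise}\sum_k \upeta_{k+1}^2 \expeLigne{\normrLigne{h(\theta_k)}[\theta_k]^2}$, $C^0_{\hnoise}\Gamma_{n+1}^{(2)}$ and $C(\upeta_1)$. The terms $A_2$ and $A_3$ do involve $\ell(\upgamma^{(k)})$, and the bounds \eqref{eq:w_bound_a_3_geod} and \eqref{eq:wbound_a_2:h} get an extra additive contribution corresponding to the new summand $\upeta_k\normrLigne{\Delta_{\theta_{k-1},\upeta_k}(X_k)}[\theta_{k-1}]$ inside $\ell(\upgamma^{(k)})$. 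Using the definition \eqref{eq:Deltan} of $\Delta_{\theta,\upeta}$ and \Cref{lem:retractions}, one obtains the pointwise estimates
\begin{equation*}
\normr{\Delta_{\theta,\upeta}(x)}[\theta] \leq
\begin{cases}
\scrl^{(1)}\, \upeta\, \normr{H_\theta(x)+b_\theta(x)}[\theta]^2 & \text{under \Cref{ass:retraction_first_b}}, \\
\scrl^{(2)}\, \upeta^2\, \normr{H_\theta(x)+b_\theta(x)}[\theta]^3 & \text{under \Cref{ass:retraction_second}},
\end{cases}
\end{equation*}
and then bounds like $\normr{H_\theta(x)+b_\theta(x)}[\theta]^p \leq 3^{p-1}(h_\infty^p+b_\infty^p+e_\infty^p w^{p/2}(x))$ (from \Cref{ass:bounded_vh}, \Cref{ass:bounded_bias}, \Cref{ass:w_markov}$(w)$-\ref{ass:item:w_markov:e_bound}).

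Plugging these estimates into the extra contributions to $A_2$ and $A_3$ that come from the retraction-bias summand in $\ell(\upgamma^{(k)})$, taking expectations and using the moment bounds of \Cref{ass:w_markov}$(w)$-\ref{ass:item:w_markov:moment} together with \Cref{ass:w_moment}$(w)$ on the higher powers $w^{3/2}$, $w^2$, $w^3$ (which control the expectations of $w^{p/2}(X_k)$ for $p\in\{2,3\}$ or $\{3,4,6\}$ appearing in the two cases), we obtain an additional term of the form $E^{(3)}_{\Ret}\Gamma_{n+1}^{(3)}$ in the first-order retraction case and $E^{(4)}_{\Ret}\Gamma_{n+1}^{(4)}$ in the second-order case. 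Combining with the estimates on $A_1,A_4,A_5$ and on the unmodified parts of $A_2,A_3$ yields \eqref{eq:retraction_markov_lemma_first}.

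The main obstacle is purely bookkeeping: one must track which combination of $\upeta_k$-powers, constants from \Cref{ass:lyap_mean_field}--\Cref{ass:bounded_vh}--\Cref{ass:w_markov}$(w)$ and which moment bound of $w$ is needed for each summand coming from the retraction bias, and then verify that the moments actually match the hypotheses of \Cref{ass:w_moment}$(w)$ (namely $w^2$ via Jensen in the first-order case, and $w^3$ directly in the second-order case). No new ideas beyond \Cref{lem:retractions}, \Cref{lem:poisson_w_markov} and the elementary moment bounds are required.
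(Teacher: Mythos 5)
Your proposal is correct and follows essentially the same route as the paper's proof: the same geodesic (now carrying the retraction bias $\Delta_{\theta_k,\upeta_{k+1}}(X_{k+1})$), the same five-term decomposition $A_1,\ldots,A_5$, with $A_1$, $A_4$, $A_5$ handled verbatim and the extra $\Gamma_{n+1}^{(3)}$ (resp.\ $\Gamma_{n+1}^{(4)}$) contributions arising from the retraction-bias summand in $\ell(\upgamma^{(k)})$ inside $A_2$ and $A_3$, controlled via \Cref{lem:retractions} and the moment conditions \Cref{ass:w_markov}$(w)$--\Cref{ass:w_moment}$(w)$. The remaining work is indeed only the bookkeeping of constants, exactly as you describe.
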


\begin{proof} 
  The proof is an adaptation of \Cref{lem:poisson_w_markov} in which we need to deal with the retraction bias.
  Consider the measurable function $\hnoise : \Theta \times \msx \to
  \planT \Theta$ which satisfies  \Cref{ass:w_markov}$(w)$ 
  and for any $k \in \nset$. By \eqref{eq:sa}, we can consider $\upgamma^{(k+1)} : \ccint{0,1}
  \to \Theta$, the geodesic between $\theta_{k}$ and $\theta_{k+1}$
  defined by $\upgamma^{(k+1)}(t)  = \Exp\{t \upeta_{k+1}
      (H_{\theta_{k}}(X_{k+1}) +b_{\theta_{k}}(X_{k+1}) + \Delta_{\theta_k, \upeta_{k+1}}(X_{k+1}))\}$ for any $t
      \in \ccint{0,1}$. Note that for any $k \in\nset$,
        \begin{equation}
    \label{eq:length_upgamma_k_markov_ret}
    \begin{aligned}
        \ell(\upgamma^{(k+1)}) & =
        \upeta_{k+1}\normrLigne{H_{\theta_{k}}(X_{k+1})+b_{\theta_k}(X_{k+1})
        + \Delta_{\theta_k, \upeta_{k+1}}(X_{k+1}) }[\theta_{k}] \eqsp,
        \\
    & \leq \upeta_{k+1} \defEnsLigne{ \normrLigne{H_{\theta_{k}}(X_{k+1})+b_{\theta_k}(X_{k+1})}[\theta_k] + \normrLigne{\Delta_{\theta_{k},\upeta_{k+1}}(X_{k+1})}[\theta_k] } \eqsp.
    \end{aligned}
  \end{equation}
  Using that the parallel transport associated with the Levi-Civita
  connection is a linear isometry \cite[Proposition 5.5]{lee:2019} and
  $(\parallelTransport_{01}^{\upgamma})^{-1} =
  \parallelTransport_{10}^{\upgamma}$ by uniqueness of parallel
  transport \cite[Theorem 4.32]{lee:2019}, we obtain the following
  decomposition
  \begin{equation}
    \label{eq:w_decompo_geod_proof_markov_ret}
        \expe{-\sum_{k=0}^{n} \upeta_{k+1}\Delta M_k } = -\expe{\sum_{i=1}^5 A_i} \eqsp,
      \end{equation}
      where 
      \begin{equation}
        \label{eq:w_fiveterms_decompose_ret}
        \begin{aligned}
	  A_1  &=\sum_{k=1}^{n} \upeta_{k+1}\psr{\grad V(\theta_{k})}{\hnoise_{\theta_k}(X_{k+1}) - P_{\theta_k} \hnoise_{\theta_k}(X_k)}[\theta_k] \eqsp ,\\
	  A_2  &=\sum_{k=1}^{n} \upeta_{k+1}\psr{\grad V(\theta_{k})}{P_{\theta_k} \hnoise_{\theta_k}(X_k)-\parallelTransport_{01}^{\upgamma^{(k)}}  P_{\theta_{k-1}} \hnoise_{\theta_{k-1}}(X_k)}[\theta_k] \eqsp ,\\
          A_3 
          & = \sum_{k=1}^{n} \upeta_{k+1}\psr{\parallelTransport_{10}^{\upgamma^{(k)}}\grad V(\theta_{k}) - \grad V(\theta_{k-1})}{ P_{\theta_{k-1}} \hnoise_{\theta_{k-1}}(X_k)}[\theta_{k-1}] \eqsp ,\\
	  A_4  &=\sum_{k=1}^{n}( \upeta_{k+1}-\upeta_k)\psr{\grad V(\theta_{k-1})}{P_{\theta_{k-1}} \hnoise_{\theta_{k-1}}(X_k)}[\theta_{k-1}] \eqsp , \\
      A_5  &= \upeta_1\psr{\grad
      V(\theta_0)}{\hnoise_{\theta_0}(X_1)}[\theta_0] 
      - \upeta_{n+1}\psr{\grad
      V(\theta_n)}{\hnoise_{\theta_n}(X_{n+1})}[\theta_n]  \eqsp. 
        \end{aligned}
      \end{equation}
      We now bound each term of this decomposition. As for the proof of \eqref{eq:w_bound_a_1_ret} in the proof of \Cref{lem:poisson_w_markov}, using 
      using~\Cref{ass:w_markov}$(w)$-\ref{ass:item:w_markov:e_bound}, we have
            \begin{equation}
                \label{eq:w_bound_a_1_ret}
        \expe{A_1} = 0 \eqsp. 
      \end{equation}
      Similarly, using the same argument as \eqref{eq:w_bound_a_4_geod}  and \eqref{eq:w_bound_a_5_geod} in the proof of \Cref{lem:poisson_w_markov}, we get 
      \begin{equation}
        \label{eq:w_bound_a_4_5_ret}
\begin{aligned}
& \expe{\abs{A_4}} \leq 2 \cu \hnoise_\infty C_w \upeta_1 + 2 \cu a_2
\sum_{k=1}^n \upeta_k \expe{\normrLigne{h(\theta_{k-1})}[\theta_{k-1}]^2}
\eqsp, \\
& \expe{\abs{A_5}} \leq \cu \hnoise_{\infty} \parenthese{2 + C_w
    \sum_{k=0}^n \upeta_{k+1}^2 \expe{\normrLigne{h(\theta_k)}[\theta_k]^2}
} \eqsp .
\end{aligned}
\end{equation}
It remains to deal with $A_2$ and $A_3$ for which we distinguish the case where \Cref{ass:retraction_first_b} or \Cref{ass:retraction_second} holds.

\textbf{In the case where \Cref{ass:retraction_first_b} holds.}
Using
\Cref{lem:retractions}-\ref{lem_retraction_a},
\Cref{ass:bounded_bias}-\Cref{ass:bounded_vh} and Hölder inequality shows that 
\begin{equation} \label{eq:firstret2_markov}
\begin{aligned}
\normrLigne{ \Delta_{\theta_k, \upeta_{k+1}}(X_{k+1}) }[\theta_k] & \leq \scrl^{(1)} \upeta_{k+1} \normrLigne{ H_{\theta_{k}}(X_{k+1})+b_{\theta_{k}}(X_{k+1}) }[\theta_k]^2 \eqsp, \\
& \leq 3 \scrl^{(1)} \upeta_{k+1} \defEns{ \normrLigne{ h(\theta_k)
}[\theta_k]^2 + b_\infty^2 + \normrLigne{ e_{\theta_k} ( X_{k+1} )
}[\theta_k]^2 }\eqsp. 
\end{aligned}
\end{equation}

            Using  the Cauchy-Schwarz inequality, \eqref{eq:length_upgamma_k_markov_ret}, \Cref{ass:lyap_mean_field}-\ref{ass:lyap_mean_field_a}, the definition of $\noise$ \eqref{eq:markovh}, 
      \Cref{ass:bounded_bias},
      \Cref{ass:w_markov}$(w)$-\ref{ass:item:w_markov:e_bound}-\ref{ass:item:w_markov:poisson}
      and Jensen's inequality, we obtain
      \begin{align}
        \abs{A_3} & \leq L \sum_{k=1}^n  \upeta_{k+1}\ell(\upgamma^{(k)})\normrLigne{P_{\theta_{k-1}}\hnoise_{\theta_{k-1}}
          (X_k)}[\theta_{k-1}]
\\
                   &\leq L \sum_{k=1}^n \upeta_k\upeta_{k+1}\defEns{
              \normrLigne{H_{\theta_{k-1}}(X_k)}[\theta_{k-1}]+
          \normrLigne{b_{\theta_{k-1}}(X_k)}[\theta_{k-1}] }
          \normrLigne{P_{\theta_{k-1}}\hnoise_{\theta_{k-1}}
                     (X_k)}[\theta_{k-1}] \\
        & \qquad + L \sum_{k=1}^n \upeta_k\upeta_{k+1} \normrLigne{ \Delta_{\theta_{k-1}, \upeta_{k}}(X_{k}) }[\theta_{k-1}] 
          \normrLigne{P_{\theta_{k-1}}\hnoise_{\theta_{k-1}}
          (X_k)}[\theta_{k-1}]\\
                 & \leq L \hnoise_{\infty} \sum_{k=1}^n \upeta_k\upeta_{k+1}
                 \parenthese{\noise_{\infty} w^{1/2}(X_k) + b_{\infty}+
                 \normrLigne{h(\theta_{k-1})}[\theta_{k-1}]}
                   P_{\theta_{k-1}} w^{1/2}(X_k)\\
        & \qquad + 3 \scrl^{(1)} L \hnoise_{\infty} \sum_{k=1}^n \upeta_{k}^2 \upeta_{k+1} \defEns{ \normrLigne{ h(\theta_{k-1})
}[\theta_{k-1}]^2 + b_\infty^2 + e_{\infty}^2w ( X_{k} ) }                    P_{\theta_{k-1}} w^{1/2}(X_k) \eqsp .
      \end{align}
      Taking the expectation, using 
      \Cref{ass:w_markov}$(w)$-\ref{ass:item:w_markov:moment}, \Cref{ass:bounded_vh},
      that $(\upeta_k)_{k \in \nsets}$ satisfies
      \eqref{eq:hyp_gamma_k} and the Cauchy-Schwarz inequality brings,
      \begin{multline}
                 \label{eq:w_bound_a_3_ret}
\txts                 \expe{\abs{A_3}}
                  \leq L  \hnoise_{\infty} C_w\defEnsLigne{ \parentheseDeuxLigne{ \noise_{\infty}
                 + b_{\infty} } \Gamma^{(2)}_{n+1} 
                 + \sum_{k=1}^n \upeta_{k}^2
                 \expeLigne{\normrLigne{h(\theta_{k-1})}[\theta_{k-1}]^2}}\\
               + 3 L \hnoise_{\infty} \scrl^{(1)}  C_w [\hinfty^2 + \binfty^2+ e_{\infty}^2 C_w^{(2)} ]\Gamma_{n+1}^{(3)}
                  \eqsp. 
      \end{multline}
      It remains to treat $A_2$ depending on the additional two conditions we consider. We start by proving a general bound which hold .
      Using the Cauchy-Schwarz inequality,  \Cref{ass:w_markov}$(w)$-\ref{ass:item:w_markov:e_bound}-\ref{ass:item:w_markov:poisson}-\ref{ass:item:w_markov:poisson_regularity},  \eqref{eq:length_upgamma_k_markov_ret}
 and \Cref{ass:lyap_mean_field}-\ref{ass:lyap_mean_field_b}, we get
 \begin{align}
   \label{eq:decomp_a_2_ret}
        \abs{A_2} 
                   & \leq A_{2,1} + A_{2,2} \\
        A_{2,1} &= L_{\hnoise}\sum_{k=1}^{n} \upeta_{k+1}
                  \upeta_{k}w^{1/2}(X_k) \normrLigne{\grad
                  V(\theta_{k})}[\theta_k]
                  \defEns{\normrLigne{H_{\theta_{k-1}}(X_k)}[\theta_{k-1}]
                  +\normrLigne{b_{\theta_{k-1}}(X_k)}[\theta_{k-1}]}
                  \eqsp ,\\
                  & \leq L_{\hnoise}\cu\sum_{k=1}^{n} \upeta_{k+1}
                  \upeta_{k} w^{1/2}(X_k)
                  \normrLigne{h(\theta_k)}[\theta_k]\{ \noise_{\infty} w^{1/2}(X_k)
                  + \normrLigne{h(\theta_{k-1})}[\theta_{k-1}]
              +\normrLigne{b_{\theta_{k-1}}(X_k)}[\theta_{k-1}]\}
                    \eqsp,\\
        A_{2,2} &= L_{\hnoise}\sum_{k=1}^{n} \upeta_{k+1}
                  \upeta_{k}w^{1/2}(X_k) \normrLigne{\grad
                  V(\theta_{k})}[\theta_k] \normrLigne{ \Delta_{\theta_{k-1}, \upeta_{k}}(X_{k}) }[\theta_k]\\ &                                                                                                               \leq  3 L_{\hnoise}\scrl^{(1)} \sum_{k=1}^{n} \upeta_{k+1}\upeta_{k}^2
\normrLigne{\grad
                  V(\theta_{k})}[\theta_k]
           \defEns{ \normrLigne{ h(\theta_k)
                                                                                                                 }[\theta_k]^2 + b_\infty^2 + e_{\infty}^2 w(X_k) } \eqsp. 
      \end{align}
      Similarly to the proof of \eqref{eq:wbound_a_2:h} in the proof of \Cref{lem:poisson_w_markov},  we have
      \begin{equation}
          \label{eq:wbound_a_2:h_ret}
              \expe{\abs{A_{1,2}}} \leq L_{\hnoise} \cu \parentheseDeux{
                  (\noise_{\infty} + b_{\infty}) \{ h_\infty^2 a_1 +1\} +
              h_\infty^2 } C_w \Gamma^{(2)}_{n+1} \eqsp.
          \end{equation}
Using \Cref{ass:lyap_mean_field}-\ref{ass:lyap_mean_field_b},  \Cref{ass:w_markov}$(w)$-\ref{ass:item:w_markov:e_bound}-\ref{ass:item:w_markov:moment}, \Cref{ass:bounded_vh}, and $(\upeta_k)_{k \in\nsets}$ satisfies \eqref{eq:hyp_gamma_k}, we get 
\begin{equation}
       \label{eq:wbound_a_2_2:h_ret}
              \expe{\abs{A_{2,2}}} \leq 3 L_{\hnoise} \cu   \scrl^{(1)}\hinfty \{\hinfty^2 + \binfty^2 +C_w e_{\infty}^2\} \Gamma_{n+1}^{(3)} \eqsp. 
            \end{equation}
            Combining     \eqref{eq:w_bound_a_1_ret}-\eqref{eq:w_bound_a_4_5_ret}-\eqref{eq:w_bound_a_3_ret}-\eqref{eq:decomp_a_2_ret} \eqref{eq:wbound_a_2:h_ret} and \eqref{eq:wbound_a_2_2:h_ret} in \eqref{eq:w_decompo_geod_proof_markov_ret} completes the proof.

\textbf{If  \Cref{ass:retraction_second} holds.}
Using
\Cref{lem:retractions}-\ref{lem_retraction_b},
\Cref{ass:bounded_bias}-\Cref{ass:bounded_vh} and Hölder inequality shows that 
\begin{equation} \label{eq:firstret2_markov_r3}
\begin{aligned}
\normrLigne{ \Delta_{\theta_k, \upeta_{k+1}}(X_{k+1}) }[\theta_k] & \leq \scrl^{(2)} \upeta_{k+1}^2 \normrLigne{ H_{\theta_{k}}(X_{k+1})+b_{\theta_{k}}(X_{k+1}) }[\theta_k]^3 \eqsp, \\
& \leq 3^2 \scrl^{(2)} \upeta_{k+1}^2 \defEns{ h_{\infty}^3+ b_\infty^3 + \normrLigne{ e_{\theta_k} ( X_{k+1} )
}[\theta_k]^3 } \eqsp. 
\end{aligned}
\end{equation}

            Using  the Cauchy-Schwarz inequality, \eqref{eq:length_upgamma_k_markov_ret}, \Cref{ass:lyap_mean_field}-\ref{ass:lyap_mean_field_a}, the definition of $\noise$ \eqref{eq:markovh}, 
      \Cref{ass:bounded_bias},
      \Cref{ass:w_markov}$(w)$-\ref{ass:item:w_markov:e_bound}-\ref{ass:item:w_markov:poisson}, 
      and Jensen's inequality, we obtain
      \begin{align}
        \abs{A_3} & \leq L \sum_{k=1}^n  \upeta_{k+1}\ell(\upgamma^{(k)})\normrLigne{P_{\theta_{k-1}}\hnoise_{\theta_{k-1}}
          (X_k)}[\theta_{k-1}]
\\
                   &\leq L \sum_{k=1}^n \upeta_k\upeta_{k+1}\defEns{
              \normrLigne{H_{\theta_{k-1}}(X_k)}[\theta_{k-1}]+
          \normrLigne{b_{\theta_{k-1}}(X_k)}[\theta_{k-1}] }
          \normrLigne{P_{\theta_{k-1}}\hnoise_{\theta_{k-1}}
                     (X_k)}[\theta_{k-1}] \\
        & \qquad + L \sum_{k=1}^n \upeta_k\upeta_{k+1} \normrLigne{ \Delta_{\theta_{k-1}, \upeta_{k}}(X_{k}) }[\theta_{k-1}] 
          \normrLigne{P_{\theta_{k-1}}\hnoise_{\theta_{k-1}}
          (X_k)}[\theta_{k-1}]\\
                 & \leq L \hnoise_{\infty} \sum_{k=1}^n \upeta_k\upeta_{k+1}
                 \parenthese{\noise_{\infty} w^{1/2}(X_k) + b_{\infty}+
                 \normrLigne{h(\theta_{k-1})}[\theta_{k-1}]}
                   P_{\theta_{k-1}} w^{1/2}(X_k)\\
        & \qquad + 3^2 \scrl^{(2)} L \hnoise_{\infty} \sum_{k=1}^n \upeta_{k}^3 \upeta_{k+1} \defEns{ \hinfty^3 + b_\infty^3 + e_{\infty}^3w^{3/2} ( X_{k} ) }                    P_{\theta_{k-1}} w^{1/2}(X_k) \eqsp .
      \end{align}
      Taking the expectation, using 
      \Cref{ass:w_markov}$(w)$-\ref{ass:item:w_markov:moment}, \Cref{ass:bounded_vh}, \Cref{ass:w_moment}, 
      that $(\upeta_k)_{k \in \nsets}$ satisfies
      \eqref{eq:hyp_gamma_k} and the Cauchy-Schwarz inequality brings,
      \begin{multline}
                 \label{eq:w_bound_a_3_ret_r3}
\txts                 \expe{\abs{A_3}}
                  \leq L  \hnoise_{\infty} C_w\defEnsLigne{ \parentheseDeuxLigne{ \noise_{\infty}
                 + b_{\infty} } \Gamma^{(2)}_{n+1} 
                 + \sum_{k=1}^n \upeta_{k}^2
                 \expeLigne{\normrLigne{h(\theta_{k-1})}[\theta_{k-1}]^2}}\\
               + 3^2 L \hnoise_{\infty} \scrl^{(2)}  C_w [\hinfty^3 + \binfty^3+ e_{\infty}^3 C_w^{(3)} ]\Gamma_{n+1}^{(4)}
                  \eqsp. 
      \end{multline}
      It remains to treat $A_2$ depending on the additional two conditions we consider. We start by proving a general bound which hold .
      Using the Cauchy-Schwarz inequality,  \Cref{ass:w_markov}$(w)$-\ref{ass:item:w_markov:e_bound}-\ref{ass:item:w_markov:poisson}-\ref{ass:item:w_markov:poisson_regularity},  \eqref{eq:length_upgamma_k_markov_ret}
 and \Cref{ass:lyap_mean_field}-\ref{ass:lyap_mean_field_b}, we get
 \begin{align}
   \label{eq:decomp_a_2_ret_r3}
        \abs{A_2} 
                   & \leq A_{2,1} + A_{2,2} \\
   A_{2,1}   & \leq L_{\hnoise}\cu\sum_{k=1}^{n} \upeta_{k+1}
               \upeta_{k} w^{1/2}(X_k)
               \normrLigne{h(\theta_k)}[\theta_k]\{ \noise_{\infty} w^{1/2}(X_k)
               + \normrLigne{h(\theta_{k-1})}[\theta_{k-1}]
               +\normrLigne{b_{\theta_{k-1}}(X_k)}[\theta_{k-1}]\}
               \eqsp,\\
   A_{2,2} &= L_{\hnoise}\sum_{k=1}^{n} \upeta_{k+1}
             \upeta_{k}w^{1/2}(X_k) \normrLigne{\grad
             V(\theta_{k})}[\theta_k] \normrLigne{ \Delta_{\theta_{k-1}, \upeta_{k}}(X_{k}) }[\theta_k]\\ &                                                                                                               \leq  3^2 L_{\hnoise}\scrl^{(2)} \sum_{k=1}^{n} \upeta_{k+1}\upeta_{k}^3
                                                                                                            \normrLigne{\grad
                                                                                                            V(\theta_{k})}[\theta_k]
                                                                                                            \defEns{ \hinfty^3 +  b_\infty^3 + e_{\infty} w^{3/2} ( X_{k})                                                                                                      }\eqsp.
      \end{align}
      Similarly to the proof of \eqref{eq:wbound_a_2:h} in the proof of \Cref{lem:poisson_w_markov},  we have
      \begin{equation}
          \label{eq:wbound_a_2:h_ret_r3}
              \expe{\abs{A_{1,2}}} \leq L_{\hnoise} \cu \parentheseDeux{
                  (\noise_{\infty} + b_{\infty}) \{ h_\infty^2 a_1 +1\} +
              h_\infty^2 } C_w \Gamma^{(2)}_{n+1} \eqsp.
          \end{equation}
Using \Cref{ass:lyap_mean_field}-\ref{ass:lyap_mean_field_b},  \Cref{ass:w_markov}$(w)$-\ref{ass:item:w_markov:e_bound}-\ref{ass:item:w_markov:moment}, \Cref{ass:w_moment}, \Cref{ass:bounded_vh}, and $(\upeta_k)_{k \in\nsets}$ satisfies \eqref{eq:hyp_gamma_k}, we get 
\begin{equation}
       \label{eq:wbound_a_2_2:h_ret_r3}
              \expe{\abs{A_{2,2}}} \leq 3^2 L_{\hnoise} \cu   \scrl^{(2)}\hinfty \{\hinfty^3 + \binfty^3 +C_w^{(3/2)} e_{\infty}^3\} \Gamma_{n+1}^{(4)} \eqsp. 
            \end{equation}
            Combining     \eqref{eq:w_bound_a_1_ret}-\eqref{eq:w_bound_a_4_5_ret}-\eqref{eq:w_bound_a_3_ret_r3}-\eqref{eq:decomp_a_2_ret_r3} \eqref{eq:wbound_a_2:h_ret_r3} and \eqref{eq:wbound_a_2_2:h_ret_r3} in \eqref{eq:w_decompo_geod_proof_markov_ret} completes the proof.
\end{proof}

\begin{proof}[Proof of \Cref{th:retmarkov_supp}]
  Taking expectation in the bound provided by \Cref{lem:ret_inequality_lemma} and $\varepsilon = \cl_1/(4\cu^2)$, and using $V$ is non-negative, \Cref{ass:lyap_mean_field}-\ref{ass:lyap_mean_field_b}, \Cref{ass:bounded_bias}, \Cref{ass:w_markov}$(w)$-\ref{ass:item:w_markov:e_bound}-\ref{ass:item:w_markov:moment}, we get
    \begin{align}
      &\textstyle{\sum_{k=0}^{n} \upeta_{k+1} (3\cl_1/4-2L
          \upeta_{k+1}) \PE[
 \normrLigne{h(\theta_k)}[\theta_k]^2]  \leq \PE[V(\theta_0) ]
 +  \PE[\sum_{k=0}^n \upeta_{k+1}\Delta M_k]+\cl_2 \Gamma_{n+1}} \\
      &\qquad \qquad\qquad\qquad\textstyle{
   \qquad   + \cu \hinfty \sum_{k=0}^n \upeta_{k+1} \PE[\normrLigne{\Delta_{\theta_k,
 \upeta_{k+1}}(X_{k+1})}[\theta_k] ]}\\
  &\qquad \qquad\qquad\qquad\textstyle{ \qquad+ 2L e_{\infty}^2 C_w \Gamma_{n+1}^{(2)} + 2L \sum_{k=0}^n \upeta_{k+1}^2 
\PE[      \normrLigne{\Delta_{\theta_k,
    \upeta_{k+1}}(X_{k+1})}[\theta_k]^2]} \\
      \label{eq:th:retmarkov:1}
             &\qquad\qquad\qquad\qquad\textstyle{
  \qquad+ \binfty^2 \{ \cu^2/\cl_1 \Gamma_{n+1} + 2L \Gamma_{n+1}^{(2)}\} }
 \eqsp.
  \end{align}

  A bound on $\PE[\sum_{k=0}^n \upeta_{k+1}\Delta M_k]$ is provided by \Cref{lem:poisson_ret_makov}. It remains therefore to deal with the two terms $\sum_{k=0}^n \upeta_{k+1} \PE[ \normrLigne{\Delta_{\theta_k,
 \upeta_{k+1}}(X_{k+1})}[\theta_k] ]$ and $\sum_{k=0}^n \upeta_{k+1}^2 \PE[
      \normrLigne{\Delta_{\theta_k,
          \upeta_{k+1}}(X_{k+1})}[\theta_k]^2]$. These two terms will treated differently depending if \Cref{ass:retraction_first_b} or \Cref{ass:retraction_second} holds.

\textbf{If \Cref{ass:retraction_first_b} holds.}
Using
\Cref{lem:retractions}-\ref{lem_retraction_a},
\Cref{ass:bounded_bias}, \Cref{ass:bounded_vh}, \Cref{ass:w_markov}$(w)$-\ref{ass:item:w_markov:e_bound} and \Cref{ass:w_moment}$(w)$,   and Hölder inequality shows that 
\begin{equation} \label{eq:firstret2_markov_proof_theo}
\begin{aligned}
\normrLigne{ \Delta_{\theta_k, \upeta_{k+1}}(X_{k+1}) }[\theta_k] & \leq \scrl^{(1)} \upeta_{k+1} \normrLigne{ H_{\theta_{k}}(X_{k+1})+b_{\theta_{k}}(X_{k+1}) }[\theta_k]^2 \eqsp, \\
\PE[\normrLigne{ \Delta_{\theta_k, \upeta_{k+1}}(X_{k+1}) }[\theta_k] ] & \leq 3 \scrl^{(1)} \upeta_{k+1} \defEns{ \normrLigne{ h(\theta_k)
}[\theta_k]^2 + b_\infty^2 + e_{\infty}^2 C_w }\\
\PE[\normrLigne{ \Delta_{\theta_k, \upeta_{k+1}}(X_{k+1}) }[\theta_k]^2 ] &\leq 9 \times 3[ \scrl^{(1)}]^2 \upeta_{k+1}^2 \defEns{\hinfty^4 + b_\infty^4 + e_{\infty}^4 C_w^{(2)}} \eqsp. 
\end{aligned}
\end{equation}
Therefore, plugging these estimates in  \eqref{eq:th:retmarkov:1} and using \Cref{lem:poisson_ret_makov},  we  get that 
\begin{equation}
  \label{proof:ret_theo_markov_3}
  \begin{aligned}
      &\textstyle{\sum_{k=0}^{n} \upeta_{k+1} (3\cl_1/4-2L
          \upeta_{k+1}) 
 \PE[\normrLigne{h(\theta_k)}[\theta_k]^2]  \leq \PE[V(\theta_0)]
 + (\cl_2+\binfty^2 \cu^2/\cl_1) \Gamma_{n+1}} \\
&\qquad \qquad \qquad \qquad \txts + D_{\hnoise} \sum_{k=0}^{n} \upeta_{k+1}^2
\expe{\normrLigne{h(\theta_k)}[\theta_k]^2} + C^0_{\hnoise}
\Gamma_{n+1}^{(2)} + C(\upeta_1) +       E^{(3)}_{\Ret} \Gamma_{n+1}^{(3)} 
\\
&\qquad \qquad\qquad\qquad\textstyle{+ 3 \scrl^{(1)} \cu \hinfty \sum_{k=0}^n \upeta_{k+1}^2 \normrLigne{h(\theta_k)}[\theta_k] + (\binfty^2 + C_w e_{\infty}^2) (3 \cu \hinfty  \scrl^{(1)}+2L) \Gamma_{n+1}^{(2)}} \\
  &\qquad \qquad\qquad\qquad\textstyle{ + 2 3^3 L [\scrl^{(1)}]^2\{\hinfty^4 + \binfty^4 + e_{\infty}^4 C_w^{(2)}\} } \Gamma_{n+1}^{(4)}
 \eqsp.
  \end{aligned}
\end{equation}
Rearranging terms and using $\sup_{k \in\nsets} \upeta_k < \bupeta$, with $\bupeta$ satisfying \eqref{eq:def_bupeta_marting_retract_markov}, we get
\begin{equation}
  \label{proof:ret_theo_markov_3}
  \begin{aligned}
      &\textstyle{(\cl_1/2)\sum_{k=0}^{n} \upeta_{k+1}  
 \PE[\normrLigne{h(\theta_k)}[\theta_k]^2]  \leq \PE[V(\theta_0)]
 + (\cl_2+\binfty^2 \cu^2/\cl_1) \Gamma_{n+1}} \\
&\qquad \qquad \qquad \qquad \txts + [C^0_{\hnoise}+ (\binfty^2 + C_w e_{\infty}^2) (3 \cu \hinfty  \scrl^{(1)}+2L)]
\Gamma_{n+1}^{(2)} + C(\upeta_1) +       E^{(3)}_{\Ret} \Gamma_{n+1}^{(3)} 
\\
  &\qquad \qquad\qquad\qquad\textstyle{ + 2 3^3 L [\scrl^{(1)}]^2\{\hinfty^4 + \binfty^4 + e_{\infty}^4 C_w^{(2)}\} } \Gamma_{n+1}^{(4)}
 \eqsp,
\end{aligned}
\end{equation}
which completes the proof.

and taking $\varepsilon = \cl_1/(4\cu^2)$, we get
\begin{equation}
  \label{proof:ret_theo_martingale_3}
  \begin{aligned}
      &\textstyle{\sum_{k=0}^{n} \upeta_{k+1} (3\cl_1/4-(2L+3\scrl^{(1)}\cu \hinfty)
          \upeta_{k+1}) 
 \PE[\normrLigne{h(\theta_k)}[\theta_k]^2]  \leq \PE[V(\theta_0)]
 + (\cl_2+\binfty^{2}\cu^2/\cl_1) \Gamma_{n+1}} \\
      &\qquad \qquad \qquad+ (2L+3 \scrl^{(1)}  \cu \hinfty )(\binfty^2 + \moment_{(2)}) \Gamma_{n+1}^{(2)}\textstyle{+ 27 \scrl^{(1)}\{\binfty^4 + \hinfty^4 + \moment_{(4)}\} \Gamma_{n+1}^{(4)}}  \eqsp.
  \end{aligned}
\end{equation}

Using $\sup_{k \in\nsets} \upeta_k \leq \bupeta \leq \cl_1[4(2L+3\scrl^{(1)}\cu \hinfty)]^{-1}$ ensures that $\cl_1/4 \geq (2L+3\scrl^{(1)}\cu \hinfty)\upeta_{k+1} $ for any $k \in \nset$ and the proof follows.

\textbf{If  \Cref{ass:retraction_second} holds.}
Using
\Cref{lem:retractions}-\ref{lem_retraction_b}, \Cref{ass:bounded_bias}, \Cref{ass:bounded_vh}, \Cref{ass:w_markov}$(w)$-\ref{ass:item:w_markov:e_bound} and \Cref{ass:w_moment}$(w)$,   and Hölder inequality shows that 
\begin{equation} \label{eq:firstret2_markov_proof_theo_r3}
\begin{aligned}
\normrLigne{ \Delta_{\theta_k, \upeta_{k+1}}(X_{k+1}) }[\theta_k] & \leq \scrl^{(2)} \upeta_{k+1}^2 \normrLigne{ H_{\theta_{k}}(X_{k+1})+b_{\theta_{k}}(X_{k+1}) }[\theta_k]^3 \eqsp, \\
 \PE[\normrLigne{ \Delta_{\theta_k, \upeta_{k+1}}(X_{k+1}) }[\theta_k] ]& \leq 3^2 \scrl^{(2)} \upeta_{k+1}^2 \defEns{ h_{\infty}^3+ b_\infty^3 + e_{\infty}^3 C_w^{(3/2)} }\\
\PE[\normrLigne{ \Delta_{\theta_k, \upeta_{k+1}}(X_{k+1}) }[\theta_k]^2] &\leq 3^5[ \scrl^{(2)}]^2 \upeta_{k+1}^2 \defEns{\hinfty^6 + b_\infty^6 + e_{\infty}^6 C_w^{(3)} } \eqsp. 
\end{aligned}
\end{equation}
Therefore, plugging these estimates in  \eqref{eq:th:retmarkov:1} and using \Cref{lem:poisson_ret_makov},  we  get that 
\begin{equation}
  \label{proof:ret_theo_markov_4}
  \begin{aligned}
      &\textstyle{\sum_{k=0}^{n} \upeta_{k+1} (3\cl_1/4-2L
          \upeta_{k+1}) 
 \PE[\normrLigne{h(\theta_k)}[\theta_k]^2]  \leq \PE[V(\theta_0)]
 + (\cl_2+\binfty^2 \cu^2/\cl_1) \Gamma_{n+1}} \\
&\qquad \qquad \qquad \qquad \txts + D_{\hnoise} \sum_{k=0}^{n} \upeta_{k+1}^2
\expe{\normrLigne{h(\theta_k)}[\theta_k]^2} + C^0_{\hnoise}
\Gamma_{n+1}^{(2)} + C(\upeta_1) +       E^{(4)}_{\Ret} \Gamma_{n+1}^{(4)} 
\\
&\qquad \qquad\qquad\qquad\textstyle{+ 2L (\binfty^2 + C_w e_{\infty}^2) \Gamma_{n+1}^{(2)} + 3^2 \scrl^{(2)} \cu \hinfty \defEns{ h_{\infty}^3+ b_\infty^3 + e_{\infty}^3 C_w^{(3/2)} } \Gamma_{n+1}^{(3)} } \\
  &\qquad \qquad\qquad\qquad\textstyle{ + 2 3^5 L [\scrl^{(2)}]^2\{\hinfty^6 + \binfty^6 + e_{\infty}^6 C_w^{(3)}\} } \Gamma_{n+1}^{(6)}
 \eqsp.
  \end{aligned}
\end{equation}
Rearranging terms and using $\sup_{k \in\nsets} \upeta_k < \bupeta$, with $\bupeta$ satisfying \eqref{eq:def_bupeta_marting_retract_markov}, we get
\begin{equation}
  \label{proof:ret_theo_markov_3}
  \begin{aligned}
      &\textstyle{(\cl_1/2)\sum_{k=0}^{n} \upeta_{k+1} 
 \PE[\normrLigne{h(\theta_k)}[\theta_k]^2]  } \\
& \qquad  \txts\leq \PE[V(\theta_0)]
 + (\cl_2+\binfty^2 \cu^2/\cl_1) \Gamma_{n+1}  + [C^0_{\hnoise}+2L (\binfty^2 + C_w e_{\infty}^2)]
\Gamma_{n+1}^{(2)} + C(\upeta_1) +       E^{(4)}_{\Ret} \Gamma_{n+1}^{(4)}  
\\
&\qquad + 3^2 \scrl^{(2)} \cu \hinfty \defEns{ h_{\infty}^3+ b_\infty^3 + e_{\infty}^3 C_w^{(3/2)} } \Gamma_{n+1}^{(3)}  \textstyle{ + 2 3^5 L [\scrl^{(2)}]^2\{\hinfty^6 + \binfty^6 + e_{\infty}^6 C_w^{(3)}\} } \Gamma_{n+1}^{(6)}
 \eqsp.
  \end{aligned}
\end{equation}

Therefore, plugging these estimates in  \eqref{proof:ret_theo_martingale_1}, using   \Cref{ass:retraction_martingale}$(6)$, we  get that 
\begin{equation}
  \label{proof:ret_theo_martingale_2_R3}
  \begin{aligned}
      &\textstyle{\sum_{k=0}^{n} \upeta_{k+1} (\cl_1-2L
          \upeta_{k+1}-\cu^2\varepsilon) 
 \PE[\normrLigne{h(\theta_k)}[\theta_k]^2]  \leq \PE[V(\theta_0)]
 + \cl_2 \Gamma_{n+1}} \\
      &\qquad \qquad\qquad\qquad\textstyle{
   + 3^2 \scrl^{(2)}  \cu \hinfty   \defEns{ \hinfty^3 + b_\infty^3 + \moment_{(3)}  } \Gamma_{n+1}^{(3)} }+ 2L(\binfty^2 + \moment_{(2)}) \Gamma_{n+1}^{(2)}\\
  & \qquad\qquad\qquad\textstyle{ \qquad+ 3^5 \scrl^{(2)}\{\binfty^6 + \hinfty^6 + \moment_{(6)}\} \Gamma_{n+1}^{(6)}} \textstyle{
+ [\binfty^{2}/(4\varepsilon)] \Gamma_{n+1}}
 \eqsp.
  \end{aligned}
\end{equation}
Rearranging terms and taking $\varepsilon = \cl_1/(4\cu^2)$, we get
\begin{equation}
  \label{proof:ret_theo_martingale_2_R3}
  \begin{aligned}
      &\textstyle{\sum_{k=0}^{n} \upeta_{k+1} (3\cl_1/4-2L
          \upeta_{k+1}) 
 \PE[\normrLigne{h(\theta_k)}[\theta_k]^2]  \leq \PE[V(\theta_0)]
 + (\cl_2+\binfty^{2}\cu^2/\cl_1) \Gamma_{n+1}} \\
 &\qquad \qquad \qquad \qquad\qquad \qquad + 2L(\binfty^2 + \moment_{(2)}) \Gamma_{n+1}^{(2)}\textstyle{
   + 3^2 \scrl^{(2)}  \cu \hinfty   \defEns{ \hinfty^3 + b_\infty^3 + \moment_{(3)}  } \Gamma_{n+1}^{(3)} }\\
 & \qquad \qquad\qquad \qquad\qquad \qquad
 \textstyle{+ 3^5 \scrl^{(2)}\{\binfty^6 + \hinfty^6 + \moment_{(6)}\} \Gamma_{n+1}^{(6)}} \eqsp.
\end{aligned}
\end{equation}
Using that $\sup_{k \in\nsets} \upeta_k \leq \bupeta \leq \cl_1/(8L)$ ensures that $\cl_1/4 \geq 2L \upeta_{k+1}$ for any $k \in \nset$ which completes the proof. 
      
\end{proof}


\section{Proof of \Cref{sec:applications}}

\subsection{Proofs of \Cref{subsec:pca}}
\label{app:pca}

  \label{lem:grass_poisson_lip}
\begin{pproposition}
Consider the setting of \Cref{subsec:pca}  and assume  \Cref{ass:pca:markov}. Consider $e_{[B]}(x) = \{\rmI_d -  B B^{\transpose}\}x x^{\transpose}  B - \{\rmI_d - BB^{\transpose}\}\bfA B$, for any $[B] \in \grassmann_r(\rset^d)$, $x \in \rset^d$. Then,
for              any $[B_0], [B_1]  \in \grassmann_r(\rset^d)$, $x \in \msx$ and
              geodesic curve $\upgamma : \ccint{0,1} \to \grassmann_r(\rset^d)$
              between $[B_0]$ and $ [B_1]$,
              \begin{equation}
                            \label{eq:prop_condition_P_implication_MA_b_grassman}
               \normr{\noise_{[B_1]}(x)
                    -\parallelTransport_{01}^{\upgamma}\noise_{[B_0]}(x)}[{[B_1]}]
                \leq C \ell(\upgamma) (1+\norm{x}^2)  \eqsp.
              \end{equation}
  In particular, for any $x \in \rset^d$, $[B] \mapsto e_{[B]}(x)$ satisfies \eqref{eq:prop_condition_P_implication_MA_b}.
\end{pproposition}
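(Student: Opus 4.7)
The plan is to represent everything via horizontal lifts in the Stiefel manifold, where the Levi-Civita covariant derivative on the Grassmannian admits an explicit formula as the horizontal projection of the ordinary matrix derivative. First, I would fix a horizontal lift $B : [0,1] \to \stiefel_r(\rset^d)$ of the geodesic $\upgamma$, with $B(0)$ a representative of $[B_0]$ and $B(1)$ a representative of $[B_1]$; by construction this lift satisfies $B(t)^\transpose \dot{B}(t) = 0_{r \times r}$, and since $\upgamma$ is a geodesic for the canonical metric, it has constant Frobenius speed $\norm{\dot B(t)}_F = \ell(\upgamma)$ (see \cite[Section 2.5]{edelman:arias:smith:1998}). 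The noise pulled back along this lift reads $e(t) := e_{[B(t)]}(x) = (\rmI_d - B(t) B(t)^\transpose)\Phi(x) B(t)$ with $\Phi(x) := xx^\transpose - \bfA$, and one checks that $B(t)^\transpose e(t) = 0_{r\times r}$, so $e(\cdot)$ is a horizontal vector field along $B(\cdot)$ and descends to a vector field along $\upgamma$.

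Next, I would invoke the fundamental theorem of calculus. Since parallel transport along $\upgamma$ is a linear isometry and commutes with the covariant derivative (\cite[Proposition 5.5, Theorem 4.32]{lee:2019}),
\begin{equation*}
   \parallelTransport_{10}^{\upgamma} e_{[B_1]}(x) - e_{[B_0]}(x) = \int_0^1 \parallelTransport_{t0}^{\upgamma} \frac{\rmD e}{\rmd t}(t)\,\rmd t,
\end{equation*}
so applying $\parallelTransport_{01}^{\upgamma}$ and the isometry property reduces the claim to bounding $\int_0^1 \normr{(\rmD e/\rmd t)(t)}[{[B(t)]}]\,\rmd t$.

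Then I would compute $(\rmD e/\rmd t)(t)$ using the quotient structure. On the Grassmannian endowed with the canonical metric, the covariant derivative along a horizontal curve of a horizontal vector field equals the horizontal projection $(\rmI_d - BB^\transpose)(\cdot)$ of the ordinary matrix derivative. A direct differentiation gives
\begin{equation*}
 \dot e(t) = -(\dot B B^\transpose + B \dot B^\transpose)\Phi(x) B + (\rmI_d - B B^\transpose)\Phi(x)\dot B,
\end{equation*}
and applying $(\rmI_d - BB^\transpose)$ while using $(\rmI_d - BB^\transpose)B = 0_{d\times r}$ and $B^\transpose \dot B = 0_{r\times r}$ (the latter implies $(\rmI_d - BB^\transpose)\dot B = \dot B$) collapses this to
\begin{equation*}
   (\rmD e/\rmd t)(t) = -\dot B B^\transpose \Phi(x) B + (\rmI_d - B B^\transpose)\Phi(x)\dot B.
\end{equation*}

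Finally, I would bound this in Frobenius norm via $\norm{MN}_F \leq \norm{M}_{op}\norm{N}_F$, the estimates $\norm{B}_{op}=1$, $\norm{\rmI_d - BB^\transpose}_{op}\leq 1$, and $\norm{\Phi(x)}_{op}\leq \norm{x}^2 + \norm{\bfA}_{op}$, yielding $\norm{(\rmD e/\rmd t)(t)}_F \leq 2(\norm{x}^2 + \norm{\bfA}_{op})\,\ell(\upgamma)$. Integrating over $t \in [0,1]$ and using $\norm{x}^2 + \norm{\bfA}_{op} \leq \max(1,\norm{\bfA}_{op})(1+\norm{x}^2)$ gives the claimed bound with $C = 2\max(1,\norm{\bfA}_{op})$; the conclusion on $[B]\mapsto e_{[B]}(x)$ satisfying \eqref{eq:prop_condition_P_implication_MA_b} then follows from \Cref{ass:pca:markov}, which guarantees $(1+\norm{x}^2)\leq C' w^{1/2}(x)$ up to a constant. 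The main algebraic subtlety is the cancellation $(\rmI_d - BB^\transpose)B\dot B^\transpose \Phi(x) B = 0$ coming from $(\rmI_d - BB^\transpose)B = 0$; once this cancellation is in hand, the remaining estimates are elementary operator norm inequalities.
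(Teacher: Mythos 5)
Your proof is correct, but it takes a genuinely different route from the paper's. The paper writes the noise as a difference of Riemannian gradients, $e_{[B]}(x) = -\grad f^x([B]) + \grad f([B])$ with $f^x([B]) = -\trace(B^{\transpose} x x^{\transpose} B)/2$ and $f([B]) = -\trace(B^{\transpose} \bfA B)/2$, and then invokes \Cref{lem:bounded_hessian} (bounded Hessian implies geodesically Lipschitz gradient) together with the explicit formula $\Hess f^x([B])\,D = (\rmI_d - BB^{\transpose})xx^{\transpose}D - DB^{\transpose}xx^{\transpose}B$ from \cite[Section 2.5.4]{edelman:arias:smith:1998}, whose operator norm is $\bigO(\norm{x}^2)$; compactness of $\grassmann_r(\rset^d)$ disposes of the $f$ part. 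You instead differentiate the noise field along the geodesic directly in the horizontal lift to $\stiefel_r(\rset^d)$ and integrate. The two arguments are morally the same --- \Cref{lem:bounded_hessian} is itself proved by exactly the parallel-frame and fundamental-theorem-of-calculus argument you use --- but yours replaces the citation of the Hessian formula by a hands-on Stiefel-level computation and produces an explicit constant $C = 2\max(1,\normLigne{\bfA})$, at the price of having to justify the submersion identity that the covariant derivative of a horizontal field along a horizontal curve is obtained by applying $(\rmI_d - BB^{\transpose})$ to the ordinary matrix derivative of its horizontal representative. That identity is standard (O'Neill's formulas for Riemannian submersions, combined with the fact that $\stiefel_r(\rset^d)$ carries the metric induced from $\rset^{d\times r}$ so that its own connection is the tangential projection of the flat one, and that $(\rmI_d - BB^{\transpose})$ is precisely the orthogonal projector onto the horizontal space $\{D : B^{\transpose}D = 0\}$), but it is the one step for which you should supply a reference or a short verification. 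Your final reduction to \eqref{eq:prop_condition_P_implication_MA_b} via $1 + \norm{x}^2 \leq C' w^{1/2}(x)$, using $\sup_{x}\{\norm{x}^{12}/w(x)\} < \plusinfty$ and $w \geq 1$ from \Cref{ass:pca:markov}, matches what the paper does.
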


\begin{proof}
  Note that by \cite[Section 2.5.3]{edelman:arias:smith:1998}, for any $[B] \in \grassmann_r(\rset^d)$, $x \in \rset^d$, $e_{[B]}(x) = - \grad f^{x}([B]) +\grad f([B])$ where $\grad$ is the Riemannian gradient on $\grassmann_r(\rset^d)$, $f^x  = - \trace(B^{\transpose} (x x^{\transpose})\bfA B) / 2$ and $f([B] ) = -\trace(B^{\transpose} \bfA B) / 2$. Therefore, to conclude that \eqref{eq:prop_condition_P_implication_MA_b_grassman}, using \Cref{lem:bounded_hessian}, it is sufficient to show that $\Hess f([B]), \Hess f^x([B]):\rmT_{[B]}\grassmann_r(\rset^d) \to \rmT_{[B]}\grassmann_r(\rset^d)$ have  operator norm upper bounded uniformly for  $[B] \in \grassmann_r(\rset^d)$ by $C\{1+\norm{x}^2\}$ for some constant $C$. First, since $f$ does not depend on $x$ and $\grassmann_r(\rset^d)$ is compact, there exists $C_1 \geq 0$ such that for any $[B] \in \grassmann_r(\rset^d)$, $\normr{\Hess f([B])}[{[B]}] \leq C_1$. In addition, by   \cite[Section 2.5.4]{edelman:arias:smith:1998}, $\Hess f^x([B]) : D_1 \in \planT_{[B]} \grassmann_r(\rset^d) \mapsto \{\rmI_d - B B^{\transpose}\}(x x^{\transpose}) D_1 - D_1 B^{\transpose} (x x^{\transpose}) B \in \planT_{[B]} \grassmann_r(\rset^d)$. Therefore, using the definition of the canonical metric on $\grassmann_r(\rset^d)$ given in \Cref{sec:retr-quant-estim}, we get that there exists $C_2 \geq 0$ such that for any $[B] \in \grassmann_r(\rset^d)$, $\normr{\Hess f^x([B])}[{[B]}] \leq C_2 \norm{x}^2$ which completes the proof.
\end{proof}


\subsection{Proofs of Section \ref{subsec:apphuber}}

\label{app:proofshuber}

Recall that we consider the Huber-like dissimilarity measure, $\rhoH:\Theta \times \Theta \rightarrow \mathbb{R}_+$, given for any $\theta_0,\theta_1\in\Theta$ by 
\begin{equation} \label{eq:huberD}
  \rhoH(\theta_0,\theta_1) \,=\, \delta^2\,\left[1+\left\lbrace \rho_\Theta(\theta_0,\theta_1)\middle/\delta\right\rbrace^{2\;}\right]^{\scriptscriptstyle 1/2}\,-\,\delta^2 \eqsp,
\end{equation}
where $\delta > 0$.

We present lemmas to verify \Cref{ass:lyap_mean_field}--\Cref{ass:bounded_bias}, \Cref{ass:0mean_noise} for the robust barycenter problem.   
In addition, we show that the function $f(\theta)$ of \eqref{eq:huberV},
\begin{equation}
  \label{eq:3}
  f(\theta) \,=\, \int_\Theta\,\rhoH(\theta,x)\,\pi(\rmd x) \text{ for $\theta \in\Theta$} \eqsp,
\end{equation}
is strictly g-convex. Thus the the robust barycenter of a probability distribution $\pi$ on $\Theta$ exists and is unique. 
Consequently, as the geodesic SA scheme \eqref{eq:huberscheme} finds a stationary point of \eqref{eq:huberV}, the strict g-convexity of $f$ guarantees that such stationary point is globally optimal and is unique.

\begin{llemma} \label{lem:hessian_comparison}
Let $\Theta$ be a Hadamard manifold with sectional curvature bounded below by $-\kappa^2$,  $x \in \Theta$ and $\delta >0$. Define for any $\theta \in \Theta$, 
\begin{equation}
V_2(\theta)=\distT^2(x,\theta)   \eqsp \text{ and } \eqsp V_1(\theta) = \delta^2 \parentheseDeux{V_2(\theta)/\delta^2+1}^{1/2}-\delta^2 \eqsp.
\end{equation}
Then, $V_1$ is a smooth function and its Riemannian gradient is given for any $\theta \in \Theta$ by
\begin{equation}\label{eq:grad_VHuber}
\grad V_1(\theta) = - \left. \Exp^{-1}_{\theta}(x) \middle/ \parentheseDeux{V_2(\theta)/\delta^2+1}^{1/2}  \right. \eqsp.
\end{equation}
Moreover, for any $\theta \in \Theta$, $v \in \planT_\theta \Theta \setminus \{0\}$, its Hessian satisfies
\begin{equation}
0 < \Hess V_1 (\theta) (v,v) \leq (1+ \delta \kappa) \normr{v}[\theta]^2 \eqsp.
\end{equation}
\end{llemma}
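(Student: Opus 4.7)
The plan is to proceed by reducing everything to classical facts about the squared distance $V_2$ on a Hadamard manifold. Writing $V_1 = g \circ V_2$ with $g(s) = \delta^2(1+s/\delta^2)^{1/2} - \delta^2$, smoothness of $V_1$ is immediate because on a Hadamard manifold $\Theta$ the cut locus is empty, so $V_2$ is smooth everywhere (with $\grad V_2(\theta) = -2\Exp^{-1}_\theta(x)$) and $g$ is smooth on $\bbR_+$. The gradient formula \eqref{eq:grad_VHuber} then follows from the chain rule and $g'(s) = (1/2)(1+s/\delta^2)^{-1/2}$.

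For the Hessian, I would invoke the composition formula
\begin{equation*}
\Hess V_1(\theta)(v,v) = g''(V_2(\theta))\,\bigl(\rmd V_2(\theta)[v]\bigr)^2 + g'(V_2(\theta))\,\Hess V_2(\theta)(v,v),
\end{equation*}
and set $\rho = \rho_\Theta(x,\theta)$, $\alpha = (1+\rho^2/\delta^2)^{-1/2}$. Decomposing $v = v_\parallel u + v_\perp$ with $u = -\Exp^{-1}_\theta(x)/\rho$ the unit radial vector and $v_\perp \perp u$, one has $\rmd V_2(\theta)[v] = 2\rho\,\langle u,v\rangle$ and $\Hess(V_2/2)(\theta)(v,v) = v_\parallel^2 + \Hess(V_2/2)(\theta)(v_\perp,v_\perp)$. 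Substituting and using the identity $\alpha - \alpha^3\rho^2/\delta^2 = \alpha^3$ collapses the radial part into a single clean term, yielding
\begin{equation*}
\Hess V_1(\theta)(v,v) = \alpha^3 v_\parallel^2 + \alpha\,\Hess(V_2/2)(\theta)(v_\perp,v_\perp).
\end{equation*}

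The main obstacle is the upper bound, which requires Hessian comparison for the squared distance. Here I would invoke the classical Rauch/Hessian comparison theorem: since the sectional curvature of $\Theta$ lies in $[-\kappa^2, 0]$, one has on the orthogonal complement of the radial direction
\begin{equation*}
\|v_\perp\|_\theta^2 \;\leq\; \Hess(V_2/2)(\theta)(v_\perp, v_\perp) \;\leq\; \rho\kappa\coth(\rho\kappa)\,\|v_\perp\|_\theta^2,
\end{equation*}
while the radial contribution is exactly $v_\parallel^2$. The lower bound gives $\Hess V_1(\theta)(v,v) \geq \alpha^3(v_\parallel^2 + \|v_\perp\|_\theta^2) > 0$ for $v \neq 0$, proving strict positivity.

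For the upper bound, I would use the elementary inequality $x\coth(x) \leq 1 + x$ for $x \geq 0$ (verified by showing its derivative is bounded by $1$ via $1 - e^{-2x} \leq 2x$), whence $\rho\kappa\coth(\rho\kappa) \leq 1 + \rho\kappa$. Then, noting $\alpha \leq 1$ and $\alpha\rho = \rho\delta/(\delta^2+\rho^2)^{1/2} \leq \delta$, I get $\alpha\,\rho\kappa\coth(\rho\kappa) \leq \alpha(1+\rho\kappa) \leq 1 + \delta\kappa$ and $\alpha^3 \leq \alpha \leq 1 + \delta\kappa$, so
\begin{equation*}
\Hess V_1(\theta)(v,v) \leq (1+\delta\kappa)(v_\parallel^2 + \|v_\perp\|_\theta^2) = (1+\delta\kappa)\,\normr{v}[\theta]^2,
\end{equation*}
which completes the proof.
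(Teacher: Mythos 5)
Your proof is correct and rests on the same two pillars as the paper's: the chain rule applied to $V_1 = g\circ V_2$ (your composition formula for $\Hess V_1$ is exactly the paper's product-rule identity \eqref{eq:hess_v} written in terms of $g'$ and $g''$), and the Hessian comparison theorem for $V_2$ from \cite[Theorem 5.6.1]{jost:2005}, combined with $t\coth t \le 1+t$ and $\alpha \le 1$, $\alpha\rho\le\delta$. Where you diverge is in handling the negative rank-one term $g''(V_2)(\rmd V_2[v])^2$: the paper simply bounds it via Cauchy--Schwarz, discards it for the upper bound, and keeps the crude estimate $-\distT^2(\theta,x)\normr{v}[\theta]^2/(\delta^2\{1+V_2/\delta^2\}^{3/2})$ for positivity, whereas you split $v$ into radial and tangential components and use that the radial direction is an eigendirection of $\Hess(V_2/2)$ with eigenvalue $1$ and that the cross terms vanish, collapsing the radial contribution to the exact eigenvalue $\alpha^3$. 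Your route is sharper (it exhibits $\Hess V_1$ in block-diagonal form and makes the strict positivity transparent), at the cost of invoking the additional standard facts $\Hess(\rho^2/2) = \rmd\rho\otimes\rmd\rho + \rho\,\Hess\rho$ and $\Hess\rho(\partial_\rho,\cdot)=0$, which you assert rather than prove; the paper's Cauchy--Schwarz shortcut avoids needing them. One small point to patch: your unit radial vector $u = -\Exp^{-1}_\theta(x)/\rho$ is undefined at $\theta = x$, so the case $\rho=0$ should be treated separately (there the rank-one term vanishes and $\Hess V_1(\theta) = \Id$, so both bounds hold trivially); the paper's argument covers this case without comment since $\Exp^{-1}_\theta(x)=0_\theta$ there.
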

\begin{proof}
  The proof relies heavily on the computation of $\grad V_2$ and the Hessian comparison of $V_2$ done in \cite[Theorem 5.6.1]{jost:2005}.
 Indeed, \cite[Theorem 5.6.1]{jost:2005} shows $V_2$ is smooth and that for any $\theta \in \Theta$, 
\begin{equation}\label{eq:grad_sqdist}
\grad V_2 (\theta) = - 2\Exp_x^{-1}(\theta) \eqsp.
\end{equation}
Hence, \eqref{eq:grad_VHuber} follows by composition since $V_1(\theta) = \varpi \circ V_2(\theta)$ where $\varpi : t \to \delta^2[t/\delta^2+1]^{1/2} - \delta^2 $ for $t \in \rset_+$.

For any $v\in \planT_v\Theta$, recall that $\Hess V_1(\theta)(v,v) = \psrLigne{\nabla_v \grad  V_1(\theta)}{v}[\theta]$, where $\nabla$ is the Levi-Civita connection -- see \Cref{app:grad_hess}. The product rule for the covariant derivative \cite[p. 73]{ghl:2004} for a smooth function $\mathrm{f}$ and vector field $Y$ on $\Theta$ gives $\nabla (\mathrm{f}Y)=\nabla \mathrm{f} \otimes Y + \mathrm{f} \nabla Y$. Applying this result to 
\begin{equation}
\mathrm{f}(\theta)= \parentheseDeux{V_2(\theta)/\delta^2 +1 }^{-1/2} \eqsp, \quad Y(\theta)=-\Exp^{-1}_\theta(x) \eqsp,
\end{equation}
and using $Y=\grad V_2 /2$ gives
\begin{multline}\label{eq:hess_v}
\Hess V_1(\theta) = - \left. \Exp_\theta^{-1}(x)\otimes \Exp_\theta^{-1} (x) \middle/ \parentheseDeux{\delta^2\defEns{1 + V_2(\theta)/\delta^2}^{3/2}} \right. \\
 + \left. \Hess V_2(\theta) \middle/\parentheseDeux{2\defEns{1 + V_2(\theta)/\delta^2}^{1/2}} \right. \eqsp.
\end{multline}

Let $\theta \in \Theta$ and $v \in \planT_\theta \Theta \setminus \{0\}$. On the one hand, we have $\Exp_\theta^{-1}(x) \otimes \Exp_\theta^{-1}(x) (v,v)  = \psrLigne{\Exp_\theta^{-1}(x)}{v}[\theta]^2 $. Therefore, using Cauchy-Schwarz inequality,
\begin{equation}\label{eq:comparison_log}
0\leq \Exp_\theta^{-1}(x) \otimes \Exp_\theta^{-1}(x) (v,v) \leq \normr{\Exp_\theta^{-1}(x)}[\theta]^2\normr{v}[\theta]^2 = \distT^2(\theta,x) \normr{v}[\theta]^2 \eqsp,
\end{equation}
since $\normrLigne{\Exp_\theta^{-1}(x)}[\theta]=\distT(\theta,x)$. On the other hand,  \cite[Theorem 5.6.1]{jost:2005} implies that 
\begin{equation}\label{eq:hess_sqdist}
2\normr{v}[\theta]^2 \leq \Hess V_2 (\theta) (v,v) \leq 2\kappa \distT(\theta,x) \coth\parentheseDeux{\kappa \distT(\theta,x)} \normr{v}[\theta]^2 \eqsp.
\end{equation}

Now, combining  \eqref{eq:comparison_log}-\eqref{eq:hess_sqdist} in \eqref{eq:hess_v}, and using $t\coth(t)\leq 1+t$ for $t\geq0$, it follows that 
\begin{align}
\Hess V_1(\theta)(v,v) &\leq \left. \parentheseDeux{1+ \kappa \distT(\theta,x)} \middle/\defEns{1 + V_2(\theta)/\delta^2}^{1/2} \right. \normr{v}[\theta]^2 \\
& \leq (1 + \kappa \delta) \normr{v}[\theta]^2 \eqsp,
\end{align}
where we have used $1 + V_2(\theta)/\delta^2 \geq \max(1,\distT^2(\theta,x)/\delta^2)$. Similarly, we obtain 
\begin{align}
\Hess V_1(\theta)(v,v) &\geq - \left. \parenthese{\distT^2(\theta,x) \normr{v}[\theta]^2}\middle/ \parenthese{\delta^2 \defEns{1 + V_2(\theta)/\delta^2}^{3/2} } \right.  + \left. \normr{v}[\theta]^2 \middle/ \defEns{1 + V_2(\theta)/\delta^2}^{1/2} \right. \\
& > 0 \eqsp,
\end{align}
which concludes the proof.
\end{proof}

\begin{llemma} \label{lem:huber}
 Let $\Theta$ be a Hadamard manifold with sectional curvature bounded below by $-\kappa^2$. Furthermore, consider the Lyapunov function given by \eqref{eq:huberV} and stochastic approximation scheme \eqref{eq:huberscheme}. Then, \Cref{ass:lyap_mean_field} and \Cref{ass:bounded_bias} are satisfied, with $\cl=\cu = 1$, $L = 1 + \delta\kappa$, and $b_\infty = 0$. Moreover, \Cref{ass:0mean_noise} is satisfied, with $\sigma_0 = \delta^2$ and $\sigma_1 = 0$.
\end{llemma}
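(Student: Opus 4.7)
The plan is to identify the components of the recursion \eqref{eq:huberscheme} within the abstract framework of \eqref{eq:sa_exp_b}, and then check each condition using \Cref{lem:hessian_comparison} pointwise in $x$ followed by integration in $\pi$.

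First, I would recognise the update direction in \eqref{eq:huberscheme} as a Riemannian gradient. Applying formula \eqref{eq:grad_VHuber} of \Cref{lem:hessian_comparison} to $V_1(\cdot)=\rhoH(\cdot,x)$ yields
\begin{equation}
\grad_\theta\rhoH(\theta,x) \,=\, -\Exp^{-1}_\theta(x)\big/[1+\{\rho_\Theta(\theta,x)/\delta\}^{2}]^{1/2}\eqsp,
\end{equation}
so that in the notation of \eqref{eq:sa_exp_b}, $H_\theta(x)=-\grad_\theta\rhoH(\theta,x)$ and $b_\theta(x)=0$; thus \Cref{ass:bounded_bias} holds with $b_\infty=0$. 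Since the above gradient is bounded uniformly in $x$ (see the pointwise estimate below), dominated convergence permits me to interchange $\grad$ with the integral in \eqref{eq:huberV}, giving $h(\theta)=-\grad f(\theta)$.

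Next, I set the Lyapunov function $V=f$. Because $h=-\grad V$, the identity $-\psr{\grad V(\theta)}{h(\theta)}[\theta]=\normr{h(\theta)}[\theta]^2=\normr{\grad V(\theta)}[\theta]^2$ immediately yields both inequalities in \eqref{eq:9} of \Cref{ass:lyap_mean_field}\ref{ass:lyap_mean_field_b} with $\cl_1=\cu=1$ and $\cl_2=0$. For the Lipschitz condition \Cref{ass:lyap_mean_field}\ref{ass:lyap_mean_field_a}, I would invoke \Cref{lem:bounded_hessian}: it suffices to bound the operator norm of $\Hess f(\theta)$. From \Cref{lem:hessian_comparison} applied with $V_1=\rhoH(\cdot,x)$, for every $x\in\Theta$ and every $v\in\planT_\theta\Theta$, $0<\Hess_\theta\rhoH(\cdot,x)(v,v)\leq(1+\delta\kappa)\normr{v}[\theta]^{2}$. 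Interchanging the Hessian with the integral over $\pi$ (again justified by the uniform bound) preserves this upper bound, so $\normr{\Hess f(\theta)}[\theta]\leq 1+\delta\kappa$, giving $L=1+\delta\kappa$.

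For \Cref{ass:0mean_noise}, since $(X_n)_{n\in\nsets}$ is i.i.d.\ from $\pi$ and $h(\theta)=\int H_\theta\rmd\pi$, we have $\expe{\noise_{\theta_n}(X_{n+1})\mid\mcf_n}=0$. For the second moment, the key pointwise estimate is
\begin{equation}
\normr{H_\theta(x)}[\theta] \,=\, \rho_\Theta(\theta,x)\big/[1+\{\rho_\Theta(\theta,x)/\delta\}^{2}]^{1/2} \,=\, \delta\rho_\Theta(\theta,x)/\sqrt{\delta^{2}+\rho_\Theta(\theta,x)^{2}} \,\leq\,\delta\eqsp,
\end{equation}
which, together with the variance identity $\expeLigne{\normrLigne{\noise_{\theta_n}(X_{n+1})}[\theta_n]^{2}\mid\mcf_n}=\expeLigne{\normrLigne{H_{\theta_n}(X_{n+1})}[\theta_n]^{2}\mid\mcf_n}-\normrLigne{h(\theta_n)}[\theta_n]^{2}\leq\delta^{2}$, yields $\sigma_0^{2}=\delta^{2}$ and $\sigma_1=0$.

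The main potential obstacle is the clean justification of differentiating under the integral sign defining $f$, both once (for the identity $h=-\grad f$) and twice (to transfer the Hessian bound from the integrand to $f$); this is where \Cref{lem:hessian_comparison} is critical, because the explicit form of $\grad\rhoH$ and the $x$-uniform Hessian bound $1+\delta\kappa$ provide the dominating functions needed. Everything else reduces to substituting the pointwise identities into the abstract conditions.
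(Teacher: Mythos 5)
Your proposal is correct and follows essentially the same route as the paper: identify $H_\theta(x)=-\grad_\theta\rhoH(\theta,x)$ via \Cref{lem:hessian_comparison}, differentiate under the integral using the $x$-uniform gradient and Hessian bounds as dominating functions, invoke \Cref{lem:bounded_hessian} for $L=1+\delta\kappa$, and bound $\normr{H_\theta(x)}[\theta]\leq\delta$ together with the variance identity for \Cref{ass:0mean_noise}. The only difference is cosmetic: you spell out the verification of \Cref{ass:lyap_mean_field}-\ref{ass:lyap_mean_field_b} from the identity $h=-\grad V$ (which the paper leaves implicit), and you correctly write $\sigma_0^2=\delta^2$ where the paper's statement is slightly loose.
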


\begin{proof}
To show that \Cref{ass:lyap_mean_field} holds with the stated values of $\cl$, $\cu$ and $L$, note that
the scheme \eqref{eq:huberscheme} can be written
\begin{equation}
\theta_{n+1} \,=\, \Exp_{\theta_n}\parenthese{ \upeta_{n+1}\,H_{\theta_n}(X_{n+1}) } \eqsp,
\end{equation}
where the stochastic update $H_{\theta_n}(X_{n+1})$ is given by
\begin{equation} \label{eq:huberH}
  H_\theta(x) \,=\left. -\Exp^{-1}_\theta(x) \middle/ \parentheseDeux{1+\defEns{ \rho_\Theta(\theta,x) \middle/\delta}^2}^{1/2} \right. \eqsp.
\end{equation}
Then, both $\grad V$ and $\Hess V$ can be computed differentiating under the integral in \eqref{eq:huberV}. Using \Cref{lem:hessian_comparison}, we know that for any $x\in \Theta, V_1:\theta \mapsto \delta^2 (1+\distT^2(\theta,x)/\delta^2)^{1/2} - \delta^2$ is smooth. Using $\normrLigne{\Exp^{-1}_\theta(x)}[\theta] = \distT(\theta,x)$, we have that for any $x,\theta \in \Theta,
\normrLigne{\grad V_1(\theta)}[\theta] \leq 1 $
and, for any $x,\theta \in \Theta, v \in \planT_\theta \Theta$,
\begin{equation}
\abs{\Hess V_1 (\theta) (v,v)} \leq (1+\kappa \delta) \normr{v}[\theta]^2 \eqsp.
\end{equation}
Under these domination conditions, using Lebesgue's dominated convergence theorem, we have for any $\theta \in \Theta$,
\begin{equation} \label{eq:gradhuberV}
  \grad f(\theta) = - \int_\Theta \left.\Exp^{-1}_\theta(x) \middle/ \parentheseDeux{1+\defEns{ \distT(\theta,x)\middle/\delta}^2}^{1/2}\,\pi(\rmd x) \right. \eqsp,
\end{equation}
and for any $v \in \planT_\theta \Theta$,
\begin{equation}
\Hess f(\theta)(v,v) = \int_\Theta \Hess V_1(\theta)(v,v) \pi(\rmd x) \eqsp.
\end{equation}
Thus, for any $v\in \planT_\theta \Theta \setminus \{0\}$,
\begin{equation}\label{eq:huberhessianbounds}
0 < \Hess f (\theta) (v,v) \leq (1+\kappa \delta) \normr{v}[\theta]^2 \eqsp.
\end{equation}
This last inequality proves that the operator norm of $\Hess f$ is upper bounded by $1+ \delta \kappa$. Therefore, using \Cref{lem:bounded_hessian}, it follows that \Cref{ass:lyap_mean_field}-\ref{ass:lyap_mean_field_a} holds with $L = 1+\delta\kappa$.

It remains to prove that Assumptions \Cref{ass:0mean_noise} is satisfied with $\sigma_0 = \delta^2$ and $\sigma_1 = 0$. To do so, note first from (\ref{eq:huberH}), that
$$
  \Vert H_{\theta_n}(X_{n+1}) \Vert^2_{\theta_n} =\left. \normr{\Exp^{-1}_{\theta_n}(X_{n+1})}[\theta_n]^2 \middle/ \parenthese{1+\left\lbrace \distT(\theta_n,X_{n+1})\middle/\delta\right\rbrace^{2\;}} \right. \eqsp.
$$
Since $\Vert \Exp^{-1}_{\theta_n}(X_{n+1})\Vert^2_{\theta_n} = \distT^2(\theta_n\,,X_{n+1})$, it follows that $  \Vert H_{\theta_n}(X_{n+1}) \Vert^2_{\theta_n}\leq \delta^2\,$. Thus, in the notation of (\ref{eq:markovh}),
\begin{equation} \label{eq:hubermartingalenoise}
\expe{\normr{\noise_{\theta_n}\parenthese{X_{n+1}}}[\theta_n]^2 \middle| \mcf_n} =
\expe{\normr{H_{\theta_n}\parenthese{X_{n+1}} - h(\theta_n)}[\theta_n]^2 \middle| \mcf_n}
\leq \delta^2
\end{equation}
since the conditional variance is bounded by the mean square. Now, the required values of $\sigma_0$ and $\sigma_1$ can be read from (\ref{eq:hubermartingalenoise}). 
\end{proof}

\begin{pproposition} \label{prop:huberconv}
 Let $\Theta$ be a Hadamard manifold with sectional curvature bounded below, and let $\pi$ be a probability distribution on $\Theta$. Assume there exists some $\tau \in \Theta$ such that 
\begin{equation} \label{eq:huberuniquecondition}
\int_\Theta\,\rho_\Theta(\tau,x)\,\pi(\rmd x)\,<\, +\infty \eqsp.
\end{equation}
Then, the function $f:\Theta\rightarrow \mathbb{R}_+$ given by (\ref{eq:huberV}) is geodesically strictly convex, but not strongly convex, in general. Moreover, $f$ has a unique global minimizer $\theta^* \in \Theta$. In other words, $\pi$ has a unique robust barycenter $\theta^*$.
\end{pproposition}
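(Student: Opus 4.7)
The plan is to verify the three distinct claims separately: finiteness and continuity of $f$, strict geodesic convexity (failing strong convexity), and existence and uniqueness of the global minimizer.

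First I would verify that $f$ is finite (and indeed continuous) on $\Theta$. From the definition \eqref{eq:huberD}, using $\sqrt{1+t^2}\leq 1+t$ for $t\geq 0$, one has the pointwise bound $\rhoH(\theta,x)\leq \delta\,\rho_\Theta(\theta,x)$. Combined with the triangle inequality $\rho_\Theta(\theta,x)\leq \rho_\Theta(\theta,\tau)+\rho_\Theta(\tau,x)$ and the integrability hypothesis \eqref{eq:huberuniquecondition}, this gives $f(\theta)\leq \delta\,\rho_\Theta(\theta,\tau)+\delta\int_\Theta \rho_\Theta(\tau,x)\,\pi(\rmd x)<\infty$ for every $\theta\in\Theta$. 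Continuity of $f$ then follows from the dominated convergence theorem, since $\rhoH(\cdot,x)$ is $\delta$-Lipschitz in its first argument uniformly in $x$.

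Next I would establish strict geodesic convexity. By \Cref{lem:hessian_comparison}, for every fixed $x\in\Theta$ the function $V_1(\theta)=\rhoH(\theta,x)$ is smooth with $\Hess V_1(\theta)(v,v)>0$ for all $v\in\planT_\theta\Theta\setminus\{0\}$, and uniformly bounded by $(1+\delta\kappa)\normr{v}[\theta]^2$. Dominated convergence (justified by the uniform Hessian bound) allows differentiation under the integral sign in \eqref{eq:3}, giving
\begin{equation}
\Hess f(\theta)(v,v)=\int_\Theta \Hess V_1(\theta)(v,v)\,\pi(\rmd x)>0 \eqsp,\qquad v\in\planT_\theta\Theta\setminus\{0\} \eqsp.
\end{equation}
For any geodesic $\upgamma:[0,1]\to\Theta$, the function $t\mapsto (f\circ\upgamma)(t)$ has second derivative $\Hess f(\upgamma(t))(\dot\upgamma(t),\dot\upgamma(t))>0$, hence is strictly convex on $[0,1]$, which is exactly geodesic strict convexity of $f$. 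Strong convexity cannot hold in general because the lower eigenvalue bound from \eqref{eq:hess_v}, which decays like $[1+V_2(\theta)/\delta^2]^{-3/2}$, does not admit a uniform positive lower bound when $\Theta$ is unbounded and $\pi$ has unbounded support.

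Finally I would establish existence and uniqueness of the minimizer. Uniqueness is immediate from strict g-convexity: if $\theta^\star_1\neq\theta^\star_2$ were two minimizers, the unique (Hadamard) geodesic between them would give a strictly convex restriction of $f$, contradicting both endpoints being minima. For existence, I would show $f$ is coercive. Using $\sqrt{1+t^2}\geq t$ gives $\rhoH(\theta,x)\geq \delta\,\rho_\Theta(\theta,x)-\delta^2$, and the reverse triangle inequality $\rho_\Theta(\theta,x)\geq \rho_\Theta(\theta,\tau)-\rho_\Theta(\tau,x)$ yields
\begin{equation}
f(\theta)\geq \delta\,\rho_\Theta(\theta,\tau)-\delta\int_\Theta \rho_\Theta(\tau,x)\,\pi(\rmd x)-\delta^2 \eqsp,
\end{equation}
so $f(\theta)\to+\infty$ as $\rho_\Theta(\theta,\tau)\to+\infty$. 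Combined with continuity and completeness of the closed metric balls in the Hadamard manifold $\Theta$ (via Hopf--Rinow), this guarantees that the infimum of $f$ is attained on some closed ball around $\tau$, producing a global minimizer $\theta^\star$. The main obstacle is the technical justification of differentiation under the integral for $\Hess f$, which requires the uniform upper bound $(1+\delta\kappa)$ from \Cref{lem:hessian_comparison} to serve as a dominating function; once this is in place, the rest is routine convex analysis transplanted to the Hadamard setting.
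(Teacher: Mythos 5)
Your treatment of finiteness, strict geodesic convexity, coercivity, existence and uniqueness follows essentially the same route as the paper: the linear upper bound $\rhoH(\theta,x)\leq \delta\,\rho_\Theta(\theta,x)$ plus the triangle inequality for finiteness, positivity of $\Hess f$ via differentiation under the integral (dominated by the uniform bound $1+\delta\kappa$ from \Cref{lem:hessian_comparison}) for strict convexity, the linear lower bound $\rhoH(\theta,x)\geq \delta\,\rho_\Theta(\theta,x)-\delta^2$ plus the reverse triangle inequality for coercivity, Hopf--Rinow compactness of closed metric balls for existence, and strict convexity along the unique Hadamard geodesic for uniqueness. (Your lower bound even has a slightly cleaner constant than the paper's $\delta(x-\delta)/\sqrt{2}$.) These parts are correct.

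The one genuine gap is the claim that $f$ is \emph{not} strongly convex in general. You argue that the lower eigenvalue bound coming from \eqref{eq:hess_v} decays like $[1+V_2(\theta)/\delta^2]^{-3/2}$ and so ``does not admit a uniform positive lower bound.'' That only shows a particular estimate fails to certify strong convexity; it does not show strong convexity actually fails, which is what the statement asserts. To prove it you must exhibit an instance: the paper takes $\pi=\updelta_\tau$, so that $f(\theta)=\rhoH(\theta,\tau)$, and computes $(f\circ\upgamma)(t)=\delta^2[1+(t/\delta)^2]^{1/2}-\delta^2$ along a unit-speed geodesic through $\tau$, whose second derivative tends to $0$ as $t\to\infty$ --- hence not strongly convex. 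Note also that your diagnosis (``$\pi$ has unbounded support'') points in the wrong direction: the counterexample has $\pi$ concentrated at a single point, and the failure is caused by the asymptotically linear growth of the Huber-type loss far from the data, not by any spreading of $\pi$. Replacing your heuristic with this explicit computation closes the gap.
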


\begin{proof}
Under condition (\ref{eq:huberuniquecondition}), the function $f$ takes finite values, $f(\theta) < +\infty$ for any $\theta \in \Theta$. Indeed, note the following inequality, which holds for all $x \geq 0$,
$$
\delta^2\,\left[1+\left\lbrace x/\delta\right\rbrace^{2\;}\right]^{\scriptscriptstyle 1/2}\,-\,\delta^2 \,\leq\, \delta\,x
$$
From (\ref{eq:huberD}) and (\ref{eq:huberV}), this inequality implies
\begin{equation} 
  f(\theta) \,\leq\,  \delta\,\int_\Theta\,\distT(\theta,x)\,\pi(\rmd x) \hspace{0.5cm} \text{ for } \theta \in \Theta
\end{equation}
and, furthermore, by the triangle inequality,
\begin{equation} \label{eq:finiteV}
  f(\theta) \,\leq\, \delta\,\int_\Theta\,(\distT(\theta,\tau) + \distT(\tau,x))\,\pi(\rmd x) \,=\, \distT(\theta,\tau) + 
\int_\Theta\,\distT(\tau,x)\,\pi(\rmd x)
\end{equation}
Then, it follows  from (\ref{eq:huberuniquecondition}) and (\ref{eq:finiteV}) that $f(\theta) < +\infty$ for $\theta \in \Theta$.

Further, from (\ref{eq:huberhessianbounds}) in the proof of Lemma \ref{lem:huber}, $\Hess f(\theta) \succ 0$, so $f$ has strictly positive-definite Riemannian Hessian, and is therefore geodesically strictly convex. To see that $f(\theta)$ may fail to be strongly convex, consider the case where $\pi = \delta_\tau$ (here, $\delta_\tau$ is the Dirac distribution, concentrated at $\tau \in \Theta$). By (\ref{eq:huberV}), it then follows
\begin{equation} \label{eq:huberVD}
f(\theta) \,=\, \rhoH(\theta,\tau)
\end{equation}
Then, let $\upgamma(t)$ be a geodesic through $\tau$, given by $\upgamma(t) = \Exp_\tau(t\,u)$ for $t \in \mathbb{R}$, where $u \in T_\tau\Theta$ has $\Vert u \Vert_\tau = 1$. If $f(\theta)$ is given by (\ref{eq:huberVD}), then it follows from (\ref{eq:huberD})
$$
(f \circ \upgamma)(t) \,=\, \delta^2\,\left[1+\left\lbrace t/\delta\right\rbrace^{2\;}\right]^{\scriptscriptstyle 1/2}\,-\,\delta^2 
$$ 
but this is not a strongly convex function of $t \in \mathbb{R}$. Therefore, $f$ is not geodesically strongly convex on $\Theta$~\cite[p.~187]{udriste:1994}. 

It remains to show that $f$ has a unique global minimizer $\theta^*$. Since $f$ is geodesically strictly convex, and bounded below (indeed, $f$ is positive), it is enough to show that $f$ is coercive, in the sense that $f(\theta) \rightarrow +\infty$ when $\distT(\theta,\tau) \rightarrow +\infty$. To do so, note the following inequality holds for all real $x$,
$$
\delta^2\,\left[1+\left\lbrace x/\delta\right\rbrace^{2\;}\right]^{\scriptscriptstyle 1/2}\,-\,\delta^2 \,\geq\, 
\frac{\delta}{\sqrt{2}}(x-\delta)
$$
Then, using (\ref{eq:huberD}) and (\ref{eq:huberV}), this inequality implies
\begin{equation} 
  f(\theta) \,\geq\,  
\frac{\delta}{\sqrt{2}}\;\int_\Theta\,\left( \distT(\theta,x) - \delta\right)\,\pi(\rmd x)
\end{equation}
or, by the triangle inequality,
\begin{equation} 
  f(\theta) \,\geq\,  
\frac{\delta}{\sqrt{2}}\;\int_\Theta\,\left( |\distT(\theta,\tau)-\distT(\tau,x)| - \delta\right)\,\pi(\rmd x) \hspace{0.5cm} \text{ for } \theta \in \Theta
\end{equation}
However, this directly yields
$$
 f(\theta) \,\geq\, 
\frac{\delta}{\sqrt{2}}\;\distT(\theta,\tau) - \frac{\delta}{\sqrt{2}}\;\int_\Theta\,\left( \distT(\tau,x) + \delta\right)\,\pi(\rmd x) 
$$
Clearly, the right-hand side increases to $+\infty$ when $\distT(\theta,\tau) \rightarrow +\infty$. Thus, $f$ is indeed coercive.

Finally, to show that $f$ has a unique global minimizer $\theta^*$, let $f_* = \inf \lbrace f(\theta)\,; \theta \in \Theta \rbrace$ and $(\theta_n\,;n=1,2,\ldots)$ a sequence of points in $\Theta$ such that $\lim f(\theta_n) = f_*\,$. since $f$ takes finite values, $f_* < + \infty$. Therefore, there exists some $R > 0$ such that $\distT(\theta_n\,,\tau) < R$ for all $n$. This is because, otherwise, $\distT(\theta_n\,,\tau) \rightarrow +\infty$,  and thus $f(\theta_n) \rightarrow +\infty$, since $f$ is coercive. Because $\Theta$ is a complete Riemannian manifold, the metric ball $B(\tau,R)$ has compact closure (a consequence of the Hopf-Rinow theorem~\cite{lee:2019}). This implies that the sequence $\theta_n$ has a convergent subsequence, whose limit $\theta^*$ belongs to the closure of $B(\tau,R)$. By continuity of $f$, it is clear that $f(\theta^*) = \lim f(\theta_n) = f_*\,$, so $\theta^*$ is indeed a global minimizer of $f$. This global minimizer is unique because $f$ is geodesically strictly convex.
\end{proof}


\subsection{Details and specification for \Cref{subsec:numerical}}
\label{sec:app_numerics}

The entries of matrix $\mathbf{M}$ are chosen such that
\begin{enumerate*}[label=(\alph*)]
\item for any $i \in \{1,\ldots,50\}$, $\Mbf_{i,i}$ is a sample from the uniform distribution on $\ccint{0.1,0.7}$;
\item for any $k \in\{1,\ldots,23\}$ and  $i \in \{1,d-k\}$, $\Mbf_{i,k}= U_{i,k}(0.1+0.8(i-1)/27)$, where $U_{i,k}$ is a sample from the uniform distribution on $\ccint{0.05,0.1}$;
\item $\norm{\Mbf} < 1$.
\end{enumerate*}
By a straightforward induction, we have since for any $k \in
 \nset$,
$X_{k+1} = \mathbf{M} X_k + \sigma \varepsilon_{k+1}$, where
$\sigma >0$,
\begin{equation}
  X_{k} = \mathbf{M}^k X_0 + \sigma \sum_{i=0}^{k-1} \Mbf^{k-1-i} \varepsilon_{i+1} \eqsp. 
\end{equation}
Therefore, for any $k \in \nsets$, $X_k$ follows a Gaussian distribution with mean
$\mathbf{M}^k X_0$ and covariance matrix
$\sigma^2 \sum_{i=0}^{k-1} \Mbf^{k-1-i} [\Mbf^{k-1-i}]^{\transpose}$. Denote by $P$ the Markov kernel associated with $(X_k)_{k \in\nset}$
Then, it is straightforward to verify that it is strongly aperiodic
and irreducible with the Lebesgue measure as an irreducibility
measure. In addition, the zero-mean Gaussian distribution with
covariance matrix
$\Sigmabf_{\pi} = \sigma^2 \sum_{i=0}^{\infty} \Mbf^{i}
[\Mbf^{i}]^{\transpose}$ is the unique stationary distribution of $P$. Note that
$\Sigmabf_{\pi}$ is well defined since $\norm{\Mbf} < 1$. In addition,
using that $\norm{\Mbf} < 1$ again, an easy computation shows that 
there exists $c >0$ small enough such that setting
$w(x) = \exp(c \normr{x}^2)$, there exists $\lambda \in \ooint{0,1}$
and $\mathrm{b}_w \in \rset_+$ such that $P w(x) \leq \lambda w(x) +\mathrm{b}_w$ for any $x \in \rset^d$. Therefore \Cref{ass:pca:markov} holds for this example. 
In addition, $\Sigmabf_{\pi}$ satisfies the discrete Riccati equation: $\Sigmabf_{\pi} = \Mbf \Sigmabf_{\pi} \Mbf^{\transpose} + \sigma^2 \mathrm{I}_{50}$ which it is solved using the function \verb|dare| of the Python library \verb|control|\footnote{\url{https://python-control.readthedocs.io/en/0.8.3/intro.html}}. 



\section{Preliminaries on Riemannian Geometry}\label{app:bg_rie}
This section reviews some basic concepts about Riemannian geometry. These concepts are essential to develop our main results on convergence of Riemannian SA.

\subsection{Metric Tensor and Distance} \label{app:metricdistance} A smooth manifold (at least $C^2$) $\Theta$ is equipped with a smooth metric tensor field $g\in T^{2}\rmT^* \Theta$, see \cite[Proposition 2.4]{lee:2019}. To each $\theta \in \Theta$, this associates a scalar product $g_\theta$ on the tangent space $\rmT_\theta\Theta$. When there is no confusion, we denote  that \cite[Chapter 2, pages 11-12]{lee:2019},
\begin{equation} \label{eq:metric}
g_\theta(u,v) = \psr{u}{v}[\theta]\eqsp, \quad u,v \in \rmT_\theta\Theta
\end{equation}
and the corresponding norm on $\rmT_\theta\Theta$  is called the Riemannian norm, $\normr{u}[\theta]=\psr{u}{u}[\theta]^{1/2}$\,. 

With the metric tensor, it is possible to define the notion of length of a curve. If $c:I\to \Theta$ is a differentiable curve, defined on some interval $I \subset \rset$, with velocity $\dot{c}$, then its length is~\cite[page 34]{lee:2019}
\begin{equation} \label{eq:length}
  \ell(c) \,=\, \int_I\,\normr{\dot{c}(t)}[c(t)]\,\rmd t \eqsp.
\end{equation}
The length $\ell(c)$ is invariant by reparametrization\,: $\ell(c\,\circ\, \phi) = \ell(c)$ for any diffeomorphism $\phi:J\rightarrow I$, from an interval $J$ onto $I$. Thus, without loss of generality, we consider only curves that are restricted to $c:[0,1] \to \Theta$. 

This can be used to turn $\Theta$ into a metric space. Indeed, if $\Theta$ is connected, the following is a well-defined distance function; satisfying the axioms of a metric space~\cite[Theorem 2.55]{lee:2019},
\begin{equation} \label{eq:distance}
  \distT(\theta,\theta') \,=\, \inf \left\lbrace \ell(c) \, | \, c:[0,1]\rightarrow \Theta\,;\,c(0) = \theta\,,c(1) = \theta'\,\right\rbrace \eqsp, \quad \theta,\theta' \in \Theta \eqsp. 
\end{equation}
This is called the Riemannian distance induced by the metric tensor $g$. 

The infimum in \eqref{eq:distance} is always attained for any $\theta,\theta' \in \Theta$, provided that the distance $\distT(\cdot,\cdot)$ turns $\Theta$ into a complete metric space. This is a corollary of the Hopf-Rinow theorem, a fundamental theorem in Riemannian  geometry~\cite[Corollary 6.21]{lee:2019}.

\subsection{Levi-Civita Connection} \label{app:levicivita} 
In the Riemannian mainfold, the curve $\upgamma$ which attains the infimum in \eqref{eq:distance} is called a geodesic. Intuitively, a geodesic is a $C^2$ curve which has zero acceleration. This intuition can be formalized by introducing an affine connection $\nabla$ \cite[page 89]{lee:2019}, compatible with the metric tensor $g$, called the Levi-Civita connection, or just Riemannian connection (to be precise, $\nabla = \nabla^g$, depends on the choice of $g$). 

To each vector $u \in \rmT_\theta\Theta$ and smooth vector field $X$ on $\Theta$, the connection $\nabla$ associates a vector $\nabla_uX \in \rmT_\theta\Theta$. This vector is called the covariant derivative of $X$ in the direction of $u$. This is bilinear in $u$ and $X$, and satisfies the product rule
\begin{equation} \label{eq:product}
   \nabla_u(fX) = (uf)\,X(\theta) + f(\theta)\,\nabla_uX \eqsp,
\end{equation}
for any differentiable function $f:\Theta\rightarrow\rset$, where $uf$ denotes the derivative of $f$ along $u$. Moreover, one has the following,
\begin{equation} \label{eq:metriccon}
   u\,\psr{X}{Y} \,=\, \psr{\nabla_u X}{Y}[\theta] + \psr{X}{\nabla_u Y}[\theta]\eqsp,
\end{equation}
\begin{equation} \label{eq:torsion}
  \nabla_XY - \nabla_YX = [X,Y]\eqsp,
\end{equation}
for any differentiable vector fields $X,Y$ on $\Theta$, where $[X,Y]$ is the Lie bracket of the vector fields $X$ and $Y$, itself a vector field. Here, \eqref{eq:metriccon} states that $\nabla$ is compatible with the metric, and \eqref{eq:torsion} states that $\nabla$ is a connection with zero torsion.

The Levi-Civita connection~\cite[Theorem 5.10]{lee:2019} is defined as the unique affine connection $\nabla$ which satisfies \eqref{eq:metriccon} and \eqref{eq:torsion}. Note that the uniqueness of this connection can be guaranteed by the Koszul's theorem, also known as the fundamental theorem of Riemannian geometry. 

\subsection{Geodesic Equation} \label{app:geodesic} 
It can be proved using \eqref{eq:product} that $\nabla_uX$ depends only on the values of $X$ along a curve tangent to the vector $u$ \cite[Proposition 4.26]{lee:2019}. This motivates the following definition. Consider $c:I\rightarrow \Theta$ as a smooth curve on $\Theta$ and $X$ is an extendible vector field along $c$, this means that $X:I\to \rmT\Theta$ satisfies $X(t) \in \rmT_{c(t)}\Theta$ for any $t \in I$, see \cite[pages 100-101]{lee:2019}. The covariant derivative of $X$ along $c$ is defined by
\begin{equation} \label{eq:curvecovariant}
\rmD_{t}X = \nabla_{\dot{c}}\tilde{X}\circ c \eqsp,
\end{equation}
where $\tilde{X}$ is a vector field on $\Theta$ satisfying, for any $t \in I$, $X(t)=\tilde{X}\circ c (t)$. The reader should not confuse the index $t$ in $\rmD_t$, which is just a notation, with an actual real number $t \in I$.

A \emph{geodesic} is thus a smooth curve $\upgamma:I\rightarrow \Theta$, whose velocity $\dot{\upgamma}$ is parallel along $\upgamma$. If $\rmD_t$ is the covariant derivative along $\upgamma$, then $\upgamma$ satisfies the geodesic equation~\cite[page 103]{lee:2019},
\begin{equation} \label{eq:geodesicequation}
  \rmD_{t}\dot{\upgamma}(t) = 0\eqsp, \quad t \in I \eqsp.
\end{equation}
The left-hand side of this equation is precisely the acceleration of the curve $\upgamma$.

The geodesic equation is a non-linear ordinary differential equation of second order. For given initial conditions $\upgamma(0) = \theta$ and $\dot{\upgamma}(0) = u$, it has a unique solution $\upgamma : (-\varepsilon,\varepsilon) \to \Theta$, for some $\varepsilon > 0$ \cite[Theorem 4.27]{lee:2019}. If this solution can always be extended to a curve $\upgamma:\rset \to \Theta$, then $\Theta$ is called a complete Riemannian manifold. The Hopf-Rinow theorem states that this is equivalent to $\Theta$ being a complete metric space, with the distance function \eqref{eq:distance}~\cite[Theorem 6.19]{lee:2019}.

\subsection{Parallel Transport and Parallel Frames} \label{app:parallel}
Recall $\rmD_t$ the covariant derivative, associated with the Levi-Civita connection, along a curve $c:[0,1] \to \Theta$, given in \eqref{eq:curvecovariant}. Then, a vector field $X$ is said to be parallel along $c$ if it satisfies the parallel transport equation
\begin{equation}
\rmD_t X(t) = 0 \eqsp.
\end{equation}
This is a first-order linear ordinary differential equation (ODE). Say $c(0)=\theta_0$, then, for a given initial condition $X(0) = u$, where $u \in \rmT_{\theta_0}\Theta$, it follows that $u \mapsto X(t)$ is a linear mapping from  $\rmT_{\theta_0}\Theta$ to $\rmT_{c(t)}\Theta$ \cite[Theorem 4.32]{lee:2019}. This is denoted $\parallelTransport_{0t}^{c}$, and by uniqueness of the solution to the ODE, $\parallelTransport_{t0}^c$ is its linear inverse \cite[Equation (4.22)]{lee:2019}.  

It is useful to derive an equivalent condition to \eqref{eq:metriccon}, which holds for vector fields $X$ and $Y$ along $c$ \cite[Proposition 5.5]{lee:2019}.
\begin{equation} \label{eq:metricconbis}
 \frac{\rmd}{\rmd t}\,\psr{X}{Y}[c(t)] = \psr{\rmD_t X}{Y}[c(t)] + \psr{X}{\rmD_t Y}[c(t)] \eqsp.
\end{equation}
This equation yields that $t \mapsto \psrLigne{X}{Y}[c(t)]$ is constant if $X$ and $Y$ are parallel vector fields along $c$. Thus $\parallelTransport_{0t}^{c\,}$ preserves scalar products. In particular, if $(\bfb_i\,;i = 1,\ldots,d)$ is an orthonormal basis of $\rmT_{\theta_0}\Theta$, then the vector fields along $c$, defined by
\begin{equation}
e_i(t) = \parallelTransport_{0t}^{c}\,\bfb_i \eqsp,
\end{equation}
form an orthonormal basis of the tangent space $\rmT_{c(t)}\Theta$, for each $t \in I$. This is called a parallel orthonormal frame along $c$~\cite[Equation (4.23)]{lee:2019}. By linearity of $\parallelTransport_{0t}^{c\,}$, if $u \in T_{\theta_0}\Theta$ is written $u = \sum^d_{i=1}\,u^i\,\bfb_i$, then 
\begin{equation} \label{eq:parallelformula}
\parallelTransport_{0t}^{c\,} u \,=\, \sum^d_{i=1}\,u^i\,e_i(t) \eqsp.
\end{equation}
In other words, parallel transport is obtained by simply propagating the cooordinates $u^i$ of the vector $u$ along a parallel orthonormal frame.

\subsection{Riemannian Exponential Map and Cut Locus} \label{app:cutlocus} 
From now on, let us assume  that $\Theta$ is a complete Riemannian manifold. A curve that attains the infimum in \eqref{eq:distance} is called a length-minimizing curve. While this curve is not always unique, it is always a geodesic \cite[Theorem 6.4]{lee:2019}; in other words, it is a twice differentiable solution of the geodesic equation \eqref{eq:geodesicequation}. On the other hand, it is very important to keep in mind that a geodesic is not always a length-minimizing curve.

To give a concrete example, consider the geodesics of a sphere with its usual round metric \cite[Example 2.13]{lee:2019} which are simply its great circles, i.e., intersections of the sphere with planes passing through the origin. Clearly, a portion of a great circle whose length is greater than $\pi$ is not length-minimizing. 
Therefore, geodesics which start at some point $\theta$ on a sphere, are length-minimizing until they reach the opposite point $-\theta$. One says that the cut locus of the point $\theta$ on the sphere is the set $\lbrace -\theta \rbrace$. 

Since $\Theta$ is complete, for $u \in \rmT_\theta \Theta$, there exists a unique geodesic $\upgamma_u:\rset \to \Theta$ with $\upgamma_u(0) = \theta$ and $\dot{\upgamma}_u(0)=u$. Then, \cite[page 128]{lee:2019} define the \emph{exponential map} $\Exp:\rmT\Theta \to \Theta$ as:
\begin{equation} \label{eq:exp}
  \Exp_\theta(u) = \upgamma_u(1) \eqsp,
\end{equation}
which is a smooth map.
For $u \in \rmT_\theta \Theta$, with $\normr{u}[\theta] = 1$, let $c(u) > 0$ be the largest positive number $t$ such that $\upgamma_u$ is length-minimizing when restricted to the interval $[0,t]$ \cite[page 307]{lee:2019}:
\begin{equation} \label{eq:cu1}
c(u) = \sup\left\lbrace t\geq 0 \, : \, \distT(\theta,\upgamma_u(t)) = t\right\rbrace \eqsp.
\end{equation}
The tangent cut locus of $\theta$ is then defined by
\begin{equation} \label{eq:tangentcut}
\rmT\Cut(\theta) \,=\, \left\lbrace c(u)u\, : \, u\in \rmT_\theta\Theta\,;\,\normr{u}[\theta] = 1\right\rbrace \eqsp.
\end{equation}
Finally, the cut locus of $\theta$ is given by~\cite[page 308]{lee:2019}
\begin{equation} \label{eq:cut}
  \Cut(\theta) \,=\, \Exp_\theta\defEns{\rmT\Cut(\theta)} \eqsp.
\end{equation}
For example, if $\Theta$ is a unit sphere, then $\rmT\Cut(\theta)$ is the set of tangent vectors $u \in \rmT_\theta\Theta$ such that $\normr{u}[\theta] = \pi$. On the other hand, $\Cut(\theta) = \lbrace -\theta \rbrace$, since $\normr{u}[\theta] = \pi$ implies $\Exp_\theta(u) = -\theta$.

\subsection{Injectivity Domain} \label{app:injectivity} 
The cut locus of a point in a complete Riemannian manifold gives valuable information regarding the topology of the manifold. Indeed, if $\Theta$ is a complete Riemannian manifold and $\theta$ is any point in $\Theta$, then $\Theta$ can be decomposed into the disjoint union of two sets
\begin{equation} \label{eq:DCut}
   \Theta = \mathrm{D}(\theta) \,\cup\,\Cut(\theta) \eqsp,
\end{equation} 
where $\mathrm{D}(\theta)$ is the injectivity domain,
\begin{equation}
\mathrm{D}(\theta) \,=\, \Exp_\theta\defEns{\rmT \mathrm{D}(\theta)} \eqsp, \quad \text{where } \rmT\mathrm{D}(\theta) = \defEns{ tu \, : \, u \in \rmT_\theta \Theta\,;\,\normr{u}[\theta] = 1 \text{ and } 0\leq t < c(u)} \eqsp ,
\end{equation}
where $c(u)$ is given by \eqref{eq:cu1}. We observe:
\begin{pproposition} \label{prop:injectivity} \cite[Theorem 10.34]{lee:2019}
The Riemannian exponential map $\Exp_\theta$ is a diffeomorphism of $\rmT\mathrm{D}(\theta)$ onto $\mathrm{D}(\theta)$. Therefore, the inverse of the Riemannian exponential $\Exp^{-1}_\theta$ is well-defined, and a diffeomorphism, on $\mathrm{D}(\theta) = \Theta - \Cut(\theta)$.
\end{pproposition}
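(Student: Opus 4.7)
The plan is to establish the statement in three stages: openness of the tangent injectivity domain, injectivity of $\Exp_\theta$ there, and non-singularity of its differential, and then to assemble these into the diffeomorphism conclusion and derive the disjoint decomposition $\Theta = \mathrm{D}(\theta) \sqcup \Cut(\theta)$. Throughout I would use the fact that $\Theta$ is geodesically complete (\Cref{ass:completeness}), so $\Exp_\theta$ is defined and smooth on all of $\rmT_\theta\Theta$.

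First I would study the function $u \mapsto c(u)$ on the unit sphere of $\rmT_\theta\Theta$ and show that it is lower semicontinuous (indeed continuous), which immediately gives that $\rmT\mathrm{D}(\theta)$ is open in $\rmT_\theta\Theta$. The key input here is that if $t < c(u)$, then the geodesic segment $\upgamma_u|_{[0,t]}$ is the \emph{unique} length-minimizing curve from $\theta$ to $\Exp_\theta(tu)$, a property that persists under small perturbations of $u$ by a standard compactness-plus-uniqueness argument (any limit of minimizing competitors would itself be a minimizing geodesic, forcing coincidence with $\upgamma_u$).

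Next I would prove injectivity of $\Exp_\theta$ on $\rmT\mathrm{D}(\theta)$. If $\Exp_\theta(t_1 u_1) = \Exp_\theta(t_2 u_2) = p$ with $t_i u_i \in \rmT\mathrm{D}(\theta)$, then both $\upgamma_{u_i}|_{[0,t_i]}$ are minimizing curves from $\theta$ to $p$ of lengths $t_i$, so $t_1 = t_2 = \distT(\theta, p)$ and by uniqueness of the minimizer (guaranteed precisely by being strictly before the cut value) we conclude $u_1 = u_2$. The non-singularity of $\rmD \Exp_\theta$ on $\rmT\mathrm{D}(\theta)$ is the other ingredient: I would appeal to the Jacobi-field characterization of conjugate points, together with the classical theorem that along a minimizing geodesic no interior point can be conjugate to $\theta$. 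Since every $tu \in \rmT\mathrm{D}(\theta)$ lies strictly before the cut time, no conjugate point has been reached, so $\rmD \Exp_\theta(tu)$ is a linear isomorphism. The inverse function theorem then upgrades $\Exp_\theta$ from a continuous bijection onto $\mathrm{D}(\theta)$ (which is open, being a continuous image via a local diffeomorphism of the open set $\rmT\mathrm{D}(\theta)$) to a global diffeomorphism.

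Finally, I would prove the decomposition $\Theta = \mathrm{D}(\theta) \sqcup \Cut(\theta)$: surjectivity of $\Exp_\theta: \rmT_\theta \Theta \to \Theta$ (Hopf-Rinow) gives a minimizing unit-speed geodesic from $\theta$ to any $p \in \Theta$, whose initial vector $u$ and length $s = \distT(\theta,p)$ satisfy either $s < c(u)$ (so $p \in \mathrm{D}(\theta)$) or $s = c(u)$ (so $p \in \Cut(\theta)$), and the two cases are exclusive because $s < c(u)$ forces the minimizer, hence $p$, to be strictly inside the injectivity domain. The main obstacle in this plan is the fine control of $c(u)$ and the conjugate-point argument: the sharp statement that a cut point along a geodesic occurs no later than the first conjugate point, and the uniqueness of minimizing geodesics below the cut time, are the nontrivial facts on which both openness and injectivity rest. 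These are exactly the ingredients packaged in \cite[Theorem 10.34]{lee:2019}, which we may invoke.
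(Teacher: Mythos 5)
Your proposal is correct in outline and reconstructs the standard textbook argument; the paper itself offers no proof of this proposition and simply cites \cite[Theorem 10.34]{lee:2019}, which is exactly the packaged form of the ingredients you identify (continuity of the cut-time function, uniqueness of minimizers strictly before the cut time, and the absence of conjugate points there). The one genuinely delicate step you gloss over --- lower semicontinuity of $u \mapsto c(u)$, whose proof must handle the case where two distinct minimizers to the cut points of nearby geodesics coalesce in the limit into a conjugate point --- is correctly flagged by you as nontrivial and deferred to the cited theorem, so there is no gap relative to what the paper does.
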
 
In fact, $\rmT\mathrm{D}(\theta)$ is an open, star-shaped subset of the tangent space $\rmT_\theta\Theta$, so it has the topology of an open ball. Thus, \eqref{eq:DCut} states that the topology of $\Theta$ is completely determined by $\Cut(\theta)$. This theorem also ensures that $\Cut(\theta)$ is a closed set of measure zero.

\subsection{Isometries and Homogeneous Spaces} \label{app:isometry} 
An isometry $g$ on the Riemannian manifold $\Theta$ is a diffeomorphism $g:\Theta\to \Theta$ which preserves the Riemannian metric. To express this, let $g\cdot \theta = g(\theta)$, and $g\cdot u = \rmD g_\theta(u)$ for each $\theta \in \Theta$ and $u \in \rmT_\theta\Theta$.  Here, $g(\theta) \in \Theta$ is simply the image of $\theta$ under the map $g$, and $\rmD g_\theta$ denotes the derivative of $g$ at $\theta$, so $Dg_\theta : \rmT_\theta\Theta \to \rmT_{g\cdot \theta}\Theta$. We say that $g$ is an isometry if \cite[page 12]{lee:2019}
\begin{equation} \label{eq:isometry}
  \psr{g\cdot u}{g\cdot v}[g\cdot\theta] = \psr{u}{v}[\theta] \hspace{1cm} u\,,\,v \in \rmT_\theta\Theta \eqsp.
\end{equation}
In other words, the linear map $u \mapsto g\cdot u$ from $\rmT_\theta\Theta$ to $\rmT_{g\cdot\theta}\Theta$ preserves scalar products. In particular, it also preserves norms, so $\normr{g\cdot u}[g\cdot \theta] = \normr{u}[\theta]$.

Isometries also preserve objects derived from the Riemannian metric such as distance, geodesics, among others. In particular, if $\upgamma:I\rightarrow \Theta$ is a geodesic, and $g:\Theta \rightarrow \Theta$ is an isometry, then $\upgamma' = g\circ \upgamma : I\rightarrow \Theta$ is also a geodesic~\cite[Corollary 5.14]{lee:2019}. Now, if $\upgamma(0) = \theta$ and $\dot{\upgamma}(0) = u$, then $\upgamma'(0) = g\cdot \theta$ and $\dot{\upgamma}'(0) = g\cdot u$. From the definition of the Riemannian exponential \eqref{eq:exp}, it is seen that
\begin{equation} \label{eq:isomexp}
  g\cdot \Exp_\theta(u) = \Exp_{g\cdot\theta}(g\cdot u) \eqsp.
\end{equation}
The set $\msg$ of all isometries of a Riemannian manifold $\Theta$ forms a group under composition. A deep theorem, called Myers-Steenrod theorem, states that $\msg$ can always be given the structure of a Lie group, such that for each $\theta \in \Theta$, the group action $g\mapsto g\cdot \theta$ is a differentiable map~\cite[page 66]{ghl:2004}.
 
One calls $\Theta$ a Riemannian homogeneous space if its group of isometries $\msg$ acts transitively. Transitive action means that for any $\theta,\theta' \in \Theta$ there exists $g \in \msg$ such that $g\cdot\theta = \theta'$. When $\Theta$ is a Riemannian homogeneous space, knowing the metric of $\Theta$ at just one point, $o \in \Theta$, is enough to know this metric anywhere~\cite[page 67]{ghl:2004}. 

\subsection{Riemannian Gradient, Hessian, and Taylor Formula} \label{app:grad_hess} 
The metric tensor $\psr{\cdot}{\cdot}[\cdot]$ and Levi-Civita connection $\nabla$, on the Riemannian manifold $\Theta$,  can be used to generalize classical objects from analysis, like the gradient and Hessian of a $C^2$ function $V:\Theta\rightarrow \rset$, as we introduce next.

The Riemannian gradient of $V$ is a vector field $\grad V$ on $\Theta$, uniquely defined by the property~\cite[Equation 2.14]{lee:2019}
\begin{equation} \label{eq:gradient}
  \psr{\grad V}{u}[\theta] \,=\, \rmD V(\theta)(u) \eqsp, \quad u \in T_\theta\Theta \eqsp,
\end{equation}
where $\rmD V(\theta) : \rmT_\theta\Theta\to \rset$ is the differential of the function $V$ at $\theta$. As $V$ is a real-valued function, it is useful to know that differentials, directional derivatives and covariant derivatives coincide $\rmD V(\theta) (u)=u \, V (\theta) = \nabla_u V (\theta) $. This definition makes it clear that the Riemannian gradient $\grad V$ depends on the choice of metric on the manifold $\Theta$, and does not arise from the manifold structure of $\Theta$, in itself.

The Riemannian Hessian of $V$, denoted $\Hess V$ is defined using the Levi-Civita connection. Precisely, it is the covariant derivative of the gradient $\grad V$. For $\theta \in \Theta$, this gives the Hessian $\Hess V(\theta):\rmT_\theta\Theta \to \rmT_\theta\Theta$,
\begin{equation} \label{eq:hessian}
  \Hess V(\theta)\, u \,=\, \nabla_u \, \grad V(\theta) \eqsp.
\end{equation}
The Riemannian definition of the Hessian coincides with the covariant Hessian obtained from the Levi-Civita connection \cite[Example 4.22]{lee:2019}. The covariant characterization gives, for any vector fields $X,Y$ on $\Theta$,
\begin{equation}\label{eq:cov_hessian}
\psr{\Hess V X}{Y} = \nabla_X \nabla_Y V - \nabla_{\nabla_X Y} V = X(Y \,V) - (\nabla_X Y) V\eqsp,
\end{equation}
where the last equality comes from the remark regarding directional derivatives. One can see that \eqref{eq:torsion} yields that the linear operator $\Hess V(\theta):\rmT_\theta\Theta \to \rmT_\theta\Theta$ is self-adjoint with respect to the Riemannian scalar product $\psr{\cdot}{\cdot}[\theta]$. Therefore, and not without a slight abuse of notation, we will also call $\Hess$ the resulting symmetric bilinear form
\begin{equation}
\Hess V(\theta) (u,v)= \psr{\Hess V(\theta) u}{v}[\theta] \eqsp, \quad u,v\in \rmT_\theta \Theta.
\end{equation}

Using the gradient \eqref{eq:gradient} and the Hessian \eqref{eq:hessian}, one can derive the following Taylor formula for the function $V$. If $\upgamma:[0,1]\rightarrow \Theta$ is a geodesic such that $\upgamma(0) = \theta_0$ and $\upgamma(1) = \theta_1$; which we simply call a geodesic between $\theta_0$ and $\theta_1$; then we have
\begin{equation} \label{eq:taylor}
 V(\theta_1) - V(\theta_0) = \psr{\grad V(\theta_0)}{\dot{\upgamma}(0)}[\theta_0] + \Hess V(\upgamma(t_*))\,(\dot{\upgamma},\dot{\upgamma}) /2 \eqsp,
\end{equation} 
for some $t_* \in (0,1)$.
\subsection{Bounded Hessian Implies Lipschitz Gradient} \label{app:assumptionlemma} 
We can now state a result that can be very useful when the Riemannian Hessian is bounded.
\begin{llemma} \label{lem:bounded_hessian}
 If $V$ has a continuous Riemannian Hessian $\Hess V(\theta):\rmT_\theta\Theta \to \rmT_\theta\Theta$, with operator norm upper bounded uniformly for  $\theta \in \Theta$ by $C\geq 0$; then the Riemannian gradient $\grad V$ satisfies the Lipschitz property \eqref{eq:llipschitz} with Lipschitz constant $L = C$. 
\end{llemma}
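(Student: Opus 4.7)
The plan is to translate the hypothesis on $\Hess V$ into a bound on the parallel-transported gradient along a geodesic, by integrating an appropriate covariant derivative. Fix $\theta_0,\theta_1\in\Theta$ and a geodesic $\upgamma:[0,1]\to\Theta$ between them. The key auxiliary object is the curve in $\rmT_{\theta_1}\Theta$ defined by
\begin{equation*}
W(t) \;=\; \parallelTransport_{t1}^{\upgamma}\,\grad V(\upgamma(t)), \qquad t\in[0,1],
\end{equation*}
so that $W(0) = \parallelTransport_{01}^{\upgamma}\grad V(\theta_0)$ and $W(1) = \grad V(\theta_1)$. Since $W$ takes values in the fixed vector space $\rmT_{\theta_1}\Theta$, one can write
\begin{equation*}
\grad V(\theta_1) - \parallelTransport_{01}^{\upgamma}\grad V(\theta_0) \;=\; W(1)-W(0) \;=\; \int_0^1 W'(t)\,\rmd t,
\end{equation*}
and the task reduces to estimating $\normr{W'(t)}[\theta_1]$.

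To compute $W'(t)$ I would invoke the characterization of the covariant derivative via a parallel orthonormal frame along $\upgamma$ (as recalled in \Cref{app:parallel}). Expanding $\grad V(\upgamma(t))$ in such a frame and transporting to $\theta_1$ shows that differentiating in $t$ commutes with parallel transport, in the sense that
\begin{equation*}
W'(t) \;=\; \parallelTransport_{t1}^{\upgamma}\,\rmD_t\grad V(\upgamma(t)).
\end{equation*}
By the definition of the Riemannian Hessian in \eqref{eq:hessian} and the fact that $\upgamma$ is a geodesic (so $\dot\upgamma$ is itself parallel along $\upgamma$), the covariant derivative here is exactly
\begin{equation*}
\rmD_t\grad V(\upgamma(t)) \;=\; \nabla_{\dot\upgamma(t)}\grad V \;=\; \Hess V(\upgamma(t))\,\dot\upgamma(t).
\end{equation*}

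To finish, I would use that $\parallelTransport_{t1}^{\upgamma}$ is a linear isometry \cite[Proposition 5.5]{lee:2019}, together with the operator-norm bound on $\Hess V$ and the constant-speed property of geodesics, $\normr{\dot\upgamma(t)}[\upgamma(t)] = \normr{\dot\upgamma(0)}[\theta_0] = \ell(\upgamma)$. This yields
\begin{equation*}
\normr{W'(t)}[\theta_1] \;=\; \normr{\Hess V(\upgamma(t))\,\dot\upgamma(t)}[\upgamma(t)] \;\leq\; C\,\normr{\dot\upgamma(t)}[\upgamma(t)] \;=\; C\,\ell(\upgamma),
\end{equation*}
and integrating over $t\in[0,1]$ gives the desired bound with $L=C$. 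The argument is essentially routine once the definitions are in place; the only mild subtlety is justifying the formula $W'(t) = \parallelTransport_{t1}^{\upgamma}\rmD_t\grad V(\upgamma(t))$, which is where the parallel-frame description of the covariant derivative is essential.
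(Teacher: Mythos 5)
Your proof is correct and follows essentially the same route as the paper's: both arguments reduce the difference of transported gradients to an integral of $\Hess V(\upgamma(t))\,\dot{\upgamma}(t)$ via a parallel orthonormal frame, then use the isometry of parallel transport, the operator-norm bound, and the constant speed $\normr{\dot{\upgamma}(t)}[\upgamma(t)]=\ell(\upgamma)$ of the geodesic. The only cosmetic difference is that you integrate the transported vector $W'(t)$ directly and apply the triangle inequality for vector-valued integrals, whereas the paper carries out the same computation componentwise in the frame and invokes Jensen's inequality.
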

\begin{proof}
 Let $(e_i;i=1,\ldots,d)$ be a parallel orthonormal frame along $\upgamma$. Define $\nabla V^i:[0,1]\to \rmT_\theta\Theta$ by
\begin{equation} \label{eq:vii}
\nabla V^i(t) = \psr{\grad V(\upgamma(t))}{e_i(t)}[\upgamma(t)]\eqsp, \quad t \in [0,1] \eqsp.
\end{equation}
Also, note from \eqref{eq:parallelformula}, applied to $c = \upgamma$ and $u = \grad V(\theta_0)$, that
\begin{equation}
\parallelTransport_{01}^{\upgamma} \grad V(\theta_0) \,=\, \sum^d_{i=1}\,\nabla V^i(0)\,e_i(1) \eqsp.
\end{equation}
Then, since $(e_i(1);i=1,\ldots,d)$ is an orthonormal basis of $\rmT_{\theta_1}\Theta$,
\begin{equation} \label{eq:normtofinitedif}
\normr{\grad V(\theta_1) - \parallelTransport_{01}^{\upgamma} \grad V(\theta_0)}[\theta_1]^2 \,=\,
\sum^d_{i=1}\,(\nabla V^i(1) - \nabla V^i(0))^2
\end{equation}
But, by applying \eqref{eq:metricconbis} to $c = \upgamma$ with $X = \grad V \circ \upgamma$ and $Y = e_i\,$, it follows from \eqref{eq:curvecovariant}, \eqref{eq:hessian} and \eqref{eq:vii},
\begin{equation} \label{eq:dvdt}
\frac{\rmd}{\rmd t}\nabla V^i(t) \,=\, \psr{\Hess V(\upgamma(t)) \dot{\upgamma}(t)}{e_i(t)}[\upgamma(t)] + 
\psr{\grad V(\upgamma(t))}{\rmD_te_i(t)}[\upgamma(t)] \eqsp.
\end{equation}
Then, since each $e_i(t)$ is parallel along $\upgamma$, $\rmD_t e_i(t) = 0$. Plugging \eqref{eq:dvdt} into \eqref{eq:normtofinitedif} yields, by the mean-value theorem, followed by Jensen's inequality,
\begin{align}\label{eq:jensen}
\normr{\grad V(\theta_1) - \parallelTransport_{01}^{\upgamma} \grad V(\theta_0)}[\theta_1] &= \parenthese{\sum_{i=1}^d \parentheseDeux{\int_0^1 \psr{\Hess V(\upgamma(t)) \dot{\upgamma}(t)}{e_i(t)}[\upgamma(t)] \rmd t}^2}^{1/2} \eqsp, \\
& \leq \int_0^1 \parenthese{\sum_{i=1}^d \parentheseDeux{\psr{\Hess V(\upgamma(t)) \dot{\upgamma}(t)}{e_i(t)}[\upgamma(t)] }^2 }^{1/2}   \rmd t \eqsp, \\
& = \int_0^1 \normr{\Hess V (\upgamma(t))\dot{\upgamma}(t)}[\upgamma(t)] \rmd t \eqsp.
\end{align}
The last equality comes from the fact that $(e_i(t)\, ;\, i=1,\ldots,d)$ is an orthonormal basis of $\rmT_{\upgamma(t)} \Theta$. Finally, using the definition of the operator norm $\Vert \cdot \Vert_{op,\upgamma(t)}$ on $\rmT_{\upgamma(t)} \Theta$, the bound in the assumption and \eqref{eq:length},
\begin{equation}\label{eq:opnormhess}
\normr{\grad V(\theta_1) - \parallelTransport_{01}^{\upgamma} \grad V(\theta_0)}[\theta_1] \leq \int_0^1 \Vert \Hess V (\upgamma(t)) \Vert_{op,\upgamma(t)} \normr{\dot{\upgamma}(t)}[\upgamma(t)] \rmd t \leq N \ell(\upgamma) \eqsp .
\end{equation}
This concludes the proof.
\end{proof}







\end{document}